\documentclass[11pt,a4paper]{article}

\usepackage[margin=1in]{geometry}

\usepackage[utf8]{inputenc}
\usepackage[english]{babel}

\usepackage[dvipsnames,usenames]{xcolor}
\usepackage[colorlinks=true,pdfpagemode=UseNone,linkcolor=blue,citecolor=blue,pdfstartview=FitH]{hyperref}

\usepackage{subcaption}

\usepackage{mathtools}
\usepackage{amsthm,thmtools} 
\usepackage{amsmath,amssymb}
\usepackage{amsfonts}
\usepackage{bm}
\usepackage{booktabs}

\usepackage[nameinlink]{cleveref}

\usepackage{tikz}
\usepackage{pgfplots}
\pgfplotsset{compat=newest}
\usetikzlibrary{shapes, shapes.symbols, backgrounds, matrix, calc, arrows, math, arrows.meta, decorations.pathreplacing, decorations.markings, patterns, calligraphy}

\usepackage{booktabs}

\usepackage{nicefrac}

\usepackage{natbib}

\usepackage{enumitem}
\usepackage[ruled]{algorithm2e} 

\SetAlFnt{\small}
\SetAlCapFnt{\small}
\SetAlCapNameFnt{\small}
\SetAlCapHSkip{0pt}
\IncMargin{-\parindent}

\providecommand{\email}[1]{\href{mailto:#1}{\nolinkurl{#1}\xspace}}

\usepackage{acronym}

\acrodef{LLM}[LLM]{large language model}
\acrodef{IPO}[IPO]{Identity Preference Optimization}
\acrodef{BT}[BT]{Bradley–Terry}

\def\a{\bm{a}} 
\def\b{\bm{b}} 
 
\def\e{\bm{e}} 
\def\p{\bm{p}} 
\def\s{\bm{s}} 
\def\w{\bm{w}} 
 
\def\v{\bm{v}} 
 
\def\x{\bm{x}} 
\def\y{\bm{y}} 
\def\z{\bm{z}} 
 
\def\softmax{\mathrm{softmax}} 
\def\btheta{\bm{\theta}} 
 
\def\bdelta{\bm{\delta}} 
\def\bpi{\bm{\pi}} 
\def\bmu{\bm{\mu}} 
\def\bnu{\bm{\nu}}

\def\EE{\mathbb{E}}

\newcommand{\bbR}{\mathbb{R}}

\def\H{\mathcal{H}}

\def\T{\mathcal{T}}

\newcommand{\calP}{\mathcal{P}}

\newcommand{\Pref}{\bpi_{\mathrm{ref}}}
\newcommand{\Smith}{\mathrm{Smith}}

\newcommand{\eps}{\varepsilon}

\ifx\theorem\undefined
\newtheorem{theorem}{Theorem}
\fi
\ifx\example\undefined
\newtheorem{example}[theorem]{Example}
\fi
\ifx\property\undefined
\newtheorem{property}{Property}
\fi
\ifx\lemma\undefined
\newtheorem{lemma}[theorem]{Lemma}
\fi
\ifx\proposition\undefined
\newtheorem{proposition}[theorem]{Proposition}
\fi
\ifx\definition\undefined
\newtheorem{definition}[theorem]{Definition}
\fi
\ifx\conjecture\undefined
\newtheorem{conjecture}[theorem]{Conjecture}
\fi
\ifx\definition\undefined

\fi

\title{How Sampling Shapes LLM Alignment:\\ From One-Shot Optima to Iterative Dynamics}

\author{Yurong Chen$^{1}$, Yu He$^{2}$, Michael I. Jordan$^{1,3}$, Fan Yao$^{4}$\\
    $^1$Inria, École Normale Supérieure, PSL Research University \\
    $^2$Northwestern University  \\
    $^3$University of California, Berkeley \\
    $^4$University of North Carolina at Chapel Hill \\
    \texttt{yurong.chen@inria.fr},
    \texttt{yuhe2030@u.northwestern.edu},\\
    \texttt{jordan@cs.berkeley.edu},
    \texttt{fanyao@unc.edu}
}
\date{}

\begin{document}
\maketitle

\begin{abstract}
Standard methods for aligning large language models with human preferences learn from pairwise comparisons among sampled candidate responses and regularize toward a reference policy. Despite their effectiveness, the effects of sampling and reference choices are poorly understood theoretically. We investigate these effects through Identity Preference Optimization, a widely used preference alignment framework and show that proper instance-dependent sampling can yield stronger ranking guarantees, while skewed on-policy sampling can induce excessive concentration under structured preferences. We then analyze iterative alignment dynamics in which the learned policy feeds back into future sampling and reference policies, reflecting a common practice of model-generated preference data. We prove that these dynamics can exhibit persistent oscillations or entropy collapse for certain parameter choices, and characterize regimes that guarantee stability. Our theoretical insights extend to Direct Preference Optimization, indicating the phenomena we captured are common to a broader class of preference-alignment methods. Experiments on real-world preference data validate our findings. 
\end{abstract}

\section{Introduction}
\label{sec:introduction}

\Acp{LLM} are being employed increasingly in decision-focused tasks where the outputs are actionable entities such as code, summaries, or long-form answers to questions~\citep{openai2024gpt4ocard,anthropic2024claude}.
In such tasks, there is a particularly salient need for reliable, transparent mechanisms that align model outputs with human preferences and normative standards~\citep{guan2025deliberative}. 
%

Standard alignment methods for \ac{LLM}s generally involve optimizing an
objective based on pairwise comparison data, regularized toward a reference policy~\citep{ouyang2022training}. 
Most existing theoretical analyses treat the data-collection procedure and the reference policy as exogenous when studying
the resulting regularized objectives~\citep{azar2024general}. Adaptive response sampling has been analyzed mainly under structured preference models such as the 
Bradley--Terry (BT) model~\citep{shi2024crucial}, under which the population optimum can be sampling-invariant for certain objectives (see, e.g., \Cref{prop:sample_invariant}). However, such invariance is fragile as it relies on 
the stringent BT assumption; for general preference matrices, how sampling shapes the learned policy remains underexplored.

Moreover, in long-run iterative fine-tuning, both the sampling distribution and the reference policy are increasingly endogenous: practitioners mix in
preference data generated by current models to improve training~\citep{liu2023statistical,dong2023raft}, and periodically refresh the reference model to track the latest foundation checkpoint~\citep{gorbatovski2024learn,trl_dpotrainer_docs,kim2025understanding}.
Such workflows induce a self-reinforcing loop: the current model influences the training design
in the next update, thereby steering the learned policy over time.
Consequently, minor design choices can compound across rounds, potentially leading to unintended behavior. The long-run consequences of this endogeneity are still poorly understood.

In this work, we take a first step toward understanding 
these effects in preference-based alignment by focusing on \ac{IPO}, an analytically tractable 
objective 
representative of modern direct alignment methods. 
In the offline, one-shot setting (\Cref{sec:offline}), we study 
the sampling effects on the IPO solution in terms of (i) satisfiability of natural ranking desiderata (informally, ``better responses should receive higher probability'') and (ii) its skewness or extremeness.
We show that sampling is a critical yet double-edged design choice. While \ac{IPO} under any fixed sampling fails all ranking axioms considered in this paper, proper instance-dependent sampling can recover strong ranking guarantees. Meanwhile, under structured preference models, skewed or on-policy sampling can provably amplify policy concentration, 
leading to more extreme and less randomized policies. 

We then turn to the online regime of iterative alignment~(\Cref{sec:self_ref_dynamics}). We define an iterative dynamic in which the policy learned in one round will influence the sampling distribution and reference policy in the next,
with parameters controlling the mixture of on-/off-policy sampling and the degree of reference refresh. We show that the long-run behavior depends sharply on these design knobs. In particular, when updates are overly aggressive and sampling is highly on-policy, the dynamics can diverge and oscillate under cyclic preferences, while under strongly transitive preferences, it can drive policy collapse toward extreme solutions. 
Together with our complementary analysis of Direct Preference Optimization (DPO) in \Cref{app:offline_dpo}, which exhibits similar dynamical behavior, these results suggest that instability and policy collapse are generic failure modes of self-reinforcing alignment loops and can arise across preference-optimization methods, underscoring the need for careful training design. 

Finally, we characterize parameter regimes that guarantee convergence,
yielding practical guidance 
for stabilizing intervention. 
Experiments from real-world data provide evidence consistent with our theoretical analysis (\Cref{sec:experiments}).




\subsection{Related work}

\noindent\textbf{Preference Optimization (PO).}
A major line of work formulates LLM alignment as an \emph{offline} preference-optimization problem. DPO
~\citep{rafailov2023direct} fits a policy directly from pairwise comparisons under a Bradley--Terry (BT) model~\citep{bradley1952rank}.
More recently, \citet{azar2024general} introduced the $\Psi$-Preference Optimization ($\Psi$PO) framework, which unifies a family of preference optimization objectives including DPO as its special case. In this work, we focus on Identity Preference Optimization (IPO) and DPO, both are representative instances of $\Psi$PO.

\noindent\textbf{Social choice properties of alignment methods. } A growing line of research connects LLM alignment to social choice, asking whether methods such as reinforcement learning from human feedback (RLHF)~\citep{ge2024axioms}, DPO, and Nash learning from human feedback (NLHF)~\citep{liu2025statistical} satisfy basic axioms when viewed as aggregating pairwise preferences. However, these analyses neglect the effect of sampling, typically assuming an arbitrary given dataset~\citep{noothigattu2020axioms}, or (implicitly) a fixed uniform sampling distribution~\citep{xiao2025theoretical}. 
Besides, little is known about IPO. We study how sampling affects the social choice behavior of IPO solution.

\noindent\textbf{Sampling effects in preference optimization.} Some empirical works design preference data sampling process to improve alignment outcomes. For example, \citet{liu2023statistical,dong2023raft} show that rejection-style sampling can improve performance by concentrating comparisons on more informative candidates. \citet{kim2025understanding,pan2025matters} emphasize improving coverage or quality of sampled responses to enhance performance. \citet{feng2025pilaf,shi2024crucial} design samplers to accelerate the training convergence rate. Despite these findings, a systematic understanding on how sampling distributions \emph{shape the optimizer itself}---and how this dependence propagates under iterative deployment---remains limited. Our work addresses this gap by characterizing the sampling-to-solution map in IPO/DPO and analyzing their long-term implications.

\noindent\textbf{Online and iterative preference learning.}
Recent works have studied \emph{online} preference learning in which each round of alignment uses preference data newly sampled from the previously deployed policy and report that such iterative PO methods outperform purely offline training~\citep{dong2024rlhf,tajwar2024preference,ye2024online,guo2024direct,pan2025matters}. There is also study in principled exploration mechanisms for online PO~\citep{xiong2023iterative,calandriello2024human}. Beyond sampling, several works consider dynamic \emph{reference} policies. Trust-Region DPO refreshes the reference model during training to stabilize updates~\citep{gorbatovski2024learn}, \citet{xu2024bpo} report results when setting dynamic reference models as moving averages of learned policies, and \citet{kim2025understanding} propose an online version of DPO that uses previous policies as the reference models. These iterative mechanisms have also been implemented in widely used open-source tooling (e.g., Hugging Face TRL)~\citep{trl_dpotrainer_docs}. We provide a systematic analysis of the iterative alignment dynamics.

\section{Preliminaries}
\label{sec:preliminaries}


We formalize the LLM alignment model considered in the paper. We work with a fix a prompt $s$ and a discrete (possibly large) set of candidate responses $[K]\coloneqq \{1,\ldots,K\}$. 
We work with the following objects.
\begin{itemize}
    \item The \emph{preference matrix}
    is a $P\in[0,1]^{K\times K}$ where $P_{ij}$ is the probability that $i$ is preferred over $j$~\footnote{Given an integer $K$, denote the $K-$dimensional probability simplex by $\Delta_K\coloneqq \{\bmu \in \bbR^K_{\geq 0}: \sum^K_{k=1}\mu_k = 1\}$. Given a set $S$, $S^\circ$ denotes the interior of $S$.}.
    Therefore, $P_{ii}=1/2$ for all $i$ and $P_{ij}+P_{ji}=1$ for all $i\neq j$. We use $P_i$ to represent the $i$-th row of $P$. We do not assume $P_{ij}$ conforms to the BT model (i.e., $P_{ij}=\sigma(r_i-r_j)$, where $\sigma$ is the sigmoid function), and  denote the set of all valid preference matrices as $\calP$. 
    \item The \emph{sampling distribution}
    is a distribution $\bmu\in\Delta_K^\circ$ over responses.
    We assume each comparison pair $(i,j)$ is generated by sampling $i\sim\bmu$ and $j\sim\bmu$ independently, so the probability of $(i,j)$ being sampled is $\mu_i\mu_j$.
    This models
    the common regime where preference data is collected from responses generated by some behavior policy (e.g., simply uniform distribution or a deployed \ac{LLM} model).
    Unless stated otherwise, we assume full support ($\mu_i>0$ for all $i$) to avoid degenerate identifiability issues.
    \item 
    A \emph{learned policy} $\bpi\in\Delta_K^{\circ}$, is parameterized by logits $\btheta\in\bbR^K$ via $\bpi=\softmax(\btheta)$, and is trained 
    with a regularizer toward
    \emph{reference policy} $\bpi_{\mathrm{ref}}\in\Delta_K^{\circ}$ to favor solutions near $\bpi_{\mathrm{ref}}$.
\end{itemize}

Our goal is to understand how the preference structure $P$, the sampling strategy $\bmu$, and the 
reference policy $\bpi_{\mathrm{ref}}$ 
jointly
determine the learned policy, and how this dependence 
evolves
when the learned policy is redeployed and used to generate future preference data or reference models.

\subsection{IPO as a KL-regularized preference objective}
\label{sec:prelim_ipo}

IPO is a special case of the $\Psi$-Preference Optimization ($\Psi$-PO) framework~\citep{azar2024general}, which optimizes a KL-regularized objective whose data term depends on transformed pairwise preference probabilities $\Psi(P_{ij})$.
When $\Psi(q)=q$ is the identity function, the population IPO objective takes the form \footnote{In practice, $P_{ij}$ is unknown, and one observes noisy binary outcomes indicating the preferred response with expectation $P_{ij}$. We analyze the population objective with known $P$ is for revealing the \emph{structural} dependence of the learned policy on $\bmu$ and $\Pref$,making the effects analytically transparent. }
\begin{equation}
\label{eq:ipo_population}
\max_{\bpi\in\Delta_K}
\;
\EE_{i\sim\bpi,\; j\sim\bmu}\!\left[P_{ij}\right]
-
\frac{1}{\beta}\,\mathrm{KL}(\bpi \,\|\, \bpi_{\mathrm{ref}}),
\end{equation}
where the inverse temperature $\beta>0$ controls the strength of the KL regularization.
Treating the learner as a row player with payoff matrix $P$, 
\eqref{eq:ipo_population}
can be viewed as maximizing its expected utility against the column strategy $\bmu$. 
IPO was proposed to mitigate overfitting in approaches that rely on parametric preference models like BT,
including DPO
and 
RLHF
pipelines. It admits a closed-form optimizer as a function of $\bmu$ and 
$\Pref$, facilitating a clean analysis of their individual roles. The following result is a special case of the analytical solution of $\Psi$-PO \citep{azar2024general}. We provide its proof in Appendix \ref{app:IPO_solution} for completeness. We also present an analogous result for DPO in Appendix~\ref{app:dpo_solution}, serving as the starting point for a parallel set of analysis of DPO.

\begin{restatable}{proposition}{propexistuniqueipo}
\label{prop:exist_unique_IPO}
Given preference matrix $P\in\calP$, the IPO problem with sampling strategy $\bmu\in\Delta_K^\circ$ and reference policy $\bpi_{\mathrm{ref}}\in\Delta_K^\circ$
is equivalent to solving the convex optimization problem 
\begin{equation}
\label{eq:IPO_convex_op}
\min_{\bpi\in\Delta_K}\;
\sum_{1\le i,j\le K}
\mu_i\mu_j\,P_{ij}\left(
\log\frac{\pi_i}{\pi_{\mathrm{ref},i}}
-
\log\frac{\pi_j}{\pi_{\mathrm{ref},j}}
-\frac{\beta}{2}\right)^2,
\end{equation}
which admits a unique closed-form solution $\bpi^{\star}$:
\begin{equation}
\label{eq:ipo_solution}
\bpi^\star \propto \bpi_{\mathrm{ref}}\odot\mathrm{softmax}(\beta P\bmu),
\end{equation}
where $\odot$ denotes element-wise multiplication.
\end{restatable}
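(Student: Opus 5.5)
The plan is to show that both the KL-regularized objective \eqref{eq:ipo_population} and the squared-loss problem \eqref{eq:IPO_convex_op} have the same unique maximizer/minimizer, namely \eqref{eq:ipo_solution}. The equivalence then follows because the two problems share their optimum.

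\textbf{Step 1: solve \eqref{eq:ipo_population} directly.} Write the data term as $\EE_{i\sim\bpi,j\sim\bmu}[P_{ij}]=\langle \bpi, P\bmu\rangle$, which is linear in $\bpi$. By the Gibbs variational principle,
\[
\langle \bpi, P\bmu\rangle-\tfrac1\beta\mathrm{KL}(\bpi\|\Pref)=-\tfrac1\beta\mathrm{KL}(\bpi\|\tilde\bpi)+\tfrac1\beta\log Z,
\]
where $\tilde\bpi\propto \Pref\odot\exp(\beta P\bmu)$ and $Z$ is its normalizer. This identity is checked by expanding $\log\tilde\pi_i$. The right-hand side is uniquely maximized at $\bpi=\tilde\bpi$. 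Moreover, $\tilde\bpi$ coincides with $\Pref\odot\softmax(\beta P\bmu)$ after renormalization, since the $\softmax$ normalizer is a constant.

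\textbf{Step 2: reparametrize \eqref{eq:IPO_convex_op}.} Any $\bpi$ on the boundary of $\Delta_K$ makes some log-ratio infinite. Because $\mu_i\mu_jP_{ij}$ and $\mu_j\mu_iP_{ji}$ cannot both vanish, the loss is then $+\infty$, so we may restrict to $\Delta_K^\circ$. Set $h_i=\log(\pi_i/\pi_{\mathrm{ref},i})$. The map $\bpi\mapsto \bm h$ is a bijection from $\Delta_K^\circ$ onto the affine-type set $\{\bm h:\sum_i\pi_{\mathrm{ref},i}e^{h_i}=1\}$. The loss depends only on the differences $h_i-h_j$, so it is invariant under $\bm h\mapsto\bm h+c\mathbf 1$. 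Hence minimizing over $\bpi$ is the same as minimizing over all $\bm h\in\bbR^K$ and then choosing the unique shift that satisfies the normalization.

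\textbf{Step 3: symmetrize and take first-order conditions.} Pair the $(i,j)$ and $(j,i)$ terms and use $P_{ij}+P_{ji}=1$. With $d=h_i-h_j$,
\[
P_{ij}(d-\tfrac\beta2)^2+P_{ji}(-d-\tfrac\beta2)^2=d^2-\beta(2P_{ij}-1)d+\tfrac{\beta^2}{4}.
\]
The loss is therefore a quadratic $\sum_{i<j}\mu_i\mu_j[(h_i-h_j)^2-\beta(2P_{ij}-1)(h_i-h_j)]+\text{const}$. Its Hessian is a weighted graph Laplacian with weights $\mu_i\mu_j>0$ on the complete graph, so it is PSD with kernel exactly $\mathrm{span}(\mathbf 1)$. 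Thus the problem is convex in $\bm h$ and strictly convex modulo constants.

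Setting $\partial/\partial h_i=0$ and dividing by $\mu_i>0$ gives
\[
h_i-\textstyle\sum_j\mu_jh_j=\beta(P\bmu)_i-\tfrac\beta2 .
\]
Here I use $\sum_j\mu_j(2P_{ij}-1)=2(P\bmu)_i-1$, together with the diagonal terms $j=i$ contributing zero. So $h_i=\beta(P\bmu)_i+c$, and normalization yields $\bpi^\star\propto\Pref\odot\exp(\beta P\bmu)$. This matches Step 1 and gives both uniqueness and the closed form.

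\textbf{Main obstacle.} The delicate part is the bookkeeping in Step 3: getting the pair symmetrization and the factor of $2$ right so that the stationary point has exactly slope $\beta$ in $P\bmu$. The other point needing care is justifying that the shift-invariance plus the Laplacian kernel give uniqueness on the simplex rather than only up to constants.
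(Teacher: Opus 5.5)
Your proof is correct, and its core (Steps 2--3) is the paper's argument. The paper also passes to log-ratio coordinates; its $\theta_i=\beta^{-1}\log(\pi_i/\pi_{\mathrm{ref},i})$ is your $h_i/\beta$. It identifies the Hessian as $2(\mathrm{Diag}(\bmu)-\bmu\bmu^\top)$ with kernel $\mathrm{span}\{\mathbf 1\}$, and reads $\btheta^\star=P\bmu+c\mathbf 1$ off the first-order condition. The only difference there is the order of bookkeeping. The paper differentiates the unsymmetrized sum and applies $P+P^\top=\mathbf 1\mathbf 1^\top$ afterwards at the matrix level, while you pair the $(i,j)$ and $(j,i)$ terms first. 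Your coefficients and the resulting slope $\beta$ are right. One line worth adding is that $h=\beta P\bmu$ really does satisfy your stationarity equation, which uses $\bmu^\top P\bmu=\tfrac12$, so the stationary set is exactly $\{\beta P\bmu+c\mathbf 1\}$.

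What you add beyond the paper is Step 1 and the boundary argument. The paper never solves the KL-regularized objective \eqref{eq:ipo_population}. It only minimizes \eqref{eq:IPO_convex_op} and takes the ``equivalence'' from Azar et al. Your Gibbs-variational identity is therefore what actually shows that both problems share the unique optimizer $\Pref\odot\exp(\beta P\bmu)$ (normalized).

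Similarly, the paper transfers uniqueness to $\Delta_K$ by asserting that the objective is strictly convex ``on the simplex''. In fact convexity holds only in the log-ratio coordinates, not in $\bpi$ itself. Your route gives uniqueness in $\bpi$ without that claim: the loss is infinite on $\partial\Delta_K$, the minimizer is unique modulo shifts in $h$, and exactly one shift satisfies the normalization.
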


The explicit mapping $(P,\bmu,\Pref)\mapsto\bpi^\star$ in \eqref{eq:ipo_solution} makes IPO 
well-suited for our goals:
it disentangles the roles of the sampling strategy and the reference policy, enabling sharp comparisons across different samplings and principled analyses of self-reinforcing deployment dynamics. 

%
\section{Sampling Shapes IPO Optima}
\label{sec:offline}

In this section, we consider the one-shot, offline setting and study the properties of the \ac{IPO} optimizer. Since $\Pref$ enters the solution only via an additive shift, we focus on the sampling effect and fix a uniform reference policy, 
$\Pref = \frac{1}{K}\mathbf{1}$ where $\mathbf{1}\in \bbR^K$ is the all-one vector.
%
Our results extend immediately to general reference policies by replacing $\pi_i$ with 
its increments over the reference policy, $\log \pi_i/\pi_{\mathrm{ref},i}$.

\subsection{Axiom of (im)possibility under sampling}
\label{sec:offline-social-choice}

We begin by examining whether \ac{IPO} satisfies certain natural desiderata,
treating it as 
a preference aggregation method that maps 
preference matrix to a 
distribution over responses.
For a sampling distribution $\bmu\in\Delta_K$, 
\eqref{eq:ipo_solution} defines a rule $F_{\bmu}(P)\coloneqq \bpi^\star(P;\bmu)$ induced by \ac{IPO}.
We say that \emph{\ac{IPO} under sampling $\bmu$ satisfies an axiom} if $F_{\bmu}$ satisfies it.

Since \ac{IPO}'s KL term usually induces full-support solutions, we 
state axioms for full-support distributions. The first and most natural set 
formalize the idea that ``better responses  should receive higher probabilities.'' To state these axioms, we first define structural notions for preference matrices.  

\begin{definition}
Let $P\in\calP$ be a  preference matrix. We make the following definitions:
\begin{itemize}
\item 
$P$ admits a \emph{Condorcet winner} if there exists $c\in[K]$ such that
$P_{cj}>\tfrac12$ for all $j\neq c$.
\item 
A nonempty set $S\subseteq[K]$ is \emph{dominant} if 
$P_{ij}>\tfrac12$ for all $i\in S$ and $j\notin S$.
The \emph{Smith set} $\Smith(P)$ is the unique inclusion-minimal dominant set.
\item 
$P$ is \emph{transitive} if there exists an ordering $a_1\succ a_2\succ\cdots\succ a_K$ such that
$P_{a_i a_j}\ge\tfrac12$ for all $i<j$. When all inequalities hold strictly, $P$ is \emph{strictly transitive}.
\end{itemize}
\end{definition}

When $P$ is transitive, we relabel so that $1\succ 2 \succ \dots \succ K$. 
Now we define the corresponding social choice axioms. 


%

\begin{definition}
Let $F: \calP \rightarrow \Delta_K$ be an aggregation rule.
\begin{itemize}
\item 
A rule $F$ is \emph{Condorcet-top}
if for every $P$ that admits a Condorcet winner $c$,
 $F$ ranks $c$ strictly above all others, i.e.,
$F(P)_c > F(P)_j$ for all $j\neq c$.
%
%
\item 
A rule $F$ is \emph{Smith-top} if
every element in $S$ receives strictly larger probability than all elements outside $S$, i.e., $F(P)_i > F(P)_j$ for  for all $i\in S, j\notin S$.
%
\item
A rule $F$ is \emph{order-preserving} if for every transitive $P$ with order
$1\succ 2\succ\cdots\succ K$, the output distribution preserves the order, i.e., $F(P)_{1} \geq \cdots \geq F(P)_{K}$.
%
%
\end{itemize}
\end{definition}


First, \Cref{thm:axiom-violation-fixed-sampling} highlights that 
fixed sampling
can be incompatible with these axioms. 

\begin{restatable}{proposition}{thmaxiomviolationfixedsampling}
\label{thm:axiom-violation-fixed-sampling}
For any sampling distribution $\bmu$, IPO under $\bmu$
is not {\emph{Condorcet-top, Smith-top,}} or {\emph{Order-preserving}}. 
\end{restatable}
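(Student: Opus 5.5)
By Proposition~\ref{prop:exist_unique_IPO} with uniform $\Pref$, we have $F_{\bmu}(P)_i \propto \exp(\beta (P\bmu)_i)$. Hence $F_{\bmu}(P)_i > F_{\bmu}(P)_j$ if and only if $(P\bmu)_i > (P\bmu)_j$, that is, if and only if $\sum_k \mu_k (P_{ik}-P_{jk}) > 0$. All three axioms therefore reduce to ordering statements about the weighted row sums $P\bmu$. To violate them it suffices, for each fixed $\bmu \in \Delta_K^\circ$, to build a preference matrix $P$ (allowed to depend on $\bmu$) for which some designated top element has a strictly smaller weighted Copeland-type score than some lower element. We take $K \ge 3$; for $K=2$ the score difference is $\mu_1(P_{11}-P_{21})+\mu_2(P_{12}-P_{22}) = P_{12}-\tfrac12$, so the axioms hold and the statement must implicitly assume $K\ge 3$.

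The main construction is a single transitive matrix with a Condorcet winner that breaks all three axioms at once. Order $1\succ 2\succ\cdots\succ K$. Element $1$ beats everyone by a small margin, $P_{1j} = \tfrac12+\eps$ for all $j\ne 1$. Element $2$ beats every $j\ge 3$ strongly, $P_{2j} = 1$. Every pair $i<j$ with $i,j\ge 3$ gets $P_{ij} = \tfrac12+\eps$ (any value $\ge \tfrac12$ works). This $P$ is strictly transitive, $1$ is its Condorcet winner, and $\{1\}$ is dominant, so $\Smith(P)=\{1\}$. Next we compare the scores:
\[
(P\bmu)_1 - (P\bmu)_2 = \mu_1\Big(\tfrac12 - (\tfrac12-\eps)\Big) + \mu_2\Big(\tfrac12+\eps-\tfrac12\Big) + \sum_{j\ge 3}\mu_j\Big(\tfrac12+\eps - 1\Big),
\]
which equals $\eps(\mu_1+\mu_2) - (\tfrac12-\eps)\sum_{j\ge3}\mu_j$. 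Since $\mu$ has full support, $\sum_{j\ge 3}\mu_j>0$, so choosing $\eps$ small enough makes this negative. Then $F_{\bmu}(P)_2 > F_{\bmu}(P)_1$, which at once violates Condorcet-top (the winner $1$ is not on top), Smith-top (an element outside $\Smith(P)=\{1\}$ outranks it), and order-preserving ($F_1 < F_2$ for a transitive $P$).

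The only delicate step is the choice of $\eps$ as a function of $\bmu$. Any $\eps < \tfrac12 \sum_{j\ge 3}\mu_j$, roughly, suffices, and the interiority of $\bmu$ guarantees such an $\eps>0$ exists. We do not need $\mu$ to be full support beyond this point, and the construction only requires $K\ge 3$. I would also double-check the definitions: Smith-top uses $S=\Smith(P)$, and order-preserving uses weak inequalities, so the strict inequality $F_1<F_2$ is indeed a violation. The proof can then be stated as one example covering all three axioms.
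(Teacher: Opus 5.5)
Your proof is correct and uses the same strategy as the paper. Both build a single strictly transitive matrix with Condorcet winner $1$ that beats $2$ only narrowly while $2$ beats the remaining responses strongly, which gives $(P\bmu)_2>(P\bmu)_1$ and breaks all three axioms at once. The paper does this only for $K=3$, with explicit $\bmu$-dependent entries (margin $\mu_3/4$), whereas your $\eps$-construction covers every $K\ge 3$. Your observation that the statement fails for $K=2$ is also correct.
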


In contrast, the next theorem shows that allowing $\bmu$ to depend on the preference instance provides additional leverage and can restore the axioms. 


\begin{restatable}{theorem}{thmpositivecondorcetsmith}
\label{thm:positive-condorcet-smith}
	
	 For any 
		$P$ with a Condorcet winner, there exists a 
		sampling distribution $\bmu$ such that $F_{\bmu}(P)$ assigns $c$ the unique highest probability. 
%
		Moreover, for any 
		$P$, 
there exists a 
sampling distribution $\bmu$ such that 
$F_{\bmu}(P)_i > F_{\bmu}(P)_j$ for any $i \in \Smith(P)$ and $j \notin \Smith(P)$. 
\end{restatable}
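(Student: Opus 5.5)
By \Cref{prop:exist_unique_IPO} with uniform reference policy, $F_{\bmu}(P)_i \propto \exp(\beta (P\bmu)_i)$. Since $\exp$ is strictly increasing, the ranking of responses under $F_{\bmu}(P)$ is exactly the ranking of the scores $(P\bmu)_i=\sum_j P_{ij}\mu_j$. So it suffices to choose a full-support $\bmu$ for which these scores have the desired order. The plan is to concentrate almost all sampling mass on the "good" responses (the Condorcet winner, or the Smith set). This works because, against such opponents, good responses win with probability at least $1/2$ while bad responses lose strictly. A small perturbation then restores full support.

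\textbf{Condorcet part.} Take $\mu_c = 1-\eps$ and $\mu_j = \eps/(K-1)$ for $j\neq c$. Then
\[
(P\bmu)_c = (1-\eps)\tfrac12 + O(\eps), \qquad (P\bmu)_j = (1-\eps)P_{jc} + O(\eps)\quad (j\ne c),
\]
where the $O(\eps)$ terms are bounded by $\eps$ since all entries lie in $[0,1]$. Let $\delta := \min_{j\neq c}(P_{cj}-\tfrac12) > 0$, so that $P_{jc} \le \tfrac12-\delta$. Then $(P\bmu)_c - (P\bmu)_j \ge (1-\eps)\delta - 2\eps$, which is positive for $\eps$ small enough, e.g.\ $\eps < \delta/(2+\delta)$. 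Hence $c$ receives the unique highest probability.

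\textbf{Smith part.} Let $S=\Smith(P)$. If $S=[K]$ the claim is vacuous, so assume $S\neq[K]$. Put mass $(1-\eps)$ uniformly on $S$, i.e.\ $\mu_k=(1-\eps)/|S|$ for $k\in S$, and spread $\eps$ uniformly over $[K]\setminus S$.

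For $i\in S$, the main term is $(1-\eps)\frac{1}{|S|}\sum_{k\in S}P_{ik}$. Averaging over $i\in S$ gives exactly $1/2$ by skew-symmetry, but an individual $i$ may fall below $1/2$. That is still fine, because a bad $j$ scores much lower: for $j\notin S$ we have $P_{jk}<\tfrac12$ for every $k\in S$. The key point is to compare each $i\in S$ with each $j\notin S$ directly, rather than comparing against the threshold $1/2$.

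The main obstacle is that a Smith member $i$ may lose to most of $S$, so its average $\frac1{|S|}\sum_{k\in S}P_{ik}$ can be tiny, possibly below that of some outsider $j$. Uniform weights on $S$ therefore need not work, and the weights within $S$ must be chosen carefully.

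The approach is to use a Nash/maximin strategy $\bnu$ of the symmetric zero-sum game with payoff $P-\tfrac12$ restricted to $S$. By the minimax theorem and skew-symmetry, the value is $0$, so there exists $\bnu\in\Delta_{|S|}$ with $(P\bnu)_i \ge \tfrac12$ for all $i\in S$. For $j\notin S$, we get $(P\bnu)_j = \sum_k \nu_k P_{jk} \le \tfrac12-\delta$, where $\delta=\min_{j\notin S,k\in S}(\tfrac12-P_{jk})>0$.

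So take $\bmu = (1-\eps)\bnu + \eps\,\mathbf{1}/K$, which has full support. The perturbation changes every score by at most $\eps$, giving
\[
(P\bmu)_i-(P\bmu)_j \ge (1-\eps)\delta-2\eps>0
\]
for small $\eps$. The same argument, with $\bnu=\e_c$, also reproves the Condorcet part.
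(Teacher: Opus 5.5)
Your proposal is correct and follows essentially the paper's route: put almost all sampling mass on $c$, or in the Smith case on a minimax strategy $\bnu$ supported on $\Smith(P)$ under which every Smith member scores at least $\tfrac12$ and every outsider at most $\tfrac12-\delta$, then mix with the uniform distribution for full support while keeping the inequalities strict. One point needs stating precisely: $\bnu$ must be a maximizer of $\min_{i\in S}\bigl((P-\tfrac12)\bnu\bigr)_i$ (the paper's ``column player as maximizer''), not the usual unbeatable Nash lottery of $P_{SS}$, which satisfies the reverse inequality $(P\bnu)_i\le\tfrac12$ and can fail when its support is strictly smaller than $\Smith(P)$ (your inequality directions are the right ones, whereas the paper's write-up states both inequalities flipped).
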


Beyond mere existence, the proof of \Cref{thm:positive-condorcet-smith} 
suggests the desiderata are achievable
by ``reasonable'' samplings instead of 
a contrived distribution. 
They place more mass on responses preferred under $P$, which is practically plausible 
since human users and well-trained LLMs tend to produce better responses more often than a uniform sampler. 

One may ask how to choose such instance-dependent sampling when $P$ is unknown. A natural approach is to choose $\bmu$ by estimating $P$ from observed comparisons, e.g., via dueling bandits or preference-learning methods tailored to the application. Developing such procedures is beyond the scope of this paper, and we leave it to future work. 

We also note that \citet{liu2025statistical} study non-full-support analogues of the Condorcet-top and Smith-top, and shows that non-regularized 
NLHF
satisfies them. From a game-theoretic perspective, NLHF can be viewed as a row player maximizing its worst-case expected utility over sampling strategies. 
In contrast, 
our flexibility in choosing the sampling distribution yields a stronger Smith-set guarantee:
a natural non-full-support analogue of Smith-top would require assigning positive probability to \emph{every} element of the Smith set, whereas \citet{liu2025statistical} only guarantee that the support of the learned policy is contained in the Smith set.


By contrast, order-preserving is more delicate: changing 
$\bmu$ can drastically alter the ranking induced by the IPO solution, and some transitive instances cannot be made order-preserving by any full-support sampling distribution.

\begin{restatable}{proposition}{thmfragilityorderpreserving}
\label{thm:fragility-order-preserving}
		 For any $K>2$, there exists a transitive preference matrix $P$, such that for any $i \in [K-1]$, there exists a sampling distribution $\bmu$ for which $F_{\bmu}(P)$ ranks the best response (response 1) in the $i$-th position.
		Moreover, there exists a transitive $P$ such that for every 
		sampling distribution $\bmu \in \Delta^\circ_K$, the output $F_{\bmu}(P)$ is not order-preserving. 

\end{restatable}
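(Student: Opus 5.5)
Since $\Pref$ is uniform, \eqref{eq:ipo_solution} gives $F_{\bmu}(P)\propto \softmax(\beta P\bmu)$. The ranking induced by $F_{\bmu}(P)$ is therefore exactly the ranking of the scores $s_i(\bmu)\coloneqq (P\bmu)_i=\sum_j P_{ij}\mu_j$. Both parts reduce to building a transitive $P$ whose scores we can place in a chosen order by moving $\bmu$. I will use one family of matrices. Response $1$ is only weakly better than everyone: $P_{1j}=\tfrac12+\delta$ for $j\ge 2$, with $\delta\ge 0$ small. Responses $2,\dots,K$ are totally ordered with certainty: $P_{jk}=1$ for $2\le j<k$. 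This $P$ is transitive with order $1\succ 2\succ\cdots\succ K$ for every $\delta\in[0,\tfrac12]$.

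\emph{Score computation.} Write $H_j=\sum_{2\le k<j}\mu_k$, $T_j=\sum_{k>j}\mu_k$ and $m=\sum_{k\ge2}\mu_k$. Then $s_1=\tfrac12+\delta m$ and $s_j=\tfrac{\mu_1}{2}-\delta\mu_1+\tfrac{\mu_j}{2}+T_j$ for $j\ge 2$. A short calculation gives two facts:
\begin{itemize}
\item $s_j-s_{j+1}=(\mu_j+\mu_{j+1})/2>0$ for all $2\le j<K$, so responses $2,\dots,K$ always appear in their true relative order.
\item When $\delta=0$, $s_j-s_1=\tfrac12(T_j-H_j)$.
\end{itemize}
So response $1$ is inserted into the chain $2\succ\cdots\succ K$ at the first index $j$ where $T_j<H_j$.

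\emph{First claim.} Fix a small $\delta>0$.
\begin{itemize}
\item For $i=1$, take $\mu_1$ close to $1$. Then $s_1\approx\tfrac12+\delta m$, while each $s_j\approx\tfrac12-\delta$, so $s_1$ is strictly largest.
\item For $2\le i\le K-1$, put mass about $(1-\eta)/2$ on each of responses $i$ and $i+1$, and spread $\eta$ over the rest.
\end{itemize}
In the second case, at $\delta=0$ we get $T_i\approx\tfrac12>H_i\approx 0$ and $H_{i+1}\approx\tfrac12>T_{i+1}\approx0$. Hence $s_2>\cdots>s_i>s_1>s_{i+1}>\cdots>s_K$ strictly, which places response $1$ in position $i$. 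All these inequalities are strict with margins independent of $\delta$, and the scores are continuous in $\delta$. Therefore the same $\bmu$ still works for a sufficiently small fixed $\delta>0$. Since there are finitely many $i$, one $\delta$ serves all of them simultaneously. This uses $i+1\le K$, which is why positions only go up to $K-1$.

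\emph{Second claim.} Take $\delta=0$. Then $s_2-s_1=\tfrac12(T_2-H_2)=\tfrac12\sum_{k>2}\mu_k>0$ for every full-support $\bmu$, because $K>2$. So $F_{\bmu}(P)_2>F_{\bmu}(P)_1$ always, and order preservation fails for every $\bmu\in\Delta_K^\circ$. If one prefers a strictly transitive example, the same argument should go through with $P_{1j}=\tfrac12+\delta$ and $\delta$ tiny. In that case I would additionally need a lower bound on $\sum_{k>2}\mu_k$, which full support alone does not give. So I would present the weakly transitive example, which matches the definition of transitivity in the paper.

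\emph{Main obstacle.} The main subtlety is the first claim at $i=1$. With $\delta=0$, response $1$ can never be strictly on top, since $s_2>s_1$ whenever $\mu$ has mass beyond response $2$. This is exactly why the small perturbation $\delta>0$ is needed, and why I argue by continuity from the $\delta=0$ computations. The remaining work is routine bookkeeping to check that the chosen margins survive the perturbation.
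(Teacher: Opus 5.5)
Your proof is correct under the paper's definitions. For the first claim it is essentially the paper's construction. The paper uses the same matrix ($P_{1j}=\tfrac12+\eps$, and $P_{jk}=1-\eps$ for $2\le j<k$). It simply puts almost all sampling mass on response $i+1$, which handles $i=1$ and $i\ge 2$ uniformly; your split mass on $\{i,i+1\}$ also works. You can drop the continuity step entirely. Since $\mu_1+m=1$, your computation gives exactly $s_j-s_1=\tfrac12(T_j-H_j)-\delta$ for every $\delta$.

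The second claim is where your route genuinely differs.

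\emph{Your example.} You use the tied matrix ($\delta=0$), which is transitive only in the weak sense. The price is degeneracy: $P$ is then transitive for \emph{every} order obtained by inserting response $1$ anywhere into $2\succ\cdots\succ K$. The IPO output is always weakly consistent with one of these orders, so the failure rests entirely on which valid order is designated as ``the'' order. What you gain is that the example is literally valid and works for every $K>2$. For $K=3$ nothing better is possible. For any strictly transitive $P$, a full-support $\bmu$ near $\e_2$ gives $s_1\approx P_{12}>\tfrac12\approx s_2>P_{32}\approx s_3$, which is order-preserving.

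\emph{The paper's example.} The paper aims for a strictly transitive $K=4$ matrix, but its verification is wrong as written. For that matrix $(P\bmu)_3-(P\bmu)_4=\tfrac18(\mu_3+\mu_4)>0$ always, so no contradiction arises. Concretely, $\bmu=(0.3,0.3,0.2,0.2)$ gives $P\bmu=(0.625,0.575,0.375,0.325)$, which is order-preserving.

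\emph{A working strict example.} Take $P_{12}=P_{14}=P_{34}=\tfrac12+\eps$, $P_{13}=P_{24}=\tfrac12+3\eps$, and $P_{23}=\tfrac12+5\eps$. Then $s_1\ge s_2$ holds iff $\mu_1+\mu_2\ge 2(\mu_3+\mu_4)$, and $s_3\ge s_4$ holds iff $\mu_3+\mu_4\ge 2(\mu_1+\mu_2)$. These two conditions are incompatible on the whole simplex.

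\emph{Your hedge.} Your remark that a strict version ``should go through'' in your family with $\delta>0$ is false, not merely unproven. There $s_2-s_1=\tfrac12\sum_{k>2}\mu_k-\delta$. So every full-support $\bmu$ with $\sum_{k>2}\mu_k<2\delta$ yields an order-preserving output.
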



Nevertheless, there is a natural subclass of transitive matrices on which IPO is order-preserving for all samplings.

\begin{definition}
\label{def:st}
A transitive preference matrix $P$ with total order $1 \succ 2 \succ \cdots \succ K$  is \emph{strongly transitive} (ST) if for all $i<j<k, P_{ik} \ge \max\{P_{ij},\,P_{jk}\}$.
\end{definition}


ST captures a monotone strengthening of preference along an order. It is satisfied by many latent-variable preference models, including BT and Thurstone \citep{thurstone2017law}. 



\begin{restatable}{theorem}{thmrobuststrongtransitive}
\label{thm:robust-strong-transitive}
	For any sampling distribution $\bmu$, IPO under $\bmu$
    is order-preserving over strongly transitive matrices. 
\end{restatable}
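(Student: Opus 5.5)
By \Cref{prop:exist_unique_IPO} with uniform reference, $F_{\bmu}(P)\propto\softmax(\beta P\bmu)$, and softmax is strictly monotone in its argument. So it suffices to show that the score vector $\s\coloneqq P\bmu$ satisfies $s_1\ge s_2\ge\cdots\ge s_K$ whenever $P$ is strongly transitive with order $1\succ\cdots\succ K$. Since $\beta>0$, this immediately gives $F_{\bmu}(P)_1\ge\cdots\ge F_{\bmu}(P)_K$. The plan is to prove the stronger entrywise statement that every column of $P$ is nonincreasing down the rows: $P_{ik}\ge P_{jk}$ for all $i<j$ and every $k$. Then $s_i-s_j=\sum_k \mu_k(P_{ik}-P_{jk})\ge 0$ for any $\bmu\in\Delta_K$, nonnegative weights being all that is needed.

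The entrywise monotonicity follows from a case split on the position of $k$ relative to $i<j$, using strong transitivity, transitivity, and $P_{ab}+P_{ba}=1$.

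\textbf{Case $k>j$.} Here $i<j<k$, and \Cref{def:st} gives $P_{ik}\ge P_{jk}$ directly.

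\textbf{Case $k<i$.} Here $k<i<j$, so \Cref{def:st} gives $P_{kj}\ge P_{ki}$. Using $P_{ab}=1-P_{ba}$, this becomes $1-P_{jk}\ge 1-P_{ik}$, that is, $P_{ik}\ge P_{jk}$.

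\textbf{Case $i<k<j$.} Transitivity gives $P_{ik}\ge\tfrac12$ and $P_{kj}\ge\tfrac12$. The latter means $P_{jk}\le\tfrac12$, so $P_{ik}\ge\tfrac12\ge P_{jk}$.

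\textbf{Case $k=i$.} We compare $P_{ii}=\tfrac12$ with $P_{ji}=1-P_{ij}\le\tfrac12$.

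\textbf{Case $k=j$.} We compare $P_{ij}\ge\tfrac12$ with $P_{jj}=\tfrac12$.

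The only step requiring care is the case $k<i$, which uses the complementarity relation to convert the ST inequality on row $k$ into a statement about column $k$. The boundary cases $k\in\{i,j\}$ rely on the diagonal being $\tfrac12$. Everything else is immediate, and no assumption on $\bmu$ beyond nonnegativity is used. In particular the conclusion holds for every $\bmu\in\Delta_K$, including non-full-support ones.

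If a general reference policy is used instead of the uniform one, the same argument shows that the log-ratios $\log(\pi_i/\pi_{\mathrm{ref},i})$ are ordered, as remarked at the start of \Cref{sec:offline}.
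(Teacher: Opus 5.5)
Your proof is correct and follows the same route as the paper: use the closed form to reduce order preservation to $(P\bmu)_1\ge\cdots\ge(P\bmu)_K$, then show that strong transitivity makes every column of $P$ nonincreasing down the rows. The paper only asserts this row dominance ``by the definition of strong transitivity,'' whereas your case analysis supplies the justification it leaves implicit. The cases that actually need an argument are $k<i$, which uses $P_{kj}\ge P_{ki}$ together with complementarity, and $i<k<j$, which uses transitivity.
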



Finally, 
\ac{IPO} satisfies other
desiderata that are stable in $\bmu$ and relevant for practical LLM fine-tuning. We defer the details to \Cref{sec:axiom-hold-by-ipo}.



\subsection{Skewed sampling amplifies skewness}
\label{sec:offline-diversity}

We now study how the sampling distribution shapes the \ac{IPO} solution’s \emph{concentration}, quantified through \emph{pairwise logit gaps}.
We focus on ST preference matrices, for which the induced ranking is invariant to $\bmu$, allowing us to compare concentration without the confound of rank reversals. 
Our main finding is that, under a natural structural condition on $P$, 
making $\bmu$ more aligned with 
the
ranking necessarily \emph{amplifies} the learned logit gaps
i.e., yields a policy that is more skewed toward dominant responses.
To state the result formally, we first recall the definition of majorization.

\begin{definition}\label{def:maj}
A vector $\bm{a}=(a_1,\dots,a_K)\in\mathbb{R}^K$ is \emph{majorized} (Maj) if
\begin{align*}
\frac{1}{m}\sum_{k=1}^m a_k\ge
\frac{1}{K}\sum_{k=1}^K a_k, \quad \forall m\in[K-1].
\end{align*}
Further, we denote by $\bm{a}\succ \b$ if $\bm{a}-\b$ is Maj.
\end{definition}

A vector $\bm{a}$ being majorized means that every prefix average of $\bm{a}$ is at least as large as the global average. 


\begin{restatable}{theorem}{thmpairwisegapipo}
\label{thm:pairwise_gap_ipo} 
Let $P$ be ST with order $1 \succ \cdots \succ K$ and $\bmu,\bmu' \in \Delta_K$ such that $\bmu-\bmu'$ is non-increasing. Define $\btheta(\bmu)=\beta^{-1}(\log\bpi^\star(P;\bmu)-\log\bpi_{\mathrm{ref}})$. Then if some $(i,j)$ satisfy that $P_i\ne P_j$ and $P_i \succ P_j$, we have
\begin{equation}\label{eq:maj_diff}
\theta_i(\bmu)-\theta_j(\bmu)>\theta_i(\bmu')-\theta_j(\bmu')>0.  
\end{equation}
\end{restatable}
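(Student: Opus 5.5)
The plan is to reduce everything to linear algebra via the closed form in \Cref{prop:exist_unique_IPO}. By \eqref{eq:ipo_solution}, $\log\pi^\star_k(P;\bmu)=\log\pi_{\mathrm{ref},k}+\beta(P\bmu)_k-\log Z(\bmu)$, where $Z(\bmu)$ is the normalizer. Hence $\btheta(\bmu)=P\bmu-\beta^{-1}\log Z(\bmu)\mathbf{1}$. The constant cancels in pairwise gaps, so
\[
\theta_i(\bmu)-\theta_j(\bmu)=\langle \a,\bmu\rangle,\qquad \a\coloneqq P_i-P_j .
\]
Both inequalities in \eqref{eq:maj_diff} then become sign statements: $\langle\a,\bmu-\bmu'\rangle>0$ and $\langle\a,\bmu'\rangle>0$.

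\textbf{Second inequality.} First I will show that for ST $P$ and $i<j$ (which is the relevant case, since $P_i\succ P_j$ with $P_i\neq P_j$ forces $i$ to be ranked above $j$), the vector $\a$ is componentwise nonnegative. I check $k$ in each position relative to $i$ and $j$:
\begin{itemize}
\item $k>j$: ST gives $P_{ik}\ge P_{jk}$.
\item $k<i$: ST gives $P_{kj}\ge P_{ki}$, so $P_{ik}=1-P_{ki}\ge 1-P_{kj}=P_{jk}$.
\item $i<k<j$: transitivity gives $P_{ik}\ge\tfrac12\ge P_{jk}$.
\item $k\in\{i,j\}$: transitivity gives $P_{ii}=\tfrac12\ge P_{ji}$ and $P_{ij}\ge\tfrac12=P_{jj}$.
\end{itemize}
Since $\a\ge 0$ and $\a\neq 0$ (because $P_i\neq P_j$), any full-support $\bmu'$ gives $\langle\a,\bmu'\rangle>0$. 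This is the strict version of \Cref{thm:robust-strong-transitive} for this pair.

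\textbf{First inequality.} Set $\bdelta=\bmu-\bmu'$. It is non-increasing and satisfies $\sum_k\delta_k=0$. Using $\sum_k\delta_k=0$, I recenter: with $\bar a=\frac1K\sum_k a_k$ and $b_k=a_k-\bar a$, we have $\langle\a,\bdelta\rangle=\langle \b,\bdelta\rangle$. Let $B_m=\sum_{k\le m}b_k$. The Maj hypothesis $P_i\succ P_j$ says exactly that $B_m\ge 0$ for $m\in[K-1]$, and $B_K=0$. Abel summation then gives
\[
\langle\b,\bdelta\rangle=\sum_{m=1}^{K-1}B_m(\delta_m-\delta_{m+1})\ \ge\ 0,
\]
because every factor $\delta_m-\delta_{m+1}$ is nonnegative.

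\textbf{Main obstacle: strictness.} This step is where I expect the difficulty. The sum is strictly positive iff some $m$ has both $B_m>0$ and $\delta_m>\delta_{m+1}$. If $\bmu=\bmu'$ the strict inequality is false, so $\bmu\neq\bmu'$ must be read as implicit. Then $\bdelta$, being non-increasing with zero sum and nonzero, has at least one strict drop. It remains to show that $B_m>0$ at such a drop. What I would try first is to exploit $\a\ge0$ and the ST structure to show $B_m>0$ for every $m\in[K-1]$ whenever $\a\ne 0$. Equivalently, the prefix averages of $\a$ strictly exceed $\bar a$, i.e., $\a$ is not constant on any initial segment together with its complement in the offending way. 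Concretely, I would use $a_i>0$ or $a_j>0$ (from $P_{ij}>\tfrac12$ or $P_{ji}<\tfrac12$ when $P_i\neq P_j$), together with the monotone pattern of $\a$ across the blocks $k<i$, $i\le k\le j$, $k>j$ obtained above. If this cannot be established in full generality, I would state the strict inequality under the mild nondegeneracy that $\bdelta$ drops at an index where the Maj inequality is strict, and note that otherwise the result holds with $\ge$.
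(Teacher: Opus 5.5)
Your setup is the paper's own. The closed form gives $\theta_i(\bmu)-\theta_j(\bmu)=\langle\a,\bmu\rangle$. Your Abel summation is the paper's decomposition of the monotone zero-sum cone into the rays $\x^{(m)}$. The second inequality follows from $\a\ge0$, $\a\neq0$ and full support of $\bmu'$; your case check of $\a\ge0$ is more careful than the paper's one-line appeal to ST.

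\textbf{The strictness route you plan to try first cannot work.} The strict first inequality is false under the stated hypotheses. Take $K=3$ with $P_{12}=0.6$, $P_{13}=0.65$, $P_{23}=0.55$. This $P$ is ST and $P_1\neq P_2$, but $P_1-P_2=(0.1,0.1,0.1)$. So every Maj inequality is tight, every $B_m=0$, and $\theta_1(\bmu)-\theta_2(\bmu)=0.1$ for every $\bmu$. Hence $\a\ge0$, $\a\neq0$ and ST cannot force $B_m>0$.

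Nonconstancy of $\a$ does not rescue it either. Take $K=4$ with $P_{12}=0.6$, $P_{13}=0.7$, $P_{14}=0.75$, $P_{23}=0.55$, $P_{24}=0.7$, $P_{34}=0.6$, which is again ST. Then $\a=(0.1,0.1,0.15,0.05)$ has $B_1=B_2=0<B_3$, and $\bdelta\propto(1,1,-1,-1)$ gives $\langle\a,\bdelta\rangle=0$.

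The ingredient you meant to use is also unavailable. One has $a_i=a_j=P_{ij}-\tfrac12$, and ST allows this to vanish even when $P_i\neq P_j$.

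\textbf{Your fallback is the correct repair.} By your own Abel identity it is exact: for $\bmu\neq\bmu'$, the first inequality is strict iff some $m$ has both $B_m>0$ and $\delta_m>\delta_{m+1}$. A clean sufficient version is that all Maj inequalities for $P_i-P_j$ are strict.

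The paper's proof has the same hole. Its lemma asserts that a single strict Maj inequality makes $\a^\top\x>0$ for every nonzero $\x$ in the cone. The $K=4$ vector above refutes this, since $\a^\top\x^{(1)}=0$. Its last step then combines the weak inequality $\theta_i(\bmu)-\theta_j(\bmu)\ge\theta_i(\bmu')-\theta_j(\bmu')$ with $\theta_i(\bmu')-\theta_j(\bmu')>0$ to conclude a strict chain, which does not follow.

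\textbf{Your parenthetical about $i<j$ is false.} You claim that $P_i\succ P_j$ with $P_i\neq P_j$ forces $i<j$. Take $P_{12}=P_{13}=0.7$ and $P_{23}=0.6$. Then $P_3-P_2=(0,-0.1,-0.1)$ is Maj and nonzero, yet $\theta_3(\bmu')-\theta_2(\bmu')<0$. So $i<j$ must be stated as an explicit hypothesis, as the paper tacitly does when it fixes $1\le i<j\le K$.
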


\Cref{thm:pairwise_gap_ipo} formalizes a one-shot notion of \emph{self-reinforcement} for IPO under offline sampling. It states that as long as $P_i\succ P_j$, changing from an arbitrary baseline sampling distribution $\bmu'$ to $\bmu$ in a direction \emph{aligned} with the 
preference order induced by $P$ will always strictly enlarge the corresponding logit gap. In other words, whenever $P_i\succ P_j$, sampling that is \emph{more concentrated on higher-ranked responses relative to a baseline} makes the learned IPO policy more separated between actions $(i,j)$, hence more skewed toward dominant responses.

While majorization is directly checkable and can therefore serve as a useful diagnostic criterion, it is not immediate that it should hold for realistic preference structures.
We therefore identify a concrete and practically relevant scenario in \Cref{cor:bt_headtail_maj}:
for BT preferences with a head--tail separation structure, $P_i\succ P_j$ automatically holds for any pair $i<j$ within the head.

\begin{restatable}{corollary}{corbtheadtailmaj}
\label{cor:bt_headtail_maj}
Let $P\in\calP$ be induced by the BT model $P_{ij}=\sigma(\theta_i-\theta_j)$ with $ \theta_1>\theta_2>\cdots>\theta_K$. Given integer $0 <H <K$ and positive $\delta>3\ln2$, $P$ satisfies \emph{Head-Tail Separation (HTS)} with parameters $(H, \delta)$ 
if the index set $[K]$ can be separated by a head set $\H:=\{1,\dots,H\}$ and a (long) tail set $\T:=\{H+1,\dots,K\}$, such that

 
\begin{enumerate}
    \item (tails are uniformly dominated by heads) for any $k\in \H$, $\theta_k\geq 0$, whereas for any $k\in \T$, $\theta_k\le -\delta$.
    \item (heads are moderately separated) for every pair $i<j$ in $\H$, $\theta_i-\theta_j\ge 8\left(\frac{H}{K}+e^{-\delta}\right)$. 
\end{enumerate}

Then for every $i<j$ in $\H$, $P_i\succ P_j$ and therefore \eqref{eq:maj_diff} holds.
\end{restatable}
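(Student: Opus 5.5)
The plan is to reduce the statement to \Cref{thm:pairwise_gap_ipo}. First, a BT matrix with $\theta_1>\cdots>\theta_K$ is strongly transitive: for $i<j<k$ we have $\theta_i-\theta_k\ge\max\{\theta_i-\theta_j,\theta_j-\theta_k\}$, and $\sigma$ is increasing. Second, $P_i\neq P_j$ for $i\neq j$, since $P_{ii}=\tfrac12\neq P_{ji}$. Hence it suffices to show that for $i<j$ in $\H$ the vector $\bm{d}=P_i-P_j$, with entries $d_k=\sigma(\theta_i-\theta_k)-\sigma(\theta_j-\theta_k)$, is majorized in the sense of \Cref{def:maj}. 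Then \eqref{eq:maj_diff} follows directly. Put $\Delta=\theta_i-\theta_j\ge 8(H/K+e^{-\delta})$. Every $d_k$ is positive because $\sigma$ is strictly increasing.

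\textbf{Step 1 (shape of the tail).} For $k\in\T$, both arguments satisfy $\theta_j-\theta_k,\ \theta_i-\theta_k\ge\delta>0$. On this region $\sigma$ is concave with decreasing derivative. As $k$ increases, $\theta_k$ decreases and both arguments shift right by the same amount, so $d_k$ is non-increasing along $\T$. The mean value theorem gives
\[
d_k\le \Delta\,\sigma'(\theta_j-\theta_k)\le \Delta e^{-(\theta_j+\delta)}\le \Delta e^{-\delta}.
\]
I also record the cruder bound $d_k\le 1-\sigma(\delta)\le e^{-\delta}$.

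\textbf{Step 2 (reduction to a threshold).} I will use an elementary lemma. Suppose there is a threshold $c$ such that $d_k\ge c$ for $k\in\H$, $d_k\le c$ for $k\in\T$, the tail is non-increasing, and every head prefix average is at least the global average $A=\frac1K\sum_k d_k$. Then all prefix averages are at least $A$. The argument: for $m\ge H$, the prefix average is at least the average of the remaining suffix, because head entries dominate tail entries and tail entries are sorted. This is equivalent to the prefix average being at least $A$. For $m\le H$ it suffices that $\min_{k\in\H}d_k\ge A$.

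\textbf{Step 3 (head versus global average).} Bound $A\le \frac1K\big(\sum_{k\in\H}d_k+(K-H)\max_{\T}d_k\big)$. Using $d_k\le 2\sigma(\Delta/2)-1$ together with Step 1, this gives roughly
\[
A\lesssim \frac HK\max_{\H}d_k+e^{-\delta}\cdot(\text{tail scale}).
\]
I then need $\min_{k\in\H}d_k$ to exceed this bound. For this I use the head lower bound from the mean value theorem, $d_k\ge \Delta\min\sigma'$ over the relevant interval. The entries $d_i=d_j=\sigma(\Delta)-\tfrac12$ are explicit and of order $\min\{\Delta/4,\tfrac12\}$. The separation $\Delta\ge 8(H/K+e^{-\delta})$ is designed to beat $\frac HK+e^{-\delta}$, with the factor $8$ absorbing the constant loss in $\sigma(\Delta)-\tfrac12\ge \Delta/8$ for moderate $\Delta$. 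The condition $\delta>3\ln 2$ ensures $e^{-\delta}<1/8$, so head entries dominate tail entries and the threshold $c$ of Step 2 exists.

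\textbf{Main obstacle.} The difficult case is a head index $k$ whose $\theta_k$ is far from $\theta_i,\theta_j$, so that $\sigma'$ at its argument is tiny and $d_k$ could fall below $A$. My first attempt is a ratio comparison instead of absolute bounds. I would show that $d_k/\Delta$ on the head is at least a constant multiple of $\sigma'(\theta_j)$-type quantities, and that tail entries scale like $\sigma'(\theta_j+\delta)\le e^{-\delta}\sigma'(\theta_j)\cdot O(1)$. Then the comparison $\min_{\H}d_k\ge A$ becomes scale-free, and the HTS inequalities close it. If that fails, I would fall back to arguing prefix-by-prefix, using the ordering of head entries induced by BT, and checking each head prefix average directly against $A$.
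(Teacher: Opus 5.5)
Your skeleton (tail monotonicity, a head--tail threshold lemma, and the explicit entries $d_i=d_j=\sigma(\Delta)-\tfrac12$) is essentially the paper's. The obstacle you flag is exactly where it breaks, and it is fatal rather than technical: as stated, the corollary is false once $H\ge 3$. For a head index $k<i$ one has $d_k<\sigma(\theta_i-\theta_k)<e^{-(\theta_k-\theta_i)}$, and HTS places no upper bound on $\theta_k-\theta_i$. Meanwhile $A\ge (d_i+d_j)/K=2(\sigma(\Delta)-\tfrac12)/K$ does not depend on $\theta_k$ at all. Concretely, take $K=100$, $H=3$, $\delta=10$, $(\theta_1,\theta_2,\theta_3)=(40,20,0)$, and any $-10\ge\theta_4>\dots>\theta_{100}$. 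All HTS conditions hold: the required head gap is $8(0.03+e^{-10})<0.25$, and even the extra condition $8(H/K+e^{-\delta})<1$ assumed by the paper's auxiliary inequality holds. For $(i,j)=(2,3)$ you get $d_1=\sigma(-20)-\sigma(-40)<e^{-20}$ and $d_2=d_3=\tfrac12-\sigma(-20)>0.49$, with all other $d_k>0$. Hence $A>0.0098>d_1$, the $m=1$ prefix inequality fails, and $P_2\not\succ P_3$. The conclusion \eqref{eq:maj_diff} fails too. Take $\bmu'=\frac1K\mathbf 1$ and $\bmu=\bmu'+\eta(K\e_1-\mathbf 1)$ with $0<\eta<1/K$. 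Then $\bmu-\bmu'$ is non-increasing, yet the IPO logit gap $\bm{d}^\top\bmu$ (see the proof of \Cref{thm:pairwise_gap_ipo}) changes by $\eta\bigl(Kd_1-\sum_k d_k\bigr)<0$.

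Hence neither of your repairs can succeed. The ratio comparison measures head entries against the tail scale, but $A$ is driven by $d_i,d_j$, not by the tail. The prefix-by-prefix fallback fails already at $m=1$. Your Step~3 claim that $e^{-\delta}<1/8$ makes head entries dominate tail entries is also false in this regime: with $\theta_4=-10$, $d_4=\sigma(30)-\sigma(10)\approx 4.5\cdot 10^{-5}>d_1$. So the threshold $c$ of Step~2 need not exist.

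The paper's own proof has the same hole. It verifies the head lower bound of its head--tail lemma only at the columns $k=i,j$, then applies the lemma, which requires that bound at every $k\in\H$.

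What your steps do prove is the case $H=2$. There $\bm{d}=(c,c,d_3,\dots,d_K)$ with $c=\sigma(\Delta)-\tfrac12\ge(1-e^{-\Delta})/4>e^{-\delta}(1-e^{-\Delta})\ge d_3\ge\dots\ge d_K$, so $\bm{d}$ is non-increasing and therefore majorized. For $H\ge 3$, a correct statement needs an extra hypothesis that keeps $d_k$ large for head indices $k<i$, for instance an upper bound on the head spread $\theta_1-\theta_H$ with constants re-derived. Under such a hypothesis your Steps~1--3 are the right template.
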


Corollary~\ref{cor:bt_headtail_maj} captures a practical and commonly encountered preference landscape: the set of candidate responses is enormous ($K$ large), but only a small subset of responses carries meaningful signal. This naturally yields a HTS structure: a small head set $\H$ contains informative responses that compete with one another, while the long tail $\T$ is uniformly dominated in pairwise comparisons. This regime also makes the second requirement a mild one as $\frac{H}{K}+e^{-\delta}=o(1)$ for moderate or large $\delta$. The main message of~\Cref{cor:bt_headtail_maj} is therefore clear: whenever preferences exhibit HTS, \emph{any} sampling shift that increases relative mass on higher-ranked responses relative to a baseline $\bmu'$ will sharpen the separation among head responses in the learned IPO solution. While it improves discrimination, it also systematically reduces diversity by concentrating probability mass on the top of the ranking. 
We emphasize that such an amplification phenomenon is not unique to IPO: an analogous dependence of pairwise separation on the sampling distribution also holds for DPO; see Appendix~\ref{app:offline_dpo}.

\section{Long-Term Effects of Iterative IPO}
\label{sec:self_ref_dynamics}


In this section, we study the long-run dynamics in iterative preference alignment pipelines, which are increasingly common in practice:
candidate responses are generated by the deployed policy and labeled by humans; the resulting data are used to fine-tune the next policy, which is then redeployed to collect the next round of data. Meanwhile, the reference model is often refreshed 
over time to track improvements in the current foundation model. 
Therefore, both the sampling distribution and the reference policy evolve endogenously over rounds and shape subsequent updates. 

 We model such a deployment pipeline as a discrete-time dynamical system over policies. At round $t$, the incumbent policy is $\bpi_t\in\Delta_K$ and the next-round alignment procedure derives the reference policy and sampling distribution from $\bpi_t$.
We capture these dependencies using two mixing parameters $\alpha,\lambda\in[0,1]$.
Specifically, for the reference policy that stabilizes the update, we consider a geometric mixture
\begin{equation}\label{eq:pi_ref_t}
\bpi_{\rm ref}^{(t)} \ \propto\ \bpi_t^{\alpha}\,\bpi_{\rm ref}^{\,1-\alpha},
\end{equation}
interpolating between the current policy $\bpi_t$ and a fixed base reference policy $\bpi_{\rm ref}$, capturing the practice of dynamic reference models in \citet{gorbatovski2024learn}. Similarly, for the sampling distribution that determines which pairs of responses are likely to be compared by annotators, we consider an affine mixture
\begin{equation}\label{eq:mu_t}
\bmu_t \ :=\ \lambda\,\bpi_t + (1-\lambda)\,\bpi_0,
\end{equation}
that interpolates between on-policy sampling from the deployed model and an off-policy distribution $\bpi_0$ (e.g., a frozen earlier model or a fixed data source), capturing a practice introduced in IPO-MD \citep{calandriello2024human}. This abstraction reflects a realistic deployment regime in which training and deployment are interleaved, and where practitioners explicitly mix on-policy and off-policy data to trade off adaptivity and stability.

By \Cref{prop:exist_unique_IPO}, the IPO update induced by these choices admits a closed-form policy mapping.
Substituting the mixed reference and mixed sampling distributions yields the following dynamical system.
\begin{definition}
\label{def:mixed_ipo_dyn}
The \emph{mixed reference/sampling IPO (MRS-IPO) dynamics} 
\begin{align*}
    \bpi_{t+1}=\mathrm{MRS}(\bpi_t;P,\alpha,\beta,\lambda,\bpi_{\mathrm{ref}},\bpi_0),
\end{align*} initialized by $\bpi_1=\bpi_1$ is defined by
\begin{equation}
\label{eq:mix_ref_softmax_form}
\bpi_{t+1}
=
\softmax\!\left(
\log \bpi_{\rm ref}^{(t)}
+\beta\,P\bmu_t
\right), t\geq 1,
\end{equation}
where $\bpi_{\rm ref}^{(t)}$ and $\bmu_t$ are defined in \eqref{eq:pi_ref_t}, \eqref{eq:mu_t}, $\alpha\in[0,1]$ controls how strongly the \emph{reference} model tracks the current deployment,
$\lambda\in[0,1]$ controls how strongly the \emph{sampling} is on-policy,
$\beta>0$ is the IPO inverse temperature,
and $P$ is the underlying pairwise preference matrix.
\end{definition}

The MRS-IPO dynamics \eqref{eq:mix_ref_softmax_form} includes several common iterative alignment practices. $\alpha=0$ indicates a fixed reference policy overtime while $\alpha=1$ corresponds to a fully self-referential deployment, i.e., encouraging newly aligned model to remain close to the previous version. Similarly, $\lambda=1$ corresponds to fully on-policy preference collection from the deployed model, whereas $\lambda<1$ captures the widespread practice of incorporating off-policy data to improve stability and coverage.

Despite the naturalness of the MRS-IPO dynamics, we show they can exhibit fundamental long-term failure modes: non-convergence/oscillations (\Cref{prop:condorcet_instability_mixed}) and policy collapse (\Cref{thm:true_vs_eps_collapse}), highlighting limitations of naive iterative deployment. On the positive side, we identify parameter regimes that ensure convergence (\Cref{thm:mix_contraction_generalP}). Together with the explicit parameter dependence in \Cref{thm:true_vs_eps_collapse} that quantifies the extent of collapse, these results provide guidance and motivation for stabilizing interventions. 

Similarly, we define the MRS-DPO dynamics in \Cref{app:iterative_dpo}, which shares the same structure as \eqref{eq:mix_ref_softmax_form}, but the explicit term $P\bmu_t$ turns out to be replaced with an implicit KKT-defined mapping $\btheta(P,\bmu_t)$. A parallel analysis yields the same qualitative conclusions, but for clarity we only focus on IPO here and defer DPO results to \Cref{app:iterative_dpo}.

\subsection{Condorcet cycles can induce policy oscillation}
\label{sec:dynamics_condorcet_cycles}

We begin with preference structures that violate transitivity.
A canonical failure mode is a \emph{Condorcet cycle} (e.g., rock--paper--scissors), where local pairwise comparisons 
cannot be globally reconciled into a single ranking.
Such a cyclic structure is common in practice whenever preferences depend on nuanced trade-offs among latent-attributes or arise from heterogeneous annotators.

We show that in the presence of a Condorcet cycle, the 
MRS-IPO dynamics can be \emph{unstable} and fail to converge. 
We 
give an explicit construction for $K=3$ and identify a parameter regime in which oscillations occur.

\begin{restatable}{proposition}{propcondorcetinstabilitymixed}
\label{prop:condorcet_instability_mixed}
Let $K=3$ and fix $a\in(0,1/2)$.
Consider the rock--paper--scissors preference matrix
\begin{equation}
\label{eq:rps_P}
P=\begin{pmatrix}
\frac12 & \frac12+a & \frac12-a\\
\frac12-a & \frac12 & \frac12+a\\
\frac12+a & \frac12-a & \frac12
\end{pmatrix},
\end{equation}
so that $1\succ 2$, $2\succ 3$, and $3\succ 1$ with probability $1/2+a$.
Let $\bpi_{\rm ref}=\bpi_0=(1/3,1/3,1/3)$.
Then the MRS-IPO dynamics \eqref{eq:mix_ref_softmax_form} admits a fixed point
$\bpi^\star=(1/3,1/3,1/3)$.
Moreover, for any choice of parameters $\alpha,\lambda\in[0,1]$ and $\beta>0$ satisfying
\begin{equation}\label{eq:instability_region}
\alpha^2+a^2\cdot\frac{\beta^2\lambda^2}{3}>1,
\end{equation}
the fixed point $\bpi^\star$ is \emph{linearly unstable}: there exists an open neighborhood $\mathcal{N}$ of $\bpi^\star$ such that for any initialization $\bpi_1\in\mathcal{N}\setminus\{\bpi^\star\}$, the iterates $\{\bpi_t\}_{t=1}^{+\infty}$ exhibits persistent oscillations around $\bpi^\star$ and does not converge. 
\end{restatable}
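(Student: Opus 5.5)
The plan is to work in logit coordinates and linearize the map at the uniform point. Since $\bpi_{\rm ref}=\bpi_0$ is uniform, the reference term is $\log\bpi_{\rm ref}^{(t)}=\alpha\log\bpi_t+\mathrm{const}$. Because softmax is invariant to additive constants, the update takes the form
\[
\bpi_{t+1}=\softmax\bigl(\alpha\log\bpi_t+\beta\lambda P\bpi_t+\beta(1-\lambda)P\bpi_0\bigr).
\]

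\textbf{Fixed point.} We first check that the uniform vector is a fixed point. The matrix $P$ is circulant with row sums $3/2$, so $P\bpi^\star=\tfrac12\mathbf 1$ and $P\bpi_0=\tfrac12\mathbf 1$. Hence every logit is equal and the softmax returns $\bpi^\star$.

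\textbf{Linearization.} Next we linearize on the tangent space $V=\{\bm v:\mathbf 1^\top\bm v=0\}$. Write $\bpi_t=\bpi^\star+\bm\epsilon_t$ with $\bm\epsilon_t\in V$. Then $\log\bpi_t=\mathrm{const}+3\bm\epsilon_t+O(\|\bm\epsilon_t\|^2)$. The Jacobian of softmax at uniform logits is $\tfrac13(I-\tfrac13\mathbf 1\mathbf 1^\top)$, which restricts to $\tfrac13 I$ on $V$. Decompose $P=\tfrac12\mathbf 1\mathbf 1^\top+aA$, where $A$ is the antisymmetric circulant with first row $(0,1,-1)$. The $\mathbf 1\mathbf 1^\top$ part kills $V$, and $A$ maps $V$ to $V$ because $\mathbf 1^\top A=0$. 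The linearized map on $V$ is therefore
\[
J=\tfrac13\bigl(3\alpha I+\beta\lambda aA\bigr)=\alpha I+\tfrac{\beta\lambda a}{3}A .
\]
On $V$, $A$ is a real normal antisymmetric operator with eigenvalues $\pm i\sqrt3$; the eigenvalues of the circulant are $\omega-\omega^2=i\sqrt3$ and its conjugate. So $J$ acts on $V$ as $\alpha$ times identity plus a rotation generator. Its eigenvalues are $\alpha\pm i\,\beta\lambda a/\sqrt3$, with common modulus squared $\alpha^2+a^2\beta^2\lambda^2/3$. Under \eqref{eq:instability_region} this modulus exceeds $1$, and the eigenvalues are non-real whenever $\beta\lambda a\neq0$, which the condition forces unless $\alpha>1$, which is impossible. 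This gives linear instability.

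\textbf{From linear instability to non-convergence.} The main obstacle is turning linear instability into the claimed non-convergence for all nearby initializations. In a suitable orthonormal basis of $V$, $J$ is $r$ times a rotation by angle $\phi\in(0,\pi)$, so $\|J\bm v\|=r\|\bm v\|$ for every $\bm v$, with $r>1$. The nonlinear map is $\bm\epsilon\mapsto J\bm\epsilon+R(\bm\epsilon)$ with $\|R(\bm\epsilon)\|\le C\|\bm\epsilon\|^2$. Choose $\mathcal N=\{\|\bm\epsilon\|<\rho\}$ with $C\rho<(r-1)/2$. Inside $\mathcal N$ we then have $\|\bm\epsilon_{t+1}\|\ge\tfrac{1+r}{2}\|\bm\epsilon_t\|$. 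So any nonzero start must leave the ball in finitely many steps, and it can never converge to $\bpi^\star$. This works because $J$ is a conformal expansion, so the estimate is uniform in direction.

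To rule out convergence to some other limit, we argue as follows. By the cyclic symmetry and the fact that $\bpi^\star$ is the unique fixed point (to be checked via the fixed-point equation $\log\bpi\propto\beta\lambda P\bpi/(1-\alpha)$ and symmetry/monotonicity, or simply stated as non-convergence to $\bpi^\star$), the iterates keep rotating by angle $\phi$ while expanding near $\bpi^\star$. This yields persistent oscillation around $\bpi^\star$. If full uniqueness of the fixed point is hard to establish, I would interpret ``does not converge'' as non-convergence to $\bpi^\star$ and show recurrent oscillation via the rotational component of the linearization.
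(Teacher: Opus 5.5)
Your proposal is correct and follows the paper's proof: the same fixed-point check via $P\bpi^\star=\tfrac12\mathbf 1$ and the same linearization $\alpha I+\tfrac{\beta\lambda a}{3}A$ on $\mathbf 1^\perp$ with eigenvalues $\alpha\pm i\beta\lambda a/\sqrt3$. Your conformal-expansion bound $\|\bm{\epsilon}_{t+1}\|\ge\tfrac{1+r}{2}\|\bm{\epsilon}_t\|$ goes further and makes rigorous the local non-convergence step that the paper only asserts, although neither argument proves persistence of the oscillation outside the small ball. The one loophole in that escape step is an orbit that leaves $\mathcal N$ and later lands exactly on $\bpi^\star$; this, together with the fixed-point uniqueness you left open, is closed by pairing $\alpha\log\bpi+\beta\lambda aA\bpi=c\mathbf 1$ (respectively $(1-\alpha)\log\bpi=\beta\lambda aA\bpi+c\mathbf 1$) with $\bpi-\bpi^\star$. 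The $A$-term vanishes by antisymmetry and zero column sums, while $\sum_i(\pi_i-\tfrac13)\log\pi_i>0$ for $\bpi\neq\bpi^\star$, and $\ker A=\mathrm{span}\{\mathbf 1\}$ handles the cases $\alpha=0$ (respectively $\alpha=1$). For $\alpha<1$, uniqueness plus the bounded log-ratio recursion, which keeps the iterates in a compact subset of $\Delta_3^\circ$, then rules out convergence to any point at all.
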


Condition~\eqref{eq:instability_region} clarifies how the stability of MRS-IPO depends on the design knobs $(\alpha,\beta,\lambda)$: larger $\alpha$ (more self-referencing), larger $\beta$ (more aggressive IPO updates), larger $\lambda$ (more on-policy sampling), or larger cycle margin $a$ all increase the tendency to oscillate. Among these parameters, $\alpha$ plays a qualitatively distinct role, as fully self-referential (i.e., $\alpha $) dynamics is always unstable. In other words, under cyclic preferences, an iterative PO pipeline that always stays close to the deployed model will generically \emph{chase a rotating preference signal} and fail to settle. When $\alpha<1$, stability can be recovered by weakening the on-policy sampling strength $\lambda$ and less aggressive update. This provides a formal guidance on stabilizing the iterative training of a widely used heuristic, i.e., mixing on- and off-policy data.

The underlying reason that causes instability is that the MRS-IPO update behaves like mirror descent type algorithm and the shift-invariant component of $P$ introduces a rotational field in the dynamics -- and it is widely-known that last-iterate convergence is such settings is not expected in general, trajectories often orbit the equilibrium rather than settling. The proof is deferred to \Cref{app:online}.

However, policy oscillatory is not inevitable as one can identify a parameter regime under which the MRS-IPO map becomes a contraction and thus the dynamics converge globally to a unique interior fixed point. The following theorem holds for \emph{any} preference structure $P\in\calP$.

\begin{restatable}{theorem}{thmmixcontractiongeneralP}
\label{thm:mix_contraction_generalP}
For any $\bpi_{\rm ref},\bpi_0\in\Delta_K^\circ$, consider the MRS-IPO dynamics $\bpi_{t+1}=\mathrm{MRS}(\bpi_t;\alpha,\beta,\lambda)$ defined in \eqref{eq:mix_ref_softmax_form}. If
\begin{equation}
\label{eq:mix_contraction_condition_general}
\alpha \;+\; \frac{\beta\lambda}{2}\,\left\|\widetilde A\right\|_2 \;<\; 1,
\end{equation}
where $\widetilde A:=P-\tfrac12\mathbf{1}\mathbf{1}^\top$, there exists a unique $\bpi^\star\in\Delta_K^\circ$ such that $\bpi^\star=\mathrm{MRS}(\bpi^\star;\alpha,\beta,\lambda)$,
and, for any initialization $\bpi_1\in\Delta_K^\circ$, iterates $\{\bpi_t\}$ converge to $\bpi^\star$. In addition, if $\widetilde A$ has at most $d$ nonzero entries in each row, \eqref{eq:mix_contraction_condition_general} can be replaced by $\alpha \;+\; \frac{\beta\lambda d}{4} \;<\; 1$.

\end{restatable}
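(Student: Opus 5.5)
We will show that, after passing to logit coordinates, the MRS-IPO map is a contraction in a suitable seminorm, and then apply the Banach fixed-point theorem.

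\textbf{Logit form of the map.} Write $\bpi_t=\softmax(\bm{z}_t)$ and work modulo constant shifts, on the quotient $\bbR^K/\mathrm{span}(\mathbf{1})$. A natural seminorm there is the centered $\ell_2$ norm $\|\bm{z}-\bar z\mathbf{1}\|_2$; for the row-sparse variant we will use instead the oscillation seminorm $\|\bm{z}\|_{\mathrm{osc}}:=\max_i z_i-\min_i z_i$. By \eqref{eq:pi_ref_t}, \eqref{eq:mu_t} and \eqref{eq:mix_ref_softmax_form}, up to additive constants,
\begin{equation*}
\bm{z}_{t+1}=\alpha\,\bm{z}_t+(1-\alpha)\log\bpi_{\rm ref}+\beta(1-\lambda)P\bpi_0+\beta\lambda\,P\,\softmax(\bm{z}_t).
\end{equation*}
Since $P=\widetilde A+\tfrac12\mathbf{1}\mathbf{1}^\top$ and $\mathbf{1}^\top\softmax(\bm z)=1$, the term $P\,\softmax(\bm z)$ equals $\widetilde A\,\softmax(\bm z)$ plus a constant vector, so it may be replaced by $\widetilde A\,\softmax(\bm z)$ on the quotient. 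Hence $\bm{z}\mapsto \alpha\bm{z}+c+\beta\lambda\widetilde A\softmax(\bm{z})$ is a well-defined map $G$ on the quotient, and it corresponds exactly to $\mathrm{MRS}$ on $\Delta_K^\circ$.

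\textbf{Lipschitz bound.} The Jacobian of $G$ is $\alpha I+\beta\lambda\widetilde A J(\bm z)$, where $J=\mathrm{diag}(\bpi)-\bpi\bpi^\top$ is the softmax Jacobian. It is symmetric PSD with $J\mathbf{1}=0$ and $\|J\|_2\le\tfrac12$. To see the spectral bound, note that for a unit vector $\bm{v}$ we have $\bm{v}^\top J\bm{v}=\mathrm{Var}_{\bpi}(\bm v)\le\tfrac14(\max v-\min v)^2\le\tfrac12$. Because $J$ annihilates $\mathbf{1}$, it is well defined on the quotient. The main obstacle is that $\widetilde A$ does not preserve the quotient structure, so we need a norm compatible with both factors.

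Take the $\ell_2$ distance between projections, $\|P_\perp(\bm z-\bm z')\|_2$, with $P_\perp=I-\tfrac1K\mathbf{1}\mathbf{1}^\top$. Then
\begin{equation*}
\|P_\perp\widetilde A J\bm v\|_2\le\|\widetilde A\|_2\,\|J\bm v\|_2=\|\widetilde A\|_2\,\|JP_\perp\bm v\|_2\le\tfrac12\|\widetilde A\|_2\,\|P_\perp\bm v\|_2 .
\end{equation*}
By the mean value inequality along segments (the quotient is convex), $G$ is Lipschitz with constant $\alpha+\tfrac{\beta\lambda}{2}\|\widetilde A\|_2<1$. So $G$ is a contraction on the complete space $\bbR^K/\mathrm{span}(\mathbf{1})\cong\mathbf{1}^\perp$.

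\textbf{Conclusion.} Banach's theorem gives a unique fixed point $\bm z^\star$ and convergence $\bm z_t\to\bm z^\star$. Since softmax is a homeomorphism from the quotient onto $\Delta_K^\circ$, this yields a unique interior fixed point $\bpi^\star$ and $\bpi_t\to\bpi^\star$ for every interior initialization. Any interior fixed point of $\mathrm{MRS}$ lifts to a fixed point of $G$, which gives uniqueness.

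\textbf{Row-sparse variant.} Here we use $\|\cdot\|_{\mathrm{osc}}$, which is a genuine norm on the quotient, and estimate directly:
\begin{equation*}
\|\widetilde A(\softmax(\bm z)-\softmax(\bm z'))\|_{\mathrm{osc}}\le 2\max_i\Bigl|\sum_j\widetilde A_{ij}\bigl(\pi_j-\pi'_j\bigr)\Bigr|.
\end{equation*}
With $|\widetilde A_{ij}|\le\tfrac12$ and at most $d$ nonzero entries per row, the sum is bounded by $\tfrac d2\|\bpi-\bpi'\|_\infty$. It remains to show $\|\bpi-\bpi'\|_\infty\le\tfrac14\|\bm z-\bm z'\|_{\mathrm{osc}}$. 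Since $|\partial\pi_j/\partial z_k|$ sums appropriately and $J$ kills constants, this reduces to the fact that $|(J\bm v)_j|=\pi_j|v_j-\mathbb{E}_{\bpi}\bm v|\le\tfrac14\,\mathrm{osc}(\bm v)$, which needs checking.

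The delicate point is the resulting constant. Combining the three factors $2\cdot\tfrac d2\cdot\tfrac14$ gives $\tfrac d4$, so the Lipschitz constant is $\alpha+\tfrac{\beta\lambda d}{4}$, matching the stated condition. The main thing to verify carefully is the bound $\pi_j\,|v_j-\mathbb{E}_{\bpi}\bm v|\le\tfrac14\,\mathrm{osc}(\bm v)$. Assuming $v_j$ is the maximum, write $v_j-\mathbb{E}_{\bpi}\bm v\le(1-\pi_j)\,\mathrm{osc}(\bm v)$; then $\pi_j(1-\pi_j)\le\tfrac14$ gives the bound, and the minimum case is symmetric. For intermediate $v_j$ the gap to $\mathbb{E}_{\bpi}\bm v$ is no larger, so the same estimate applies. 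Finally, apply Banach's theorem exactly as in the spectral case.
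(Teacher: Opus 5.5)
Your treatment of the main condition is the paper's argument: centered logits on $\mathbf{1}^\perp$, replacing $P$ by $\widetilde A$ modulo constants, the variance bound $\|J\|_2\le\tfrac12$, the mean value inequality, and Banach's theorem.

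You genuinely depart from the paper only in the row-sparse variant, and that route is also correct. The paper does not change metrics there. It proves $\|\widetilde A\|_2\le\sqrt{\|\widetilde A\|_1\|\widetilde A\|_\infty}\le d/2$, using skew-symmetry of $\widetilde A$ to transfer row sparsity to columns, and substitutes this into the $\ell_2$ condition, so the sparse claim is a one-line corollary. You instead rerun the contraction in the oscillation norm, using $\mathrm{osc}(\bm w)\le 2\|\bm w\|_\infty$, the row bound, and $\|J\bm v\|_\infty\le\tfrac14\,\mathrm{osc}(\bm v)$.

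What your route actually establishes is the condition $\alpha+\tfrac{\beta\lambda}{2}\|\widetilde A\|_{\infty}<1$, where $\|\widetilde A\|_\infty$ is the maximum absolute row sum. It needs neither skew-symmetry nor column sparsity. The price is a second contraction argument in a different metric. For the skew-symmetric $\widetilde A$ at hand it yields nothing beyond the $\ell_2$ condition, since there $\|\widetilde A\|_2\le\|\widetilde A\|_\infty$.

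One small repair is needed. Your justification for intermediate $v_j$ ("the gap is no larger") does not by itself produce the factor $1-\pi_j$. Instead write $v_j-\sum_k\pi_k v_k=\sum_{k\neq j}\pi_k(v_j-v_k)$. This gives $|v_j-\sum_k\pi_k v_k|\le(1-\pi_j)\,\mathrm{osc}(\bm v)$ for every $j$ at once, hence $|(J\bm v)_j|\le\pi_j(1-\pi_j)\,\mathrm{osc}(\bm v)\le\tfrac14\,\mathrm{osc}(\bm v)$.
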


\Cref{thm:mix_contraction_generalP} provides a conservative but transparent design rule for long-run stability.
The quantity $\|\widetilde A\|_2$ measures the component of $P$ capable of inducing cycling behavior. Condition \eqref{eq:mix_contraction_condition_general} shows that stability is ensured when $\beta\lambda$ is small relative to this intrinsic ``cyclicity scale,'' and/or when the update is less self-referential (smaller $\alpha$). Operationally, this suggests two stabilizing interventions in preference alignment pipelines: (1) reducing the inverse temperature $\beta$ or decreasing the on-policy sampling fraction $\lambda$, and (2) anchoring more strongly to a fixed reference model by choosing a smaller $\alpha$.

\subsection{Strong transitivity can lead to entropy collapse}
\label{sec:entropy_collapse}

We next present a complementary negative phenomenon, which is even more concerning from a long-term deployment perspective. \Cref{sec:offline-diversity} showed that for strongly transitive preferences, \emph{in a single round}, preference-aligned sampling can already strengthen concentration in the learned IPO solution. 
In \emph{iterative} alignment, since such a preference-aligned sampling can arise endogenously when deploying the current policy, we can show that MRS-IPO amplifies this effect over time: the deployed policy may become increasingly concentrated and eventually collapse to an extreme distribution, even when both the sampling distribution and the reference policy have full support. We formalize this observation and identify a key factor that drives this phenomenon. 
Throughout this section we assume $P$ is \emph{strictly strongly transitive} (SST), meaning that $P$ is ST and for any $i\in[K-1]$, $P_{ik}>P_{i+1,k}$ holds for at least one $k\in[K]$. This assumption strengthens ST by requiring strict dominance at least one columnwise \footnote{SST assumption is made purely for clarity; the corresponding results under the weaker ST assumption convey the same qualitative message but require heavier notation (see \Cref{app:eps_collapse_ipo}).}.  We also define a distribution $\bpi\in\Delta_K$ ($\pi_1\ge\pi_2\ge\cdots\ge\pi_K$) is \emph{$\varepsilon$-collapsed} if $\pi_1 \ge 1-\varepsilon$, meaning nearly all probability mass concentrates on the top-ranked response. The next \Cref{thm:robust-strong-transitive} shows that the long-term behavior of MRS-IPO depends crucially on the factor $\beta\lambda/(1-\alpha)$:


\begin{restatable}{theorem}{thmtruevsepscollapse}
\label{thm:true_vs_eps_collapse}
Assume $P$ is SST.
Let $\bpi_0\in\Delta_K^\circ$ and $\bpi_{\rm ref}\in\Delta_K^\circ$.
Consider the MRS-IPO dynamics $\bpi_{t+1}=\mathrm{MRS}(\bpi_t;\alpha,\beta,\lambda)$. These dynamics exhibit the following two forms of collapse.

\begin{itemize}

\item \textbf{($\varepsilon$-collapse when $\beta\lambda/(1-\alpha)$ is large).}
If $\alpha,\lambda\in[0,1)$, there exists a finite $T$ such that for all $t\ge T$,
\[
\bpi_{t,1}\ \ge\ 1 - \left(\exp\!\left(\frac{\beta\lambda\delta}{2(1-\alpha)}\right)-1\right)^{-1},
\]
where $
\delta
\coloneqq
\min_{i\in[K-1]}
\{(1-\lambda)
\sum_{j=1}^K \pi_{0,j}\,(P_{ij}-P_{i+1,j})
\}$ is a positive constant.
\item 
If $\alpha=1$, there exists a finite $T$ such that for all $t\ge T$,
\[
H(\bpi_{t+1})<H(\bpi_t),
\]
where $H(\bpi) \coloneqq-\sum_{i=1}^K \pi_i\log\pi_i$ is entropy. In addition, $\bpi_t \to \e_1$ and $H(\bpi_t)\to 0$ as $t\to+\infty$.
\end{itemize}
\end{restatable}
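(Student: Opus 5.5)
The plan is to work in log-odds coordinates, where \eqref{eq:mix_ref_softmax_form} becomes affine up to a normalizing constant:
\[
\log\pi_{t+1,i}=\alpha\log\pi_{t,i}+(1-\alpha)\log\pi_{\mathrm{ref},i}+\beta\big(P(\lambda\bpi_t+(1-\lambda)\bpi_0)\big)_i-c_t .
\]
The constant $c_t$ cancels in consecutive gaps. Write $g_{t,i}\coloneqq\log\pi_{t,i}-\log\pi_{t,i+1}$, $r_i\coloneqq\log\pi_{\mathrm{ref},i}-\log\pi_{\mathrm{ref},i+1}$ and $D_i\coloneqq P_i-P_{i+1}$. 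Then each gap obeys the scalar recursion
\[
g_{t+1,i}=\alpha g_{t,i}+(1-\alpha)r_i+\beta\lambda\,D_i\bpi_t+\beta(1-\lambda)\,D_i\bpi_0 .
\]

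The first structural step is to show that strong transitivity makes consecutive rows columnwise monotone: $P_{ik}\ge P_{i+1,k}$ for every $k$. For $k>i+1$ this is Definition~\ref{def:st} applied to $i<i+1<k$. For $k<i$ it follows from $P_{k,i+1}\ge P_{ki}$ and complementarity. The cases $k\in\{i,i+1\}$ follow from $P_{i+1,i}\le\frac12\le P_{i,i+1}$. Hence $D_i\ge0$ entrywise, so $D_i\bpi\ge0$ for every $\bpi\in\Delta_K$. Under SST and $\bpi_0\in\Delta_K^\circ$, the off-policy drift $\beta(1-\lambda)D_i\bpi_0$ is therefore bounded below by $\beta\delta>0$. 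This drift is what forces all gaps upward regardless of the current state.

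\textbf{Case $\alpha<1$.} Dropping the nonnegative on-policy term gives the comparison $g_{t+1,i}\ge\alpha g_{t,i}+(1-\alpha)r_i+\beta\delta$. Iterating yields
\[
\liminf_t g_{t,i}\ \ge\ r_i+\beta\delta/(1-\alpha),
\]
so after finitely many rounds every gap exceeds a common threshold $G$. For $G$ strictly below this liminf, convergence is at the geometric rate $\alpha^t$, which gives a finite $T$. I would then convert gaps to mass via
\[
\pi_{t,1}=\Big(1+\sum_{k\ge2}e^{-(g_{t,1}+\dots+g_{t,k-1})}\Big)^{-1}\ \ge\ 1-\sum_{m\ge1}e^{-mG}\ =\ 1-\frac{1}{e^{G}-1}.
\]
The main obstacle is matching the exact constant $\beta\lambda\delta/(2(1-\alpha))$. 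This requires absorbing the reference tilt $r_i$, which may be negative, and taking $G$ to be a fraction (half) of the asymptotic lower bound. I would handle it by using the on-policy term once the iterate is ordered, where $D_i\bpi_t$ is bounded below, together with the slack factor $1/2$ to swallow $(1-\alpha)r_i$. I expect some constant bookkeeping to be needed here to land exactly on the stated form.

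\textbf{Case $\alpha=1$.} The reference term vanishes and $g_{t+1,i}\ge g_{t,i}+\beta\delta$. All gaps therefore grow at least linearly, and the display above gives $\bpi_t\to\e_1$ and $H(\bpi_t)\to0$. For eventual strict monotonicity of entropy, I would write, for $j\ge2$,
\[
\pi_{t,j}=\frac{e^{-(\theta_{t,1}-\theta_{t,j})}}{1+\sum_{k\ge2}e^{-(\theta_{t,1}-\theta_{t,k})}}.
\]
The numerator shrinks by at least a factor $e^{-\beta\delta}$ per round, while the denominator tends to $1$. So for large $t$ each $\pi_{t,j}$ ($j\ge2$) strictly decreases and $\pi_{t,1}$ strictly increases. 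Once $\pi_{t,j}<1/e$ for all $j\ge2$ and $\pi_{t,1}>1/e$, every summand of $H$ strictly decreases, because $x\mapsto -x\log x$ is increasing on $(0,1/e)$ and decreasing on $(1/e,1)$. This gives $H(\bpi_{t+1})<H(\bpi_t)$ for all $t\ge T$.
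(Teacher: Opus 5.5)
Your recursion and the columnwise monotonicity $P_{ik}\ge P_{i+1,k}$ are correct. Your recursion is in fact the exact one, and it gives a drift bound of $\beta\delta$, stronger than the $\beta\lambda\delta$ the paper uses. Your $\alpha=1$ argument is also correct: numerators shrink by $e^{-\beta\delta}$, denominators tend to $1$, and you then use monotonicity of $x\mapsto -x\log x$ on either side of $1/e$. It is more elementary than the paper's appeal to majorization and Schur-concavity; like the paper, it needs $\lambda<1$ so that $\delta>0$.

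The genuine gap is the step you deferred to ``constant bookkeeping'': the reference tilt $r_i$ cannot be absorbed, because the $\varepsilon$-collapse bound is false for general $\bpi_{\rm ref}\in\Delta_K^\circ$. Take $K=2$ and $P_{12}=\tfrac34$. Then $P_1-P_2=(\tfrac14,\tfrac14)$, so $(P\bmu)_1-(P\bmu)_2=\tfrac14$ for every $\bmu$, and your recursion is exactly $g_{t+1}=\alpha g_t+(1-\alpha)r+\beta/4$. Choose $\alpha=0$, $\lambda=\tfrac12$, $\beta=64$ and $r=\log(\pi_{{\rm ref},1}/\pi_{{\rm ref},2})=-16$. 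Then $g_t=0$, i.e.\ $\pi_{t,1}=\tfrac12$, for all $t\ge 2$. Meanwhile $\delta=\tfrac18$, so the claimed bound is $1-(e^{2}-1)^{-1}\approx 0.84$.

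The on-policy term cannot rescue this. Since $0\le D_i\bmu_t\le 1$, you get $\limsup_t g_{t,i}\le r_i+\beta/(1-\alpha)$. So for any SST $P$ with $\beta\lambda\delta/(2(1-\alpha))>\log 3$, choosing $r_1<-\beta/(1-\alpha)$ forces $\pi_{t,1}<\tfrac12$ eventually. In particular the iterate need never become ordered.

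What your argument does prove is the claim under an extra hypothesis $\pi_{{\rm ref},1}\ge\cdots\ge\pi_{{\rm ref},K}$, for example the uniform reference used in the experiments. Then $r_i\ge 0$ and your liminf is at least $\beta\delta/(1-\alpha)$. Any threshold $G$ strictly below it is reached in finite time, in particular $G=\beta\lambda\delta/(2(1-\alpha))$ when $\lambda>0$, and your top-mass conversion finishes the proof. For a general reference, the correct threshold is $\min_i r_i+\beta\delta/(1-\alpha)$ minus slack, mirroring the constant $\kappa$ in the paper's DPO version.

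The paper's own proof has the same hole. It derives $r_{t,i}>r_{{\rm ref},i}+\beta\lambda\delta_i/(2(1-\alpha))$ and then invokes its top-mass lemma with threshold $\beta\lambda\delta/(2(1-\alpha))$, silently treating $r_{{\rm ref},i}\ge 0$.
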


\Cref{thm:true_vs_eps_collapse} identifies two limit regimes for MRS-IPO. When the reference policy is fully self-referential ($\alpha=1$), preference advantages accumulate additively across iterations, leading to monotone entropy decay and winner-take-all collapse. When $\alpha<1$, anchoring prevents exact policy collapse to a vertex, but it does \emph{not} remove the tendency toward concentration: as $\beta\lambda/(1-\alpha)$ increases, the limiting policy becomes arbitrarily close to the extreme distribution that concentrates on the top response.

From a practical standpoint, this result highlights a common failure mode in iterative alignment pipelines.
Heavier reliance on the current model as a reference (larger $\alpha$ closer to $1$), more aggressive updates (larger $\beta$), or more on-policy sampling (larger $\lambda$) all amplify self-reinforcement and push the system toward concentration. A further notable aspect of \Cref{thm:true_vs_eps_collapse} is that the collapse mechanism depends primarily on the update hyperparameters $(\alpha,\beta,\lambda)$ and the preference structure, and is largely insensitive to the specific choice of reference policy $\bpi_{\rm ref}$ or baseline sampling distribution $\bpi_0$.
Consequently, entropy collapse should be viewed as an \emph{inherent property of the self-reinforcing dynamics} in strongly transitive environments: choosing a more diverse reference model or mixing with a diverse baseline distribution cannot fundamentally prevent long-run concentration, because the influence of any static external source is geometrically diluted over successive iterations.

\section{Experiments}
\label{sec:experiments}

We present experiments studying the MRS-IPO dynamics on real-world preference data to support our theoretical findings. More details and additional experiments on MRS-DPO dynamics (\Cref{app:iterative_dpo}) 
are in \Cref{app:experiments,app:dpo_experiments}.

\begin{figure*}[t]
    \centering
    \begin{minipage}{0.98\textwidth}
        \centering
        \includegraphics[width=\linewidth]{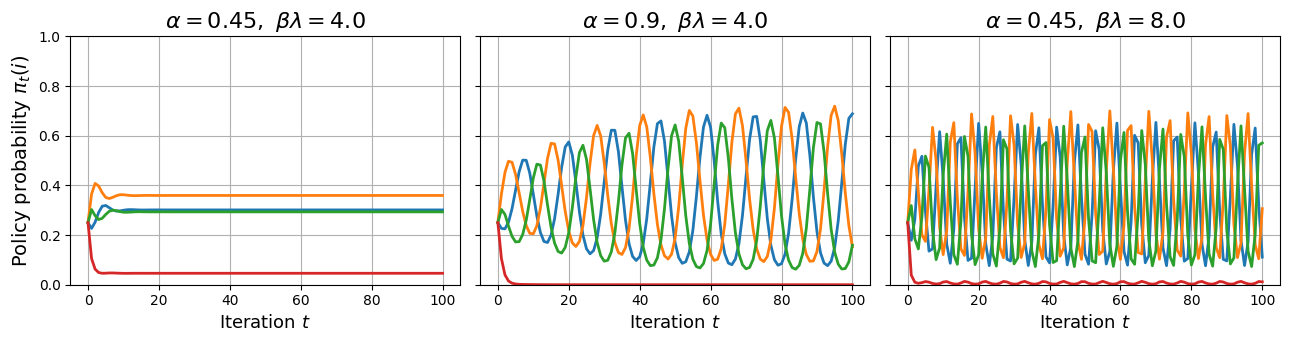} 
        \caption{Policy evolution of MRS-IPO on $P$ with a cyclic structure in the first $100$ iterations. Compared to the baseline (left), increasing either $\alpha$ (center) or $\beta\lambda$ (right) induces oscillations. }
        \label{fig:cyclic_oscillations}
    \end{minipage}
    \hfill
    \begin{minipage}{0.98\textwidth}
        \centering
        \includegraphics[width=\linewidth]{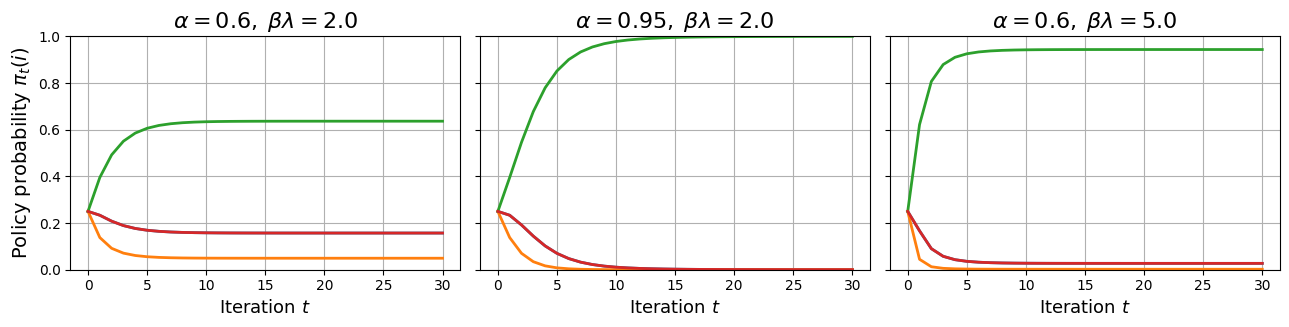} 
        \caption{Policy evolution of MRS-IPO on a ST preference matrix $P$ in the first 30 iterations. Compared to the baseline (left), increasing either $\alpha$ (center) or $\beta\lambda$ (right) induces more extreme policies. }
        \label{fig:transitive_collapse}
    \end{minipage}
\end{figure*}

\begin{figure*}[t]
    \centering
    \begin{minipage}[t]{0.48\linewidth}
        \centering
        \includegraphics[width=\linewidth]{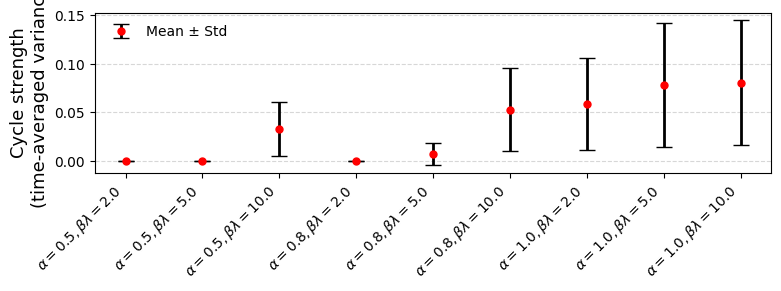}
        \caption{
        Mean $\pm$ standard deviation of time-averaged variance of $\{\pi_t\}^T_{t=1}$ across all cyclic matrices as $(\alpha,\beta\lambda)$ varies.}
        \label{fig:cycle_population}
    \end{minipage}
    \hfill
    \begin{minipage}[t]{0.48\linewidth}
        \centering
        \includegraphics[width=\linewidth]{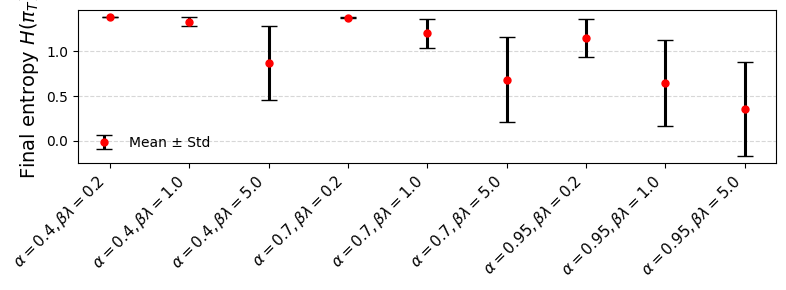}
        \caption{Mean $\pm$ standard deviation of the final policy entropy $H(\bpi_T)$ across all ST  matrices $(\alpha,\beta\lambda)$ varies.}
        
        \label{fig:entropy_population}
    \end{minipage}
\end{figure*}


\subsection{Experimental setup and system dynamics}

We use the \textsc{NVIDIA HelpSteer} dataset \citep{wang2024helpsteer}, which consists of human preference feedback for LLM alignment.
The dataset contains $35{,}331$ response-level records, each 
including a prompt paired with a model-generated response that is independently scored along multiple qualitative attribute (see examples in ~\ref{sample}).
Since most prompts in the dataset have at least four responses, we generate preference matrices $P$ with $K=4$. 

Specifically, for each unordered response pair $(i,j)$, we randomly select a
evaluation attribute as the comparison criterion and map the two responses’
scores $(s_i, s_j)$ w.r.t that attribute  to the preference probability via 
$P_{ij}=\sigma(s_i-s_j)$.
Applying this procedure independently to all response pairs generates a preference matrix $P$ that may contain cycles. 
We defer full construction details 
to \Cref{app:construction}.
%
In accordance with theoretical results in \Cref{sec:self_ref_dynamics}, we classify
the resulting matrices into cyclic instances, i.e., non-transitive matrices,
and ST instances. 
After filtering, the dataset contains 
$118$ cyclic instances and $4{,}924$ ST instances, which form the basis of all experiments reported below.


We run the MRS-IPO dynamics 
\eqref{eq:mix_ref_softmax_form} on these
instances.
Since $\beta$ and $\lambda$ appear multiplicatively in both the theory and the
update rule, we report results as functions of $(\alpha, \beta\lambda)$.
MRS-IPO are initialized with uniform distribution ($\bpi_{\rm ref}=\bpi_1=\frac
1K \bm 1$) and run for $T=3000$ iterations.
%
Robustness to finite sample variability is evaluated separately by explicitly
introducing preference noise, with the corresponding analyses deferred to
Appendix~\ref{app:protocol}.

\subsection{Illustrative examples on parameter sensitivity}
\label{sec:exp-illustrative}


Due to the limited space, we choose one cyclic matrix and one ST matrix and set three sets of parameters for each example, for which only one parameter differs pairwise, to show how the dynamics vary when one parameter changes unilaterally. The average performance is shown in \Cref{fig:cyclic_oscillations} for the cyclic matrix and \Cref{fig:transitive_collapse} for the ST matrix. We defer specific forms of the matrices and more experiments to \Cref{app:form-illustrative-examples}.  In \Cref{fig:cyclic_oscillations}, to show that the dynamics under $(0.9,0.4)$ and $(0.45, 8.0)$ indeed oscillate under the cyclic matrix and do not converge, we present the first 100 runs for better clarity; for the transitive matrix, we show only the first 15 runs, as the convergence is fast. 

Both figures suggest smaller $\alpha$ and $\beta\lambda$, i.e., heavier anchoring to fixed sampling and references, stabilize the dynamics. 
In the cyclic case, \Cref{fig:cyclic_oscillations} shows 
convergence when both parameters are small, but persistent oscillations when either increases, matching  
\Cref{prop:condorcet_instability_mixed} and \Cref{thm:mix_contraction_generalP}. 
In the ST case, \Cref{fig:transitive_collapse} shows 
that all three runs
converge. 
When $\alpha = 0.95$, the dynamics 
collapse to the extreme policy, placing all mass on the best response. 
The limiting policy shifts more mass on response $1$ as the parameters increase.

\subsection{Aggregated results}
\label{sec:aggregated-results}

We show more in-depth analyses by reporting aggregated results on the two constructed classes of preference matrices, i.e., cyclic and ST instances. To quantify oscillation/convergence under cyclic preferences, we use the time-averaged coordinate variance of the policy sequences $\{\bpi_{t}\}^T_{t=1}$ to measure the cycling strength, as shown in \Cref{fig:cycle_population}. Larger variances indicate larger oscillation amplitude.
While \Cref{prop:condorcet_instability_mixed} proves oscillations only for an explicit class of cyclic matrices,
\Cref{fig:cycle_population} suggests that oscillatory behavior 
arises broadly in preference matrices generated from real human feedback.  Moreover, the mean of cycle strength increases with either $\alpha$ or $\beta\lambda$, 
supporting the 
message that 
stronger self-reinforcement
amplifies cyclic instability.

For large parameter pairs, the standard deviations are substantial, consistent with the theory's instance-dependent nature: \Cref{prop:condorcet_instability_mixed} guarantees instability for some matrices but does not imply that every cyclic matrix oscillates under the same parameters. Besides, oscillation strength also depends on the matrix structure,
so some instances may require larger amplification for cycling to be observable. 

To quantify concentration under ST preferences, we use Shannon entropy $H(\bpi_{T})$ of the terminal policy  in \Cref{fig:entropy_population}, where lower entropy 
indicates
a more concentrated and extreme policy. Again, increasing either $\beta\lambda$ or $\alpha$ drives entropy sharply to (near) zero, indicating rapid collapse to near-deterministic policies, consistent with our collapse guarantees. The mean entropy varies systematically with $(\alpha, \beta \lambda)$, reflecting how these parameters control collapse strength, suggesting 
this collapse effect is widespread rather than driven by a few outliers.

\vspace{-1mm}
\section{Conclusions}

We have presented a theoretical investigation of how sampling and reference choices govern offline and iterative preference alignment, using IPO and DPO as case studies. Our results show that careless choices of these factors can yield compromised ranking guarantees, policy oscillations, or entropy collapse. We then identify parameter regimes that recover stability and avoid collapse in iterative preference learning. Our findings suggest that the sampling and dynamical effects are likely to extend across various preference-alignment methods beyond IPO/DPO.

Our findings point to a key open problem: how to design alignment pipelines with principled, adaptive control over sampling and reference updates---otherwise, instability is a predictable outcome of feedback-driven alignment. Addressing this challenge requires treating preference alignment as a stochastic online learning system rather than a static offline optimization problem. We view this shift in perspective as essential for building reliable alignment methods at scale.

\newpage
\appendix

\section{Proof of \Cref{prop:exist_unique_IPO}}\label{app:IPO_solution}

\propexistuniqueipo*

\begin{proof}
Let
\[
\theta_i \;\coloneqq\; \frac{1}{\beta}\log\frac{\pi_i}{\pi_{{\rm ref},i}}
\quad\Longleftrightarrow\quad
\pi_i\;=\;\frac{\pi_{{\rm ref},i}e^{\beta\theta_i}}{\sum_{\ell=1}^K \pi_{\rm ref}(\ell)e^{\beta\theta_\ell}}.
\]
Substituting $\log\frac{\pi_i}{\pi_{{\rm ref},i}}=\beta\theta_i$ into \eqref{eq:IPO_convex_op} and dropping the positive constant factor $\beta^2$, the IPO objective becomes
\begin{equation}\label{eq:ipo_theta_obj}
F(\btheta)
\;\coloneqq\;
\sum_{i,j=1}^K \mu_i\mu_j P_{ij}\Big( (\theta_i-\theta_j)-\tfrac12 \Big)^2,
\end{equation}
where $\btheta\in\mathbb R^K$ is unconstrained. Note that $F(\btheta)$ is invariant under constant shifts:
$F(\btheta+c\bm 1)=F(\btheta)$ for any $c\in\mathbb R$, since it depends only on differences $\theta_i-\theta_j$.
Meanwhile, the induced policy $\bpi(\cdot)\propto \bpi_{\rm ref}(\cdot)e^{\beta\theta(\cdot)}$ is also invariant to such shifts, so $\btheta$ is identifiable only up to an additive constant, which is immaterial for $\bpi$.

To show the convexity and strict convexity on the simplex, let's define the nonnegative weights
\[
w_{ij}\;\coloneqq\;\mu_i\mu_j P_{ij},
\qquad
W\;\coloneqq\;[w_{ij}]=\mathrm{Diag}(\bmu)\,P\,\mathrm{Diag}(\bmu).
\]
A direct expansion of \eqref{eq:ipo_theta_obj} shows that $F(\btheta)$ is a quadratic function in $\theta$, hence convex.
Moreover, its Hessian equals
\[
\nabla^2 F(\btheta) \;=\; 2\,L(\bmu),
\qquad
L(\bmu)\;\coloneqq\; D_{\rm out}+D_{\rm in}-(W+W^\top),
\]
where $D_{\rm out}=\mathrm{Diag}(W\bm 1)$ and $D_{\rm in}=\mathrm{Diag}(W^\top\bm 1)$.
The matrix $L(\bmu)$ is a (symmetrized) graph Laplacian with nonnegative edge-weights, hence it is positive semidefinite and satisfies
$L(\bmu)\bm 1=\bm 0$.
If $\bmu$ has full support and the comparison graph induced by $W+W^\top$ is connected, then
$\mathrm{null}(L(\bmu))=\mathrm{span}\{\bm 1\}$, so $F$ is \emph{strictly} convex on the quotient space
$\mathbb R^K/\mathrm{span}\{\bm 1\}$ (equivalently, strictly convex after fixing any gauge such as $\theta_K=0$ or $\bmu^\top\btheta=0$).
Since the mapping $\btheta \mapsto \bpi(\btheta)$ above is smooth and one-to-one modulo constant shifts, the induced objective in $\bpi$ is strictly convex on the affine simplex $\Delta_K$, and therefore admits a unique minimizer $\bpi^\star$. Next, we derive the first-order condition in $\btheta$. Differentiate \eqref{eq:ipo_theta_obj}. For each $k\in[K]$,
\begin{align*}
\frac{\partial F}{\partial \theta_k}
&=
2\sum_{j=1}^K w_{kj}\Big(\theta_k-\theta_j-\tfrac12\Big)
\;-\;
2\sum_{i=1}^K w_{ik}\Big(\theta_i-\theta_k-\tfrac12\Big).
\end{align*}
Setting the gradient to zero yields the linear system
\begin{equation}\label{eq:ipo_foc_general_theta}
L(\bmu)\,\btheta \;=\; \frac12\,(W-W^\top)\bm 1.
\end{equation}

Now we can derive the closed form of $L$ and $\btheta$ under the standard dueling normalization. Since $P\in\calP$, we have $
P_{ij}+P_{ji}=1\ \ (i\neq j),\qquad P_{ii}=\tfrac12$ and thus $P+P^\top=\bm 1\bm 1^\top$.
Then
\[
W+W^\top=\mathrm{Diag}(\bmu)(P+P^\top)\mathrm{Diag}(\bmu)=\mathrm{Diag}(\bmu)\bm 1\bm 1^\top\mathrm{Diag}(\bmu)=\bmu\bmu^\top.
\]
Also,
\[
W\bm 1=\mathrm{Diag}(\bmu)P\bmu,\qquad W^\top\bm 1=\mathrm{Diag}(\bmu)P^\top\bmu
=\mathrm{Diag}(\bmu)(\bm 1-P\bmu),
\]
where we used $P^\top\bmu=(\bm 1\bm 1^\top-P)\bmu=\bm 1-P\bmu$ (since $\bm 1^\top\bmu=1$).
Therefore,
\[
D_{\rm out}+D_{\rm in}=\mathrm{Diag}(W\bm 1)+\mathrm{Diag}(W^\top\bm 1)=\mathrm{Diag}(\bmu),
\qquad
L(\bmu)=\mathrm{Diag}(\bmu)-\bmu\bmu^\top.
\]
Moreover,
\[
\frac12(W-W^\top)\bm 1
=\frac12\mathrm{Diag}(\bmu)\big(P\bmu-P^\top\bmu\big)
=\mathrm{Diag}(\bmu)\Big(P\bmu-\tfrac12\bm 1\Big).
\]
Substituting into \eqref{eq:ipo_foc_general_theta} gives
\begin{equation}\label{eq:ipo_foc_simplified_theta}
\big(\mathrm{Diag}(\bmu)-\bmu\bmu^\top\big)\btheta
\;=\;
\mathrm{Diag}(\bmu)\Big(P\bmu-\tfrac12\bm 1\Big).
\end{equation}
If $\bmu$ has full support, left-multiply by $\mathrm{Diag}(\bmu)^{-1}$ to obtain
\begin{equation}\label{eq:ipo_foc_projector}
(I-\bm 1\bmu^\top)\btheta \;=\; P\bmu-\tfrac12\bm 1.
\end{equation}
Note that $\bm 1\bmu^\top\btheta=(\bmu^\top\btheta)\cdot \bm 1$, \eqref{eq:ipo_foc_projector} is equivalent to
\[
\btheta^\star \;=\; P\bmu-(\tfrac12-\bmu^\top\btheta^{\star})\bm 1,
\]
and since $\mathrm{softmax}(\cdot)$ is invariant to adding a constant vector $c\bm 1$, this is equivalent (for the induced policy) to choosing $\btheta^\star=P\bmu$. Using $\pi_i\propto \pi_{{\rm ref},i}e^{\beta\theta_i}$ and $\btheta^\star\equiv P\mu$ modulo constants, we obtain
\[
\pi^\star_i
\;\propto\;
\pi_{{\rm ref},i}\exp\!\big(\beta(P\bmu)_i\big),
\]
and normalizing over $i\in[K]$ yields exactly
\[
\bpi^\star
=
\bpi_{\rm ref}\odot \mathrm{softmax}(\beta P\bmu),
\]
which is \eqref{eq:ipo_solution}. This completes the proof.
\end{proof}

\section{Omitted Proofs in \Cref{sec:offline}}

\subsection{Proof of \Cref{thm:axiom-violation-fixed-sampling}}

\thmaxiomviolationfixedsampling*

\begin{proof}
	Notice that when a Condorcet winner exists, the corresponding singleton is the Smith set of the matrix, and when the matrix is transitive, the Condorcet winner ranks first in the order and, therefore, should also ranked top in the solution to preserve the order. Therefore, to show that IPO under uniform sampling does not satisfy the three axioms, it is enough to propose one transitive counterexample for Condorcet top.

For any $\bmu\in\Delta_K^\circ$, consider the following preference matrix with $K=3$, 
\begin{equation*}
P = 
\begin{bmatrix}
\frac{1}{2} & \frac{1}{2} + \frac{\mu_3}{4}  & \frac{1}{2} + \frac{\mu_3}{4}  \\
\frac{1}{2}-\frac{\mu_3}{4}  & \frac{1}{2} & \frac{1}{2} + \frac{2\mu_1+2\mu_2+\mu_3}{4} \\
\frac{1}{2}-\frac{\mu_3}{4} & \frac{1}{2} - \frac{2\mu_1+2\mu_2+\mu_3}{4} & \frac{1}{2}\\
\end{bmatrix}.
\end{equation*}
Since $\sum^3_{k=1}\mu_k = 1$, we know that $P\in [0,1]^{K\times K}$ is a valid preference matrix. Moreover, $1$ is the Condorcet Winner.

By the closed-form of IPO solution \eqref{eq:ipo_solution}, $\pi^\star_1 > \max_{j \neq 1} \pi^\star_{j}$ is equivalent to $\sum^K_{k=1} \mu_k P_{1k} > \max_{j \neq 1} \sum^K_{k=1} \mu_k P_{jk}$. Notice that since
\begin{align}
\sum^3_{k=1}\mu_k P_{2k} - \sum^3_{k=1}\mu_kP_{1k} &=\left(\frac{1}{2} -\frac{\mu_1\mu_3}{4}+ \frac{2\mu_1\mu_3+2\mu_2\mu_3+\mu^2_3}{4}\right) - \left( \frac{1}{2} + \frac{\mu_2\mu_3}{4}+\frac{\mu^2_3}{4} \right)\\
& = \frac{\mu_1\mu_3 + \mu_2\mu_3}{4} > 0
\end{align}
Therefore, the probability of $1$ in the solution is strictly less than that of $2$. This completes the proof. 
\end{proof}

\subsection{Proof of \Cref{thm:positive-condorcet-smith}}

\thmpositivecondorcetsmith*

\begin{proof} We first prove the statement 1. 
Let $c$ denote the Condorcet winner, then $P_{cj}> \frac{1}{2}$ and $P_{jc}<\frac{1}{2}$ for all $j \ne c$. We first consider a non-full-support sampling $\bmu'$ that only samples $c$. Then for any $j \ne c$,
\begin{align*}
	\sum^K_{k=1}P_{ck}\mu'_k = P_{cc} = \frac{1}{2} > P_{jc} = \sum^K_{k=1}P_{jk}\mu'_k~,
\end{align*}
which implies that $\pi^\star_{c}>\pi^\star_{j}$ for any $j\ne c$. One can then mix it with a uniform sampling to obtain a full-support sampling $\bmu = \eps \bmu_0 + (1-\eps)\bmu'$, where $\bmu_0 = \frac{1}{K}\mathbf{1}$ is the uniform sampling distribution and $\mathbf{1}$ is the all-one vector. Since the inequality is strict, it will preserve for a sufficiently small $\eps$. 

Now we prove the statement 2. 
W.L.O.G., assume $S=\Smith(P)=\{1,\dots, L\}$ for $L\leq K$ and let $N = [K]\setminus S$. Then for any $i < L$ and $j > L$, $P_{ij} > \frac{1}{2}$ and $P_{ji}<\frac{1}{2}$. Denote the submatrix with rows and columns restricted to $S$ to be $P_{SS}$, that with rows to $S$ and columns to $N$ to be $P_{SN}$. $P_{NS}$ and $P_{NN}$ are defined similarly. 

Again, we first consider non-full support sampling distributions that put all mass over responses in $S$. Consider the constant-sum games with payoff matrix $P_{SS}$ where the column player is the maximizer. Then it is symmetric with game value $\frac{1}{2}$. By the minimax theorem of zero-sum games (equivalently, constant-sum games), there always exist a mixed strategy $\bmu|_{S}\in \Delta(S)$ such that $P_{SS}\bmu|_{S} \leq \frac{1}{2}$ pointwise. However, since $P_{ij}<\frac{1}{2}$ for all $i >L $, $j < L$, $P_{NS}\bmu|_{S} > \frac{1}{2}$. Therefore, consider the sampling distribution $\bmu$ that equals $\bmu|_{S}$ at $S$. For any $i \in S$ and $j \in N$, we have $\pi^\star_i >\pi^\star_j$. One can then mix it with a uniform sampling to obtain a full-support sampling $\bmu = \varepsilon \bmu_0 + (1-\eps)\bmu'$. Since the inequality is strict, $\pi^\star_i > \pi^\star_j$ will still hold for $i \in S$ and $j \in N$ for sufficiently small $\eps$. 

\end{proof}

\subsection{Proof of \Cref{thm:fragility-order-preserving}}

\thmfragilityorderpreserving*

\begin{proof}
	We first prove statement 1. For a small positive $\eps>0$, consider the following matrix $P$ such that $P_{1,i} = \frac{1}{2}+\varepsilon$, for $ i \in [K]\setminus \{1\}$, and $P_{ij}=1-\eps$ for $i \in [K]\setminus\{1\}$ and $j > i$. Below is an example with $K=4$.
\begin{equation*}
P = 
\begin{bmatrix}
\frac{1}{2} & \frac{1}{2} + \eps  & \frac{1}{2} + \eps & \frac{1}{2} +\eps \\
\frac{1}{2}-\eps  & \frac{1}{2} & 1-\eps & 1-\eps  \\
\frac{1}{2}-\eps & \eps & \frac{1}{2}  & 1-\eps \\
\frac{1}{2}-\eps & \eps & \eps & \frac{1}{2}
\end{bmatrix}.
\end{equation*}

	Then fore sufficiently small $\eps$, $P_{j1}$ will be the $j-1$th highest. Therefore, a sampling distribution $\bmu$ that almost all the mass on $j$ will make response $1$ ranked the $i$th position.

	Now we prove statement 2. Consider the following matrix with $K=4$. 
\begin{equation*}
P = 
\begin{bmatrix}
\frac{1}{2} & \frac{3}{4}  & \frac{5}{8} & \frac{5}{8} \\
\frac{1}{4}  & \frac{1}{2} & \frac{7}{8} & \frac{7}{8}  \\
\frac{3}{8} & \frac{1}{8} & \frac{1}{2}  & \frac{5}{8} \\
\frac{3}{8} & \frac{1}{8} & \frac{3}{8} & \frac{1}{2}
\end{bmatrix}.
\end{equation*}
Suppose there exists a sampling distribution $\bmu$ such that $(P\bmu)_1 > (P\bmu)_2 > (P\bmu)_3 > (P\bmu)_4$, then 
\begin{align*}
	(P\bmu)_1 - (P\bmu)_2 = (P_{12} - \frac{1}{2})(\mu_1+\mu_2) - (P_{23}-P_{13})\mu_3 - (P_{24}-P_{14}) \mu_4 >0,	
\end{align*}
which is equivalent to 
\begin{align}
\label{eqn:transitivity-1}
	\mu_1 + \mu_2 > \mu_3 + \mu_4.	
\end{align}
Moreover,
\begin{align*}
	(P\bmu)_3 - (P\bmu)_4 = (P_{31} - P_{41})\mu_1 + (P_{32}-P_{42})\mu_2 +(P_{34}-\frac{1}{2})(\mu_3 + \mu_4) > 0,
\end{align*}
which is equivalent to
\begin{align}
\label{eqn:transitivity-2}
	\mu_3 + \mu_4 > 2(\mu_3 + \mu_4).
\end{align}
As $\bmu>\mathbf{0}$, \eqref{eqn:transitivity-1} and \eqref{eqn:transitivity-2} cannot hold simultaneously, leading to a contradiction. 
	
\end{proof}

\subsection{Proof of \Cref{thm:robust-strong-transitive}}

\thmrobuststrongtransitive*

\begin{proof}
For any sampling distribution $\bmu$, and any $i<j$, 
\begin{align*}
	\sum^K_{k=1}P_{ik}\mu_k \geq \sum^K_{k=1}P_{jk}\mu_k
\end{align*}
holds by the definition of strongly transitivity. 
\end{proof}

\subsection{Social choice axioms that IPO satisfies under arbitrary sampling}
\label{sec:axiom-hold-by-ipo}
Given two preference matrices $P^{(1)},P^{(2)}$,   define the pooled matrix with mixture weight $\lambda\in(0,1)$
\begin{align*}
P^{(\lambda)} \coloneqq \lambda P^{(1)} + (1-\lambda)P^{(2)}.
\end{align*}

Similarly, for two sampling distributions $\bmu^{(1)},\bmu^{(2)}\in\Delta_K$, define the pooled sampling distribution with mixture weight $\lambda \in (0,1)$
\begin{align*}
\bmu^{(\lambda)} \coloneqq \lambda \bmu^{(1)} + (1-\lambda)\bmu^{(2)}.
\end{align*}

\begin{definition}
An aggregation rule $F: \calP \rightarrow \Delta_K$ is said to satisfy the following axioms if 
\begin{enumerate}
\item \textbf{Pooling consistency.}
\begin{enumerate}
\item[(a)] (\emph{Preference pooling}) 
For two
matrices $P^{(1)}$, $P^{(2)}$, 
$F(P^{(1)})=F(P^{(2)})$ implies $F(P^{(\lambda)})=F(P^{(1)})$ for all $\lambda\in(0,1)$.

\item[(b)] (\emph{Data pooling}, IPO-level) 
For any $P$ and samplings $\bmu^{(1)}, \bmu^{(2)}$,
$F_{\bmu^{(1)}}(P)=F_{\bmu^{(2)}}(P)$ implies $F_{\bmu^{(\lambda)}}(P)=F_{\bmu^{(1)}}(P)$ for all $\lambda\in(0,1)$.
\end{enumerate}

\item \textbf{Pairwise monotonicity.}
If $P$ and $P'$ differ only at $(i,j)$ with $P_{ij}>P'_{ij}$, then
\begin{align*}
\frac{F(P)_i}{F(P)_j}\ \ge\ \frac{F(P')_i}{F(P')_j}.
\end{align*}
\end{enumerate}
\end{definition}

Intuitively, pooling consistency captures the idea that aggregating indistinguishable sources of data should not change the outcome: if two preference matrices (or sampling distributions) lead to the same \ac{IPO} solution, then any mixture of them should do so as well. Pairwise monotonicity formalizes a basic responsiveness requirement: strengthening the preference of $i$ over $j$, while leaving all else unchanged, should not decrease the relative weight assigned to $i$ over $j$.

\begin{theorem}
For any sampling distribution $\bmu$, the induced rule $F_{\bmu}$ satisfies preference-pooling consistency and
pairwise monotonicity; and IPO satisfies data-pooling consistency.
\end{theorem}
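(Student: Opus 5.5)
Every claim will be reduced to a statement about the score vector $\bm{s}(P,\bmu)\coloneqq P\bmu$, using the closed form of \Cref{prop:exist_unique_IPO}. With $\Pref$ fixed (uniform here, but the argument is insensitive to it), $F_{\bmu}(P)=\softmax(\log\Pref+\beta P\bmu)$ up to normalization. Hence two outputs coincide if and only if the corresponding score vectors differ by a constant multiple of $\mathbf{1}$. This holds because $\softmax(\x)=\softmax(\y)$ iff $\x-\y\in\mathrm{span}\{\mathbf{1}\}$, and $\log\Pref$ and $\beta>0$ are common to both sides. I would state this equivalence first as a small lemma and then apply it to each axiom.

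For preference pooling, suppose $F_{\bmu}(P^{(1)})=F_{\bmu}(P^{(2)})$. The lemma gives $P^{(1)}\bmu-P^{(2)}\bmu=c\mathbf{1}$ for some $c\in\bbR$. Since $P^{(\lambda)}$ is a convex combination of valid preference matrices, it lies in $\calP$, so $F_{\bmu}$ is defined on it. By linearity in $P$,
\[
P^{(\lambda)}\bmu=\lambda P^{(1)}\bmu+(1-\lambda)P^{(2)}\bmu=P^{(1)}\bmu-(1-\lambda)c\mathbf{1}.
\]
Applying the lemma again gives $F_{\bmu}(P^{(\lambda)})=F_{\bmu}(P^{(1)})$.

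Data pooling follows the same pattern, now using linearity in $\bmu$. From $P\bmu^{(1)}-P\bmu^{(2)}=c\mathbf{1}$ we get $P\bmu^{(\lambda)}=P\bmu^{(1)}-(1-\lambda)c\mathbf{1}$. I also need to note that $\bmu^{(\lambda)}$ stays in $\Delta_K^\circ$, which holds because a convex combination of full-support distributions has full support.

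Pairwise monotonicity is the only step requiring care, and the main issue is interpreting the hypothesis. Because $P$ and $P'$ must both lie in $\calP$, "differ only at $(i,j)$" must be read as differing at the symmetric pair $\{(i,j),(j,i)\}$, with $P'_{ji}=1-P'_{ij}$. With $\Delta\coloneqq P_{ij}-P'_{ij}>0$, the closed form gives
\[
\log\frac{F(P)_i}{F(P)_j}-\log\frac{F(P')_i}{F(P')_j}=\beta\bigl[(P\bmu)_i-(P'\bmu)_i\bigr]-\beta\bigl[(P\bmu)_j-(P'\bmu)_j\bigr].
\]
The $\Pref$ terms cancel. Only row $i$ changes at column $j$, contributing $+\Delta\mu_j$, and only row $j$ changes at column $i$, contributing $-\Delta\mu_i$. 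The difference is therefore $\beta\Delta(\mu_j+\mu_i)\ge 0$, and in fact it is strictly positive under full support. If instead the literal reading is used, with only the single entry $(i,j)$ changed, the difference is $\beta\Delta\mu_j\ge0$. Either way the inequality holds, so I would remark that the result is robust to the choice of reading.
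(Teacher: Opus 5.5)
Your proposal is correct and takes the same route as the paper. Both substitute the closed form $F_{\bmu}(P)\propto\Pref\odot\softmax(\beta P\bmu)$, use linearity of $P\bmu$ in $P$ and in $\bmu$ for the two pooling properties, and expand rows $i$ and $j$ directly for pairwise monotonicity. Your characterization of equal outputs, as score vectors differing by a multiple of $\mathbf{1}$, is actually the correct one. The paper instead writes it as equal ratios $(P\bmu)_i/(P\bmu)_j$, which does not follow from the softmax form, though the same convex-combination algebra goes through in either case.
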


\begin{proof}
We first prove the preference pooling consistency.  $F_{\bmu}(P^{(1)}) = F_{\bmu}(P^{(2)})$ implies that for any $i,j$,
\begin{align*}
\frac{\sum^K_{k=1}P^{(1)}_{ik}\mu_k}{\sum^K_{k=1}P^{(1)}_{jk}\mu_k}	 = \frac{\sum^K_{k=1}P^{(2)}_{ik}\mu_k}{\sum^K_{k=1}P^{(2)}_{jk}\mu_k}~,
\end{align*}
and therefore
\begin{align*}
	\frac{\sum^K_{k=1}(\lambda P^{(1)}_{ik} + (1-\lambda)P^{(2)}_{ik})\mu_k}{\sum^K_{k=1}(\lambda P^{(1)}_{jk} + (1-\lambda)P^{(2)}_{jk})\mu_k}	 = \frac{\lambda \sum^K_{k=1}P^{(1)}_{ik}\mu_k + (1-\lambda)\sum^K_{k=1}P^{(2)}_{ik}\mu_k}{\lambda \sum^K_{k=1}P^{(1)}_{jk}\mu_k + (1-\lambda)\sum^K_{k=1}P^{(2)}_{jk}\mu_k} = \frac{\sum^K_{k=1}P^{(1)}_{ik}\mu_k}{\sum^K_{k=1}P^{(1)}_{jk}\mu_k}~.
\end{align*}
The data pooling consistency can be proved similarly. Specifically, $F_{\bmu^{(1)}}(P) = F_{\bmu^{(2)}}(P)$ implies that for any $i,j$,
\begin{align*}
\frac{\sum^K_{k=1}P_{ik}\mu^{(1)}_k}{\sum^K_{k=1}P_{jk}\mu^{(1)}_k}	 = \frac{\sum^K_{k=1}P_{ik}\mu^{(2)}_k}{\sum^K_{k=1}P^{(2)}_{jk}\mu^{(2)}_k}~,
\end{align*}
and therefore,
\begin{align*}
\frac{\sum^K_{k=1}P_{ik}(\lambda \mu^{(1)}_{k} + (1-\lambda)\mu^{(2)}_{k})}{\sum^K_{k=1}P_{jk}(\lambda \mu^{(1)}_{k} + (1-\lambda)\mu^{(2)}_{k})}	 = \frac{\lambda \sum^K_{k=1}P_{ik}\mu^{(1)}_k + (1-\lambda)\sum^K_{k=1}P_{ik}\mu^{(2)}_k}{\lambda \sum^K_{k=1}P_{jk}\mu^{(1)}_k + (1-\lambda)\sum^K_{k=1}P_{jk}\mu^{(2)}_k} = \frac{\sum^K_{k=1}P_{ik}\mu^{(1)}_k}{\sum^K_{k=1}P_{jk}\mu^{(1)}_k}~.
\end{align*}

As for Pairwise monotonicity, as $P$ and $P'$ differ only at $(i,j)$ with $P_{ij} > P'_{ij}$, then for any sampling distribution $\bmu$, 
\begin{align*}
\log\left(\frac{F_{\bmu}(P)_i}{F_{\bmu}(P)_j} \right) &= \sum^{K}_{k=1}P_{ik}\mu_k - \sum^{K}_{k=1}P_{jk}\mu_k \\
& = \sum_{k:k\ne j} P_{ik}\mu_k - \sum_{k: k\ne i}P_{jk}\mu_k + P_{ij}\mu_j - P_{ji}\mu_i\\
& > \sum_{k:k\ne j} P'_{ik}\mu_k - \sum_{k: k\ne i}P'_{jk}\mu_k + P'_{ij}\mu_j - P'_{ji}\mu_i\\
& = \sum^{K}_{k=1}P'_{ik}\mu_k - \sum^{K}_{k=1}P'_{jk}\mu_k\\
& = \log\left(\frac{F_{\bmu}(P')_i}{F_{\bmu}(P')_j} \right),
\end{align*}
and therefore, 
\begin{align*}
\frac{F(P)_i}{F(P)_j}\ \ge\ \frac{F(P')_i}{F(P')_j}.
\end{align*}

\end{proof}

\subsection{Proof of Theorem \ref{thm:pairwise_gap_ipo}}
\thmpairwisegapipo*

\begin{proof}
Fix any $1\le i<j\le K$ and define the row-difference vector
\[
\a := P_i - P_j \in \mathbb{R}^K,
\qquad
a_k := P_{ik} - P_{jk}.
\]

Consider the convex program OP~\eqref{eq:IPO_convex_op} with sampling distribution $\bmu$. From  \eqref{eq:ipo_solution}, the optimal solution
$\bpi(\bmu)$ satisfies
\[
\log \pi_\ell(\bmu) - \log \pi_{\mathrm{ref},\ell}
=
\beta \sum_{k=1}^K \mu_k P_{\ell k} + c(\bmu),
\qquad \forall \ell \in [K],
\]
where $c(\bmu)\in\mathbb{R}$ is a normalization constant independent of $\ell$.
By the definition
\[
\btheta(\bmu)
=
\beta^{-1}\big(\log\bpi(\bmu)-\log\bpi_{\mathrm{ref}}\big),
\]
we obtain
\[
\theta_\ell(\bmu)
=
\sum_{k=1}^K \mu_k P_{\ell k} + \frac{c(\bmu)}{\beta}.
\]
Taking the difference between indices $i$ and $j$, the constant term cancels, yielding
\begin{equation}\label{eq:theta_gap_linear}
\theta_i(\bmu)-\theta_j(\bmu)
=
\sum_{k=1}^K \mu_k (P_{ik}-P_{jk})
=
\a^\top \bmu.
\end{equation}
Analogously,
\begin{equation}\label{eq:theta_gap_baseline}
\theta_i(\bmu')-\theta_j(\bmu')
=
\a^\top \bmu'.
\end{equation}

Now define $\x := \bmu - \bmu'$. By construction, both $\bmu$ and $\bmu'$ are probability distributions on $\Delta(K)$, hence $\sum_{k=1}^K x_k = 0$. Moreover, under the sampling assumption of the theorem, it holds that $x_1 \ge x_2 \ge \cdots \ge x_K$. Next, we prove the following Lemma to establish the equivalence between the definition of majorization and a useful property of $\bm{a}$:

\begin{lemma}[Equivalent Characterizations of Majorization]
\label{lem:maj_equiv}
For any $\a=(a_1,\dots,a_K)\in\mathbb R^K$, the following statements are equivalent:
\begin{enumerate}
    \item[(i)] $\a$ satisfies the majorization condition, i.e.,
    \[
    \frac{1}{m}\sum_{k=1}^m a_k
    \;\ge\;
    \frac{1}{K}\sum_{k=1}^K a_k,
    \qquad
    \forall m=1,\dots,K-1.
    \]
    \item[(ii)] For all vectors $\x\in\mathbb R^K$ satisfying
    \[
    x_1 \ge x_2 \ge \cdots \ge x_K,
    \qquad
    \sum_{k=1}^K x_k = 0,
    \]
    we have
    \[
    \a^\top \x \;\ge\; 0.
    \]
\end{enumerate}
Moreover, the inequality in (ii) is strict for all nonzero such $\x$ if and only if
at least one inequality in (i) is strict.
\end{lemma}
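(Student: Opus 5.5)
The plan is to use summation by parts (Abel summation) to rewrite $\a^\top\x$ in terms of prefix sums of $\x$ and of $\a$. Because $\sum_k x_k=0$, we may replace $\a$ by the centered vector $\tilde{\a}:=\a-\bar a\mathbf{1}$, where $\bar a:=\frac1K\sum_k a_k$. This gives $\a^\top\x=\tilde{\a}^\top\x$, and it lets us rewrite condition (i) as $A_m:=\sum_{k=1}^m\tilde a_k\ge 0$ for all $m\in[K-1]$, with $A_K=0$. Abel summation then yields
\begin{equation*}
\a^\top\x=\sum_{k=1}^K \tilde a_k x_k=\sum_{m=1}^{K-1} A_m\,(x_m-x_{m+1})+A_K x_K=\sum_{m=1}^{K-1} A_m\,(x_m-x_{m+1}).
\end{equation*}
This identity is the core of the proof. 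Both directions and the strictness claim will be read off from it.

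For (i)$\Rightarrow$(ii), each $A_m\ge 0$ and each increment $x_m-x_{m+1}\ge 0$ because $\x$ is non-increasing, so the sum is nonnegative. For (ii)$\Rightarrow$(i), we test (ii) on the extremal admissible vectors
\begin{equation*}
\x^{(m)}:=\tfrac1m\mathbf{1}_{[m]}-\tfrac1K\mathbf{1}, \qquad m\in[K-1].
\end{equation*}
These are non-increasing and sum to zero, and their only nonzero increment is $x^{(m)}_m-x^{(m)}_{m+1}=1/m$. Plugging them into the identity gives $\a^\top\x^{(m)}=A_m/m\ge 0$, which is exactly the $m$-th inequality in (i). As a byproduct, the admissible $\x$ form exactly the cone generated by $\{\x^{(m)}\}$, with coefficients $m(x_m-x_{m+1})\ge0$.

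The strictness statement is the main obstacle, since it needs care about which increments vanish. First suppose some inequality in (i) is strict for an index $m$ with $A_m>0$, and take $\x$ with $x_m>x_{m+1}$. Then the identity gives $\a^\top\x\ge A_m(x_m-x_{m+1})>0$. A general nonzero admissible $\x$, however, might have its only positive increments at indices $m'$ where $A_{m'}=0$. So the literal claim ``strict for all nonzero $\x$ iff at least one inequality in (i) is strict'' holds cleanly only in one direction. If $\a^\top\x>0$ for all nonzero admissible $\x$, then testing with $\x^{(m)}$ shows every $A_m>0$, so in particular one inequality in (i) is strict. The converse requires all prefix inequalities to be strict, or must be restricted to $\x$ whose jumps hit a strict index. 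I would state the precise version supported by the identity: $\a^\top\x>0$ for every nonzero admissible $\x$ if and only if all $K-1$ inequalities in (i) are strict. More generally, $\a^\top\x>0$ if and only if some $m$ has both $A_m>0$ and $x_m>x_{m+1}$.

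For the application in Theorem~\ref{thm:pairwise_gap_ipo}, the condition $P_i\ne P_j$ together with this characterization is what delivers the strict gap in~\eqref{eq:maj_diff}. The second inequality there, $\theta_i(\bmu')-\theta_j(\bmu')>0$, I would obtain separately from strong transitivity, since $\a\ge\mathbf{0}$ entrywise, as in the proof of Theorem~\ref{thm:robust-strong-transitive}.
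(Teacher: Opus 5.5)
Your proof of (i)$\Leftrightarrow$(ii) is the paper's argument in different notation. The Abel-summation identity $\a^\top\x=\sum_{m=1}^{K-1}A_m(x_m-x_{m+1})$ is exactly the paper's cone decomposition $\x=\frac1K\sum_{m}(x_m-x_{m+1})\x^{(m)}$ combined with $\a^\top\x^{(m)}=KA_m$. Your test vectors are the paper's extreme rays $(K-m,\dots,K-m,-m,\dots,-m)$ scaled by $1/(mK)$.

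Your objection to the \emph{Moreover} clause is correct, and at that point it is the paper's proof that breaks. The paper argues that a strict inequality at one index $m$ gives $\a^\top\x^{(m)}>0$, and concludes $\a^\top\x>0$ for every nonzero $\x\in\mathcal C$. This ignores that the other coefficients $A_{m'}$ may vanish. The paper's own extreme ray is a counterexample. Take $K=3$ and $\a=(1,-1,0)$, so $A_1=1>0$ and $A_2=0$. Then $\x=(1,1,-2)$ is admissible and nonzero, yet $\a^\top\x=0$. The true statement is the one you give. Namely, $\a^\top\x>0$ for all nonzero admissible $\x$ if and only if all $K-1$ prefix inequalities are strict. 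More finely, under (i), $\a^\top\x>0$ if and only if $A_m>0$ and $x_m>x_{m+1}$ for some $m$.

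Your closing remark about \Cref{thm:pairwise_gap_ipo} does not survive your own correction, because $P_i\neq P_j$ does not make the gap strict. Consider the strongly transitive matrix with $P_{12}=0.6$, $P_{13}=0.7$, $P_{23}=0.6$ and order $1\succ2\succ3$. Here $P_1-P_2=0.1\cdot\mathbf{1}$, which satisfies $P_1\succ P_2$ with every prefix inequality tight. Hence $\theta_1(\bmu)-\theta_2(\bmu)=0.1$ for every $\bmu$, and the first strict inequality in \eqref{eq:maj_diff} fails. It also fails trivially when $\bmu=\bmu'$, which the hypothesis permits. That inequality needs an extra assumption, for example that all prefix inequalities for $P_i-P_j$ are strict and $\bmu\neq\bmu'$.
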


\begin{proof}[Proof of Lemma \ref{lem:maj_equiv}]
We prove $(i)\Rightarrow(ii)$ and $(ii)\Rightarrow(i)$ separately.

Define the extreme rays of the monotone zero-sum cone
\[
\mathcal C
=
\left\{
\x\in\mathbb R^K :
x_1 \ge x_2 \ge \cdots \ge x_K,
\;
\sum_{k=1}^K x_k = 0
\right\}.
\]
This is a closed, convex, polyhedral cone.
A standard fact from convex analysis is that $\mathcal C$ is generated by the
$K-1$ extreme rays
\[
\x^{(m)} :=
(\underbrace{K-m,\dots,K-m}_{m\text{ times}},
\underbrace{-m,\dots,-m}_{K-m\text{ times}}),
\qquad m=1,\dots,K-1.
\]
Each $\x^{(m)}$ is nonincreasing and satisfies $\sum_k u^{(m)}_k=0$.
Moreover, every $\x\in\mathcal C$ admits a representation
\begin{equation}\label{eq:cone_decomp}
\x = \frac{1}{K}\sum_{m=1}^{K-1} (x_m-x_{m+1})\,\x^{(m)},
\end{equation}
where the coefficients $x_m-x_{m+1}\ge 0$ due to monotonicity of $\x$.
Hence, to verify $\a^\top \x\ge 0$ for all $\x\in\mathcal C$,
it suffices to verify it for each $\x^{(m)}$.

\paragraph{``If'' direction: $(i)\Rightarrow(ii)$.}
Assume $\a$ satisfies the majorization inequalities, that is, for any $m\in\{1,\dots,K-1\}$,
\[
\a^\top \x^{(m)}
=
(K-m)\sum_{k=1}^m a_k
-
m\sum_{k=m+1}^K a_k.
\]
Using $\sum_{k=m+1}^K a_k = \sum_{k=1}^K a_k - \sum_{k=1}^m a_k$,
we rewrite
\[
\a^\top \x^{(m)}
=
K\sum_{k=1}^m a_k
-
m\sum_{k=1}^K a_k.
\]
Thus,
\[
\a^\top \x^{(m)} \ge 0
\quad\Longleftrightarrow\quad
\sum_{k=1}^m a_k
\;\ge\;
\frac{m}{K}\sum_{k=1}^K a_k,
\]
which holds by assumption.
By linearity and the decomposition \eqref{eq:cone_decomp},
this implies $\a^\top \x\ge 0$ for all $\x\in\mathcal C$.

\paragraph{``Only if'' direction: $(ii)\Rightarrow(i)$.}
Assume that $\a^\top \x\ge 0$ for all $\x\in\mathcal C$.
In particular, this holds for each extreme ray $\x^{(m)}$.
Repeating the calculation above yields
\[
K\sum_{k=1}^m a_k
-
m\sum_{k=1}^K a_k
=
\a^\top \x^{(m)} \ge 0,
\]
which is exactly the majorization inequality for prefix $m$.
Since this holds for all $m=1,\dots,K-1$, $\a$ is majorized.

\paragraph{Additional argument for the strictness:} If at least one majorization inequality is strict, then
$\a^\top \x^{(m)}>0$ for the corresponding $m$,
and hence $\a^\top \x>0$ for all nonzero $\x\in\mathcal C$.
Conversely, if all majorization inequalities hold with equality,
then $\a^\top \x^{(m)}=0$ for all $m$, and hence
$\a^\top \x=0$ for all $\x\in\mathcal C$.
\end{proof}

Since $P_i\succ P_j$ by assumption, the equivalent characterization of \Cref{def:maj} yields
\[
\a^\top \x \ge 0.
\]
Substituting $\x=\bmu-\bmu'$ gives
\[
\a^\top \bmu \ge \a^\top \bmu'.
\]
Combining with \eqref{eq:theta_gap_linear}--\eqref{eq:theta_gap_baseline}, we obtain
\begin{equation}\label{eq:gap_monotone}
\theta_i(\bmu)-\theta_j(\bmu)
\ge
\theta_i(\bmu')-\theta_j(\bmu').
\end{equation}
If the majorization is strict and $\bmu\neq\bmu'$, then the inequality above is strict.

Get back to the main proof, since $P$ satisfies strong transitivity with order $1\succ 2\succ\cdots\succ K$, it holds that for all $i<j$ and all $k$, $P_{ik} \ge P_{jk}$. Moreover, strict inequality holds for at least one coordinate because $P_i\neq P_j$. In particular, $a_i = P_{ii}-P_{ji} = \tfrac12 - P_{ji} > 0$,
where we used $P_{ii}=\tfrac12$ and $P_{ji}<\tfrac12$ due to $i\succ j$.
Hence $\a$ is nonnegative and not identically zero.

Since $\bmu'$ has full support,
\[
\a^\top \bmu' > 0,
\]
which implies
\begin{equation}\label{eq:543}
\theta_i(\bmu')-\theta_j(\bmu') > 0.   
\end{equation}

Combining  \eqref{eq:gap_monotone} with  \eqref{eq:543} yields
\begin{equation}\label{eq:549}
\theta_i(\bmu)-\theta_j(\bmu)
>
\theta_i(\bmu')-\theta_j(\bmu')
> 0,    
\end{equation}
which completes the proof.
\end{proof}

\subsection{Proof of Corollary \ref{cor:bt_headtail_maj}}

\corbtheadtailmaj*

\begin{proof}
First of all, we prove an useful auxiliary lemma:

\begin{lemma}[Head--Tail Bounded Sequence Implies Majorization]\label{lm:head_tail_maj}
\label{lem:head_tail_majorization}
Let $a=(a_1,\dots,a_K)\in\mathbb [0,1]^K$ be a nonnegative sequence.
Fix an integer $H\in\{1,\dots,K-1\}$.
Assume the following conditions hold:
\begin{enumerate}
    \item[(L1)] (\textbf{Tail upper bound and monotonicity}) There exists $\varepsilon\ge 0$ such that
    \[
    \epsilon \ge a_{H+1}\ge a_{H+2}\ge \cdots \ge a_K.
    \]
    \item[(L2)] (\textbf{Head lower bound}) The elements in the head have a uniform lower bound
    \[
    a_k \ge \frac{H+(K-H)\varepsilon}{K},\qquad \forall\, k\le H.
    \]
\end{enumerate}

Then, the sequence $\bm{a}$ satisfies Maj specified in Definition \ref{def:maj}.
\end{lemma}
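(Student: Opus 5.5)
We need to prove Lemma \ref{lm:head_tail_maj}. Write $S_m=\sum_{k=1}^m a_k$ and $S=S_K$. Majorization (\Cref{def:maj}) asks that $K S_m \ge m S$ for every $m\in[K-1]$. Equivalently, the prefix average $S_m/m$ must be at least the global average $S/K$. The plan is to treat prefixes ending in the head and prefixes ending in the tail separately. In both cases the comparison runs through the threshold
\[
\tau:=\frac{H+(K-H)\varepsilon}{K}.
\]

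\emph{Upper bound on the global average.} Since $a\in[0,1]^K$, each head entry is at most $1$. By (L1), each tail entry is at most $\varepsilon$. Hence $S\le H+(K-H)\varepsilon=K\tau$, so $S/K\le\tau$.

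\emph{Case $m\le H$.} By (L2), every $a_k$ with $k\le m$ is at least $\tau$. Therefore $S_m/m\ge\tau\ge S/K$.

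\emph{Case $H<m\le K-1$.} The first step is the identity $K S_m-mS=(K-m)S_m-m(S-S_m)$. So it suffices to show that the prefix average is at least the suffix average:
\[
\frac{S_m}{m}\ \ge\ \frac{S-S_m}{K-m}.
\]
The suffix consists only of tail indices $k>m>H$. By the monotonicity in (L1), every suffix entry is at most $a_m$, and also at most $\varepsilon$. The prefix contains all of the head together with the tail entries $a_{H+1},\dots,a_m$, and each of these tail entries is at least $a_m$.

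I first need $\tau\ge\varepsilon$. This holds because $\tau-\varepsilon=H(1-\varepsilon)/K\ge0$, using $\varepsilon\le 1$; in the regime where $\varepsilon>1$, (L2) cannot be satisfied by entries in $[0,1]$ anyway. With this in hand, every prefix entry is at least $\min(\tau,a_m)$. Since $a_m\le\varepsilon\le\tau$, this minimum equals $a_m$. So the prefix average is at least $a_m$, which is at least the suffix average, and the case is done.

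\emph{Expected obstacle.} The only delicate point is the tail case. There, comparing directly with the global-average bound $\tau$ can fail, because the prefix average gets diluted by tail entries once $m>H$. The way around it is the prefix-versus-suffix reformulation combined with monotonicity of the tail. A small side check is that $\tau\ge\varepsilon$, which needs $\varepsilon\le1$ and follows from (L2) being satisfiable.

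Once the lemma is proved, the corollary would be completed by applying it to $a=P_i-P_j$ for $i<j$ in $\H$. This requires verifying (L1) and (L2) from the head–tail separation conditions via sigmoid bounds, but that part is beyond the present statement.
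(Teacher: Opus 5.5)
Your proof is correct. The head case $m\le H$ matches the paper's: bound $S\le H+(K-H)\varepsilon=K\tau$, then apply (L2).

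The tail case $m>H$ is where you differ. The paper uses only $a_k\le\varepsilon$ on the suffix, which gives $S_m-\frac{m}{K}S\ge (K-m)\bigl(\frac{S}{K}-\varepsilon\bigr)$. It then asserts $S/K=\frac{H+(K-H)\varepsilon}{K}\ge\varepsilon$. But only the upper bound $S/K\le\tau$ was established, and $S/K\ge\varepsilon$ can fail. For example, take $K=10$, $H=1$, $\varepsilon=\tfrac12$ and $a=(0.55,0,\dots,0)$: then $S/K=0.055$, yet $a$ is majorized.

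In fact no argument that ignores the tail ordering can work. Take $a=(\tfrac23,0,\tfrac12)$ with $K=3$, $H=1$, $\varepsilon=\tfrac12$. It meets the bounds in (L1)--(L2) but not the monotonicity, and it violates majorization at $m=2$.

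Your prefix-versus-suffix reformulation, pivoting on $a_m$, uses the monotonicity exactly where it is needed and needs no lower bound on $S$. So it gives a valid argument for the step the paper leaves unjustified. Your remark that (L2) forces $\varepsilon\le 1$ (otherwise $\tau>1$) is also the right justification for $\tau\ge\varepsilon$, which the paper uses tacitly when it writes $H\ge H\varepsilon$.
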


\begin{proof} Let $S:=\sum_{k=1}^K a_k$ and $A_m:=\sum_{k=1}^m a_k$. First of all, by (L2) on the head and (L1) on the tail we can upper bound on the total sum
\begin{equation}\label{eq:A.1}
S=\sum_{k=1}^H a_k + \sum_{k=H+1}^K a_k
\;\le\;
H\cdot 1 + (K-H)\varepsilon
=
H+(K-H)\varepsilon.    
\end{equation}

Combining with (L2) and  \eqref{eq:A.1}, we obtain
\begin{equation}\label{eq:A.2}
A_m\ge m\cdot \frac{H+(K-H)\varepsilon}{K} \ge \frac{m}{K}S, \qquad \forall\, m\le H.    
\end{equation}

Next, we show that  \eqref{eq:A.2} also holds for $m>H$ and thus concluding the verification of the Majorization condition for $\bm{a}$. By the tail upper bound and monotonicity assumption (L1),
\[
\sum_{k=m+1}^K a_k
\le
(K-m)\varepsilon, \quad\forall H\le m\le K.
\]
Equivalently,
\[
A_m
=
S-\sum_{k=m+1}^K a_k
\ge
S-(K-m)\varepsilon.
\]
Hence,
\[
A_m-\frac{m}{K}S
\ge
\Bigl(1-\frac{m}{K}\Bigr)S-(K-m)\varepsilon
=
(K-m)\Bigl(\frac{S}{K}-\varepsilon\Bigr).
\]
Since $S/K=\frac{H+(K-H)\epsilon}{K}\ge \frac{H\epsilon+(K-H)\epsilon}{K} = \varepsilon$, we have $A_m\ge \frac{m}{K}S$ holds for all $m>H$ as well. Combining the two cases proves that $\bm{a}$ satisfies the majorization condition.
\end{proof}

Now we prove Corollary \ref{cor:bt_headtail_maj}. Fix any $i<j$ with $i,j\in\H$ and define the row-difference vector
\[
\a := P_i-P_j \in \mathbb{R}^K,
\qquad
a_k := P_{ik}-P_{jk}
= \sigma(\theta_i-\theta_k)-\sigma(\theta_j-\theta_k).
\]
Since $\theta_i>\theta_j$ and $\sigma(\cdot)$ is increasing with range $[0,1]$, we have $0\le a_k\le 1$ for all $k\in [K]$.

For the first step, we verify the tail upper bound and monotonicity. Given the head-tail separation condition that $\theta_k\le -\delta, k\in\T$ and $\theta_k\ge 0, k\in\H$, it holds that $u_k:= \theta_j-\theta_k \ \ge\ \delta \ >\ 0$. Let $\Delta_{ij}:=\theta_i-\theta_j\ge 8\left(\frac{H}{K}+e^{-\delta}\right) := f(H,K,\delta)$. Using the identity
\[
\sigma(u+\Delta)-\sigma(u)=\int_u^{u+\Delta}\sigma'(t)\,dt
\]
and the fact that $\sigma'(t)=\sigma(t)(1-\sigma(t))\le e^{-t}\quad (t\ge 0)$, we obtain
\[
a_k
=
\sigma(u_k+\Delta_{ij})-\sigma(u_k)
\le
\int_{u_k}^{u_k+\Delta_{ij}} e^{-t}\,dt
=
e^{-u_k}\bigl(1-e^{-\Delta_{ij}}\bigr)
\le
e^{-\delta}\bigl(1-e^{-f(H,K,\delta)}\bigr)
=:\varepsilon.
\]
Moreover, since $\theta_k$ is nonincreasing in $k$, the sequence $u_k=\theta_j-\theta_k$
is nondecreasing over $k\in\T$. For any fixed $\Delta>0$, the map
\[
u\mapsto \sigma(u+\Delta)-\sigma(u)
\]
is nonincreasing for $u\ge 0$ (In fact, a direct differentiation shows its derivative is
$\sigma'(u+\Delta)-\sigma'(u)\le 0$ on $u\ge 0$).
Therefore, $(a_{H+1},\dots,a_K)$ is nonincreasing, i.e.,
\[
a_{H+1}\ge a_{H+2}\ge \cdots \ge a_K,
\qquad
\text{and}\qquad
a_k\le \varepsilon\ \ \forall k\in\T.
\]

Next, we verify the head lower bound.
At column $k=j$ we have $P_{jj}=\tfrac12$ and $P_{ij}=\sigma(\theta_i-\theta_j)\ge \sigma(f(H,K,\delta))$. Hence
\[
a_j
=
P_{ij}-P_{jj}
=
\sigma(\theta_i-\theta_j)-\tfrac12
\ge
\sigma(f(H,K,\delta))-\tfrac12
=:\alpha.
\]
In particular, for every $m\ge j$ we have $A_m:=\sum_{k=1}^m a_k \ge a_j\ge \alpha$.

Finally, we verify majorization via a head--tail bound. To conclude $\bm{a}$ satisfies Maj by Lemma \ref{lm:head_tail_maj}, it is sufficient to show that
\begin{equation}\label{eq:f_choice_for_maj}
\alpha
=
\sigma(f(H,K,\delta))-\tfrac12
\ \ge\
\frac{H+(K-H)\varepsilon}{K},
\qquad
\varepsilon=e^{-\delta}\bigl(1-e^{-f(H,K,\delta)}\bigr),
\end{equation}
given the choice of $f(H,K,\delta)=8\left(\frac{H}{K}+e^{-\delta}\right)$. The detailed proof can be found in Lemma \ref{lm:A.3}. Therefore, $P_i-P_j$ is Majorized for every $i<j$ in the head set $\H$, and Theorem~\ref{thm:pairwise_gap_ipo} implies
\[
\theta_i(\bmu)-\theta_j(\bmu)>\theta_i(\bmu')-\theta_j(\bmu')>0,
\qquad
\forall\, 1\le i<j\le H.
\]
This concludes the proof.

\end{proof}

\begin{lemma}\label{lm:A.3}
    For any $\delta>3\ln2$ and $K>H>0$ such that $f(H,K,\delta)=8\left(\frac{H}{K}+e^{-\delta}\right)<1$, it holds that 
    \begin{equation}
        \sigma(f(H,K,\delta))-\tfrac12
\ \ge\
\frac{H+(K-H)\varepsilon}{K},
    \end{equation}
    where $\varepsilon=e^{-\delta}\bigl(1-e^{-f(H,K,\delta)}\bigr)$.
\end{lemma}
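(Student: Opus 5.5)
The plan is to reduce both sides of the inequality to simple functions of $f:=f(H,K,\delta)$ and compare them through a linear bound on the sigmoid near the origin. Write $x:=H/K\in(0,1)$ and $y:=e^{-\delta}$, so that $f=8(x+y)$ and, by hypothesis, $f<1$.

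First, bound the right-hand side from above. Since $1-e^{-f}\in(0,1)$, we have $\varepsilon=y(1-e^{-f})\le y$. Therefore
\[
\frac{H+(K-H)\varepsilon}{K}=x+(1-x)\varepsilon\le x+y=\frac{f}{8}.
\]
(The assumption $\delta>3\ln 2$, i.e.\ $y<1/8$, is only needed to make the hypothesis $f<1$ non-vacuous. It plays no further role here.)

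Second, bound the left-hand side from below linearly in $f$. The function $g(u):=\sigma(u)-\tfrac12$ satisfies $g(0)=0$ and is concave on $[0,\infty)$, because $\sigma''(u)=\sigma'(u)(1-2\sigma(u))\le 0$ for $u\ge0$. By concavity, for $f\in[0,1]$ the chord bound gives
\[
g(f)\ge f\,g(1)=f\Bigl(\sigma(1)-\tfrac12\Bigr).
\]
Now check the numerical constant: $\sigma(1)=e/(1+e)>0.73$, so $\sigma(1)-\tfrac12>0.23>\tfrac18$. Hence $\sigma(f)-\tfrac12\ge f/8$.

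Combining the two steps yields $\sigma(f)-\tfrac12\ge f/8\ge \bigl(H+(K-H)\varepsilon\bigr)/K$, which is the claim. There is no real obstacle. The only points needing care are verifying the concavity of $\sigma$ on the nonnegative half-line and the numerical check $\sigma(1)-\tfrac12>\tfrac18$. As a fallback, one could instead use $\sigma(f)-\tfrac12=\tfrac12\tanh(f/2)$ together with $\tanh u\ge u-u^3/3$, which gives $\sigma(f)-\tfrac12\ge \tfrac{f}{4}\cdot\tfrac{11}{12}>\tfrac f8$ for $f<1$.
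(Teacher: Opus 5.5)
Your proof is correct and follows essentially the same route as the paper: both bound the right-hand side by $H/K+e^{-\delta}=f/8$ using $\varepsilon\le e^{-\delta}$, and both bound $\sigma(f)-\tfrac12$ below by $f/8$ with a chord (concavity) argument on $[0,1]$. The only difference is where the chord bound is applied. The paper first uses $\sigma(x)-\tfrac12=\frac{1-e^{-x}}{2(1+e^{-x})}\ge\frac{1-e^{-x}}{4}$ and then applies the chord bound to $1-e^{-x}$, whereas you apply it directly to $\sigma-\tfrac12$, which gives a slightly better constant but changes nothing essential.
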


\begin{proof}
Let $p:=H/K$ and $a:=e^{-\delta}$. By assumption, $p+a\le \tfrac18$, hence
$f=8(p+a)\in[0,1]$.

For any $x\ge 0$,
\[
\sigma(x)-\tfrac12
= \frac{1-e^{-x}}{2(1+e^{-x})}
\;\ge\;
\frac{1-e^{-x}}{4},
\]
since $1+e^{-x}\le 2$. Moreover, for $x\in[0,1]$, convexity of $e^{-x}$ implies
\[
1-e^{-x}
\;\ge\;
(1-e^{-1})x
\;\ge\;
\tfrac12 x.
\]
Applying these bounds with $x=f$ yields
\begin{equation}\label{eq:842}
\sigma(f)-\tfrac12
\;\ge\;
\frac{f}{8}
\;=\;
p+a.    
\end{equation}

On the other hand,
\[
\varepsilon
= a(1-e^{-f})
\;\le\;
a,
\]
and therefore
\begin{equation}\label{eq:859}
\frac{H+(K-H)\varepsilon}{K}
= p+(1-p)\varepsilon
\;\le\;
p+(1-p)a
\;\le\;
p+a.
\end{equation}
Combining \eqref{eq:859} and \eqref{eq:842}, we complete the proof.
\end{proof}

\section{Omitted Proofs in \Cref{sec:self_ref_dynamics}}\label{app:online}

\subsection{Proof of Proposition \ref{prop:condorcet_instability_mixed}}

\propcondorcetinstabilitymixed*

\begin{proof}

Let $\bpi^{\star}:=(1/3,1/3,1/3)$ and assume $\bpi_{\rm ref}=\bpi_0=\bpi^{\star}$.
Define the MRS-IPO map $T:\Delta_3^\circ\to\Delta_3^\circ$ by
\[
T(\bpi)
=
\softmax\!\Big(
\alpha\log\bpi+(1-\alpha)\log \bpi^{\star}
+\beta\,P\big(\lambda\bpi+(1-\lambda)\bpi^{\star}\big)
\Big),
\]
so that the dynamics is $\bpi_{t+1}=T(\bpi_t)$.

First of all, it is straightforward to see $\bpi^{\star}$ is a fixed point: each row of $P$ in \eqref{eq:rps_P} sums to
$\frac12+(\frac12+a)+(\frac12-a)=\frac32$, hence $P\bpi^{\star}=\frac12\mathbf{1}$. Therefore,
$P(\lambda \bpi^{\star}+(1-\lambda)\bpi^{\star})=P\bpi^{\star}=\frac12\mathbf{1}$ and $T(\bpi^{\star})=\softmax\!\Big(\log \bpi^{\star}+\tfrac{\beta}{2}\mathbf{1}\Big)
=\bpi^{\star}$. 

Next we analyze the stability of $\bpi^{\star}$. We compute the Jacobian of $T$ at $\bpi^{\star}$: write $T(\bpi)=\softmax(g(\bpi))$ where $g(\bpi)=\alpha\log\bpi+(1-\alpha)\log \bpi^{\star}
+\beta\,P\big(\lambda\bpi+(1-\lambda)\bpi^{\star}\big)$. The Jacobian of softmax is $D[\softmax(\z)]=J(\p):=\mathrm{Diag}(\p)-\p\p^\top$ for $\p=\softmax(\z)$. By the chain rule, $D[T(\bpi)]=J(T(\bpi))\,D[g(\bpi)]$. At $\bpi^{\star}$, $T(\bpi^{\star})=\bpi^{\star}$, hence $D[T(\bpi^{\star})]=J(\bpi^{\star})\,D[g(\bpi^{\star})]$. We compute
\[
D[g(\bpi^{\star})]
=
\alpha\,\mathrm{Diag}(1/\bpi^{\star})+\beta\lambda P
=
3\alpha I+\beta\lambda P,
\]
since $\pi^\star_i=1/3$.
Also,
\[
J(\bpi^{\star})=\mathrm{Diag}(\bpi^{\star})-\bpi^{\star}(\bpi^{\star})^\top=\frac13 I-\frac19\mathbf{1}\mathbf{1}^\top.
\]
Restricting to the tangent space $\mathbf{1}^\perp:=\{\v:\mathbf{1}^\top \v=0\}$, we have
$J(\bpi^{\star})\v=\frac13 \v$ and $P\v=A\v$ where
\[
A:=P-\tfrac12\mathbf{1}\mathbf{1}^\top
=
\begin{pmatrix}
0 & a & -a\\
-a & 0 & a\\
a & -a & 0
\end{pmatrix},
\qquad A^\top=-A.
\]
Therefore, for all $\v\in\mathbf{1}^\perp$,
\[
D[T(\bpi^{\star})]\v
=
\frac13(3\alpha I+\beta\lambda P)\v
=
\Big(\alpha I+\frac{\beta\lambda}{3}A\Big)\v.
\]
Hence the relevant linearization on the simplex is
\[
M:=D[T(\bpi^{\star})]\big|_{\mathbf{1}^\perp}
=
\alpha I+\frac{\beta\lambda}{3}A\big|_{\mathbf{1}^\perp}.
\]

A direct multiplication yields $A^2=-3a^2\Big(I-\frac13\mathbf{1}\mathbf{1}^\top\Big)$, so on $\mathbf{1}^\perp$ we have $A^2=-3a^2 I$. Thus the eigenvalues of $A|_{\mathbf{1}^\perp}$ are $\pm i\sqrt{3}\,a$, and the eigenvalues of $M$ are $\lambda_\pm
=
\alpha\pm i\frac{\beta\lambda}{3}\sqrt{3}\,a
=
\alpha\pm i\frac{\beta\lambda a}{\sqrt{3}}$ and their modulus is
\[
|\lambda_\pm|
=
\sqrt{\alpha^2+\frac{a^2\beta^2\lambda^2}{3}}.
\]
Therefore, if \eqref{eq:instability_region} holds, $|\lambda_\pm|>1$ and the spectral radius of $D[T(\bpi^{\star})]|_{\mathbf{1}^\perp}$ exceeds $1$, implying that $\bpi^{\star}$ is a linearly unstable fixed point.

Since $\lambda_\pm$ are complex with nonzero imaginary part whenever $\beta\lambda a\neq 0$, the linearized map on $\mathbf{1}^\perp$ is a rotation combined with expansion by factor
$\rho:=|\lambda_\pm|>1$.
Thus for any initialization $\bpi_1$ sufficiently close to $\bpi^{\star}$ but not equal to $u$, the iterates do not converge to $\bpi^{\star}$ and instead exhibit oscillatory (spiraling) behavior around $\bpi^{\star}$. This completes the proof.
\end{proof}

\subsection{Proof of Theorem \ref{thm:mix_contraction_generalP}}

\thmmixcontractiongeneralP*

\begin{proof}
Define the MRS-IPO update map $T:\Delta_K^\circ\to\Delta_K^\circ$ by
\[
T(\bpi)
:=
\softmax\!\Big(
\alpha\log\bpi+(1-\alpha)\log\bpi_{\rm ref}
+\beta\,P\big(\lambda\bpi+(1-\lambda)\bpi_0\big)
\Big),
\]
so that the dynamics is $\bpi_{t+1}=T(\bpi_t)$.
We will show that under \eqref{eq:mix_contraction_condition_general}, $T$ is a contraction (in an equivalent coordinate system), which implies a unique fixed point and global convergence.

First, we work with centered logit coordinates by letting $\mathbf{1}\in\bbR^K$ be the all-ones vector and letting $\Pi := I-\frac{1}{K}\mathbf{1}\mathbf{1}^\top$ be the orthogonal projector onto $\mathbf{1}^\perp:=\{x\in\bbR^K:\mathbf{1}^\top x=0\}$.
For any $\bpi\in\Delta_K^\circ$, define the centered logit
\[
\z(\bpi):=\Pi\log\bpi\in\mathbf{1}^\perp.
\]
Since $\log\bpi=\z(\bpi)+c(\bpi)\mathbf{1}$ for some scalar $c(\bpi)$ and $\softmax(\x+c\mathbf{1})=\softmax(\x)$, we have the bijection
\[
\bpi=\softmax(\z(\bpi)),
\qquad
\z(\softmax(\x))=\x \quad \text{for all } \x\in\mathbf{1}^\perp.
\]
Hence it suffices to study the induced map on $\mathbf{1}^\perp$.

Let $\x_t:=\z(\bpi_t)\in\mathbf{1}^\perp$ so that $\bpi_t=\softmax(\x_t)$.
Taking centered logits of $\bpi_{t+1}=T(\bpi_t)$ gives
\begin{align*}
\x_{t+1}
&=\Pi\log\bpi_{t+1}
=\Pi\Big(
\alpha\log\bpi_t+(1-\alpha)\log\bpi_{\rm ref}
+\beta\,P\big(\lambda\bpi_t+(1-\lambda)\bpi_0\big)
\Big)\\
&=\alpha \Pi\log\bpi_t + \Pi\Big((1-\alpha)\log\bpi_{\rm ref}+\beta(1-\lambda)P\bpi_0\Big)
+\beta\lambda\,\Pi P\,\bpi_t.
\end{align*}
Define the constant vector
\[
\mathbf{b}:=\Pi\Big((1-\alpha)\log\bpi_{\rm ref}+\beta(1-\lambda)P\bpi_0\Big)\in\mathbf{1}^\perp.
\]
Moreover, under the standard dueling property $P_{ij}+P_{ji}=1$ and $P_{ii}=1/2$, we can write
\[
P=\tfrac12\mathbf{1}\mathbf{1}^\top+\widetilde A,
\qquad
\widetilde A:=P-\tfrac12\mathbf{1}\mathbf{1}^\top,
\]
so $\widetilde A\mathbf{1}=\mathbf{0}$ and $P\bpi=\tfrac12\mathbf{1}+\widetilde A\bpi$.
Since $\Pi(\tfrac12\mathbf{1})=\mathbf{0}$, we have $\Pi P\bpi=\Pi\widetilde A\bpi$.
Thus the centered-logit dynamics becomes
\[
\x_{t+1}
=
F(\x_t)
:=
\alpha \x_t
+\beta\lambda\,\Pi\widetilde A\,\softmax(\x_t)
+\mathbf{b},
\qquad \x_t\in\mathbf{1}^\perp.
\]

Let $\bpi=\softmax(\x)$. The Jacobian of softmax is
\[
D\,[\softmax(\x)]=J(\bpi):=\mathrm{Diag}(\bpi)-\bpi\bpi^\top,
\]
which satisfies the uniform spectral norm bound $\|J(\bpi)\|_2\le \tfrac12$ for all $\bpi\in\Delta_K$.
Indeed, for any unit vector $\v$,
\[
\v^\top J(\bpi)\v
=
\sum_{i=1}^K \pi_i v_i^2-\Big(\sum_{i=1}^K \pi_i v_i\Big)^2
=\mathrm{Var}(v_I),
\qquad I\sim\bpi,
\]
and $\mathrm{Var}(v_I)\le \frac{(\max_i v_i-\min_i v_i)^2}{4}\le \frac{(\sqrt{2})^2}{4}=\frac12$.
Taking the supremum over $\|\v\|_2=1$ gives $\|J(\bpi)\|_2\le \tfrac12$.

Now we are ready to derive the contraction of $F$. Differentiating $F$ yields, for $\x\in\mathbf{1}^\perp$ and $\bpi=\softmax(\x)$,
\[
D[F(\x)]
=
\alpha I + \beta\lambda\,\Pi\widetilde A\,J(\bpi).
\]
Using $\|\Pi\|_2=1$ and submultiplicativity,
\begin{equation}\label{eq:232}
\|D[F(\x)]\|_2
\le
\alpha + \beta\lambda\,\|\widetilde A\|_2\,\|J(\bpi)\|_2
\le
\alpha + \frac{\beta\lambda}{2}\|\widetilde A\|_2.    
\end{equation}

Therefore, if \eqref{eq:232} and \eqref{eq:mix_contraction_condition_general} holds, then $L<1$, so $F$ is a contraction on the complete metric space $(\mathbf{1}^\perp,\|\cdot\|_2)$:
\[
\|F(\x)-F(\y)\|_2 \le L\|\x-\y\|_2,\qquad \forall\,\x,\y\in\mathbf{1}^\perp.
\]

By the Banach fixed-point theorem, $F$ admits a unique fixed point $\x^\star\in\mathbf{1}^\perp$, and for any initialization $\x_1\in\mathbf{1}^\perp$,
the iterates $\x_{t+1}=F(\x_t)$ satisfy $\x_t\to \x^\star$.
Define
\[
\bpi^{\star}:=\softmax(\x^\star)\in\Delta_K^\circ.
\]
Since the map $\x\mapsto\softmax(\x)$ is continuous, we have $\bpi_t=\softmax(\x_t)\to \softmax(\x^\star)=\bpi^{\star}$.
Finally, because $\x^\star=F(\x^\star)$ is equivalent (via the bijection between $\mathbf{1}^\perp$ and $\Delta_K^\circ$) to $\bpi^{\star}=T(\bpi^{\star})$,
the limit $\bpi^{\star}$ is precisely the unique fixed point of the original MRS-IPO map $T$ on $\Delta_K^\circ$.

To further refine the bound, we propose and prove the following proposition that derives sufficient conditions to replace the $P$-dependent stability condition with $P$-agnostic ones. We provide two upper bounds of the $P$-related norm in \eqref{thm:mix_contraction_generalP}, depending on whether $P$ is a general preference structure or enjoys sparsity.

\begin{proposition}[Explicit norm bound under general/sparse preference structures]
\label{prop:norm_bound_dueling}
Assume $P\in[0,1]^{K\times K}$ satisfies the dueling structure
$P_{ii}=\tfrac12$ and $P_{ij}+P_{ji}=1$ for all $i\neq j$.
Define $\widetilde A := P-\tfrac12\mathbf{1}\mathbf{1}^\top$. Then it holds that
\begin{equation}
\label{eq:norm_bound_dueling}
\|\widetilde A\|_2 \;\le\; \frac{K-1}{2}.
\end{equation}

Moreover, if $P$ enjoys a sparse structure in the sense that for each $i$, there are at most $d$ indices $j\neq i$ such that $P_{ij}\neq 1/2$.
Then 
\begin{equation}
\label{eq:norm_bound_dueling_sparse}
\|\widetilde A\|_2\le \frac{d}{2}.
\end{equation}
\end{proposition}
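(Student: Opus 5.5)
The plan is to exploit the skew-symmetry of $\widetilde A$ and bound its spectral norm through row sums, using the standard inequality $\|M\|_2\le\sqrt{\|M\|_1\|M\|_\infty}$ (Schur test / Hölder interpolation). The dueling structure $P_{ii}=\tfrac12$ and $P_{ij}+P_{ji}=1$ gives $\widetilde A_{ii}=0$ and $\widetilde A_{ij}=P_{ij}-\tfrac12=-(P_{ji}-\tfrac12)=-\widetilde A_{ji}$, so $\widetilde A^\top=-\widetilde A$. Consequently the maximum absolute column sum equals the maximum absolute row sum, i.e.\ $\|\widetilde A\|_1=\|\widetilde A\|_\infty$, and hence $\|\widetilde A\|_2\le\|\widetilde A\|_\infty$.

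For the general bound \eqref{eq:norm_bound_dueling}, observe that each off-diagonal entry satisfies $|\widetilde A_{ij}|=|P_{ij}-\tfrac12|\le\tfrac12$ because $P_{ij}\in[0,1]$, while diagonal entries vanish. Each row therefore has at most $K-1$ nonzero entries, each of absolute value at most $\tfrac12$, so $\|\widetilde A\|_\infty\le (K-1)/2$. Combining with the first paragraph gives $\|\widetilde A\|_2\le (K-1)/2$.

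For the sparse bound \eqref{eq:norm_bound_dueling_sparse}, the same argument applies with the row count replaced by $d$: by assumption row $i$ has at most $d$ indices $j\ne i$ with $\widetilde A_{ij}\neq0$, so $\|\widetilde A\|_\infty\le d/2$. Skew-symmetry is essential here, since it guarantees that each column also has at most $d$ nonzeros, so the column-sum norm obeys the same bound; this yields $\|\widetilde A\|_2\le d/2$. Substituting into \eqref{eq:232} then gives $\alpha+\beta\lambda d/4<1$ as the sufficient condition in \Cref{thm:mix_contraction_generalP}.

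There is no real obstacle. The only point needing care is justifying $\|M\|_2\le\sqrt{\|M\|_1\|M\|_\infty}$; I would cite it as standard or prove it in one line via $\|M\|_2^2=\rho(M^\top M)\le\|M^\top M\|_\infty\le\|M^\top\|_\infty\|M\|_\infty$. An alternative route uses the Gershgorin bound on the Hermitian matrix $i\widetilde A$, whose eigenvalues are real and have modulus equal to the singular values of the normal matrix $\widetilde A$.
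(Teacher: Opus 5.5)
Your proposal is correct and follows essentially the same route as the paper: skew-symmetry of $\widetilde A$ (so $\|\widetilde A\|_1=\|\widetilde A\|_\infty$), the entrywise bound $|\widetilde A_{ij}|\le\tfrac12$, row-sum bounds of $(K-1)/2$ (respectively $d/2$), and the inequality $\|M\|_2\le\sqrt{\|M\|_1\|M\|_\infty}$. Your one-line justification of that inequality, via $\|M\|_2^2=\rho(M^\top M)\le\|M^\top\|_\infty\|M\|_\infty$, is a small addition that the paper only cites as standard.
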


Consequently, a sufficient condition for the contraction inequality
$\alpha+\frac{\beta\lambda}{2}\|\widetilde A\|_2<1$ becomes $\alpha+\frac{\beta\lambda}{4}(K-1)<1$, or $\alpha+\frac{\beta\lambda d}{4}<1$ if $P$ is $d$-sparse.

\begin{proof}
The skew-symmetry follows immediately as $\widetilde A_{ji}=P_{ji}-\tfrac12=(1-P_{ij})-\tfrac12=-(P_{ij}-\tfrac12)=-\widetilde A_{ij}$, and $\widetilde A_{ii}=P_{ii}-\tfrac12=0$.
Since $P_{ij}\in[0,1]$, we have $|\widetilde A_{ij}|\le\tfrac12$ for all $i\neq j$.

For the first bound, note that each row has at most $K-1$ nonzero entries, each with magnitude at most $1/2$, hence
\[
\|\widetilde A\|_\infty
=
\max_i \sum_{j=1}^K |\widetilde A_{ij}|
\le
\frac{K-1}{2}.
\]
Similarly, each column satisfies the same bound, so
\[
\|\widetilde A\|_1
=
\max_j \sum_{i=1}^K |\widetilde A_{ij}|
\le
\frac{K-1}{2}.
\]
Using the standard inequality $\|M\|_2\le \sqrt{\|M\|_1\|M\|_\infty}$ yields
\[
\|\widetilde A\|_2 \le \sqrt{\|\widetilde A\|_1\|\widetilde A\|_\infty}
\le
\frac{K-1}{2}.
\]

For the Frobenius bound, since $\widetilde A_{ii}=0$ and $|\widetilde A_{ij}|\le 1/2$ for $i\neq j$,
\[
\|\widetilde A\|_F^2
=
\sum_{i\neq j}\widetilde A_{ij}^2
\le
K(K-1)\cdot\frac{1}{4},
\]
so $\|\widetilde A\|_2\le \|\widetilde A\|_F \le \frac12\sqrt{K(K-1)}$.
Finally, substituting $\|\widetilde A\|_2\le (K-1)/2$ into
$\alpha+\frac{\beta\lambda}{2}\|\widetilde A\|_2<1$
gives the explicit sufficient condition
$\alpha+\frac{\beta\lambda}{4}(K-1)<1$.

Now we prove the second claim for $d$-sparse $P$. Since $|\widetilde A_{ij}|\le 1/2$ and each row has at most $d$ nonzero off-diagonal entries,
\[
\|\widetilde A\|_\infty
=
\max_i \sum_{j=1}^K |\widetilde A_{ij}|
\le d\cdot \frac12=\frac{d}{2}.
\]
By skew-symmetry, $\|\widetilde A\|_1=\|\widetilde A\|_\infty$.
Using $\|M\|_2\le \sqrt{\|M\|_1\|M\|_\infty}$ gives
$\|\widetilde A\|_2\le \|\widetilde A\|_\infty\le d/2$.
\end{proof}
\end{proof}

\subsection{Proof of \Cref{thm:true_vs_eps_collapse}}\label{app:eps_collapse_ipo}

We show an extended version of \Cref{thm:true_vs_eps_collapse} under a relaxed assumption on $P$. We first introduce the following definitions

\begin{definition}
\label{def:sst}
Suppose $P$ is a strongly transitive (ST) preference matrix $P$ with total order $1 \succ 2 \succ \cdots \succ K$. Then we say 
\begin{enumerate}
    \item $P$ is SST, if for any $i\in[K-1]$, $P_{ik}>P_{i+1,k}$ holds for at least one $k\in[K]$;
    \item $P$ is ST$^+$, if $P_{1k}>P_{2k}$ holds for at least one $k\in[K]$.
\end{enumerate}
\end{definition}

We can see SST means for any $1\le i<j\le K$, $P_i$ weakly dominates $P_j$ element-wise with at least one strict inequality, and ST$^+$ only requires it holds for $(i,j)=(1,2)$ --- there can be many tied responses within $\{2,\cdots,K\}$ with identical $P_i$. The next full version of \Cref{thm:true_vs_eps_collapse} shows collapse guarantee under ST$^+$ preference structures. 

\begin{theorem}\label{thm:454}
Let $\bpi_0\in\Delta_K^\circ$ and $\bpi_{\rm ref}\in\Delta_K^\circ$.
Consider the MRS-IPO dynamics $\bpi_{t+1}=\mathrm{MRS}(\bpi_t;\alpha,\beta,\lambda)$. These dynamics exhibit the following two forms of collapse.
\begin{itemize}

\item \textbf{($\varepsilon$-collapse when $\beta\lambda/(1-\alpha)$ is large).}
If $\alpha,\lambda\in[0,1)$, there exists a finite $T$ such that for all $t\ge T$,
\begin{itemize}
    \item if $P$ is SST, then \[
\bpi_{t,1}\ \ge\ 1 - \left(\exp\!\left(\frac{\beta\lambda\delta}{2(1-\alpha)}\right)-1\right)^{-1},
\]
\item if $P$ is ST$^+$, then \[
\bpi_{t,1}\ \ge\ 1 - (K-1) \exp\!\left(-\frac{\beta\lambda\delta_1}{2(1-\alpha)}\right),
\]
\end{itemize}

where $\delta_i\coloneqq(1-\lambda)
\sum_{j=1}^K \pi_{0,j}\,(P_{ij}-P_{i+1,j})
$ and $
\delta
:=
\min_{i\in[K-1]}
\delta_i$ are positive constants.
\item 
If $\alpha=1$, there exists a finite $T$ such that for all $t\ge T$,
\[
H(\bpi_{t+1})<H(\bpi_t),
\]
where $H(\bpi) \coloneqq-\sum_{i=1}^K \pi_i\log\pi_i$ is the Shannon entropy. In addition, $\bpi_t \to \e_1$ and $H(\bpi_t)\to 0$ as $t\to+\infty$.
\end{itemize}
\end{theorem}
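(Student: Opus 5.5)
The plan is to work with pairwise log-ratios of consecutive coordinates. For $i\in[K-1]$ set $g_{t,i}:=\log(\pi_{t,i}/\pi_{t,i+1})$. Taking logs in \eqref{eq:mix_ref_softmax_form}, the normalization cancels and we get the scalar affine recursion
\[
g_{t+1,i}=\alpha g_{t,i}+(1-\alpha)r_i+\beta\lambda\,(P_i-P_{i+1})^\top\bpi_t+\beta(1-\lambda)(P_i-P_{i+1})^\top\bpi_0 ,
\]
where $r_i:=\log(\pi_{\mathrm{ref},i}/\pi_{\mathrm{ref},i+1})$. Under ST every entry of $P_i-P_{i+1}$ is nonnegative, by the columnwise dominance used in the proof of \Cref{thm:pairwise_gap_ipo}. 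Hence $(P_i-P_{i+1})^\top\bpi_t\ge0$ for every $\bpi_t$, and the last term is exactly $\beta\delta_i$ with $\delta_i=(1-\lambda)\sum_j\pi_{0,j}(P_{ij}-P_{i+1,j})$. This is positive under SST for all $i$, and under ST$^+$ for $i=1$, because $\bpi_0$ has full support. So the drive term satisfies $g_{t+1,i}\ge\alpha g_{t,i}+(1-\alpha)r_i+\beta\delta_i$.

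\textbf{Case $\alpha<1$.} Unrolling the linear inequality gives $\liminf_t g_{t,i}\ge r_i+\beta\delta_i/(1-\alpha)$. The reference term is a fixed offset and the initial condition decays geometrically.

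The stated bound, however, has the factor $\beta\lambda\delta/(2(1-\alpha))$ and no $r_i$. I will therefore have to reconcile constants with the authors' normalization of $\delta$. I expect their intended argument uses the on-policy term $\beta\lambda(P_i-P_{i+1})^\top\bpi_t$, with the $(1-\lambda)$ part of $\delta$ providing the lower bound. The reference offset is absorbed by taking $T$ large, using $\varepsilon$-slack, so that eventually $g_{t,i}\ge c:=\beta\lambda\delta/(2(1-\alpha))$. The factor $1/2$ gives room to swallow $r_i$ and the transient.

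\textbf{From log-ratios to mass on the top response.} Under SST every consecutive ratio is at least $e^{c}$. Then $\pi_{t,k}\le \pi_{t,1}e^{-(k-1)c}$, so
\[
1-\pi_{t,1}\le \pi_{t,1}\sum_{k\ge1}e^{-kc}\le (e^{c}-1)^{-1},
\]
which is the claimed geometric-series bound. Under ST$^+$ only $g_{t,1}$ is controlled. In that case I would also use the ST inequality $\pi_{t,2}\ge\pi_{t,k}$ for $k\ge2$; this inequality propagates from $g_{t,k}$ satisfying the same recursion with nonnegative drive, up to the reference offset. This gives $1-\pi_{t,1}\le(K-1)\pi_{t,2}\le (K-1)e^{-c_1}$.

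\textbf{Case $\alpha=1$.} Here the recursion telescopes to $g_{t,i}\ge g_{1,i}+t\beta\delta_i\to\infty$. Under SST all consecutive ratios diverge, so $\bpi_t\to\e_1$. Continuity then gives $H(\bpi_t)\to0$.

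\textbf{Eventual monotone decrease of entropy (main obstacle).} Once $\pi_{t,1}$ is close to $1$, $H$ behaves like $-(1-\pi_1)\log(1-\pi_1)+O(1-\pi_1)$, which is locally increasing in $1-\pi_1$ and dominated by the tail mass. I would show that each tail coordinate's ratio $\pi_{t,k}/\pi_{t,1}$ shrinks by a factor at most $e^{-\beta\delta_{\min}}$ per step. Then $\pi_{t+1}$ is obtained from $\pi_t$ by moving mass from every tail coordinate to coordinate $1$ in a uniform multiplicative way, and I would argue that such a transfer strictly decreases $H$ when $\pi_1$ is large. Concretely, I would view $\pi_{t+1}\propto\pi_t\odot e^{\beta s_t}$ with $s_{t,1}>s_{t,k}$, and compute the sign of $\frac{d}{d\eta}H(\softmax(\log\bpi_t+\eta s_t))$ along the path. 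This derivative equals $-\mathrm{Cov}_{\bpi}(\log\pi,s)$ type quantities, and it is negative once $\pi_1$ dominates and $s$ is ordered. Making this uniform along the whole path $\eta\in[0,\beta]$ is the delicate point.
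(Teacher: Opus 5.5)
You follow the paper's route: the adjacent log-ratio recursion, a drift lower bound from the $(1-\lambda)\bpi_0$ part of $\bmu_t$, unrolling the affine inequality, and a geometric series to turn ratios into top mass. Your drift lower bound $\beta\delta_i$ is correct (the paper carries the weaker $\beta\lambda\delta_i$), so the constant in the target is not where the difficulty lies.

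\textbf{The failing step.} The gap is your claim that the reference offset $r_i$ is ``absorbed by taking $T$ large'' and swallowed by the factor $\tfrac12$. Taking $T$ large only removes the $\alpha^{t}$ transient. The offset $r_i$ persists in the limit, so you only get
\[
\liminf_t g_{t,i}\ \ge\ r_i+\frac{\beta\delta_i}{1-\alpha}.
\]
This exceeds $c=\beta\lambda\delta/(2(1-\alpha))$ only if $r_i>-\beta(\delta_i-\lambda\delta/2)/(1-\alpha)$.

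No better argument can close this, because the bound is false for badly ordered references. Take $K=2$ and $P_{12}=p\in(\tfrac12,1)$. Then $(P\bmu)_1-(P\bmu)_2=p-\tfrac12$ for every $\bmu$, so the recursion is exact and
\[
\pi_{t,1}\to\sigma\bigl(r_1+\beta(p-\tfrac12)/(1-\alpha)\bigr),
\]
while $\delta=(1-\lambda)(p-\tfrac12)$ does not involve $\bpi_{\rm ref}$. Fix the other parameters and choose $\beta$ with $c>\log 2$, so the claimed lower bound is positive. Letting $\pi_{{\rm ref},1}/\pi_{{\rm ref},2}\to 0$ then drives $\lim_t\pi_{t,1}$ to $0$.

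\textbf{The repair.} An extra hypothesis is needed. The natural one is $\pi_{{\rm ref},1}\ge\cdots\ge\pi_{{\rm ref},K}$, so every $r_i\ge 0$; this covers the uniform reference used in the experiments. Alternatively, keep $\kappa=\min_i r_i$ in the constant, as the paper does in \Cref{thm:dpo_collapse_sst}. Under the ordering hypothesis your unrolling gives $\liminf_t g_{t,i}\ge\beta\delta_i/(1-\alpha)>c$, and the rest of your argument goes through. The paper's own proof has the same hole: it reaches $r_{t,i}>r_{{\rm ref},i}+\beta\lambda\delta_i/(2(1-\alpha))$ and then applies \Cref{lem:ratio_to_topmass} as though $r_{{\rm ref},i}\ge 0$.

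\textbf{The ST$^+$ step.} This step inherits the same dependence, and more. Rows $2,\dots,K$ may tie, so the drift of $g_{t,k}$ for $k\ge 2$ can vanish. Then $g_{t,k}$ is governed only by $r_k$ and the initial condition, and $\pi_{t,2}\ge\pi_{t,k}$ can fail for all large $t$. The last claim of \Cref{lem:ratio_to_topmass} makes the same leap. It is cleaner to compare response $1$ with each $k\ge 2$ directly. Row dominance under ST gives $(P\bmu_t)_1-(P\bmu_t)_k\ge\sum_j\mu_{t,j}(P_{1j}-P_{2j})\ge\delta_1$. So $\log(\pi_{t,1}/\pi_{t,k})$ obeys the same affine recursion with drift at least $\beta\delta_1$ and offset $\log(\pi_{{\rm ref},1}/\pi_{{\rm ref},k})$. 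You then only need response $1$ to be a mode of $\bpi_{\rm ref}$.

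\textbf{The case $\alpha=1$.} Here the reference term vanishes and your argument is sound. The ``delicate point'' in the entropy step is not an obstacle. Under SST every $g_{t,i}\to\infty$, so $\bpi_t$ is eventually ordered. Once it is, every point of the tilt path $\bpi(\eta)=\softmax(\log\bpi_t+\eta\,\s_t)$, with $\s_t:=P\bmu_t$ and $\eta\in[0,\beta]$, stays ordered. Hence
\[
\frac{d}{d\eta}H(\bpi(\eta))=-\frac12\sum_{i,j}\pi_i(\eta)\pi_j(\eta)\,(s_{t,i}-s_{t,j})\bigl(\log\pi_i(\eta)-\log\pi_j(\eta)\bigr)<0
\]
for $\eta\in(0,\beta]$, since every term is nonnegative and the $(1,2)$ term is positive. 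Integrating over $\eta$ gives $H(\bpi_{t+1})<H(\bpi_t)$. This is a self-contained alternative to the paper's appeal to majorization under monotone tilting plus strict Schur-concavity.
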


Next, we prove this Theorem. For the convenience of notation in its proof, we define log-ratios w.r.t. any policy $\bpi_t\in \Delta_K^\circ$ and rewrite the MRS-IPO dynamics in the following Lemma: 

\begin{lemma}[Log-ratio recursion]
\label{lem:logratio_recursion_mixed}
For $i=1,\dots,K-1$, define adjacent log-ratios
\[
r_{t,i}:=\log\frac{\pi_{t,i}}{\pi_{t,i+1}},
\qquad
r_{{\rm ref},i}:=\log\frac{\pi_{{\rm ref},i}}{\pi_{{\rm ref},i+1}}.
\]
Then the MRS-IPO dynamics imply the exact recursion
\begin{equation}
\label{eq:r_recursion}
r_{t+1,i}
=
\alpha r_{t,i}
+(1-\alpha)r_{{\rm ref},i}
+\beta\lambda\Big((P\bpi_t)_i-(P\bpi_t)_{i+1}\Big).
\end{equation}
\end{lemma}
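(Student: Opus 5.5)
The plan is a direct computation in log-coordinates, using only Definition~\ref{def:mixed_ipo_dyn}. Write the update \eqref{eq:mix_ref_softmax_form} coordinatewise as
\[
\log \pi_{t+1,k} \;=\; \log \pi^{(t)}_{{\rm ref},k} + \beta (P\bmu_t)_k - Z_{t+1},
\]
where $Z_{t+1}$ is the softmax log-normalizer. It is a scalar independent of $k$.

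Next I will unpack the geometric mixture \eqref{eq:pi_ref_t}. This gives
\[
\log \pi^{(t)}_{{\rm ref},k} = \alpha \log\pi_{t,k} + (1-\alpha)\log \pi_{{\rm ref},k} - C_t,
\]
where $C_t$ is again a $k$-independent normalizer. For the sampling term I use linearity of $\bmu\mapsto P\bmu$ together with \eqref{eq:mu_t}, which gives
\[
(P\bmu_t)_k = \lambda (P\bpi_t)_k + (1-\lambda)(P\bpi_0)_k.
\]

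Then I take the difference between coordinates $k=i$ and $k=i+1$. Both normalizers $Z_{t+1}$ and $C_t$ cancel, and the first two pieces become $\alpha r_{t,i}+(1-\alpha)r_{{\rm ref},i}$ by definition of the adjacent log-ratios. The on-policy sampling part becomes $\beta\lambda\big((P\bpi_t)_i-(P\bpi_t)_{i+1}\big)$. This yields the three terms of \eqref{eq:r_recursion}. No positivity or ST assumption is needed, only $\bpi_t,\bpi_{\rm ref}\in\Delta_K^\circ$, so that every logarithm is finite; this holds inductively because softmax outputs lie in the interior.

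The only delicate step is the bookkeeping of the off-policy piece. The same subtraction also produces the time-independent contribution $\beta(1-\lambda)\big((P\bpi_0)_i-(P\bpi_0)_{i+1}\big)$. By the definition of $\delta_i$ in Theorem~\ref{thm:454}, this contribution equals exactly $\beta\delta_i$.

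For \eqref{eq:r_recursion} to hold verbatim, this term must vanish. That happens when $\lambda=1$, or when $(P\bpi_0)_i=(P\bpi_0)_{i+1}$. So I will state explicitly that the displayed identity is exact under that condition. I will also record that, in general, it holds with the additional constant $\beta\delta_i$ on the right-hand side, since that is the form the subsequent collapse argument actually consumes.

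I expect this step to be the main obstacle: verifying which version is intended and not silently dropping the $\bpi_0$ term. Everything else is mechanical cancellation of normalizers.
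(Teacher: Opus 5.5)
Your derivation is correct and uses the same mechanism as the paper's one-line proof. The ratio of two softmax coordinates is $\exp(z_i-z_{i+1})$, so both the softmax normalizer and the normalizer of the geometric reference mixture cancel. Your suspicion about the off-policy piece is also correct; the paper's proof passes over it without comment. Carried out fully, the computation gives
\[
r_{t+1,i}=\alpha r_{t,i}+(1-\alpha)r_{{\rm ref},i}+\beta\big((P\bmu_t)_i-(P\bmu_t)_{i+1}\big)
=\alpha r_{t,i}+(1-\alpha)r_{{\rm ref},i}+\beta\lambda\big((P\bpi_t)_i-(P\bpi_t)_{i+1}\big)+\beta\delta_i .
\]
So \eqref{eq:r_recursion} holds verbatim only when $\lambda=1$ or $(P\bpi_0)_i=(P\bpi_0)_{i+1}$, exactly as you say. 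To settle which version is intended: no reading of the paper's notation makes the display exact. \Cref{lem:delta_positive} reuses $\bpi_t$ for the mixture $\lambda\bpi_t+(1-\lambda)\bpi_0=\bmu_t$, but under that reading the coefficient would have to be $\beta$ rather than $\beta\lambda$.

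Recording the corrected identity, as you plan, is the right fix. It costs nothing for the downstream collapse lower bounds. Under ST the on-policy gap $(P\bpi_t)_i-(P\bpi_t)_{i+1}$ is nonnegative, so you get $r_{t+1,i}\ge\alpha r_{t,i}+(1-\alpha)r_{{\rm ref},i}+\beta\delta_i$. This dominates the $\beta\lambda\delta_i$ drift the paper feeds into its $\varepsilon$-collapse and $\alpha=1$ arguments; at $\lambda=0$ it even supplies the positive drift that the paper's version loses.

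The one place where the dropped term matters is the upper bound in the ``no exact collapse'' claim of \Cref{thm:true_vs_eps_collapse_full}. There $\beta\lambda/(1-\alpha)$ should be $\beta/(1-\alpha)$. At $\lambda=0$ the stated bound would force $r_{t,i}\to r_{{\rm ref},i}$, whereas the true limit is $r_{{\rm ref},i}+\beta\delta_i/(1-\alpha)$.
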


\begin{proof}
For any vector $\z$, $\softmax(\z)_i/\softmax(\z)_{i+1}=\exp(z_i-z_{i+1})$.
Apply this to the logits inside \eqref{eq:mix_ref_softmax_form} yields 
\eqref{eq:r_recursion}.
\end{proof}

The next Lemma shows how entropy collapse can be measured by a large log-ratio.
\begin{lemma}[From large log-ratios to $\varepsilon$-collapse]
\label{lem:ratio_to_topmass}
Let $\bpi\in\Delta_K^{\circ}$ satisfy $r_i:=\log(\pi_i/\pi_{i+1})\ge R$ for all $i\in [K-1]$, then
\begin{equation}
\label{eq:topmass_bound}
1-\pi_1 \;\le\; \frac{1}{e^{R}-1}.
\end{equation}
In particular, if $R\ge \log\!\big(1+1/\varepsilon\big)$ then $\bpi$ is $\varepsilon$-collapsed. If $r_i\ge R$ only holds for $i=1$, then $1-\pi_1\le (K-1)e^{-R}$.
\end{lemma}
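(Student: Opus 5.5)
We bound each ratio $\pi_{k+1}/\pi_1$ by chaining adjacent log-ratios, and then sum these bounds against the normalization $\sum_k \pi_k = 1$. The key observation is that
\[
\log\frac{\pi_1}{\pi_{k+1}} = \sum_{i=1}^{k} r_i .
\]
Under the hypothesis $r_i\ge R$ for all $i\in[K-1]$, this gives $\pi_{k+1}\le \pi_1 e^{-kR}$ for every $k=0,\dots,K-1$.

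Summing over $k$ and using normalization,
\[
1=\sum_{k=0}^{K-1}\pi_{k+1}\le \pi_1\sum_{k=0}^{K-1}e^{-kR}< \pi_1\,\frac{1}{1-e^{-R}} ,
\]
where the geometric series converges because the statement is only meaningful for $R>0$; for $R\le 0$ the right-hand side of \eqref{eq:topmass_bound} is negative or undefined, and we restrict to $R>0$. Rearranging gives $\pi_1\ge 1-e^{-R}$, so
\[
1-\pi_1\le e^{-R}.
\]
This is already at least as strong as the claimed bound, since $e^{-R}\le 1/(e^R-1)$ is equivalent to $e^R-1\le e^R$. A more refined route works directly with the tail: bound $1-\pi_1=\sum_{k\ge 1}\pi_{k+1}\le \pi_1\sum_{k\ge1}e^{-kR}\le \pi_1/(e^R-1)$ and use $\pi_1\le 1$. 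Either route yields \eqref{eq:topmass_bound}.

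For the $\varepsilon$-collapse consequence, we note that $R\ge\log(1+1/\varepsilon)$ is equivalent to $e^R-1\ge 1/\varepsilon$. Hence $1-\pi_1\le\varepsilon$, i.e., $\bpi$ is $\varepsilon$-collapsed.

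For the last claim, where only $r_1\ge R$ is assumed, we cannot chain the ratios. Instead we use the ordering $\pi_2\ge\cdots\ge\pi_K$, which is part of the convention for $\varepsilon$-collapse, or alternatively any available bound $\pi_j\le\pi_2$ for $j\ge2$. Then each $\pi_j\le \pi_2\le \pi_1e^{-R}\le e^{-R}$, and summing over the $K-1$ indices $j\ge2$ gives
\[
1-\pi_1\le (K-1)e^{-R}.
\]

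The only delicate point is this last step: it needs $\pi_j\le\pi_2$ for $j\ge 2$. We would take this from the ordering assumption; in the ST$^+$ application it holds because $P_2\ge P_j$ row-wise under strong transitivity, combined with the log-ratio recursion. Everything else is a routine geometric-series estimate.
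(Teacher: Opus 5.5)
Your argument for the first two claims is the paper's (chain the adjacent ratios to get $\pi_{k}\le e^{-(k-1)R}\pi_1$ and sum a geometric series). The one difference is that you solve $1\le \pi_1\sum_{k\ge0}e^{-kR}$ for $\pi_1$ instead of using $\pi_1\le 1$, which yields the sharper $1-\pi_1\le e^{-R}$; the paper itself uses this trick in its MRS--DPO collapse proof.

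You are right that the last claim needs some ordering among $\pi_2,\dots,\pi_K$. The paper's proof just asserts $\pi_{i+1}\le e^{-R}\pi_1$ for all $i$, and $\bpi=(0.3,0.01,0.69)$ with $R=\log 30$ shows the claim fails without such an assumption. However, your aside that the ordering holds automatically in the ST$^+$ application is not correct. For $i\ge2$ the recursion for $r_{t,i}$ also carries $r_{\mathrm{ref},i}$ (and $r_{1,i}$), which may be negative. Nothing corrects them when rows $i,i+1$ of $P$ tie, which ST$^+$ allows, since the drift term is then zero. A cleaner hypothesis for the last claim is $\log(\pi_1/\pi_j)\ge R$ for every $j\ge 2$, which gives $(K-1)e^{-R}$ with no ordering needed.
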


\begin{proof}
The condition $r_i\ge R$ implies $\pi_{i+1}\le e^{-R}\pi_i$ for each $i$.
Thus $\pi_k\le e^{-(k-1)R}\pi_1$ for all $k\ge 2$, and hence
\[
1-\pi_1
=
\sum_{k=2}^K \pi_k
\le
\sum_{k=2}^K e^{-(k-1)R}
<\sum_{k=2}^{+\infty} e^{-(k-1)R}=\frac{1}{e^{R}-1}.
\]
As a result, when $R\ge \log\!\big(1+1/\varepsilon\big)$, $1-\pi_1<\varepsilon$.
If we only have $r_1\ge R$, we have $\pi_{i+1}\le e^{-R}\pi_1, \forall i\ge 1$ and
\[
1-\pi_1
=
\sum_{k=2}^K \pi_k
\le
(K-1) e^{-R}.
\]
\end{proof}

In addition, we need the following Lemma to leverage the assumption of $P$ being SST or ST$^+$.
\begin{lemma}[Uniform drift induced by off-policy mixing]
\label{lem:delta_positive}
Assume $P$ is SST and suppose $\bpi_0\in\Delta(K)^\circ$. Fix any $\lambda\in[0,1)$ and define
\begin{equation}
\label{eq:delta_i_def}
\delta_i
:=
(1-\lambda)\sum_{j=1}^K \pi_{0,j}\,(P_{ij}-P_{i+1,j}),
\qquad i=1,\dots,K-1.
\end{equation}
Then $\delta_i>0$ for all $i$.
Moreover, for all $t\ge 0$,
\begin{equation}
\label{eq:gap_lower_bound}
(P\bpi_t)_i-(P\bpi_t)_{i+1} \;\ge\; \delta_i,
\qquad
\bpi_t:=\lambda\pi_t+(1-\lambda)\pi_0.
\end{equation}
Moreover, if $P$ is ST$^+$ and not SST, we have $\delta_1>0$ and $\delta_i\ge 0, \forall i\ge 2$.
\end{lemma}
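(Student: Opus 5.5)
The plan is to prove the lemma directly from the entrywise row domination that strong transitivity provides, splitting $(P\bmu_t)_i-(P\bmu_t)_{i+1}$ into its on-policy and off-policy parts. Here $\bmu_t=\lambda\bpi_t+(1-\lambda)\bpi_0$ is the mixed sampling distribution; the statement's ``$\bpi_t:=\lambda\pi_t+(1-\lambda)\pi_0$'' in \eqref{eq:gap_lower_bound} should be read this way.

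\textbf{Step 1: entrywise row domination.} Under ST with order $1\succ\cdots\succ K$, we need $P_{ik}\ge P_{i+1,k}$ for every column $k$. Three cases cover all columns.
\begin{itemize}
\item For $k>i+1$, this is the ST inequality $P_{ik}\ge P_{i+1,k}$ applied to the triple $i<i+1<k$.
\item For $k<i$, apply ST to the triple $k<i<i+1$. This gives $P_{k,i+1}\ge P_{ki}$, and taking complements yields $P_{ik}=1-P_{ki}\ge 1-P_{k,i+1}=P_{i+1,k}$.
\item For $k\in\{i,i+1\}$, transitivity gives $P_{i,i+1}\ge\tfrac12$. Hence $P_{ii}=\tfrac12\ge 1-P_{i,i+1}=P_{i+1,i}$ and $P_{i,i+1}\ge\tfrac12=P_{i+1,i+1}$.
\end{itemize}
So the row difference $\a^{(i)}:=P_i-P_{i+1}$ is entrywise nonnegative. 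This is the same observation used in the proof of \Cref{thm:robust-strong-transitive}.

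\textbf{Step 2: positivity of $\delta_i$.} Under SST, at least one entry of $\a^{(i)}$ is strictly positive. Since $\bpi_0\in\Delta_K^\circ$ has every coordinate positive, $\sum_j\pi_{0,j}a^{(i)}_j>0$. For $\lambda<1$, multiplying by $1-\lambda>0$ gives $\delta_i>0$.

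\textbf{Step 3: the drift bound \eqref{eq:gap_lower_bound}.} Write
\[
(P\bmu_t)_i-(P\bmu_t)_{i+1}=\lambda\,(\a^{(i)})^\top\bpi_t+(1-\lambda)\,(\a^{(i)})^\top\bpi_0 .
\]
The first term is nonnegative because $\a^{(i)}\ge 0$ and $\bpi_t\in\Delta_K$. The second term equals $\delta_i$ by definition, which gives the bound for all $t$.

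\textbf{Step 4: the ST$^+$ case.} Step 1 only used ST, so $\a^{(i)}\ge0$ for every $i$, which gives $\delta_i\ge0$. The ST$^+$ assumption supplies a strict entry in $\a^{(1)}$, so the argument of Step 2 gives $\delta_1>0$.

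The only step needing care is the column bookkeeping in Step 1, in particular the columns $k<i$ and $k\in\{i,i+1\}$, where one must use complementarity $P_{ij}+P_{ji}=1$ rather than ST directly.
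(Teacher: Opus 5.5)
Your proof is correct and follows essentially the same route as the paper. Both arguments rest on three facts: the row difference $P_i-P_{i+1}$ is entrywise nonnegative; under SST it has strictly positive inner product with the full-support $\bpi_0$; and the drift bound follows by dropping the nonnegative on-policy term $\lambda\,(P_i-P_{i+1})^\top\bpi_t$. Your Step 1 also spells out the column-by-column derivation of $P_{ik}\ge P_{i+1,k}$ from ST plus complementarity, which the paper only asserts ``by SST''. You are also right that the mixed distribution in \eqref{eq:gap_lower_bound} should be read as $\bmu_t$.
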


\begin{proof}
Let $d^{(i)}\in\mathbb{R}^K$ be defined by $d^{(i)}_j:=P_{ij}-P_{i+1,j}$.
By SST, $d^{(i)}_j\ge 0$ for all $j$ and there exists $j$ with $d^{(i)}_j>0$.
Since $\bpi_0\in\Delta^\circ$ has $\pi_{0,j}>0$ for all $j$, we have $\langle d^{(i)},\bpi_0\rangle>0$ and thus $\delta_i=(1-\lambda)\langle d^{(i)},\bpi_0\rangle>0$.

Finally,
\[
(P\bpi_t)_i-(P\bpi_t)_{i+1}
=
\sum_{j=1}^K \pi_{t,j}\,d^{(i)}_j
\ge
(1-\lambda)\sum_{j=1}^K \pi_{0,j}\,d^{(i)}_j
=
\delta_i,
\]
proving \eqref{eq:gap_lower_bound}. If $P$ is ST$^+$, the same argument holds for $i=1$.
\end{proof}

Based on \Cref{lem:logratio_recursion_mixed,lem:ratio_to_topmass,lem:delta_positive}, we next prove a full version of \Cref{thm:454}, as shown in the following \Cref{thm:true_vs_eps_collapse_full}.

\begin{theorem}[True collapse for $\alpha=1$ and $\varepsilon$-collapse for $\alpha<1$]
\label{thm:true_vs_eps_collapse_full}
Assume $P$ is ST$^+$. Let $\bpi_0\in\Delta_K^\circ$ and $\bpi_{\rm ref}\in\Delta_K^\circ$. Fix any $\lambda\in[0,1)$, define $\delta_i$ as in \eqref{eq:delta_i_def}. Consider the MRS-IPO dynamics $\bpi_{t+1}=\mathrm{MRS}(\bpi_t;P,\alpha,\beta,\lambda,\bpi_{\mathrm{ref}},\bpi_0)$ given by \eqref{eq:mix_ref_softmax_form}. The following claims hold:

\begin{enumerate}
\item \textbf{(True collapse when $\alpha=1$).}
If $\alpha=1$, then for every $i=1,\dots,K-1$,
\begin{equation}
\label{eq:linear_drift_alpha1}
r_{t,i}\ \ge\ r_{0,i} + t\,\beta\lambda\,\delta_i,
\end{equation}
so $r_{t,1}\to+\infty$ linearly and therefore $\bpi_t\to \e_1$.
Moreover, there exists a finite $T$ such that for all $t\ge T$,
\[
H(\bpi_{t+1})<H(\bpi_t)
\quad\text{unless $\pi_t$ is already a vertex,}
\]
and consequently $H(\bpi_t)\to 0$.

\item \textbf{(No exact collapse when $\alpha<1$).}
If $\alpha\in[0,1)$, then adjacent log-ratios remain uniformly bounded:
\[
|r_{t,i}-r_{{\rm ref},i}|
\le
\alpha^t|r_{0,i}-r_{{\rm ref},i}|+\frac{\beta\lambda}{1-\alpha}
\qquad \forall t\ge 0.
\]
In particular, $\min_i \pi_{t,i}$ is bounded below by a positive constant depending on \\$(K,\alpha,\beta,\lambda,\bpi_{\rm ref},\bpi_0)$, and hence $H(\bpi_t)$ is bounded away from $0$; thus $\bpi_t$ does not converge to any vertex of $\Delta_K$.

\item \textbf{($\varepsilon$-collapse for large $\beta\lambda/(1-\alpha)$).}
If $\alpha\in[0,1)$, there exists a large $T(P,\bpi_0,\bpi_{\rm ref},\alpha,\beta,\gamma)$ such that for every $i$ and all $t\ge T$,
\begin{equation}
\label{eq:lower_bound_r}
r_{t,i}> r_{{\rm ref},i}+\frac{\beta\lambda\delta_i}{2(1-\alpha)}.
\end{equation}
Consequently, the iterates become $\varepsilon$-collapsed for 
\begin{equation}
\varepsilon=
\begin{cases}
    \left(\exp\left(\frac{\beta\lambda\delta}{2(1-\alpha)}\right)-1\right)^{-1}, & \mathrm{if~} P \mathrm{~is~SST},\\
    (K-1)\exp\left(-\frac{\beta\lambda\delta_1}{2(1-\alpha)}\right), & \mathrm{if~} P \mathrm{~is~ST}^+,
\end{cases}
\end{equation}
where $\delta\coloneqq \min_i \delta_i>0$.
\end{enumerate}
\end{theorem}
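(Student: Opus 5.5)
The plan is to work entirely in the adjacent log-ratio coordinates of \Cref{lem:logratio_recursion_mixed} and treat \eqref{eq:r_recursion} as a scalar affine recursion in each $r_{t,i}$, driven by the drift $g_{t,i}:=(P\bmu_t)_i-(P\bmu_t)_{i+1}$.

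\textbf{Bounds on the drift.} Two facts about $g_{t,i}$ will be used throughout.
\begin{itemize}
\item \emph{Lower bound.} By \Cref{lem:delta_positive}, $g_{t,i}\ge\delta_i$ for all $t$. This holds uniformly in $\bpi_t$, since the on-policy part $\lambda\sum_j\pi_{t,j}(P_{ij}-P_{i+1,j})$ is nonnegative under strong transitivity, and only the off-policy part survives. Under ST$^+$ we get $\delta_1>0$ and $\delta_i\ge0$ for $i\ge2$.
\item \emph{Upper bound.} $|P_{ij}-P_{i+1,j}|\le 1$, so $g_{t,i}$ is bounded by a constant. I will track this constant so that it matches the normalization $\beta\lambda$ appearing in the statement. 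I will also check the bookkeeping of the factor $\lambda$ against \eqref{eq:r_recursion}, since the recursion multiplies the full drift by $\beta\lambda$ while $\delta_i$ already carries the factor $(1-\lambda)$.
\end{itemize}

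\textbf{Part 2, and the $\varepsilon$-collapse bound \eqref{eq:lower_bound_r}.} Unrolling the recursion gives
\[
r_{t,i}-r_{{\rm ref},i}=\alpha^t\,(r_{0,i}-r_{{\rm ref},i})+\beta\lambda\sum_{s<t}\alpha^{t-1-s}g_{s,i}.
\]
The upper bound on $g$, together with the geometric sum $\sum_s\alpha^{s}\le (1-\alpha)^{-1}$, yields the stated two-sided bound. This bounds every adjacent ratio, hence bounds $\min_i\pi_{t,i}$ away from zero, so $H(\bpi_t)$ stays away from $0$.

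For the lower bound, the estimate $g_{s,i}\ge\delta_i$ gives
\[
r_{t,i}-r_{{\rm ref},i}\ \ge\ -\alpha^t\,|r_{0,i}-r_{{\rm ref},i}|+\beta\lambda\,\delta_i\,\frac{1-\alpha^t}{1-\alpha}.
\]
The right-hand side tends to $\beta\lambda\delta_i/(1-\alpha)$. Hence after a finite $T$ it exceeds half of that limit, which is \eqref{eq:lower_bound_r}.

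We then feed this into \Cref{lem:ratio_to_topmass}. Under SST we use it with all ratios, taking $R=\beta\lambda\delta/(2(1-\alpha))$. Under ST$^+$ we use it with $r_1$ only, which produces the $(K-1)e^{-R}$ form. One wrinkle is the offset $r_{{\rm ref},i}$: it must either be nonnegative, or be absorbed by taking the threshold slightly below the limit. I will handle this by choosing $T$ large enough that the limit term dominates, or by noting that the stated $\varepsilon$ implicitly assumes $r_{{\rm ref},i}\ge0$. Either way, I will check this against the statement.

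\textbf{Part 1 ($\alpha=1$).} Now the recursion reads $r_{t+1,i}=r_{t,i}+\beta\lambda g_{t,i}\ge r_{t,i}+\beta\lambda\delta_i$. This gives the linear growth \eqref{eq:linear_drift_alpha1}. In particular $r_{t,1}\to\infty$, so $\bpi_t\to\e_1$ by the second clause of \Cref{lem:ratio_to_topmass}, and by continuity $H(\bpi_t)\to0$.

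\textbf{Strict monotonicity of entropy (main obstacle).} The update is an exponential tilt: $\bpi_{t+1}\propto\bpi_t\odot e^{\beta\lambda\v_t}$ with $\v_t=P\bmu_t$. Along the path $\bpi^{(s)}\propto\bpi_t e^{s\v_t}$ for $s\in[0,\beta\lambda]$, the derivative of entropy is
\[
\frac{d}{ds}H(\bpi^{(s)})=-\mathrm{Cov}_{\bpi^{(s)}}\bigl(\log\pi^{(s)},\,\v_t\bigr).
\]
By strong transitivity, $\v_t$ is nonincreasing in the index. Under SST, linear growth makes every $r_{t,i}>0$ after some finite $T$. Then $\log\pi^{(s)}$ is strictly decreasing in the index for all $s\ge0$, since tilting by a nonincreasing vector only increases the adjacent gaps. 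By Chebyshev's sum inequality the covariance of two comonotone sequences is nonnegative, and it is strictly positive because $\v_t$ is not constant ($g_{t,1}\ge\delta_1>0$). Integrating over $s$ gives $H(\bpi_{t+1})<H(\bpi_t)$.

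The ST$^+$ case is harder, because tied responses keep fixed mutual ratios and $\log\pi$ need not be monotone. Here I would first try to group tied indices, which share identical $v$-values, into blocks. The covariance then only involves block-level quantities. At block level, $\log$ of block-average probabilities becomes comonotone with $v$ once the top block dominates. Alternatively, I would argue directly near $\e_1$: there $H$ is controlled by $1-\pi_1$, which decreases strictly at each step.
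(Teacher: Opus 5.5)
Your skeleton is the paper's own proof: the adjacent log-ratio recursion, the drift bound of \Cref{lem:delta_positive}, unrolling a scalar affine recursion, then \Cref{lem:ratio_to_topmass}. Your covariance identity along the tilt path plus Chebyshev's sum inequality is a clean, self-contained replacement for the paper's bare appeal to majorization under monotone tilting and strict Schur-concavity. The trouble is that both bookkeeping worries you raise are genuine, and neither can be settled the way you propose.

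\textbf{The factor $\lambda$.} Directly from \eqref{eq:mix_ref_softmax_form},
\[
r_{t+1,i}=\alpha r_{t,i}+(1-\alpha)r_{{\rm ref},i}+\beta g_{t,i}
=\alpha r_{t,i}+(1-\alpha)r_{{\rm ref},i}+\beta\lambda\bigl[(P\bpi_t)_i-(P\bpi_t)_{i+1}\bigr]+\beta\delta_i .
\]
So $g_{t,i}=(P\bmu_t)_i-(P\bmu_t)_{i+1}$ carries the prefactor $\beta$, not $\beta\lambda$. As printed, \eqref{eq:r_recursion} drops the off-policy term $\beta\delta_i$, and the paper's proof inherits this. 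The lower bounds survive and even improve, since $\beta\delta_i\ge\beta\lambda\delta_i$. Indeed, at $\lambda=0$ only the corrected drift gives $r_{t,1}\to\infty$, and only it gives $r_{t,i}>0$ eventually under SST, which your entropy argument needs.

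The upper bound, however, only yields $\beta/(1-\alpha)$, and Part~2 with $\beta\lambda/(1-\alpha)$ is false. Take $K=2$ and $P_{12}=1$. Then $g_{t,1}\equiv\tfrac12$ for every $\bmu_t$, so $r_{t+1}=\alpha r_t+(1-\alpha)r_{\rm ref}+\beta/2$. With $\lambda=0$ and $r_0=r_{\rm ref}=0$, one step gives $\beta/2$ against a claimed bound of $0$. For any $\lambda<\tfrac12$, the limit $r_{\rm ref}+\beta/(2(1-\alpha))$ exceeds $r_{\rm ref}+\beta\lambda/(1-\alpha)$. ``Tracking the constant to match $\beta\lambda$'' is therefore impossible. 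Prove Part~2 with $\beta/(1-\alpha)$, which still gives bounded ratios and no vertex limit.

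\textbf{The offset $r_{{\rm ref},i}$.} This is the second failing step. Choosing $T$ large cannot absorb a negative offset, because $r_{t,i}-r_{{\rm ref},i}$ is eventually trapped in a bounded window set by the drift. In the same example take $\alpha=0$, $\lambda=\tfrac12$ (so $\delta=\tfrac14$) and $r_{\rm ref}=-\beta$. Then $r_t=-\beta/2$ and $\pi_{t,1}=(1+e^{\beta/2})^{-1}\to0$, whereas the stated $\varepsilon=(e^{\beta/16}-1)^{-1}\to0$ promises $\pi_{t,1}\to1$. So the $\varepsilon$-collapse conclusion needs $r_{{\rm ref},i}\ge0$ for all $i$, or a shift by $\kappa=\min_i r_{{\rm ref},i}$ as in \Cref{thm:dpo_collapse_sst}. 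Your second option is the right one; the paper's proof silently drops the offset.

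\textbf{The ST$^+$ branch} needs three further repairs.
\begin{itemize}
\item \eqref{eq:lower_bound_r} cannot hold strictly for any $i$ with $\delta_i=0$. Tied rows give $g_{t,i}\equiv0$, so $r_{t,i}\to r_{{\rm ref},i}$, possibly from below.
\item Using $r_1$ alone is not enough, because the second clause of \Cref{lem:ratio_to_topmass} is false as stated. For example, $\bpi=(\eta,\eta e^{-R},1-\eta-\eta e^{-R})$ has $r_1=R$ but $1-\pi_1\approx1$. You must also use $\liminf_t r_{t,i}\ge r_{{\rm ref},i}\ge0$ for $i\ge2$ (from $\delta_i\ge0$), together with the slack $\liminf_t(r_{t,1}-r_{{\rm ref},1})\ge\beta\delta_1/(1-\alpha)\ge2R$. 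Only then do you get $\pi_{t,k}\le e^{-R}\pi_{t,1}$ for every $k\ge2$.
\item Your near-$\e_1$ entropy argument works only in quantitative form. With $\alpha=1$, $\pi_{t+1,k}/\pi_{t+1,1}\le e^{-\beta\delta_1}\pi_{t,k}/\pi_{t,1}$ for $k\ge2$, so $s_t:=1-\pi_{t,1}$ contracts geometrically. Write $H(\bpi)=h(\pi_1)+(1-\pi_1)H\bigl(\bpi_{-1}/(1-\pi_1)\bigr)$ with $h$ the binary entropy. The drop $h(s_t)-h(s_{t+1})$ is of order $s_t\log(1/s_t)$ and eventually beats the possible increase $s_{t+1}\log(K-1)$ of the second term. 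A mere strict decrease of $1-\pi_{t,1}$ would not suffice.
\end{itemize}
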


\begin{proof}
\begin{enumerate}
    \item Claim-1: We first prove $\alpha=1$ implies true collapse.
When $\alpha=1$, the recursion \eqref{eq:r_recursion} becomes
\[
r_{t+1,i}=r_{t,i}+\beta\lambda\big((P\bpi_t)_i-(P\bpi_t)_{i+1}\big).
\]
By Lemma~\ref{lem:delta_positive}, $(P\bpi_t)_i-(P\bpi_t)_{i+1}\ge\delta_i$, hence
$r_{t+1,i}\ge r_{t,i}+\beta\lambda\delta_i$ and \eqref{eq:linear_drift_alpha1} follows by induction. For ST$^+$ $P$, $\delta_1>0$ and therefore $r_{t,1}\to+\infty$. In addition, $r_{t,i}$ is lower bounded by a constant for all $i\ge 2$, which forces $\bpi_t\to \e_1$.

To obtain strict entropy decrease after some finite time, note that $r_{t,1}\to+\infty$ and $r_{t,i}\ge C, i\ge 2$ imply that for all sufficiently large $t$, $\pi_{t,1}\ge\pi_{t,i}, \forall i\ge 2$. Also under the ST$^+$ assumption, the score vector $\s_t:=P\bpi_t$ is also aligned for every $t$:
\[
s_{t,i}-s_{t,i+1}
=
(P\bpi_t)_i-(P\bpi_t)_{i+1}
\ge \delta_i\ge 0,
\]
and $\delta_1>0$. Therefore the weights $w_{t,i}:=\exp(\beta s_{t,i})$ satisfy $w_{t,1}>w_{t,1}\ge \cdots\ge w_{t,K}$.
When $\alpha=1$, the update is a monotone exponential tilt $\bpi_{t+1}\propto \bpi_t\odot \w_t$.
Standard majorization arguments for monotone tilting imply $\bpi_{t+1}$ majorizes $ \bpi_t$ for all large $t$,
and since Shannon entropy is strictly Schur-concave, $H(\bpi_{t+1})<H(\bpi_t)$ unless $\bpi_t$ is a vertex.
Finally, $\bpi_t\to \e_1$ implies $H(\bpi_t)\to 0$.

\item $\alpha<1$ prevents exact collapse.
From \eqref{eq:r_recursion} and the bound $|(P\bpi_t)_i-(P\bpi_t)_{i+1}|\le 1$ (since $P\in\calP\in[0,1]^{K\times K}$),
unrolling this recursion we have 
\begin{align*}
|r_{t+1,i}-r_{{\rm ref},i}|
&=
\left|\alpha(r_{t,i}-r_{{\rm ref},i})+\beta\lambda\big((P\bpi_t)_i-(P\bpi_t)_{i+1}\big)\right|
\\ & \le
\alpha|r_{t,i}-r_{{\rm ref},i}|+\beta\lambda \\ 
&\le \alpha(\alpha|r_{t-1,i}-r_{{\rm ref},i}|+\beta\lambda)+\beta\lambda \\ 
& \le \cdots \\ 
& \le \alpha^t|r_{1,i}-r_{{\rm ref},i}|+\beta\lambda (1+\alpha+\cdots+\alpha^{t-1}),
\end{align*}
which implies
$|r_{t,i}-r_{{\rm ref},i}|\le \alpha^t|r_{0,i}-r_{{\rm ref},i}|+\beta\lambda/(1-\alpha)$.
Hence all adjacent log-ratios are uniformly bounded, which implies that all coordinates of $\bpi_t$ remain bounded away from $0$,
so $\bpi_t$ cannot converge to a vertex and $H(\bpi_t)$ is bounded below by a positive constant.
\item Finally we prove $\varepsilon$-collapse for large $\beta\lambda/(1-\alpha)$. By Lemma~\ref{lem:delta_positive}, $(P\bpi_t)_i-(P\bpi_t)_{i+1}\ge \delta_i$.
Substitute this into \eqref{eq:r_recursion} to obtain the lower bound recursion
\begin{align}\notag
r_{t+1,i}\ &\ge\ \alpha r_{t,i}+(1-\alpha)r_{{\rm ref},i}+\beta\lambda\delta_i \\ &>\alpha r_{t,i}+(1-\alpha)r_{{\rm ref},i}+\beta\lambda\delta_i/2+\beta\lambda\delta_i/2 \\ \notag
&>\alpha^2 r_{t-1,i}+(1+\alpha)((1-\alpha)r_{{\rm ref},i}+\beta\lambda\delta_i/2)+\beta\lambda\delta_i/2\\ \notag
&>\cdots\\
&> \alpha^t r_{1,i}+(1+\alpha+\cdots+\alpha^{t-1})((1-\alpha)r_{{\rm ref},i}+\beta\lambda\delta_i/2)+\beta\lambda\delta_i/2 \\ \label{eq:669}
&\ge (1-\alpha^t)\left(r_{{\rm ref},i}+\frac{\beta\lambda\delta_i}{2(1-\alpha)}\right)+\beta\lambda\delta_i/2.
\end{align}
Now choose a sufficiently large $T=T(r_{{\rm ref},i},\alpha, \beta,\lambda,\delta_i)$ such that $\beta\lambda\delta_i/2\ge \alpha^t \left(r_{{\rm ref},i}+\frac{\beta\lambda\delta_i}{2(1-\alpha)}\right)$, from \eqref{eq:669} we have 
$$r_{t+1,i}> r_{{\rm ref},i}+\frac{\beta\lambda\delta_i}{2(1-\alpha)}.
$$
When $P$ is SST, $\frac{\beta\lambda\delta_i}{2(1-\alpha)}$ is strictly positive for all $i\in[k]$ and therefore Lemma~\ref{lem:ratio_to_topmass} implies $\pi_{t,1}\ge 1-\varepsilon$ for all $t\ge T$, given $\varepsilon=\left(\exp\left(\frac{\beta\lambda\delta}{2(1-\alpha)}\right)-1\right)^{-1}$. When $P$ is ST$^+$, $\frac{\beta\lambda\delta_i}{2(1-\alpha)}$ is strictly positive for $i=1$ and nonnegative for $i\ge 2$, Lemma~\ref{lem:ratio_to_topmass} gives $\varepsilon=(K-1)\exp\left(-\frac{\beta\lambda\delta}{2(1-\alpha)}\right)$.

\end{enumerate}

\end{proof}

\section{Additional Discussion of DPO}\label{app:offline_dpo}

In this section, we now present a parallel set of theoretical results for DPO. In contrast to IPO, DPO generally does not admit a closed-form expression for the optimal policy $\bpi^\star$; instead, $\bpi^\star$ is characterized implicitly through a first-order optimality condition (FOC) given by the following \Cref{thm:exist_unique_DPO}. Such an implicit characterization makes extending the IPO analysis to DPO substantially more challenging. Nevertheless, in this section, we develop DPO analogues of the main paper IPO results wherever possible, noting that some statements must be presented in a weaker form due to additional technical obstacles.

\subsection{DPO optima under general sampling distributions}\label{app:dpo_solution}
Similar to Proposition~\ref{prop:exist_unique_IPO}, which characterizes the optimizer of the IPO population loss, we have the existence and uniqueness result for the DPO solution as follows:
\begin{proposition}[Existence and Uniqueness of the DPO Optimum]\label{thm:exist_unique_DPO}
Given preference matrix $P\in\calP$, the DPO problem with sampling strategy $\bmu\in\Delta_K^\circ$ and reference policy $\bpi_{\mathrm{ref}}\in\Delta_K^\circ$
is equivalent to solving the convex optimization problem 
\begin{equation}\label{eq:DPO_convex_op}
    \min_{\bpi\in\Delta_K}
    \left[
    -\sum_{1\le i,j\le K}
    \mu_i\mu_j\,P_{ij}\,
    \log \sigma\!\left(
    \beta^{-1}\log\frac{\pi_i}{\pi_{\mathrm{ref},i}}
    -
    \beta^{-1}\log\frac{\pi_j}{\pi_{\mathrm{ref},j}}
    \right)
    \right],
\end{equation}

which admits a unique solution $\bpi^{\star}\in\Delta_K$. Moreover,
let $\btheta^{\star}=\beta^{-1}(\log \bpi^{\star}-\log \bpi_{\mathrm{ref}})$ and let $Q$ be the Bradley--Terry matrix induced by $\btheta^{\star}$, i.e.\ $Q_{ij}=\sigma(\theta_i^{\star}-\theta_j^{\star})$, then the first-order optimality condition for its solution $\bpi^{\star}$ is
\begin{equation}\label{eq:first-order-Q}
    \operatorname{Diag}(\bmu)\,(P-Q)^{\top}\bmu = \bm{0}.
\end{equation}
\end{proposition}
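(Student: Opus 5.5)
Following the template of the proof of \Cref{prop:exist_unique_IPO}, I would reparametrize by logits $\btheta=\beta^{-1}(\log\bpi-\log\bpi_{\mathrm{ref}})$, so that $\bpi\propto\bpi_{\mathrm{ref}}\odot e^{\beta\btheta}$. The DPO loss \eqref{eq:DPO_convex_op} becomes
\[
G(\btheta)=-\sum_{i,j}\mu_i\mu_jP_{ij}\log\sigma(\theta_i-\theta_j),
\]
an unconstrained function of $\btheta$ that depends only on differences. Both $G$ and the induced policy are invariant to shifts $\btheta\mapsto\btheta+c\mathbf 1$. Hence minimizing over $\bpi\in\Delta_K^\circ$ is equivalent to minimizing $G$ on the quotient $\bbR^K/\mathrm{span}\{\mathbf 1\}$, for instance with the gauge $\theta_K=0$.

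\textbf{Convexity.} Since $-\log\sigma$ is convex and each term is composed with a linear map, $G$ is convex. The Hessian is $\sum_{i,j}\mu_i\mu_jP_{ij}\,\sigma'(\theta_i-\theta_j)(\e_i-\e_j)(\e_i-\e_j)^\top$. Grouping the $(i,j)$ and $(j,i)$ terms, the pair $\{i,j\}$ has weight $\mu_i\mu_j(P_{ij}+P_{ji})\sigma'(\theta_i-\theta_j)=\mu_i\mu_j\sigma'(\theta_i-\theta_j)>0$. So the Hessian is a graph Laplacian of the complete graph with strictly positive weights. Its null space is exactly $\mathrm{span}\{\mathbf 1\}$, and $G$ is strictly convex on the quotient, which gives uniqueness.

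\textbf{Existence.} This is the main obstacle, since strict convexity alone does not give a minimizer. I need $G$ to be coercive on the quotient. If $\max_{i,j}|\theta_i-\theta_j|\to\infty$, then some pair has $\theta_i-\theta_j\to-\infty$, and the term $\mu_i\mu_jP_{ij}(-\log\sigma(\theta_i-\theta_j))\to\infty$ provided $P_{ij}>0$.

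The difficulty is that $P_{ij}$ may be $0$ for some pairs, as with a deterministic preference. To handle this I would argue along the ordering of coordinates. Sort $\btheta$ and take the largest consecutive gap. This gap separates $[K]$ into a top set $S$ and a bottom set $S^c$. For every $i\in S^c$ and $j\in S$, the difference $\theta_i-\theta_j$ tends to $-\infty$, so coercivity holds whenever some $P_{ij}>0$ with $i\notin S$, $j\in S$. If instead every such $P_{ij}$ is $0$, then $S$ dominates $S^c$ deterministically. In that case the infimum is approached by pushing $S$ to infinity and is not attained in the interior.

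I therefore expect the stated result to require an irreducibility condition, which the paper implicitly assumes as in the IPO proof. For the proof I would assume $P_{ij}\in(0,1)$, or more generally that the directed graph $\{(j,i):P_{ij}>0\}$ is strongly connected. Under this assumption the cut argument gives coercivity, and a minimizer $\btheta^\star$ exists.

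\textbf{First-order condition.} Using $\frac{d}{dx}(-\log\sigma(x))=-(1-\sigma(x))$, I compute
\[
\partial_{\theta_k}G=-\sum_j\mu_k\mu_jP_{kj}(1-Q_{kj})+\sum_i\mu_i\mu_kP_{ik}(1-Q_{ik}),
\]
where $Q_{ij}=\sigma(\theta_i-\theta_j)$.

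Substitute $1-Q_{kj}=Q_{jk}$ and $P_{ik}=1-P_{ki}$ in the second sum. The terms then combine to
\[
\mu_k\sum_j\mu_j\bigl[-P_{kj}Q_{jk}+P_{jk}Q_{kj}\bigr],
\]
which simplifies further using the identities $P+P^\top=Q+Q^\top=\mathbf 1\mathbf 1^\top$. Indeed,
\[
-P_{kj}(1-Q_{kj})+(1-P_{kj})Q_{kj}=Q_{kj}-P_{kj}.
\]
Therefore $\nabla G=\mathrm{Diag}(\bmu)(Q-P)\bmu$. Since $(P-Q)^\top=-(P-Q)$ off the diagonal and both diagonals equal $\tfrac12$, $P-Q$ is skew-symmetric. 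The stationarity condition $\nabla G=\mathbf 0$ is therefore exactly \eqref{eq:first-order-Q}, $\mathrm{Diag}(\bmu)(P-Q)^\top\bmu=\mathbf 0$. By convexity this condition is also sufficient for optimality.
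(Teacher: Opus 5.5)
Your proposal is correct and follows the paper's route. Both arguments use the logit reparametrization, the Laplacian Hessian with pair weights $\mu_i\mu_j\sigma'(\theta_i-\theta_j)$ (which gives strict convexity modulo $\mathbf 1$), and the gradient computation via $P+P^\top=Q+Q^\top=\mathbf 1\mathbf 1^\top$. The paper enforces normalization with a Lagrange multiplier that it then shows is zero; this is equivalent to your working on the quotient.

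The substantive difference is existence. The paper goes straight from strict convexity on the gauge-fixed subspace to ``a unique optimizer.'' That does not follow here, because $\sigma'$ decays at infinity, so strict convexity gives uniqueness but not attainment. Your largest-gap coercivity argument fills a genuine hole in the paper's proof.

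Your caveat is also right, and the statement as written (for all $P\in\calP$) is too strong. For example, with $K=2$ and $P_{12}=1$ the objective is strictly decreasing in $\theta_1-\theta_2$ and has no finite minimizer. The proposition therefore needs $P_{ij}\in(0,1)$, or strong connectivity of the support of $P$ as you propose. Under that assumption the objective also tends to $+\infty$ near every boundary point of $\Delta_K$. Hence the minimizer over the closed simplex is interior, which matches the statement's $\bpi^\star\in\Delta_K$ and makes $\btheta^\star$ and the first-order condition well defined.
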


\begin{proof}

Define the reparametrization
\[
\theta_i \;=\;\beta^{-1}\Bigl(\log \pi_i-\log \pi_{\mathrm{ref},i}\Bigr),
\qquad i\in[K].
\]
Then $\pi_i=\pi_{\mathrm{ref},i}e^{\beta\theta_i}$ and the DPO objective becomes
\[
L(\btheta)\;=\;
-\sum_{i,j=1}^K \mu_i\mu_j\,P_{ij}\,\log\sigma(\theta_i-\theta_j),
\]the simplex constraint
$\sum_i \pi_i=1$ is equivalent to the smooth equality constraint
\begin{equation}\label{eq:theta_constraint}
\sum_{i=1}^K \pi_{\mathrm{ref},i}e^{\beta\theta_i}=1.
\end{equation}

First of all, we show the convexity of $L(\btheta)$. Let $\ell(x) := -\log \sigma(x) = \log(1+e^{-x})$ which satisfies $\ell''(x)=\sigma(x)(1-\sigma(x))>0, \forall x\in\bbR$
and is therefore strictly convex. The objective in~\eqref{eq:DPO_convex_op} can be written as $L(\btheta)=-\sum_{i,j=1}^K \mu_i\mu_j P_{ij}\,\ell(\theta_i-\theta_j)$, with the understanding that $\btheta$ is defined up to an additive constant
(since $\bpi(\btheta)=\bpi(\btheta+c\mathbf 1)$ for any $c\in\bbR$). Thus optimizing over $\bpi\in\Delta_K$ is equivalent to optimizing over a gauge-fixed affine subspace, e.g.\ $\{\btheta:\sum_i\theta_i=0\}$. Each term $\ell(\theta_i-\theta_j)$ is the composition of a strictly convex function
with a linear map, hence is convex in $\btheta$.
Since $\mu_i\mu_jP_{ij}\ge 0$, $\mathcal{L}(\theta)$ is a nonnegative weighted sum of convex functions
and is therefore convex.

In addition, we show strict convexity of $L$ modulo the constant-shift invariance by computing its Hessian. Using $\nabla(\theta_i-\theta_j)=\e_i-\e_j$, we have
\[
\nabla^2 \ell(\theta_i-\theta_j)
=
\ell''(\theta_i-\theta_j)\,(\e_i-\e_j)(\e_i-\e_j)^\top.
\]
Hence
\begin{equation}
\label{eq:dpo_hessian}
\nabla^2 L(\btheta)
=
\sum_{i,j=1}^K
\mu_i\mu_j P_{ij}\,\ell''(\theta_i-\theta_j)\,(\e_i-\e_j)(\e_i-\e_j)^\top.
\end{equation}
Using the dueling identity $P_{ij}+P_{ji}=1$ and the symmetry $\ell''(-x)=\ell''(x)$,
we may group terms $(i,j)$ and $(j,i)$ to obtain
\begin{equation}
\label{eq:dpo_hessian_laplacian}
\nabla^2 L(\btheta)
=
\sum_{1\le i<j\le K}
\mu_i\mu_j\,\ell''(\theta_i-\theta_j)\,(\e_i-\e_j)(\e_i-\e_j)^\top.
\end{equation}

For any $\v\in\bbR^K$,
\[
\v^\top \nabla^2 L(\btheta)\,\v
=
\sum_{i<j}
\mu_i\mu_j\,\ell''(\theta_i-\theta_j)\,(v_i-v_j)^2.
\]
If $\bmu$ has full support, then each coefficient
$\mu_i\mu_j\,\ell''(\theta_i-\theta_j)$ is strictly positive. Therefore,
\[
\v^\top \nabla^2 L(\btheta)\,\v = 0
\quad\Longleftrightarrow\quad
v_i=v_j\ \ \forall i,j,
\]
i.e.\ $\v\in\mathrm{span}\{\mathbf 1\}$.
Thus the Hessian is positive semidefinite with nullspace exactly equal to the constant-shift direction, and $L(\btheta)$ is strictly convex on any gauge-fixed subspace
(e.g.\ $\sum_i\theta_i=0$ or $\theta_K=0$). Hence the original OP~\eqref{eq:DPO_convex_op} is strictly convex over the reparameterized space $\{\btheta\in\mathbb{R}^K, \sum_{i\in[K]}\theta_i=0\}$,
with a unique optimizer $\btheta^\star$.

Finally, since the mapping
\[
\btheta \;\longleftrightarrow\; \pi_i=\frac{\pi_{\mathrm{ref},i}e^{\beta\theta_i}}
{\sum_k \pi_{\mathrm{ref},k}e^{\beta\theta_k}}
\]
is a bijection between the gauge-fixed $\btheta$-space $ \{\btheta\in\mathbb{R}^K, \sum_{i\in[K]}\theta_i=0\}$ and $\Delta_K^\circ$, we conclude that OP~\ref{eq:DPO_convex_op} admits a unique optimum $\bpi^{\star}\in \Delta_K$.

Now we derive the first-order condition for $\bpi^{\star}$. Note that solving OP \eqref{eq:DPO_convex_op} is equivalent to minimizing $L(\btheta)$ subject to
\eqref{eq:theta_constraint}, and we let $\btheta^\star=\beta^{-1}(\log\bpi^\star-\log\bpi_{\mathrm{ref}})$
and define the Bradley--Terry matrix $Q$ by
\[
Q_{ij}\;=\;\sigma(\theta_i^\star-\theta_j^\star),\qquad i,j\in[K].
\]

For the first step, we derive the KKT condition, which reduces to $\nabla F(\btheta^\star)=\bm 0$. Consider the Lagrangian
\[
\mathcal{L}(\btheta,\lambda)\;=\;L(\btheta)
+\lambda\!\left(\sum_{i=1}^K \pi_{\mathrm{ref},i}e^{\beta\theta_i}-1\right).
\]
At an optimal solution $(\btheta^\star,\lambda^\star)$, the KKT stationarity
conditions read
\begin{equation}\label{eq:KKT_stationarity}
\nabla L(\btheta^\star) + \lambda^\star \nabla\!\left(\sum_{i=1}^K \pi_{\mathrm{ref},i}e^{\beta\theta_i}\right)\Big|_{\btheta=\btheta^\star}
=\bm 0.
\end{equation}
Since $L(\btheta)$ depends only on differences $\theta_i-\theta_j$, it is invariant
to common shifts: $L(\btheta+c\bm 1)=L(\btheta)$ for all $c\in\mathbb{R}$. Differentiating
with respect to $c$ at $c=0$ gives
\begin{equation}\label{eq:shift_invariance}
\bm 1^\top \nabla L(\btheta)=0\qquad\text{for all }\btheta.
\end{equation}
On the other hand,
\[
\nabla\!\left(\sum_{i=1}^K \pi_{\mathrm{ref},i}e^{\beta\theta_i}\right)\Big|_{\btheta=\btheta^\star}
= \beta\,\bpi^\star,
\]
and $\bm 1^\top (\beta\bpi^\star)=\beta>0$.
Taking the inner product of \eqref{eq:KKT_stationarity} with $\bm 1$ and using
\eqref{eq:shift_invariance} yields $\lambda^\star\beta=0$, hence $\lambda^\star=0$ and therefore
\begin{equation}\label{eq:grad_zero}
\nabla L(\btheta^\star)=\bm 0.
\end{equation}

Next we compute $\nabla L(\btheta)$ and identify the matrix form. Recall that $\frac{d}{dx}\log\sigma(x)=\sigma(-x)$. For each $k\in[K]$,
only the terms with $i=k$ or $j=k$ depend on $\theta_k$, so
\begin{align*}
\frac{\partial L}{\partial \theta_k}(\btheta)
&=
-\sum_{j=1}^K \mu_k\mu_j\,P_{kj}\,\sigma\!\bigl(-(\theta_k-\theta_j)\bigr)
+\sum_{i=1}^K \mu_i\mu_k\,P_{ik}\,\sigma\!\bigl(-(\theta_i-\theta_k)\bigr)\\
&=
-\sum_{j=1}^K \mu_k\mu_j\,P_{kj}\,\sigma(\theta_j-\theta_k)
+\sum_{i=1}^K \mu_i\mu_k\,P_{ik}\,\sigma(\theta_k-\theta_i).
\end{align*}
Evaluating at $\btheta^\star$ and using $Q_{jk}=\sigma(\theta_j^\star-\theta_k^\star)$ and
$Q_{ki}=\sigma(\theta_k^\star-\theta_i^\star)$ gives
\[
\frac{\partial L}{\partial \theta_k}(\btheta^\star)
=
-\sum_{j=1}^K \mu_k\mu_j\,P_{kj}\,Q_{jk}
+\sum_{i=1}^K \mu_i\mu_k\,P_{ik}\,Q_{ki}.
\]
Now use the defining property of preference matrices $P\in\calP$, namely
$P_{ik}=1-P_{ki}$, together with the Bradley--Terry symmetry $Q_{ki}=1-Q_{ik}$.
Then, for each $i$,
\[
P_{ik}Q_{ki}-P_{ki}Q_{ik}
=(1-P_{ki})(1-Q_{ik})-P_{ki}Q_{ik}
=1-P_{ki}-Q_{ik}
=P_{ik}-Q_{ik}.
\]
Hence

\[
\frac{\partial L}{\partial \theta_k}(\btheta^\star)
=
\mu_k\sum_{i=1}^K \mu_i\,(P_{ik}-Q_{ik}).
\]
Stacking these $K$ equalities gives the vector identity
\[
\nabla L(\btheta^\star)
=\operatorname{Diag}(\bmu)\,(P-Q)^\top \bmu.
\]

Combining the above expression with \eqref{eq:grad_zero} yields
\[
\operatorname{Diag}(\bmu)\,(P-Q)^\top \bmu=\bm 0,
\]
which is exactly \eqref{eq:first-order-Q}.
\end{proof}

\subsection{The dependency of DPO solutions on sampling}\label{app:dpo_solution_sampling}

To study how DPO solution depends on sampling strategy, we assume the reference policy in the DPO population loss $\bpi_{\rm ref}=\bmu_0$ be the uniform distribution on $\Delta_K$ without loss of generality and let $\bmu$ have full support. Then, the first-order condition (FOC) \eqref{eq:first-order-Q} given in \Cref{thm:exist_unique_DPO} can be further simplified as \begin{equation}\label{eq:first-order-dpo}
F(\btheta^{\star},\bmu):=(P-Q(\btheta^{\star}))^{\top}\bmu = \bm{0}.
\end{equation}

The new FOC \eqref{eq:first-order-dpo} gives an implicit mapping from any sampling strategy $\bmu\in \Delta_K^{\circ}$ to the unique DPO solution $\bpi^{\star}(P;\bmu)$ by specifying its logits $\btheta^{\star}(P;\bmu)=\beta^{-1}\log \bpi^{\star}(P;\bmu)$ up to a constant shift. The following result shows that the unique solution is invariant to sampling when and only when the underlying preference structure $P$ is derived from a BT model.

\begin{proposition}[Sampling Invariance Occurs \emph{Only} Under True Bradley--Terry Preferences]\label{prop:sample_invariant}
Let $P\in\calP$ be an arbitrary preference matrix. The solution $\bpi^{\star}(P;\bmu)$ of OP~\eqref{eq:DPO_convex_op} is independent of the sampling strategy $\bmu$ (i.e., $\bpi^{\star}(P;\bmu)$ is a constant over all $\bmu\in\Delta_K$ with full support) \emph{if and only if} $P$ is exactly representable as a Bradley--Terry model, i.e., 
\[
P_{ij}=\sigma(r_i-r_j)
\quad\text{for some } r\in\mathbb{R}^K.
\]
\end{proposition}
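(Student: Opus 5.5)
The proof has two directions, and both go through the simplified first-order condition \eqref{eq:first-order-dpo}, $(P-Q(\btheta^\star))^\top\bmu=\bm 0$. By \Cref{thm:exist_unique_DPO}, for every full-support $\bmu$ this condition characterizes the unique optimum up to a constant shift of $\btheta^\star$.

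\emph{If direction.} Suppose $P_{ij}=\sigma(r_i-r_j)$. Take $\btheta=\bm r$. Then $Q(\bm r)=P$, so $F(\bm r,\bmu)=\bm 0$ holds for every $\bmu$. By uniqueness of the optimizer (modulo shifts) in \Cref{thm:exist_unique_DPO}, $\btheta^\star(P;\bmu)=\bm r+c\bm 1$ for every full-support $\bmu$. Hence $\bpi^\star\propto\bpi_{\rm ref}\odot e^{\beta\bm r}$ does not depend on $\bmu$.

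\emph{Only-if direction.} Suppose $\bpi^\star(P;\bmu)$ is the same for all full-support $\bmu$. Then there is a single $\btheta^\star$, with matrix $Q:=Q(\btheta^\star)$, such that $D:=P-Q$ satisfies $D^\top\bmu=\bm 0$ for every $\bmu\in\Delta_K^\circ$. Since $\Delta_K^\circ$ is open in the affine hyperplane $\{\bm 1^\top\bmu=1\}$, its span is all of $\bbR^K$. By linearity, $D^\top\bm v=\bm 0$ for every $\bm v\in\bbR^K$, so $D=0$. Concretely, one can use the uniform vector and its perturbations $\frac1K\bm 1+\eps(\e_i-\e_j)$. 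This gives $P=Q$, that is, $P_{ij}=\sigma(\theta^\star_i-\theta^\star_j)$, so $P$ is Bradley--Terry with $\bm r=\btheta^\star$.

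\emph{Remaining checks.} Two points need care. First, the gauge: $\btheta^\star$ is only defined up to $c\bm 1$, but $Q$ is shift-invariant, so a constant policy yields a constant $Q$. Second, the reduction to \eqref{eq:first-order-dpo} uses the uniform reference. For a general $\bpi_{\rm ref}$, I would redo the argument with \eqref{eq:first-order-Q}, noting that $\mathrm{Diag}(\bmu)$ is invertible for full-support $\bmu$. The one subtle step is making sure the constant policy gives the same $\btheta^\star$ across different $\bmu$. This holds because $\btheta^\star=\beta^{-1}(\log\bpi^\star-\log\bpi_{\rm ref})$ depends on $\bmu$ only through $\bpi^\star$. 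So I expect no real obstacle.
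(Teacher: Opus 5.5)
Your proof is correct and follows the paper's route. The if direction is identical: the first-order condition holds at $\btheta=\bm r$ for every $\bmu$, and uniqueness of the optimizer finishes it. The only-if direction likewise fixes the common $\btheta^\star$ and uses the fact that the first-order condition must hold for all $\bmu$; the only difference is how you extract $P=Q$. You use linearity of $(P-Q)^\top\bmu$ in $\bmu$ together with the fact that $\Delta_K^\circ$ spans $\bbR^K$, whereas the paper concentrates $\bmu$ on a pair $\{i,j\}$, lets $\eps\to0$, and uses continuity of the gradient form, which is quadratic in $\bmu$. Your version is slightly cleaner and never leaves the full-support set.
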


Proposition~\ref{prop:sample_invariant} highlights a crucial but underappreciated aspect of RLHF-style preference learning:
\begin{itemize}
    \item If the true pairwise preferences \emph{do not} come from a BT model (which is almost always the case), then \emph{DPO does not recover an intrinsic ``ground truth" alignment policy}.  
    Instead, it recovers a policy that also depends on the sampling strategy. Such a message is self-evident for IPO due to the weighted square-sum format of its loss function, but is not explicitly pointed out for DPO to the best of our knowledge.
    \item The only sampling-invariant situation is the perfectly idealized world where human preferences admit a single global utility function $r_i$, and where $P_{ij}$ depends solely on $r_i-r_j$.  
    This assumption is extremely restrictive: real-world preference judgments violate transitivity, context independence, and independence of irrelevant alternatives---all incompatible with the BT model.
\end{itemize}

Thus, the sampling policy is not merely a data-collection choice: it acts as an implicit regularizer that selects \emph{which} among many possible BT approximations the DPO objective converges to. This has an important implication:
\begin{quote}
\emph{Even in the infinite-data limit, DPO learns a policy determined jointly by the preference structure $P$ and the sampling distribution $\bmu$.}
\end{quote}
In other words, “more data” does not wash out the effect of sampling; rather, the sampling strategy systematically biases the learned policy. 

\begin{proof}
We prove the ``if'' and ``only if'' directions of the claim separately.

\textbf{(If).} Suppose $P$ is BT-consistent, i.e., $\exists \bar{\btheta}$ such that $P_{ij}=\sigma(\bar{\theta}_i-\bar{\theta}_j), \forall i\neq j$. Then for any $\bmu$, the FOC \eqref{eq:first-order-dpo} gives 
\begin{equation}
    F(\btheta,\bmu)=\sum_{i<j} \mu_i\mu_j (\e_i-\e_j)\left(P_{ij}-\sigma(\bar{\theta}_i-\bar{\theta}_j)\right)=\sum_{i<j} \mu_i\mu_j (\e_i-\e_j)\left(\sigma(\bar{\theta}_i-\bar{\theta}_j)-\sigma(\bar{\theta}_i-\bar{\theta}_j)\right)=0.
\end{equation}

And it is obvious that $\btheta=\bar{\btheta}$ is a stationary point of $F(\btheta,\bmu)=0$. By strict convexity of the OP \eqref{eq:DPO_convex_op} on $\Delta_K$, the stationary point of $F(\btheta,\bmu)=0$ is unique and corresponds to the global minimizer. Therefore, the if direction holds. 

\textbf{(Only If).} Conversely, assume there exists a $\bpi^{\star}$ such that $\bpi^{\star}(P;\bmu)=\bpi^{\star}$ for every $\bmu$. Let $\bar{\btheta}$ be any logit of $\bpi^{\star}$, then $F(\bar{\btheta},\bmu)=0$ holds for all $\bmu$. Pick any pair $i\neq j$ and construct a distribution $\bmu=\bmu_{\varepsilon}^{(i,j)}$ with probability mass concentrating on $i,j$, i.e., $\mu_i=\mu_j=\frac{1}{2}(1-\varepsilon),\mu_k=\frac{\varepsilon}{K-2}, \forall k\neq i,j$, and let $\varepsilon\rightarrow 0$. The continuity of $F$ gives 
\begin{equation}
    \bm{0}=\lim_{\varepsilon\rightarrow 0^+} F(\bar{\btheta},\bmu_{\varepsilon}^{(i,j)}) = \frac{1}{4}(P_{ij}-\sigma(\bar{\theta}_i-\bar{\theta}_j))(\e_i-\e_j).
\end{equation}
Therefore, $P_{ij}=\sigma(\bar{\theta}_i-\bar{\theta}_j)$ must hold for pair $(i,j)$. Since $(i,j)$ was arbitrarily chosen, $P$ must be BT-consistent with $\bpi^{\star}$.

\end{proof}
\subsection{How offline sampling influences the concentration of DPO solutions}
In this section, we present a result in parallel to \Cref{thm:pairwise_gap_ipo} that shows the same message: a more skewed sampling distribution $\bmu'$ that are perturbed in the direction aligning with the original order of $\bpi$ results in more concentrated learned policy. 

\begin{proposition}[Effect of Sampling on Pairwise Gaps of DPO Solution]\label{prop:pairwise_gap_dpo} For any fixed $P\in\calP$, let $\bpi(\bmu)$ be the solution of OP~\eqref{eq:DPO_convex_op} under sampling distribution $\bmu\in\Delta_k^{\circ}$, and, without loss of generality, assume that the corresponding logits satisfy $\frac{\bpi(\bmu)}{\bpi_{\mathrm{ref}}}
\;\text{ is in descending order, i.e., }\;
\theta_1>\theta_2>\cdots>\theta_K$, where $\btheta(\bmu)=\beta^{-1}(\log\bpi(\bmu)-\log\bpi_{\mathrm{ref}})$.
Consider a perturbed sampling strategy of the following form
\[
\bmu'=\bmu+\delta\bmu_0
\]
for some small $\delta>0$. Let $\bpi'=\bpi(\bmu')$ be the solution of OP~\eqref{eq:DPO_convex_op} under $\bmu'$ and $\btheta'$ be the corresponding logit. Then we have

\begin{enumerate}
    \item If the perturbation direction is $\bmu_0 = 1/\bmu$ (elementwise), it holds that
    \begin{equation}\label{eq:400}
        \sum_{i<j}\frac{P_{ji}}{\mu_i\mu_j}(\theta_i-\theta_j) >
        \sum_{i<j}\frac{P_{ji}}{\mu_i\mu_j}(\theta'_i-\theta'_j).
    \end{equation}

    \item If the perturbation direction is $\bmu_0 = \e_K$, it holds that
    \begin{equation}\label{eq:406}
        \sum_{i\neq K}\frac{P_{Ki}}{\mu_i}(\theta_i-\theta_K)>
        \sum_{i\neq K}\frac{P_{Ki}}{\mu_i}(\theta'_i-\theta'_K).
    \end{equation}
\end{enumerate}
\end{proposition}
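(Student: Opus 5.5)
The plan is a \emph{comparison-of-optima} argument on the reparametrized DPO loss
\[
L_{\bmu}(\btheta)=\sum_{i,j}\mu_i\mu_j P_{ij}\,\ell(\theta_i-\theta_j),\qquad \ell(x)=\log(1+e^{-x}),
\]
taking $\bpi_{\mathrm{ref}}$ uniform as in \Cref{app:dpo_solution_sampling}. By \Cref{thm:exist_unique_DPO}, $\btheta$ and $\btheta'$ are the unique minimizers (modulo shifts) of $L_{\bmu}$ and $L_{\bmu'}$.

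\textbf{Step 1 (reduce to a comparison of $G$).} Optimality gives $L_{\bmu}(\btheta)\le L_{\bmu}(\btheta')$ and $L_{\bmu'}(\btheta')\le L_{\bmu'}(\btheta)$. Adding these yields $G(\btheta')\le G(\btheta)$, where $G:=L_{\bmu'}-L_{\bmu}$. Strict convexity modulo shifts makes the inequality strict whenever $\btheta'\ne\btheta$. For small $\delta>0$ this always holds: $\btheta'=\btheta$ would force $F(\btheta,\bmu_0)=0$, which fails generically. Since $\mu'_i\mu'_j=\mu_i\mu_j+\delta(\mu_i\mu_{0,j}+\mu_{0,i}\mu_j)+\delta^2\mu_{0,i}\mu_{0,j}$, the function $G$ is an explicit weighted sum of the terms $P_{ij}\ell(\theta_i-\theta_j)$.

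\textbf{Step 2 (pair terms and extract the linear part).} Group $(i,j)$ with $(j,i)$ for $i<j$ and use $\ell(-x)=\ell(x)+x$ together with $P_{ij}+P_{ji}=1$:
\[
P_{ij}\ell(x)+P_{ji}\ell(-x)=\ell(x)+P_{ji}\,x,\qquad x=\theta_i-\theta_j.
\]
Hence $G(\btheta)=\sum_{i<j}w_{ij}\bigl(\ell(\theta_i-\theta_j)+P_{ji}(\theta_i-\theta_j)\bigr)$ with symmetric weights $w_{ij}$. For $\bmu_0=1/\bmu$ we get
\[
w_{ij}=\delta\Bigl(\tfrac{\mu_i}{\mu_j}+\tfrac{\mu_j}{\mu_i}\Bigr)+\tfrac{\delta^2}{\mu_i\mu_j}.
\]
For $\bmu_0=\e_K$ only pairs containing $K$ are weighted, with $w_{iK}=\delta\mu_i$ plus a $\delta^2$ diagonal term that vanishes since $P_{KK}\ell(0)$ cancels in the comparison. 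Rescaling by the appropriate power of $\delta$ and by $1/(\mu_i\mu_j)$ or $1/\mu_i$, the linear parts of $G$ become exactly the weighted gap sums in \eqref{eq:400} and \eqref{eq:406}. So $G(\btheta')<G(\btheta)$ reads: (linear gap sum at $\btheta'$) $<$ (linear gap sum at $\btheta$) $+$ (difference of the symmetric $\ell$-parts).

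\textbf{Step 3 (control the $\ell$-part), the main obstacle.} We must show that the symmetric remainder $\sum_{i<j}w_{ij}\bigl[\ell(\theta_i-\theta_j)-\ell(\theta'_i-\theta'_j)\bigr]$ does not reverse the inequality; equivalently, that the linear term dominates at first order. I would first linearize with the implicit function theorem on the first-order condition $F(\btheta,\bmu)=(P-Q(\btheta))^\top\bmu=0$. This gives $d\btheta/d\delta=-H^{-1}(P-Q)^\top\bmu_0$, where
\[
H=\sum_{i<j}\mu_i\mu_j\sigma'(\theta_i-\theta_j)(\e_i-\e_j)(\e_i-\e_j)^\top
\]
is the Laplacian Hessian from \eqref{eq:dpo_hessian_laplacian}. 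I would then compute the first-order change of each weighted gap sum as a quadratic form $-v^\top H^{-1}v\le 0$ in a suitable vector $v$. The point is to choose the weights so that the gap functional's gradient coincides with the driving vector $(P-Q)^\top\bmu_0$. That coincidence is what singles out the directions $1/\bmu$ and $\e_K$, and the weights $P_{ji}/(\mu_i\mu_j)$ and $P_{Ki}/\mu_i$.

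Strictness would follow from positive definiteness of $H$ on $\mathbf{1}^\perp$, provided $v\notin\mathrm{span}\{\mathbf{1}\}$. The descending order $\theta_1>\cdots>\theta_K$ together with $\ell'<0$ is what I expect to fix the sign of the leftover $\ell$-contribution. Verifying this matching, rather than the routine bookkeeping in Steps 1--2, is where the argument will be decided. If exact matching fails, the fallback is to show that the $\ell$-difference is $O(\delta^2)$ relative to the linear term and conclude for small $\delta$.
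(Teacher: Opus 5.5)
Your Step~1 inequality $G(\btheta')<G(\btheta)$ is correct, and it is even exact, with no linearization needed. The gap is in Step~2, which misreads what that inequality says. Your own weights are $w_{ij}=\delta(\mu_i/\mu_j+\mu_j/\mu_i)+\delta^2/(\mu_i\mu_j)$ for $\bmu_0=1/\bmu$, and $w_{iK}=\delta\mu_i$ for $\bmu_0=\e_K$. By contrast, \eqref{eq:400} and \eqref{eq:406} weight the same terms by $1/(\mu_i\mu_j)$ and $1/\mu_i$. The ratios ($\mu_i^2+\mu_j^2$ to leading order, and $\mu_i^2$) depend on the pair. An inequality between two weighted sums does not survive a pair-dependent reweighting, so "rescaling" cannot turn the linear part of $G$ into either gap sum. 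Even setting the $\ell$-terms aside, Step~1 controls a different functional: the $w$-weighted cross-entropy $\sum_{i<j}w_{ij}\bigl(\ell(\theta_i-\theta_j)+P_{ji}(\theta_i-\theta_j)\bigr)$.

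Gradient matching in Step~3 does not repair this. The correct linearization is $d\btheta/d\delta=-H^{-1}\mathrm{Diag}(\bmu)\,g$ with $g:=(P-Q)^\top\bmu_0$; you dropped the $\mathrm{Diag}(\bmu)$. By the first-order condition, the functional whose gradient at the optimum equals $\mathrm{Diag}(\bmu)g$ is the $O(\delta)$ part of your own $G$. Now take the target weights, e.g. $\Phi:=\sum_{i<j}\frac{1}{\mu_i\mu_j}\bigl(\ell(\theta_i-\theta_j)+P_{ji}(\theta_i-\theta_j)\bigr)$ for $\bmu_0=1/\bmu$. Then $\nabla\Phi=\mathrm{Diag}(\bmu)^{-1}g$, so $d\Phi/d\delta=-g^\top\mathrm{Diag}(\bmu)^{-1}H^{-1}\mathrm{Diag}(\bmu)\,g$ (with $H^{-1}$ taken on $\mathbf{1}^\perp$). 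This is not a quadratic form in a single vector and has no guaranteed sign. This $\Phi$ is $-\hat G$ in the paper's proof, so do not borrow its matching step as is. The paper reaches the positive form $\bm{t}^\top\tilde\Sigma^{-\top}\bm{t}$ by writing $\tilde M^{-\top}=\tilde D^{-1/2}\tilde\Sigma^{-\top}\tilde D^{1/2}$. Since $\tilde M=\tilde D^{-1/2}\tilde\Sigma\tilde D^{1/2}$, that expression is actually $\tilde M^{-1}$. The correct $\tilde M^{-\top}=\tilde D^{1/2}\tilde\Sigma^{-1}\tilde D^{-1/2}$ again singles out your Step~1 weights.

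The second missing ingredient is control of the $\ell$-terms, and neither of your proposed resolutions works. These terms are not $O(\delta^2)$ relative to the linear ones. Since $\btheta'-\btheta=O(\delta)$, both parts change at the same order, the $\ell$-part through $\ell'(\theta_i-\theta_j)\,(d\theta_i-d\theta_j)$. We have $\ell'<0$ everywhere (the ordering $\theta_1>\cdots>\theta_K$ only bounds its size), but the sign of $d\theta_i-d\theta_j$ is undetermined. The paper removes this part only through the large-gap expansion $\log(1+e^x)=x+O(e^{-x})$. That is why what it actually derives (\Cref{thm:pairwise_gap_dpo}) is
$G^{(1)}(\bpi;\bmu)-G^{(1)}(\bpi';\bmu)>\beta\Delta_1(K-1)^{1.5}\delta-O\bigl(\beta\sum_{i<j}e^{-(\theta_i-\theta_j)}\bigr)$,
with the error declared negligible when logit gaps are large. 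You need some such hypothesis, or a genuinely new argument, at this point.

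Finally, strictness cannot be settled by a genericity remark, because the statement as written has exceptions. If $P$ is Bradley--Terry consistent, \Cref{prop:sample_invariant} gives $\btheta'=\btheta$ for every perturbation. In part~1, a uniform $\bmu$ gives $\bmu'\propto\bmu$, hence again $\btheta'=\btheta$. In these cases both sides of \eqref{eq:400} (and, in the BT case, of \eqref{eq:406}) coincide. Any correct version needs a nondegeneracy assumption. In the paper this role is played by $\Delta_1,\Delta_2$; both vanish in these situations, so their asserted strict positivity is itself such an assumption.
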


\paragraph{Remarks.}
The quantities in \eqref{eq:400} and \eqref{eq:406} can be interpreted as \emph{weighted averages of pairwise logit gaps} under the learned policy. \eqref{eq:400} aggregates pairwise gaps across all response pairs, while \eqref{eq:406} focuses on gaps between the lowest-scored response and the rest. Larger values of these metrics correspond to more \emph{concentrated} policies where probability mass is more skewed toward a subset of high-scoring responses.

Proposition \ref{prop:pairwise_gap_dpo} shows that when the sampling distribution is perturbed
\begin{itemize}
    \item toward a more uniform distribution (via $\bmu_0 = 1/\bmu$), or
    \item toward increased exploration of the currently worst-scored response (via $\bmu_0 = \e_K$),
\end{itemize}
the corresponding weighted pairwise logit gaps must decrease.
In other words, these perturbations make the DPO solution less extreme, redistributing probability mass more evenly across responses and thereby increasing policy diversity.

Compared to the sharp, pairwise-specific characterizations obtained for IPO in Theorem~\ref{thm:pairwise_gap_ipo} and Corollary~\ref{cor:bt_headtail_maj}, our counterpart result on DPO only characterizes directional changes in aggregated gap metrics under small local sampling perturbations, rather than providing a global or itemwise guarantee. We note that this gap reflects intrinsic technical difficulties in the DPO analysis rather than a fundamental difference in behavior. Indeed, despite the weaker theoretical form, Proposition~\ref{prop:pairwise_gap_dpo} conveys the same qualitative message: \emph{skewed sampling amplifies policy concentration, while more uniform or exploratory sampling mitigates it}. We further demonstrate in our experiments that enlarged pairwise logit gaps under skewed sampling arise for DPO across a much broader range of regimes than those covered by the theory.

For the rest of this section, we present a full version of \Cref{prop:pairwise_gap_dpo} in the following Theorem, explicitly characterizing how the weighted pairwise logit gaps depend on the environment parameters and provide its proof. 

\begin{theorem}[Effect of Sampling on Pairwise Gaps of DPO Solution]\label{thm:pairwise_gap_dpo}
For any fixed $P\in\calP$, let $\bpi(\bmu)$ be the solution of OP~\eqref{eq:DPO_convex_op} under sampling distribution $\bmu\in\Delta_k^{\circ}$, and, without loss of generality, assume that the corresponding logits satisfy $\frac{\bpi(\bmu)}{\bpi_{\mathrm{ref}}}
\;\text{ is in descending order, i.e., }\;
\theta_1>\theta_2>\cdots>\theta_K$, where $\btheta(\bmu)=\beta^{-1}(\log\bpi(\bmu)-\log\bpi_{\mathrm{ref}})$.
Consider a perturbed sampling strategy of the following form
\[
\bmu'=\bmu+\delta\bmu_0
\]
for some small $\delta>0$. Let $\bpi'=\bpi(\bmu')$ be the solution of OP~\eqref{eq:DPO_convex_op} under $\bmu'$ and $\btheta'$ be the corresponding logit. Then we have

\begin{enumerate}
    \item If the perturbation direction is $\bmu_0 = 1/\bmu$ (elementwise), then the functional
    \begin{equation}
        G^{(1)}(\bpi;\bmu)
        \;=\;
        \sum_{i<j}\frac{P_{ji}}{\mu_i\mu_j}\log\frac{\pi_i}{\pi_j}
    \end{equation}
    strictly decreases when we move from $\bpi$ to $\bpi'$. Specifically, we have
    \[
    G^{(1)}(\bpi;\bmu) - G^{(1)}(\bpi';\bmu)> \beta\Delta_1(K-1)^{1.5}\delta - O\left(\beta\sum_{i<j}e^{^{-(\theta_i-\theta_j)}}\right),
    \]
    where $\Delta_1=\min_{1\le i\le K-1}\left(\sum_{j\neq i}\frac{P_{ij}-\sigma(\theta_i-\theta_j)}{\mu_j}\right)^2$ is a strictly positive constant.

    \item If the perturbation direction is $\bmu_0 = \e_K$ (only the $K$-th coordinate increases), then the functional
    \begin{equation}
        G^{(2)}(\bpi;\bmu)
        \;=\;
        \sum_{i\neq K}\frac{P_{Ki}}{\mu_i}\log\frac{\pi_i}{\pi_K}
    \end{equation}
    strictly decreases, i.e.,
    \[
    G^{(2)}(\bpi;\bmu) - G^{(2)}(\bpi';\bmu)>\beta\Delta_2(K-1)^{1.5}\delta - O\left(\beta\sum_{i<j}e^{^{-(\theta_i-\theta_j)}}\right),
    \]
    where $\Delta_2=\min_{1\le i\le K-1}\left(P_{iK}-\sigma(\theta_i-\theta_K)\right)^2$ is a strictly positive constant.
\end{enumerate}
\end{theorem}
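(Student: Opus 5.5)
The plan is to treat the DPO optimum as an implicit function of the sampling distribution and linearize it. I would start from the reduced first-order condition \eqref{eq:first-order-dpo}, $F(\btheta,\bmu)=(P-Q(\btheta))^\top\bmu=\bm 0$, from \Cref{thm:exist_unique_DPO}, working on the gauge-fixed subspace $\{\btheta:\sum_i\theta_i=0\}$. On that subspace the solution is unique and the Jacobian $\partial_{\btheta}F$ is nonsingular. Indeed, $(\partial_{\btheta}F)_{k\ell}=-\sum_i\mu_i\,\partial_{\theta_\ell}\sigma(\theta_i-\theta_k)$, which is, up to sign, a Laplacian $L_\sigma(\btheta,\bmu)$ with edge weights $w_{ik}=\mu_i\,\sigma'(\theta_i-\theta_k)>0$. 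Its null space is $\mathrm{span}\{\bm 1\}$, exactly as for the Hessian \eqref{eq:dpo_hessian_laplacian}. By the implicit function theorem, $\btheta(\bmu+\delta\bmu_0)=\btheta+\delta\,\dot{\btheta}+O(\delta^2)$ with
\[
\dot{\btheta}=L_\sigma^{+}\,\partial_{\bmu}F\,\bmu_0=L_\sigma^{+}(P-Q)^\top\bmu_0,
\]
where $L_\sigma^{+}$ is the pseudoinverse on $\bm 1^\perp$. The quantity to control is then
\[
G^{(r)}(\bpi;\bmu)-G^{(r)}(\bpi';\bmu)=-\beta\delta\,\langle \bm c^{(r)},\dot{\btheta}\rangle+O(\delta^2),
\]
where $\bm c^{(1)}=\sum_{i<j}\frac{P_{ji}}{\mu_i\mu_j}(\e_i-\e_j)$ and $\bm c^{(2)}=\sum_{i\neq K}\frac{P_{Ki}}{\mu_i}(\e_i-\e_K)$. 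Here $\log(\pi_i/\pi_j)=\beta(\theta_i-\theta_j)+\log(\pi_{\mathrm{ref},i}/\pi_{\mathrm{ref},j})$, and the reference term cancels in the difference.

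The key step is computing the sign of $-\langle\bm c^{(r)},L_\sigma^{+}(P-Q)^\top\bmu_0\rangle$. I would use the ordering $\theta_1>\cdots>\theta_K$ to split $L_\sigma$ into two parts. The first is a dominant part, which I would approximate by a diagonal or triangular matrix. The second collects the contributions $\sigma'(\theta_i-\theta_j)\le e^{-(\theta_i-\theta_j)}$ for $i<j$, which are exponentially small in the logit gaps; these I would push into the $O\!\left(\beta\sum_{i<j}e^{-(\theta_i-\theta_j)}\right)$ remainder. For the same reason $Q_{ij}=1-O(e^{-(\theta_i-\theta_j)})$ for $i<j$, so $(P-Q)$ simplifies up to this error.

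The two perturbation directions then give the following.
\begin{itemize}
\item For $\bmu_0=1/\bmu$, the vector $(P-Q)^\top\bmu_0$ has coordinates $\sum_j (P_{ji}-Q_{ji})/\mu_j$. I would check that the weights $P_{ji}/(\mu_i\mu_j)$ in $\bm c^{(1)}$ are arranged precisely so that, after applying the leading approximation of $L_\sigma^{+}$, the inner product becomes a sum of squares $\sum_i\big(\sum_{j\ne i}(P_{ij}-\sigma(\theta_i-\theta_j))/\mu_j\big)^2$. This sum is bounded below by $(K-1)\Delta_1$, and a Cauchy--Schwarz or norm comparison supplies the remaining $(K-1)^{1/2}$ factor. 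Positivity of $\Delta_1$ follows because the FOC forbids each of these row residuals from vanishing unless $P$ is BT on that row; I would argue this via \Cref{prop:sample_invariant}.
\item For $\bmu_0=\e_K$, the vector $(P-Q)^\top\e_K$ is the $K$-th row residual, and the same scheme produces $\sum_{i\ne K}(P_{iK}-\sigma(\theta_i-\theta_K))^2\ge(K-1)\Delta_2$.
\end{itemize}
Finally, I would absorb the $O(\delta^2)$ term by taking $\delta$ small.

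The main obstacle is the middle step: controlling $L_\sigma^{+}$ well enough to turn the bilinear form into a positive sum of squares with the explicit constant. $L_\sigma$ is not symmetric in general, because the weights $\mu_i\sigma'(\cdot)$ depend on the row. My first attempt would be to exploit the ordering to make $L_\sigma$ near-triangular, invert that dominant part explicitly, and bound the perturbation by a Neumann series whose norm is governed by $\sum_{i<j}e^{-(\theta_i-\theta_j)}$. If the sign pattern fails to line up exactly into squares, I would fall back to a lower bound via the smallest singular value of the dominant part. Accepting a worse constant there still yields strict decrease once the logit gaps are large.
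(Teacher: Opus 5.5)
\paragraph{Gap in the key step.} Your setup matches the paper's: implicit differentiation of the gauge-fixed first-order condition, together with the exact identity $G^{(r)}(\bpi;\bmu)-G^{(r)}(\bpi';\bmu)=-\beta\langle\bm c^{(r)},\btheta'-\btheta\rangle$. But the step you defer as ``the main obstacle'' is the whole content of the theorem, and the plan you sketch for it cannot work.

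Every entry of $L_\sigma$ is built from the numbers $\sigma'(\theta_i-\theta_j)$, $i\neq j$, and $\sigma'$ is even. So the terms you propose to discard because $\sigma'(\theta_i-\theta_j)\le e^{-(\theta_i-\theta_j)}$ for $i<j$ are in fact \emph{all} of $L_\sigma$: its diagonal and lower triangle are made of the same numbers. There is no $O(1)$ triangular part to invert, and nothing for a Neumann series to perturb.

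Write $g:=\min_i(\theta_i-\theta_{i+1})$. Then $\|L_\sigma\|=O(e^{-g})$, so $\|L_\sigma^{+}\|\gtrsim e^{g}$ on $\bm 1^\perp$. The $O(e^{-g})$ error from replacing $Q$ by its $0/1$ limit in $(P-Q)\bmu_0$ is multiplied by $L_\sigma^{+}$ and returns as an $O(\delta)$ term. That is the same order as the bound you want, not a remainder $O(\beta\sum_{i<j}e^{-(\theta_i-\theta_j)})$.

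Your fallback fails too: no singular-value bound gives a sign for a bilinear form $\langle\bm c,L_\sigma^{+}\bm v\rangle$ with $\bm c\neq\bm v$. The non-symmetry of $L_\sigma$ is not triangular structure but a diagonal similarity. In the paper's notation, $\tilde M=\tilde D^{-1/2}\tilde\Sigma\tilde D^{1/2}$ with $\tilde\Sigma$ symmetric positive definite; equivalently, $M$ is $\mathrm{Diag}(\bmu)^{-1}$ times the symmetric Hessian \eqref{eq:dpo_hessian_laplacian}.

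\paragraph{The missing idea, with two cautions.} The paper does not evaluate $\langle\bm c^{(r)},\dot\btheta\rangle$ directly. It introduces an auxiliary potential $\hat G$ whose gradient at the optimum is a reweighted copy of the forcing term. For $\bmu_0=1/\bmu$ this is $\hat G(\btheta)=\sum_{i<j}\frac{1}{\mu_i\mu_j}\bigl[P_{ij}(\theta_i-\theta_j)-\ln(1+e^{\theta_i-\theta_j})\bigr]$, with $\partial_k\hat G=\mu_k^{-1}[(P-Q)\bmu_0]_k$. The first-order change of $\hat G$ is then meant to be a quadratic form $\delta\,\bm t^\top\tilde\Sigma^{-1}\bm t\ge\delta\|\bm t\|^2/\lambda_{\max}(\tilde\Sigma)$. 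The Gershgorin bound $\lambda_{\max}(\tilde\Sigma)\le\frac14+\frac{\sqrt{K-1}}{2}$ of Lemma~\ref{lem:spectral-bound-Sigma}, together with $\|\bm t\|^2\ge\Delta_1\sum_{i<K}\mu_i^{-1}\ge\Delta_1(K-1)^2$, gives the factor $(K-1)^{1.5}$. Only afterwards is $\hat G$ compared with $G^{(r)}$, at the level of function values, via $\ln(1+e^x)=x+O(e^{-x})$. This is why the error in the statement is additive rather than multiplied by $\delta$ or by $\|L_\sigma^{+}\|$.

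Two cautions if you follow this route. First, verify the pairing yourself. Since $\tilde M^{-1}=\tilde D^{-1/2}\tilde\Sigma^{-1}\tilde D^{1/2}$, one gets $d\hat G=\delta\,(\tilde D^{-1/2}\nabla\hat G)^\top\tilde\Sigma^{-1}\tilde D^{1/2}(P-Q)\bmu_0$. This is a quadratic form only if $\nabla\hat G\propto\tilde D(P-Q)\bmu_0$, whereas this $\hat G$ (and $-\bm c^{(1)}$, up to $O(e^{-g})$) carries $\tilde D^{-1}$. The paper's write-up uses $\tilde M^{-\top}=\tilde D^{-1/2}\tilde\Sigma^{-\top}\tilde D^{1/2}$, which would force $\tilde M$ to be symmetric, i.e.\ $\mu_1=\cdots=\mu_{K-1}$. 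So your ``weights line up into squares'' check is exactly where care is needed, and it does not go through automatically.

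Second, $\Delta_1>0$ cannot be obtained from \Cref{prop:sample_invariant}, which concerns invariance over all $\bmu$. For uniform $\bmu$, the first-order condition itself gives $\sum_{j\neq i}(P_{ij}-\sigma(\theta_i-\theta_j))/\mu_j=K^2[(P-Q)\bmu]_i=0$ for every $i$. Indeed, $\bmu+\delta/\bmu$ is then a rescaling of $\bmu$, so $\btheta'=\btheta$. Positivity of $\Delta_1$ therefore has to be taken as a hypothesis, as the paper implicitly does.
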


\begin{proof} Let $\btheta(\bmu)=\beta^{-1}(\log \bpi(\bmu)-\log \bpi_{\mathrm{ref}})$. According to our assumption, $\theta_1>\theta_2>\cdots>\theta_K$. By definition, $\btheta$ is the solution of the following OP
\begin{equation}\label{eq:DPO_convex_op_theta}
    \min_{\btheta}\left[-\sum_{1\le i,j\le K}\mu_i\mu_j\,P_{ij}\,\log\sigma\!\left(\theta_i-\theta_j\right)\right],
\end{equation}
and according to \Cref{thm:exist_unique_DPO}, it satisfies the first-order condition (FOC) \begin{equation}\label{eq:foc2}
    F(\btheta,\bmu):=(P-Q(\btheta))\bmu=\bm{0},
\end{equation}
where $Q(\btheta)$ is the BT matrix induced by $\btheta$, i.e., its $i,j$-th element is $\sigma_{ij}=\sigma(\theta_i-\theta_j)$.

The implicit mapping determined by \eqref{eq:foc2} is a bijection if we restrict one degree of freedom of $\btheta$ by letting $\theta_K=0$. To simplify our notation, we use $\tilde{\x}$ to denote the first $K-1$ element of any vector $\x$ and $\tilde{A}$ to denote the $(K-1)$-by-$(K-1)$ leading principal submatrix of any matrix $A$. Let matrix 
\begin{equation}
   M= \begin{bmatrix}
\sum_{i\neq 1}\mu_i\sigma_{1i}(1-\sigma_{1i}) & -\mu_2\sigma_{12}(1-\sigma_{12}) & \cdots & -\mu_K\sigma_{1K}(1-\sigma_{1K}) \\
-\mu_1\sigma_{21}(1-\sigma_{21}) & \sum_{i\neq 2}\mu_i\sigma_{2i}(1-\sigma_{2i}) & \cdots & -\mu_K\sigma_{2K}(1-\sigma_{2K}) \\
\vdots & \vdots &  & \vdots \\ 
-\mu_1\sigma_{K1}(1-\sigma_{K1}) & -\mu_2\sigma_{K2}(1-\sigma_{K2}) & \cdots & \sum_{i\neq K}\mu_i\sigma_{Ki}(1-\sigma_{Ki})
\end{bmatrix}.
\end{equation}

By implicit function theorem, it holds that 
\begin{equation}
\frac{\partial \tilde{\btheta}}{\partial \bmu}= -\left(\frac{\partial\tilde{F}}{\partial\tilde{\btheta}}\right)^{-1}\cdot\frac{\partial\tilde{F}}{\partial\bmu}=\tilde{M}^{-1}(P-Q(\btheta))_{1:(K-1),1:K}
\end{equation}

For any function $G(\tilde{\btheta}(\bmu))$, we can derive its derivative w.r.t. $\bmu$ using chain rule as the following: 
\begin{align*}
    \frac{\partial G}{\partial \bmu}=\left(\frac{\partial \tilde{\btheta}}{\partial \bmu}\right)^{\top}\cdot \frac{\partial G}{\partial \tilde{\btheta}},
\end{align*}
and therefore if $\bmu$ is perturbed by a small $\delta \bmu_0$, the change of $G$ is 

\begin{align}\notag
    d G &= \delta \bmu_0^{\top} \cdot \left(\frac{\partial \tilde{\btheta}}{\partial \bmu}\right)^{\top}\cdot \frac{\partial G}{\partial \tilde{\btheta}} \\ \notag
    &=\delta [(P-Q(\btheta))_{1:(K-1),1:K}\bmu_0]^{\top}\tilde{M}^{-\top}\frac{\partial G}{\partial \tilde{\btheta}} \\ \label{eq:quadratic_dG}
&=\delta [(P-Q(\btheta))_{1:(K-1),1:K}\bmu_0]^{\top} \tilde{D}^{-\frac{1}{2}}\tilde{\Sigma}^{-\top}(\bpi,\bmu)\tilde{D}^{\frac{1}{2}}\frac{\partial G}{\partial \tilde{\btheta}},
\end{align}
where 
\begin{align*}
\Sigma(\bpi,\bmu)=\begin{bmatrix}
\sum_{i\neq 1}\mu_i\sigma_{1i}(1-\sigma_{1i}) & -\sqrt{\mu_1\mu_2}\sigma_{12}(1-\sigma_{12}) & \cdots & -\sqrt{\mu_1\mu_K}\sigma_{1K}(1-\sigma_{1K}) \\
-\sqrt{\mu_2\mu_1}\sigma_{21}(1-\sigma_{21}) & \sum_{i\neq 2}\mu_i\sigma_{2i}(1-\sigma_{2i}) & \cdots & -\sqrt{\mu_2\mu_K}\sigma_{2K}(1-\sigma_{2K}) \\
\vdots & \vdots &  & \vdots \\ 
-\sqrt{\mu_K\mu_1}\sigma_{K1}(1-\sigma_{K1}) & -\sqrt{\mu_K\mu_2}\sigma_{K2}(1-\sigma_{K2}) & \cdots & \sum_{i\neq K}\mu_i\sigma_{Ki}(1-\sigma_{Ki})
\end{bmatrix},
\end{align*}
and $D=\text{Diag}(\mu_1,\cdots,\mu_K)$. $\tilde{\Sigma}=\Sigma_{1:(K-1),1:(K-1)}$ and $\tilde{D}=\text{Diag}(\mu_1,\cdots,\mu_{K-1})$. To proceed our proof, we need the following Lemma:

\begin{lemma}[Spectral bound for the Fisher--Laplacian]
\label{lem:spectral-bound-Sigma}
For any full--support $\bmu$ and any $\btheta$, the leading principal submatrix $\tilde\Sigma$ is symmetric positive definite and satisfies
\begin{equation}
\lambda_{\max}(\tilde\Sigma)\;\le\;\frac14 + \frac{\sqrt{K-1}}{2}.
\end{equation}
\end{lemma}

\begin{proof}[Proof of Lemma~\ref{lem:spectral-bound-Sigma}]
By construction, $\tilde\Sigma$ is symmetric and strictly Diagonally dominant with nonpositive off--Diagonal entries, so it is positive definite. Next, we upper bound its largest eigenvalue. For its Diagonal entries we have
\begin{equation}
\tilde\Sigma_{ii}
=
\sum_{j\neq i}
\mu_j\,\sigma_{ij}(1-\sigma_{ij})
\;\le\;
\frac14\sum_{j\neq i}\mu_j
\;\le\;
\frac14,
\end{equation}
because $0\le \sigma_{ij}(1-\sigma_{ij})\le \tfrac14$ and $\sum_{j\neq i}\mu_j\le 1$.

For the off--Diagonal entries,
\begin{equation}
|\tilde\Sigma_{ij}|
=
\sqrt{\mu_i\mu_j}\,\sigma_{ij}(1-\sigma_{ij})
\;\le\;
\frac14\sqrt{\mu_i\mu_j}.
\end{equation}
By Gershgorin's circle theorem,
\begin{equation}
\lambda_{\max}(\tilde\Sigma)
\le
\max_{1\le i\le K-1}\Bigl(
\tilde\Sigma_{ii} + \sum_{j\neq i}|\tilde\Sigma_{ij}|
\Bigr)
\le
\frac14 + \frac14\sum_{j\neq i}\sqrt{\mu_i\mu_j}.
\end{equation}
Applying Cauchy--Schwarz,
\begin{equation}
\sum_{j\neq i}\sqrt{\mu_i\mu_j}
\le
\sqrt{K-1}\sqrt{\mu_i(\sum_{j\ne i}\mu_j)}=\sqrt{K-1}\sqrt{\mu_i(1-\mu_i)}
\;\le\; \frac{\sqrt{K-1}}{2}.
\end{equation}
Thus
\begin{equation}
\lambda_{\max}(\tilde\Sigma)
\le
\frac14 + \frac{\sqrt{K-1}}{2},
\end{equation}
as claimed.
\end{proof}

\begin{enumerate}
    \item If $\bmu_0=1/\bmu$, let 
\begin{equation}
    \hat{G}(\btheta)=\sum_{i<j}\frac{1}{\mu_i\mu_j}\left[P_{ij}(\theta_i-\theta_j)-\ln(1+\exp(\theta_i-\theta_j))\right],
\end{equation}
and $\hat{G}(\tilde{\btheta})=\hat{G}(\btheta)|_{\theta_K=0}$. We can verify $\hat{G}(\tilde{\btheta})$ satisfies
\begin{equation}\label{eq:construct_G}
 \tilde{D}^{\frac{1}{2}}\frac{\partial \hat{G}}{\partial \tilde{\btheta}} = \tilde{D}^{-\frac{1}{2}}(P-Q(\btheta))_{1:(K-1),1:K}\bmu_0\triangleq \bm{t}.    
\end{equation}

By the definition of $\bm{t}$, we have $t_i=\frac{1}{\sqrt{\mu_i}}\sum_{j\neq i}\frac{P_{ij}-\sigma_{ij}}{\mu_j}$ and thus $\|\bm{t}\|_2^2=\sum_{i=1}^{K-1}\frac{1}{\mu_i}\left(\sum_{j\neq i}\frac{P_{ij}-\sigma_{ij}}{\mu_j}\right)^2$. As a result, from \eqref{eq:quadratic_dG} we conclude that when $\bmu$ is perturbed by $\bmu:=\bmu+\frac{\delta}{\bmu}$, the change of $\hat{G}$ can be expressed as a positive definite quadratic form 
\begin{align}\notag
    dG=\delta \bm{t}^{\top}\tilde{\Sigma}^{-\top}(\bpi,\bmu)\bm{t}&\ge  \frac{\delta\|\bm{t}\|_2^2}{\lambda_{\max}(\tilde{\Sigma})} \\ \label{eq:407}
    &\ge \frac{\delta}{\sqrt{K-1}}\sum_{i=1}^{K-1}\frac{1}{\mu_i}\left(\sum_{j\neq i}\frac{P_{ij}-\sigma_{ij}}{\mu_j}\right)^2 \\\label{eq:458} 
    & \ge \frac{\delta\Delta_1}{\sqrt{K-1}}\sum_{i=1}^{K-1}\frac{1}{\mu_i}\sum_{i=1}^{K-1}\mu_i\geq \Delta_1(K-1)^{1.5}\delta,
\end{align}
where \eqref{eq:407} holds because of Lemma \ref{lem:spectral-bound-Sigma} and the fact that $\sqrt{K-1}>1/2$, \eqref{eq:458} holds because of the Cauchy-Schwarz inequality and the definition of $\Delta_1$. 

Observe that $\ln(1+\exp(x))=x+O(e^{-x})$, we have 
\begin{align*}
   \hat{G}(\btheta) &= \sum_{i<j}\frac{1}{\mu_i\mu_j}\left[P_{ij}(\theta_i-\theta_j)-(\theta_i-\theta_j)\right] + O\left(\sum_{i<j}\exp(-(\theta_i-\theta_j))\right) \\
   &= - \sum_{i<j}\frac{P_{ji}(\theta_i-\theta_j)}{\mu_i\mu_j}+O\left(\sum_{i<j}\exp(-(\theta_i-\theta_j))\right) \\ 
   &=- \sum_{i<j}\frac{P_{ji}}{\beta\mu_i\mu_j}\log\frac{\pi_i}{\pi_j}+O\left(\sum_{i<j}\exp(-(\theta_i-\theta_j))\right) \\ 
   &=-\beta^{-1}G^{(1)}(\bpi;\bmu)+O\left(\sum_{i<j}\exp(-(\theta_i-\theta_j))\right).
\end{align*}
Under the assumption that the distances between each pair $\theta_i,\theta_j$ are sufficiently large, the term $O\left(\sum_{i<j}\exp(-(\theta_i-\theta_j))\right)$ is negligible. Therefore, from $\hat{G}(\btheta(\bmu+\frac{\delta}{\bmu}))<\hat{G}(\btheta(\bmu))$ and the gap indicated by the RHS of \eqref{eq:458}, we arrive at the conclusion that  
\begin{equation}
    G^{(1)}(\bpi;\bmu) - G^{(1)}(\bpi(\bmu');\bmu) > \beta\Delta_1(K-1)^{1.5}\delta - O\left(\beta\sum_{i<j}e^{^{-(\theta_i-\theta_j)}}\right).
\end{equation}
    \item If $\bmu_0=\e_K$, let 
\begin{equation}
    \hat{G}(\tilde{\btheta})=\sum_{j\neq K}\frac{1}{\mu_j}\left[P_{jK}\theta_j-\ln(1+\exp(\theta_j))\right].
\end{equation}
Similarly, we can verify that $\hat{G}(\tilde{\btheta})$ satisfies \eqref{eq:construct_G} with some $\bm{t}$ such that the change of $\hat{G}$ when $\bmu$ is changing along the direction of $\bmu_0$ can be expressed as the positive definite quadratic form $dG=\delta \bm{t}^{\top}\tilde{\Sigma}^{-\top}(\bpi,\bmu)\bm{t}>0$. Specifically, $t_i=\frac{P_{iK}-\sigma_{iK}}{\sqrt{\mu_i}}$ and thus $\|\bm{t}\|_2^2=\sum_{i=1}^{K-1}\frac{(P_{iK}-\sigma_{iK})^2}{\mu_i}$. As a result, from \eqref{eq:quadratic_dG} we conclude that when $\bmu$ is perturbed by $\bmu:=\bmu+\frac{\delta}{\bmu}$, the change of $\hat{G}$ can be expressed as a positive definite quadratic form 
\begin{align}\notag
    dG=\delta \bm{t}^{\top}\tilde{\Sigma}^{-\top}(\bpi,\bmu)\bm{t}&\ge  \frac{\delta\|\bm{t}\|_2^2}{\lambda_{\max}(\tilde{\Sigma})} \\ \label{eq:408}
    &\ge \frac{\delta}{\sqrt{K-1}}\sum_{i=1}^{K-1}\frac{(P_{iK}-\sigma_{iK})^2}{\mu_i} \\\label{eq:459} 
    & \ge \frac{\delta\Delta_2}{\sqrt{K-1}}\sum_{i=1}^{K-1}\frac{1}{\mu_i}\sum_{i=1}^{K-1}\mu_i\geq \Delta_2(K-1)^{1.5}\delta,
\end{align}
where \eqref{eq:408} holds because of Lemma \ref{lem:spectral-bound-Sigma} and the fact that $\sqrt{K-1}>1/2$, \eqref{eq:459} holds because of Cauchy-Schwarz inequality and the definition of $\Delta_2$.

Use expansion $\ln(1+\exp(x))=x+O(e^{-x})$, we have 
\begin{align*}
\hat{G}(\tilde{\btheta})=-\beta^{-1}G^{(2)}(\bpi;\bmu)+O\left(\sum_{j\neq K}\exp(-\theta_j)\right).
\end{align*}

Under the assumption that the distances between each pair $\theta_i,\theta_j$ are sufficiently large, the term $O\left(\sum_{j\neq K}\exp(-\theta_j)\right)$ is negligible. Therefore, from $\hat{G}(\btheta(\bmu+\frac{\delta}{\bmu}))<\hat{G}(\btheta(\bmu))$, we arrive at the conclusion that  
\begin{equation}
    G^{(2)}(\bpi;\bmu) - G^{(2)}(\bpi(\bmu');\bmu)>  \beta\Delta_2(K-1)^{1.5}\delta - O\left(\beta\sum_{i<j}e^{^{-(\theta_i-\theta_j)}}\right).
\end{equation}
\end{enumerate}

\end{proof}

\subsection{Long-term effects of iterative DPO}\label{app:iterative_dpo}
We first define the mixed reference/sampling dynamics for DPO as a counterpart to \Cref{def:mixed_ipo_dyn}.

\begin{definition}
\label{def:mixed_dpo_dyn}
The \emph{mixed reference/sampling DPO (MRS-DPO) dynamics}  $$\bpi_{t+1}=\mathrm{MRS}^{DPO}(\bpi_t;P,\alpha,\beta,\lambda,\bpi_{\mathrm{ref}},\bpi_0),$$ initialized by $\bpi_t|_{t=1}=\bpi_1$ is defined by
\begin{equation}
\label{eq:mix_ref_softmax_form_dpo}
\bpi_{t+1}
=
\softmax\!\Big(
\log \bpi_{\rm ref}^{(t)}
+\beta\,\btheta^{\star}(P,\bmu_t)
\Big), t\geq 1,
\end{equation}
where $\btheta^{\star}(P,\bmu)$ is the implicit mapping induced by the FOC condition 
\begin{equation}
F(\btheta^{\star},\bmu):=(P-Q(\btheta^{\star}))^{\top}\bmu = \bm{0},
\end{equation}
$Q(\btheta)$ is the BT matrix induced by $\btheta$, $\bpi_{\rm ref}^{(t)}$ and $\bmu_t$ are defined in \eqref{eq:pi_ref_t}, \eqref{eq:mu_t}, $\alpha\in[0,1]$ controls how strongly the \emph{reference} model tracks the current deployment,
$\lambda\in[0,1]$ controls how strongly the \emph{sampling} is on-policy,
$\beta>0$ is the DPO inverse temperature,
and $P$ is the underlying pairwise preference matrix.
\end{definition}

Our next \Cref{prop:mrs_dpo_cycling_rps} is a counterpart for \Cref{prop:condorcet_instability_mixed}, illustrating that under a preference structure with Condorcet cycles, the MRS-DPO dynamics can exhibit the same nonconvergence and cycling patterns suffered by MRS-IPO.

\begin{proposition}[Cycling regime for MRS--DPO under a Condorcet cycle]\label{prop:mrs_dpo_cycling_rps}
Let $K=3$ and let $P=P^{\mathrm{RPS}}_a\in\cal P$ be the rock--paper--scissors
(Condorcet cycle) preference matrix
\[
P^{\mathrm{RPS}}_a
\;=\;
\begin{pmatrix}
\frac12 & \frac12+a & \frac12-a\\[2pt]
\frac12-a & \frac12 & \frac12+a\\[2pt]
\frac12+a & \frac12-a & \frac12
\end{pmatrix},
\qquad a\in(0,\tfrac12).
\]
Assume $\bpi_{\mathrm{ref}}=\bpi_0=\bmu_0\triangleq(\frac13,\frac13,\frac13)\in\Delta_3^\circ$ and
\[
\log\bpi_{\mathrm{ref}}^{(t)}=(1-\alpha)\log\bmu_0+\alpha\log\bpi_t,
\qquad
\bmu_t=(1-\lambda)\bmu_0+\lambda\bpi_t.
\]
Consider the MRS--DPO map defined in Definition~\ref{def:mixed_dpo_dyn}:
\[
\bpi_{t+1}=\softmax\!\Big(\log\bpi_{\mathrm{ref}}^{(t)}+\beta\,\btheta^\star(P,\bmu_t)\Big),
\qquad \beta>0.
\]
Then $\bmu_0$ is a fixed point. Moreover, if
\begin{equation}\label{eq:mrs_dpo_cycling_regime}
\alpha^2+\frac{16a^2}{3}\,\beta^2\lambda^2 \;\ge\; 1,
\end{equation}
then $\bmu_0$ is not locally attracting and the iterates exhibit persistent oscillations
(rotations) around $\bmu_0$; consequently the MRS--DPO last iterate does not converge to $\bmu_0$
for generic interior initializations.
\end{proposition}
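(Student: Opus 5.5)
The plan mirrors the proof of \Cref{prop:condorcet_instability_mixed}: check the fixed point, linearize the MRS--DPO map at $\bmu_0$ on the tangent space $\mathbf{1}^\perp$, and compute the eigenvalues. The only new ingredient is the Jacobian of the implicit DPO map $\bmu\mapsto\btheta^\star(P,\bmu)$ from \Cref{thm:exist_unique_DPO}.

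\textbf{Fixed point.} At $\bmu=\bmu_0$ we have $\btheta^\star=\mathbf{0}$, so $Q=\tfrac12\mathbf{1}\mathbf{1}^\top$. Then
\[
(P-Q)^\top\bmu_0=\widetilde A^\top\bmu_0=\tfrac13(-\widetilde A)\mathbf{1}=\mathbf{0},
\]
because $\widetilde A=P-\tfrac12\mathbf{1}\mathbf{1}^\top$ is skew-symmetric and circulant with zero row sums. By uniqueness in \Cref{thm:exist_unique_DPO}, $\btheta^\star(P,\bmu_0)=\mathbf{0}$ up to shift. Hence $\bpi_{t+1}=\softmax(\log\bmu_0)=\bmu_0$ whenever $\bpi_t=\bmu_0$.

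\textbf{Linearization.} Write $T(\bpi)=\softmax(g(\bpi))$ with $g(\bpi)=\alpha\log\bpi+(1-\alpha)\log\bmu_0+\beta\btheta^\star(P,(1-\lambda)\bmu_0+\lambda\bpi)$. At $\bmu_0$ the softmax Jacobian is $J=\tfrac13 I-\tfrac19\mathbf{1}\mathbf{1}^\top$, which acts as $\tfrac13 I$ on $\mathbf{1}^\perp$. Therefore, on $\mathbf{1}^\perp$,
\[
DT=\alpha I+\tfrac{\beta\lambda}{3}\,D_{\bmu}\btheta^\star .
\]
To get $D_{\bmu}\btheta^\star$, differentiate $F(\btheta,\bmu)=(P-Q(\btheta))^\top\bmu=\mathbf{0}$ at $(\mathbf{0},\bmu_0)$:
\begin{itemize}
\item Since $\partial Q_{ij}/\partial\theta=\sigma'(0)(\e_i-\e_j)=\tfrac14(\e_i-\e_j)$, we get $\partial_{\btheta}F=-\tfrac14\cdot\tfrac13(3I-\mathbf{1}\mathbf{1}^\top)$, which equals $-\tfrac14 I$ on $\mathbf{1}^\perp$. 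The sign should be cross-checked against the Laplacian computation in the proof of \Cref{thm:exist_unique_DPO}.
\item We have $\partial_{\bmu}F=(P-Q)^\top=\widetilde A^\top=-\widetilde A$.
\end{itemize}
The implicit function theorem on $\mathbf{1}^\perp$ (gauge fixed by $\mathbf{1}^\top\btheta=0$) then gives $D_{\bmu}\btheta^\star=c\,\widetilde A$ with $|c|=4$. Consequently
\[
DT|_{\mathbf{1}^\perp}=\alpha I\pm\tfrac{4\beta\lambda}{3}\widetilde A .
\]

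\textbf{Eigenvalues.} As shown in the proof of \Cref{prop:condorcet_instability_mixed}, $\widetilde A^2=-3a^2 I$ on $\mathbf{1}^\perp$, so $\widetilde A$ has eigenvalues $\pm i\sqrt3 a$ there. Hence $DT|_{\mathbf{1}^\perp}$ has eigenvalues
\[
\alpha\pm i\,\tfrac{4\beta\lambda a}{\sqrt3},\qquad\text{with modulus}\quad \sqrt{\alpha^2+\tfrac{16a^2}{3}\beta^2\lambda^2}.
\]
Under \eqref{eq:mrs_dpo_cycling_regime} this modulus is at least $1$, and the eigenvalues have nonzero imaginary part.
\begin{itemize}
\item When the inequality is strict, the linearization is an expanding rotation. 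The standard argument from \Cref{prop:condorcet_instability_mixed} then applies: $\bmu_0$ is linearly unstable and nearby orbits spiral outward rather than converge.
\end{itemize}

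\textbf{Main obstacle.} The delicate part is the borderline case of equality, where the spectral radius is exactly $1$ and linearization alone does not decide stability. There I would argue more weakly, consistent with the statement's wording. The linear part is a pure rotation, so $\bmu_0$ is not \emph{linearly} attracting, and the nonzero rotation angle produces persistent rotation of the iterates near $\bmu_0$. Full non-convergence for generic initializations would need a center-manifold or normal-form argument. I would first check the sign of the cubic term via the $\mathbb{Z}_3$ circulant symmetry, which makes the normal form rotationally invariant. Failing that, I would state that conclusion only in the strict case.
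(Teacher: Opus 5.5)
Your proposal is correct and is essentially the paper's proof. It checks the fixed point via $\btheta^\star(P,\bmu_0)=\mathbf{0}$, differentiates $(P-Q(\btheta))^\top\bmu=\mathbf{0}$ implicitly to get $D_{\bmu}\btheta^\star$ on $\mathbf{1}^\perp$, and obtains $DT|_{\mathbf{1}^\perp}=\alpha I+\tfrac{4\beta\lambda}{3}\widetilde A$ with eigenvalues $\alpha\pm i\,4\beta\lambda a/\sqrt3$.

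The sign you flagged is indeed wrong. Since $\mathrm{Diag}(\bmu)\,\partial_{\btheta}F$ is the positive semidefinite DPO Hessian, $\partial_{\btheta}F$ is $+\tfrac14 I$ on $\mathbf{1}^\perp$, which gives $D_{\bmu}\btheta^\star=4\widetilde A$ as in the paper; this only flips the rotation direction and leaves the modulus unchanged.

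Your caution about the equality case is well founded and more careful than the paper, which reads ``not locally attracting'' directly off $|\lambda_\pm|\ge 1$. Note also that the nonzero imaginary part, and hence the rotation claim, requires $\lambda>0$: at $\alpha=1,\lambda=0$, which the stated condition allows, the map is the identity.
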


\begin{proof}
We first shown $\bmu_0$ is a fixed point. Let $\bmu=\bmu_0$ and $\btheta=\bm 0$. Then $Q(\bm 0)_{ij}=\sigma(0)=\frac12$ for all $i,j$.
Since $P^{\mathrm{RPS}}_a-\frac12\bm 1\bm 1^\top$ is skew-symmetric with zero column sums,
we have $(P^{\mathrm{RPS}}_a-Q(\bm 0))^\top\bmu_0=\bm 0$. Hence $\btheta^\star(P,\bmu_0)$ may be chosen
as $\bm 0$. With $\bpi_{\mathrm{ref}}^{(t)}=\bmu_0$ at $\bpi_t=\bmu_0$, we obtain
\[
\bpi_{t+1}=\softmax(\log\bmu_0+\beta\bm 0)=\bmu_0,
\]
so $\bmu_0$ is a fixed point.

Next we derive the linearization of $\btheta^\star(P,\bmu)$ at $\bmu=\bmu_0$.
Define $F(\btheta,\bmu)\triangleq (P-Q(\btheta))^\top\bmu$. At $(\btheta,\bmu)=(\bm 0,\bmu_0)$ we have
$F(\bm 0,\bmu_0)=\bm 0$ as above. We compute the partial derivatives at $(\bm 0,\bmu_0)$.

First, $F$ is linear in $\bmu$, so
\[
\frac{\partial F}{\partial \bmu}(\bm 0,\bmu_0)=(P-Q(\bm 0))^\top = (P-\tfrac12\bm 1\bm 1^\top)^\top.
\]
Second, for $Q_{ij}(\btheta)=\sigma(\theta_i-\theta_j)$ we have
$\partial Q_{ij}/\partial \theta_k = \sigma'(\theta_i-\theta_j)(\delta_{ik}-\delta_{jk})$ and
$\sigma'(0)=\tfrac14$. Writing $F_m(\btheta,\bmu)=\sum_i (P_{im}-Q_{im}(\btheta))\mu_i$,
\[
\frac{\partial F_m}{\partial \theta_k}(\bm 0,\bmu_0)
= -\sum_{i=1}^3 \mu_i \frac{\partial Q_{im}}{\partial \theta_k}(\bm 0)
= -\frac14\sum_{i=1}^3 \mu_i(\delta_{ik}-\delta_{mk})
= -\frac14(\mu_k-\delta_{mk}).
\]
With $\mu_{0,k}=\tfrac13$, this yields the $3\times 3$ matrix
\[
\frac{\partial F}{\partial \btheta}(\bm 0,\bmu_0)
= \frac16 I - \frac1{12}(\bm 1\bm 1^\top - I),
\]
whose restriction to the tangent space $T$ is simply $\frac14 I_T$ (because $\bm 1\bm 1^\top$ vanishes on $T$).
Therefore, by the implicit function theorem (modulo the standard shift-invariance gauge, which is immaterial because
softmax is shift-invariant),
\[
D_{\bmu}\btheta^\star(P,\bmu_0)\big|_T
=
-\left(\frac{\partial F}{\partial \btheta}(\bm 0,\bmu_0)\big|_T\right)^{-1}
\left(\frac{\partial F}{\partial \bmu}(\bm 0,\bmu_0)\big|_T\right)
=
-4\,(P-\tfrac12\bm 1\bm 1^\top)^\top\big|_T.
\]
For $P\in\cal P$, the matrix $S\triangleq P-\tfrac12\bm 1\bm 1^\top$ is skew-symmetric, hence $S^\top=-S$, and thus
\begin{equation}\label{eq:dtheta_dmu}
D_{\bmu}\btheta^\star(P,\bmu_0)\big|_T
=
4\,S\big|_T.
\end{equation}

Finally we linearize the full MRS--DPO map at $\bmu_0$. Let $\bpi=\bmu_0+\bdelta$ with $\delta\in T$ small. Then
\[
\log\bpi_{\mathrm{ref}}^{(t)}=(1-\alpha)\log\bmu_0+\alpha
\log (\bmu_0+\bdelta ),\qquad
\bmu_t=\bmu_0+\lambda\delta
\]
Using elementwise Taylor expansion, $\log(\bmu_0+\bdelta)=\log\bmu_0+3\bdelta+o(\|\bdelta\|)$
(since each coordinate of $\bmu_0$ equals $1/3$).
Using \eqref{eq:dtheta_dmu}, $\btheta^\star(P,\bmu_t)=4\lambda S\bdelta+o(\|\bdelta\|)$.
Hence the softmax argument is
\[
\log\bpi_{\mathrm{ref}}^{(t)}+\beta\btheta^\star(P,\bmu_t)
=
\log\bmu_0 + \Bigl(3\alpha I + 4\beta\lambda S\Bigr)\bdelta + o(\|\bdelta\|).
\]
The Jacobian of $\softmax$ at $\log\bmu_0$ is $\mathrm{Diag}(\bmu_0)-\bmu_0\bmu_0^\top$; restricted to $T$ it equals $\frac13 I_T$.
Therefore,
\[
\bdelta^+ \;\triangleq\; \bpi_{t+1}-\bmu_0
=
\frac13\Bigl(3\alpha I + 4\beta\lambda S\Bigr)\bdelta + o(\|\bdelta\|)
=
\Bigl(\alpha I + \frac{4\beta\lambda}{3}S\Bigr)\bdelta + o(\|\bdelta\|).
\]
For the RPS matrix, $S=P-\frac12\bm 1\bm 1^\top$ equals $a A$ where
\[
A=\begin{pmatrix}
0&1&-1\\-1&0&1\\1&-1&0
\end{pmatrix}
\]
has eigenvalues $0,\pm i\sqrt3$. Thus on $T$ the eigenvalues of the linearized map are
\[
\lambda_\pm=\alpha \pm i\,\frac{4\sqrt3}{3}\,\beta\lambda\,a.
\]
Their modulus is
\[
|\lambda_\pm|=\sqrt{\alpha^2+\frac{16}{3}\beta^2\lambda^2a^2}.
\]
If \eqref{eq:mrs_dpo_cycling_regime} holds, then $|\lambda_\pm|\ge 1$ and the fixed point $\bmu_0$
is not locally attracting; since $\Im(\lambda_\pm)\neq 0$ whenever $a \beta\lambda >0$,
the iterates rotate (oscillate) rather than approach $\bmu_0$, yielding persistent cycling behavior.
Finally, if $\alpha=1$ and $\lambda>0$, then $|\lambda_\pm|>1$ for every $\beta>0$, so the regime holds.
\end{proof}

The next result is the counterpart of \Cref{thm:mix_contraction_generalP}, recovering a hyperparameter regime of MRS-DPO to ensure stability for general preference structure $P$.

\begin{theorem}[A sufficient stability regime for MRS--DPO]
\label{thm:mrs_dpo_stability_corrected}
Assume $P\in\calP$ and fix $\alpha\in[0,1)$ and $\lambda\in[0,1)$, let $\bpi_{\rm ref},\bpi_0\in\Delta_K^\circ$.
Define the on/off-policy sampling mixture
\[
\bmu(\bpi)\;:=\;(1-\lambda)\bpi_0+\lambda\bpi,
\qquad
\underline\mu \;:=\; (1-\lambda)\min_i \pi_{0,i}\;>\;0.
\]
For each $\bmu\in\Delta_K^\circ$, let $\btheta^\star(P,\bmu)\in\bbR^K$ denote the (gauge-fixed) DPO optimizer in the
$\theta$-parameterization, i.e.\ $\mathbf{1}^\top\btheta^\star=0$.

Assume the DPO optimizer is uniformly bounded on the sampling set induced by $\lambda$:
\begin{equation}
\label{eq:B_def_corrected}
B \;:=\;\sup_{\bpi\in\Delta_K^\circ}\ \Big(\max_i \theta_i^\star(P,\bmu(\bpi))-\min_i \theta_i^\star(P,\bmu(\bpi))\Big)\;<\;\infty,
\end{equation}
and let $s_B:=\sigma'(B)>0$ where $\sigma(t)=1/(1+e^{-t})$.

Consider the MRS--DPO update defined by
\begin{equation}
\label{eq:mrs_dpo_update_corrected}
\bpi_{t+1}
=
\softmax\!\Big(
(1-\alpha)\log\bpi_{\rm ref}+\alpha\log\bpi_t+\beta\,\btheta^\star(P,\bmu(\bpi_t))
\Big),
\qquad t\ge 1,
\end{equation}
with any initialization $\bpi_1\in\Delta_K^\circ$ and $\beta>0$. Let $\Pi:=I-\frac{1}{K}\mathbf{1}\mathbf{1}^\top$ and define the centered logit coordinate
\[
\x(\bpi):=\Pi\log\bpi\in\mathbf{1}^\perp.
\]
If
\begin{equation}
\label{eq:mrs_dpo_contraction_condition_corrected}
L \;:=\; \alpha \;+\; \frac{\beta\lambda}{\underline\mu^2\,s_B} \;<\; 1,
\end{equation}
then the MRS--DPO dynamics is globally stable in centered-logit distance:
there exists a unique fixed point $\bpi^\infty\in\Delta_K^\circ$ of \eqref{eq:mrs_dpo_update_corrected}, and for all $t\ge 1$,
\[
\bigl\|\x(\bpi_t)-\x(\bpi^\infty)\bigr\|_2
\;\le\;
L^{\,t-1}\,\bigl\|\x(\bpi_1)-\x(\bpi^\infty)\bigr\|_2.
\]
Consequently, $\bpi_t\to\bpi^\infty$ as $t\to\infty$ (and in particular the dynamics cannot cycle).
\end{theorem}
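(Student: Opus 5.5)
The argument mirrors the proof of \Cref{thm:mix_contraction_generalP}: pass to centered logits and show the induced map on $\mathbf{1}^\perp$ is a Banach contraction. Write $\x_t=\Pi\log\bpi_t$ and use $\bpi_t=\softmax(\x_t)$. The update \eqref{eq:mrs_dpo_update_corrected} then becomes
\[
\x_{t+1}=G(\x_t):=\alpha\x_t+\Pi\big((1-\alpha)\log\bpi_{\rm ref}\big)+\beta\,\Pi\btheta^\star\big(P,\bmu(\softmax(\x_t))\big).
\]
Shift invariance of softmax makes $\x\mapsto\softmax(\x)$ a bijection $\mathbf{1}^\perp\to\Delta_K^\circ$. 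So a unique fixed point of $G$ gives a unique fixed point $\bpi^\infty$ of the original map, and the stated geometric rate follows by iterating the Lipschitz bound. It therefore suffices to show $\|DG(\x)\|_2\le\alpha+\beta\lambda\|D_{\bmu}\btheta^\star\|\,\|J(\bpi)\|_2\le L$.

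\textbf{Step 1 (chain rule).} We have $DG(\x)=\alpha I+\beta\lambda\,\Pi\,D_{\bmu}\btheta^\star(P,\bmu)\,J(\bpi)$, where $J(\bpi)=\mathrm{Diag}(\bpi)-\bpi\bpi^\top$. As shown in the IPO contraction proof, $\|J(\bpi)\|_2\le\tfrac12$, and $J(\bpi)$ maps into $\mathbf{1}^\perp$. This is compatible with perturbations of $\bmu$ that preserve total mass, so we only need the derivative along directions $\v\in\mathbf{1}^\perp$.

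\textbf{Step 2 (implicit differentiation).} Differentiate the gauge-fixed first-order condition \eqref{eq:first-order-dpo}, $F(\btheta,\bmu)=(P-Q(\btheta))^\top\bmu=\bm 0$, with $\mathbf{1}^\top\btheta=0$, as in the proof of \Cref{thm:pairwise_gap_dpo}. This gives $D_{\bmu}\btheta^\star=-(\partial_{\btheta}F)^{-1}\partial_{\bmu}F$ on $\mathbf{1}^\perp$, where $\partial_{\bmu}F=(P-Q)^\top$. Its entries are bounded by $1$, so on the relevant directions $\|\partial_{\bmu}F\|$ is $O(1)$ up to a $K$-dependent factor. This factor must be absorbed, and here I would try to exploit the skew structure $(P-Q)+(P-Q)^\top=0$.

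Next, $\partial_{\btheta}F$ is, up to sign, the weighted Laplacian $\mathcal L=\sum_{i<j}\mu_i\mu_j\sigma'(\theta_i-\theta_j)(\e_i-\e_j)(\e_i-\e_j)^\top$ from \eqref{eq:dpo_hessian_laplacian}, modulo the $\mathrm{Diag}(\bmu)$ normalisation. Under \eqref{eq:B_def_corrected}, every gap satisfies $|\theta_i-\theta_j|\le B$, so $\sigma'(\theta_i-\theta_j)\ge s_B$. Moreover $\mu_i\ge\underline\mu$ for every $\bmu=\bmu(\bpi)$. Hence $\mathcal L\succeq\underline\mu^2 s_B\,L_{K}$, where $L_K=KI-\mathbf{1}\mathbf{1}^\top$ is the complete-graph Laplacian. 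Its smallest eigenvalue on $\mathbf{1}^\perp$ is $K$, which gives $\|\mathcal L^{-1}|_{\mathbf{1}^\perp}\|_2\le 1/(K\underline\mu^2 s_B)$.

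\textbf{Step 3 (combine).} Multiply the bounds: $\|J\|\le\tfrac12$, $\|\mathcal L^{-1}\|\le(K\underline\mu^2s_B)^{-1}$, and an $O(K)$ bound on the $\bmu$-derivative of the gradient. The factors of $K$ cancel, giving $\|DG\|_2\le\alpha+\beta\lambda/(\underline\mu^2 s_B)$ with room to spare. Then apply the mean value inequality along segments in the convex set $\mathbf{1}^\perp$, followed by Banach's fixed-point theorem.

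\textbf{Main obstacle.} The delicate step is matching the normalisation in Step 2. The condition \eqref{eq:first-order-Q} carries a $\mathrm{Diag}(\bmu)$ factor, while the reduced form \eqref{eq:first-order-dpo} divides it out. Differentiating the reduced form in $\bmu$ produces a Jacobian in $\btheta$ that is not symmetric. The stated bound presumably relies on working with the symmetric form $\nabla L(\btheta)=\mathrm{Diag}(\bmu)(P-Q)^\top\bmu$ instead. I would therefore differentiate the gradient $\nabla L(\btheta;\bmu)$ directly, whose $\btheta$-Jacobian is exactly the PSD Laplacian $\mathcal L$, and bound $\partial_{\bmu}\nabla L$ by hand, checking that its norm times $1/(K\underline\mu^2 s_B)$ is at most $2/(\underline\mu^2 s_B)$.
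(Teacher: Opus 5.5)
Your proposal is correct and follows the paper's proof: pass to the centered-logit map on $\mathbf{1}^\perp$, use $\|J(\bpi)\|_2\le\tfrac12$ and the Laplacian lower bound $K\underline\mu^2 s_B$ on $\mathbf{1}^\perp$, bound the sensitivity of the gradient $\mathrm{Diag}(\bmu)(P-Q)^\top\bmu$ in $\bmu$ by $2K$, and conclude with Banach's theorem. The only variation is that you get the Lipschitz bound on $\bmu\mapsto\btheta^\star(P,\bmu)$ by implicitly differentiating the unreduced gradient and applying the mean value inequality, whereas the paper compares two optimizers via strong monotonicity; your route needs $\sigma'\ge s_B$ only at the optimizers themselves, and since $(P-Q)^\top\bmu=\bm 0$ there, the cross-derivative reduces to $\mathrm{Diag}(\bmu)(P-Q)^\top$ of norm below $K$, so the step you flagged as delicate closes with a factor of $2$ to spare.
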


\begin{proof}
First we rewrite the update as a map on centered logits. Let $\z_t:=(1-\alpha)\log\bpi_{\rm ref}+\alpha\log\bpi_t+\beta\,\btheta^\star(P,\bmu(\bpi_t))$ so that
$\bpi_{t+1}=\softmax(\z_t)$.
Using $\log\softmax(\z)=\z-\log(\sum_i e^{z_i})\mathbf{1}$ and $\Pi\mathbf{1}=0$, we obtain the exact identity
\[
\x(\bpi_{t+1})=\Pi\log\bpi_{t+1}=\Pi \z_t.
\]
Moreover, since $\x(\bpi_t)=\Pi\log\bpi_t$, we have $\Pi(\alpha\log\bpi_t)=\alpha \x(\bpi_t)$, and therefore
\begin{equation}
\label{eq:x_update}
\x(\bpi_{t+1})
=
\alpha\,\x(\bpi_t)
+
(1-\alpha)\Pi\log\bpi_{\rm ref}
+
\beta\,\Pi\btheta^\star(P,\bmu(\bpi_t)).
\end{equation}
Because we fix the gauge $\mathbf{1}^\top\btheta^\star=0$, we have $\Pi\btheta^\star=\btheta^\star$ and we may drop $\Pi$ on that term.

Define the induced map on $\mathbf{1}^\perp$:
\[
F(\x)
:=
\alpha \x + (1-\alpha)\Pi\log\bpi_{\rm ref}
+\beta\,\btheta^\star\!\big(P,\bmu(\softmax(x))\big),
\]
so that $\x(\bpi_{t+1}) = F(\x(\bpi_t))$.

For $\bpi=\softmax(\x)$, the Jacobian is $J(\bpi)=\mathrm{Diag}(\bpi)-\bpi\bpi^\top$.
For any $\v\in\bbR^K$,
\[
\v^\top J(\bpi)\v
=
\sum_i \pi_i v_i^2-\Big(\sum_i \pi_i v_i\Big)^2
=\mathrm{Var}(v_I)
\le \frac{(\max_i v_i-\min_i v_i)^2}{4}
\le \frac{2\|\v\|_2^2}{4}
=\frac12\|\v\|_2^2,
\]
where $I\sim\bpi$ and we used $\max_i v_i-\min_i v_i\le \sqrt{2}\|\v\|_2$.
Thus $\|J(\bpi)\|_{2\to2}\le 1/2$ uniformly, and by the mean value theorem,
\begin{equation}
\label{eq:softmax_lip_2}
\|\softmax(\x)-\softmax(\y)\|_2 \le \frac12\|\x-\y\|_2.
\end{equation}

Now we derive the Lipschitz continuity of the DPO optimizer map $\bmu\mapsto\btheta^\star(P,\bmu)$. Fix $\bmu\in\Delta_K^\circ$ and let $Q(\btheta)$ be the Bradley--Terry matrix induced by $\btheta$,
$Q_{ij}(\btheta)=\sigma(\theta_i-\theta_j)$.
Under the dueling identity, the DPO population objective in $\btheta$ has gradient
\[
g(\btheta,\bmu):=\nabla_{\btheta}\mathcal{L}(\btheta;\bmu)
=\mathrm{Diag}(\bmu)\,(Q(\btheta)-P)\,\bmu,
\]
and Hessian on $\mathbf{1}^\perp$ given by the Laplacian form
\[
H(\btheta,\bmu)
=
\sum_{1\le i<j\le K}\mu_i\mu_j\,\sigma'(\theta_i-\theta_j)\,(\e_i-\e_j)(\e_i-\e_j)^\top.
\]
By the bounded-oscillation assumption \eqref{eq:B_def_corrected}, every difference $\theta_i^\star-\theta_j^\star$ lies in $[-B,B]$,
so $\sigma'(\theta_i^\star-\theta_j^\star)\ge s_B$.
Also, for $\bmu=\bmu(\bpi)$ we have $\mu_i\ge \underline\mu$.
Hence for any $\v\in\mathbf{1}^\perp$,
\[
\v^\top H(\btheta^\star,\bmu)\v
\ge
\underline\mu^2 s_B\sum_{i<j}(v_i-v_j)^2
=
\underline\mu^2 s_B\,K\,\|\v\|_2^2,
\]
where we used $\sum_{i<j}(v_i-v_j)^2=K\|\v\|_2^2$ on $\mathbf{1}^\perp$.
Therefore $g(\cdot,\bmu)$ is strongly monotone on $\mathbf{1}^\perp$ with modulus
\begin{equation}
\label{eq:strong_monotone}
m:=\underline\mu^2 s_B K.
\end{equation}

Next, fix $\btheta$ and write $M:=Q(\btheta)-P$.
Since $M_{ij}\in[-1,1]$, we have $\|M\|_2\le \|M\|_F\le K$.
For any $\bmu,\bnu\in\Delta_K$,
\[
g(\btheta,\bmu)-g(\btheta,\bnu)
=
\mathrm{Diag}(\bmu-\bnu)M\bmu + \mathrm{Diag}(\bnu)M(\bmu-\bnu),
\]
and thus
\[
\|g(\btheta,\bmu)-g(\btheta,\bnu)\|_2
\le
\|\bmu-\bnu\|_2\,\|M\bmu\|_2 + \|M\|_2\,\|\bmu-\bnu\|_2
\le
K\|\bmu-\bnu\|_2 + K\|\bmu-\bnu\|_2
=
2K\|\bmu-\bnu\|_2.
\]
Now let $\btheta_\mu^\star:=\btheta^\star(P,\bmu)$ and $\btheta_\nu^\star:=\btheta^\star(P,\bnu)$ (both in $\mathbf{1}^\perp$),
so $g(\btheta_\mu^\star,\bmu)=0$ and $g(\btheta_\nu^\star,\bnu)=0$.
By strong monotonicity \eqref{eq:strong_monotone},
\[
m\|\btheta_\mu^\star-\btheta_\nu^\star\|_2
\le
\|g(\btheta_\mu^\star,\bmu)-g(\btheta_\nu^\star,\bmu)\|_2
=
\|g(\btheta_\nu^\star,\bnu)-g(\btheta_\nu^\star,\bmu)\|_2
\le
2K\|\bmu-\bnu\|_2,
\]
and hence
\begin{equation}
\label{eq:theta_lip}
\|\btheta^\star(P,\bmu)-\btheta^\star(P,\bnu)\|_2
\le
\frac{2K}{m}\|\bmu-\bnu\|_2
=
\frac{2}{\underline\mu^2 s_B}\,\|\bmu-\bnu\|_2.
\end{equation}

Finally, we derive contraction of the centered-logit map. Let $\x,\y\in\mathbf{1}^\perp$ and set $\bpi=\softmax(\x)$, $\bpi'=\softmax(\y)$.
Since $\bmu(\bpi)=(1-\lambda)\bpi_0+\lambda\bpi$, we have
\[
\|\bmu(\bpi)-\bmu(\bpi')\|_2
=
\lambda\|\bpi-\bpi'\|_2
\le
\frac{\lambda}{2}\|\x-\y\|_2
\]
by \eqref{eq:softmax_lip_2}.
Combining with \eqref{eq:theta_lip} gives
\[
\|\btheta^\star(P,\bmu(\bpi))-\btheta^\star(P,\bmu(\bpi'))\|_2
\le
\frac{2}{\underline\mu^2 s_B}\cdot \frac{\lambda}{2}\|\x-\y\|_2
=
\frac{\lambda}{\underline\mu^2 s_B}\|\x-\y\|_2.
\]
Therefore,
\[
\|F(\x)-F(\y)\|_2
\le
\alpha\|\x-\y\|_2 + \beta\cdot \frac{\lambda}{\underline\mu^2 s_B}\|\x-\y\|_2
=
\Big(\alpha+\frac{\beta\lambda}{\underline\mu^2 s_B}\Big)\|\x-\y\|_2
= L\|\x-\y\|_2.
\]
Under the condition \eqref{eq:mrs_dpo_contraction_condition_corrected}, $L<1$, so $F$ is a contraction on the complete space
$(\mathbf{1}^\perp,\|\cdot\|_2)$.
By Banach's fixed point theorem, $F$ admits a unique fixed point $\x^\infty$ and $\x(\bpi_t)\to \x^\infty$ geometrically.
Let $\bpi^\infty:=\softmax(\x^\infty)$; then $\bpi^\infty$ is the unique fixed point of \eqref{eq:mrs_dpo_update_corrected} and
$\bpi_t\to\bpi^\infty$.
\end{proof}

Finally, we present the long-term characterization of MRS-DPO dynamics under SST preference structures in the following Theorem, which corresponds to \Cref{thm:true_vs_eps_collapse} and reveals a similar concern of entropy collapse.

\begin{theorem}[Entropy/peak-collapse of MRS--DPO under SST]\label{thm:dpo_collapse_sst}
Assume that $P\in\cal P$ is SST. Let $\bpi_0\in\Delta_K^\circ$ and $\bpi_{\rm ref}\in\Delta_K^\circ$. Consider the MRS--DPO dynamics
of Definition~\ref{def:mixed_dpo_dyn},
\[
\bpi_{t+1}
=
\softmax\!\Big(
\log \bpi_{\rm ref}^{(t)}
+\beta\,\btheta^{\star}(P,\bmu_t)
\Big),\qquad t\ge 1,
\]
where $\btheta^\star(P,\bmu)$ is defined implicitly by the DPO FOC
$F(\btheta^\star,\bmu)=(P-Q(\btheta^\star))^\top\bmu=\bm 0$, and $Q(\btheta)$ is the BT matrix induced by $\btheta$.

Define the (SST) gap constants
\begin{equation}\label{eq:delta_def_dpo}
\underline\delta
\;\coloneqq\;
\min_{i\in[K-1]}\ \min_{j\in[K]}\bigl(P_{ij}-P_{i+1,j}\bigr)\;>\;0,
\qquad
\delta
\;\coloneqq\;
\min_{i\in[K-1]}
\Big\{(1-\lambda)\sum_{j=1}^K \pi_{0,j}\,(P_{ij}-P_{i+1,j})\Big\}\;>\;0,
\end{equation}
(where $\delta>0$ follows from $\bpi_0\in\Delta_K^\circ$ and SST).

Then the MRS--DPO dynamics exhibits the following two forms of collapse.

\begin{itemize}
\item \textbf{($\varepsilon$-collapse for $\alpha,\lambda\in[0,1)$ when $\beta/(1-\alpha)$ is large).}
Let
\[
\kappa \;\coloneqq\; \min_{i\in[K-1]}\log\frac{\pi_{{\rm ref},i}}{\pi_{{\rm ref},i+1}}
\quad (\text{finite since }\bpi_{\rm ref}\in\Delta_K^\circ),
\quad
c \;\coloneqq\; \frac12\Big(\kappa+\frac{4\beta\delta}{1-\alpha}\Big),
\quad
\varepsilon \;\coloneqq\; \bigl(e^{c}-1\bigr)^{-1}.
\]
If $\alpha,\lambda\in[0,1)$ and $c>0$ (in particular, if $\beta\delta/(1-\alpha)$ is sufficiently large),
then there exists a finite $T$ such that for all $t\ge T$,
\begin{equation}\label{eq:dpo_eps_collapse}
\bpi_{t,1}\ \ge\ 1-\varepsilon.
\end{equation}
In particular, if $\bpi_{\rm ref}$ is \emph{aligned} with the SST order (i.e.\ $\pi_{{\rm ref},1}\ge\cdots\ge\pi_{{\rm ref},K}$, so that $\kappa\ge 0$),
then one may take $c=\frac{2\beta\delta}{1-\alpha}$ and hence
$\varepsilon=(\exp(\frac{2\beta\delta}{1-\alpha})-1)^{-1}$.

\item \textbf{(True entropy collapse when $\alpha=1$).}
If $\alpha=1$, then $\bpi_t\to \e_1$ and $H(\bpi_t)\to 0$ as $t\to+\infty$.
Moreover, there exists a finite $T$ such that for all $t\ge T$,
\begin{equation}\label{eq:dpo_entropy_monotone}
H(\bpi_{t+1})<H(\bpi_t),
\end{equation}
where $H(\bpi)\coloneqq-\sum_{i=1}^K \pi_i\log\pi_i$ is the Shannon entropy.
\end{itemize}
\end{theorem}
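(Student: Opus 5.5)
The plan follows the IPO collapse argument (Theorem~\ref{thm:true_vs_eps_collapse_full}): work with adjacent log-ratios $r_{t,i}=\log(\pi_{t,i}/\pi_{t,i+1})$. Taking ratios of adjacent softmax coordinates in \eqref{eq:mix_ref_softmax_form_dpo} gives the exact recursion
\[
r_{t+1,i}=\alpha r_{t,i}+(1-\alpha)r_{{\rm ref},i}+\beta\bigl(\theta^\star_i(P,\bmu_t)-\theta^\star_{i+1}(P,\bmu_t)\bigr).
\]
This is the DPO analogue of Lemma~\ref{lem:logratio_recursion_mixed}. Everything then reduces to a uniform positive lower bound on the DPO logit gap $g_{t,i}:=\theta^\star_i(P,\bmu_t)-\theta^\star_{i+1}(P,\bmu_t)$. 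Once we have $g_{t,i}\ge 4\delta$ for all $t$ and $i$, the rest is mechanical. Unrolling exactly as in \eqref{eq:669} gives, for $\alpha<1$, $\liminf_t r_{t,i}\ge r_{{\rm ref},i}+4\beta\delta/(1-\alpha)\ge \kappa+4\beta\delta/(1-\alpha)=2c$. Hence for all large $t$ we have $r_{t,i}>c$, and the strict margin absorbs the geometric transient $\alpha^t(\cdot)$. When $c>0$, Lemma~\ref{lem:ratio_to_topmass} then yields $1-\pi_{t,1}\le (e^c-1)^{-1}$. The aligned case $\kappa\ge0$ gives $2c\ge 4\beta\delta/(1-\alpha)$, so one may take the stated $c$. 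For $\alpha=1$ the recursion gives $r_{t,i}\ge r_{1,i}+4\beta\delta t\to\infty$ for every $i$, so $\bpi_t\to\e_1$ and $H(\bpi_t)\to0$. Eventual strict entropy decrease follows as in the IPO proof. The update is a tilt $\bpi_{t+1}\propto\bpi_t\odot \exp(\beta\btheta^\star_t)$ with weights nonincreasing in the index (since $g_{t,i}>0$), and once $\bpi_t$ is itself sorted decreasingly, such a tilt produces a vector that majorizes $\bpi_t$. Strict Schur-concavity of $H$ then gives $H(\bpi_{t+1})<H(\bpi_t)$.

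The key lemma is the gap bound. Fix $\bmu=\bmu_t$ with $\bmu\ge(1-\lambda)\bpi_0$. The FOC \eqref{eq:first-order-dpo}, read columnwise (equivalently rowwise after using $P_{ij}=1-P_{ji}$ and $Q_{ij}=1-Q_{ji}$), says that for each $k$, $\sum_j\mu_j(P_{kj}-Q_{kj})=0$, i.e.
\[
\textstyle\sum_j\mu_j\sigma(\theta_k-\theta_j)=\sum_j\mu_jP_{kj}=(P\bmu)_k .
\]
Subtracting the equations for $k=i$ and $k=i+1$ gives
\[
\sum_j\mu_j\bigl[\sigma(\theta_i-\theta_j)-\sigma(\theta_{i+1}-\theta_j)\bigr]=(P\bmu)_i-(P\bmu)_{i+1}\ge\delta ,
\]
where the last inequality is Lemma~\ref{lem:delta_positive} applied with the mixture $\bmu_t$. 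Since $\sigma$ is $\tfrac14$-Lipschitz, the left side is at most $\tfrac14(\theta_i-\theta_{i+1})\sum_j\mu_j=\tfrac14 g_{t,i}$ whenever $g_{t,i}\ge0$. If $g_{t,i}<0$, every bracket is negative by monotonicity of $\sigma$, contradicting the positive right-hand side. Hence $g_{t,i}\ge 4\delta$, which explains the constant $4\beta\delta$ in $c$. The constant $\underline\delta$ is not needed in this route beyond guaranteeing SST positivity.

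I expect the main obstacle to be the entropy-monotonicity step for $\alpha=1$, specifically justifying "monotone tilt implies majorization". The cleanest route is to check that the partial sums $\sum_{k\le m}\pi_{t+1,k}$ dominate $\sum_{k\le m}\pi_{t,k}$. This amounts to showing that the normalized tilt moves mass toward lower indices: the ratio $\pi_{t+1,k}/\pi_{t,k}$ is nonincreasing in $k$, which is a Chebyshev-type sum inequality. Majorization in the sorted sense also requires $\bpi_t$ to be ordered, and this holds for large $t$ because all $r_{t,i}\to\infty$. Strictness comes from $g_{t,1}>0$, which holds as long as $\bpi_t$ is not a vertex, and interior iterates never are.
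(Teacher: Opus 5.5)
Your proposal is correct and follows essentially the same route as the paper: the row form of the DPO FOC, subtracting adjacent rows, the $\tfrac14$-Lipschitz bound giving $\theta^\star_i-\theta^\star_{i+1}\ge 4\delta$, the affine log-ratio recursion unrolled to $r_{t,i}\ge c$, and Schur-concavity for $\alpha=1$. Your sign argument ruling out $g_{t,i}<0$ and your explicit partial-sum majorization check make rigorous two steps the paper only sketches. The one divergence is the $\alpha=1$ drift: the paper uses the $\lambda$-free bound $g_{t,i}\ge 4\underline\delta$, valid for every $\bmu\in\Delta_K$, so it also covers fully on-policy sampling $\lambda=1$, where your $\delta$ vanishes (substituting $\underline\delta$ there costs nothing); conversely, your $\delta$-based route needs only SST as defined in the paper, whereas $\underline\delta>0$ requires strict dominance in every column.
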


\begin{proof}
Throughout the proof, fix $t$ and abbreviate
\[
\bmu\coloneqq\bmu_t,\qquad
\btheta^\star\coloneqq\btheta^\star(P,\bmu),\qquad
Q\coloneqq Q(\btheta^\star),
\qquad
\bpi^+\coloneqq\bpi_{t+1},\qquad
\bpi\coloneqq\bpi_t,\qquad
\bpi_{\rm ref}\coloneqq\bpi_{\rm ref}^{(t)}.
\]

First we utilize the row-form of the DPO FOC and derive a gap lower bound for $\btheta^\star$. Since $P,Q\in\cal P$ we have $(P-Q)^\top=-(P-Q)$ (skew-symmetry), and the FOC
$(P-Q)^\top\bmu=\bm 0$ is equivalent to
\begin{equation}\label{eq:row_FOC}
(P-Q)\bmu=\bm 0
\quad\Longleftrightarrow\quad
\sum_{j=1}^K \mu_j\bigl(P_{ij}-Q_{ij}\bigr)=0,\ \ \forall i\in[K].
\end{equation}
Thus for each $i$,
\begin{equation}\label{eq:row_match}
\sum_{j=1}^K \mu_j P_{ij}
=
\sum_{j=1}^K \mu_j Q_{ij}
=
\sum_{j=1}^K \mu_j\,\sigma(\theta_i^\star-\theta_j^\star).
\end{equation}

Fix any $i\in[K-1]$ and subtract \eqref{eq:row_match} for $i$ and $i+1$:
\begin{equation}\label{eq:gap_identity}
\sum_{j=1}^K \mu_j\bigl(P_{ij}-P_{i+1,j}\bigr)
=
\sum_{j=1}^K \mu_j\Big(\sigma(\theta_i^\star-\theta_j^\star)-\sigma(\theta_{i+1}^\star-\theta_j^\star)\Big).
\end{equation}
Because $\sigma$ is globally $1/4$-Lipschitz (since $\sup_x \sigma'(x)=1/4$), for each $j$,
\[
\sigma(\theta_i^\star-\theta_j^\star)-\sigma(\theta_{i+1}^\star-\theta_j^\star)
\le \frac14(\theta_i^\star-\theta_{i+1}^\star).
\]
Plugging this into \eqref{eq:gap_identity} and using $\sum_j\mu_j=1$ yields the key lower bound
\begin{equation}\label{eq:theta_gap_lower}
\theta_i^\star-\theta_{i+1}^\star
\ \ge\
4\sum_{j=1}^K \mu_j\bigl(P_{ij}-P_{i+1,j}\bigr).
\end{equation}
Under SST , the right-hand side is strictly positive; in particular,
\[
\theta_i^\star-\theta_{i+1}^\star \ \ge\ 4\underline\delta
\qquad\text{for all }i\in[K-1] \text{ and all }\bmu\in\Delta_K.
\]
Moreover, since $\bmu_t=(1-\lambda)\bpi_0+\lambda\bpi_t$ and $(P_{ij}-P_{i+1,j})\ge 0$ for all $j$,
\[
\sum_{j=1}^K \mu_{t,j}(P_{ij}-P_{i+1,j})
\ \ge\
(1-\lambda)\sum_{j=1}^K \pi_{0,j}(P_{ij}-P_{i+1,j})
\ \ge\ \delta,
\]
so for all $t$ and all $i\in[K-1]$,
\begin{equation}\label{eq:theta_gap_delta}
\theta_i^\star-\theta_{i+1}^\star \ \ge\ 4\delta.
\end{equation}

Next, we derive the log-ratio recursion formula for the policy iterates. By the softmax and multiplicative form of the update
\[
\pi^+_i
=
\frac{\pi_{{\rm ref},i}\,e^{\beta\theta_i^\star}}{\sum_{k=1}^K \pi_{{\rm ref},k}\,e^{\beta\theta_k^\star}},
\]
hence for each $i\in[K-1]$,
\begin{equation}\label{eq:ratio_update}
\log\frac{\pi^+_i}{\pi^+_{i+1}}
=
\log\frac{\pi_{{\rm ref},i}}{\pi_{{\rm ref},i+1}}
+\beta(\theta_i^\star-\theta_{i+1}^\star).
\end{equation}

Assume $\alpha,\lambda\in[0,1)$, Using $\log\bpi_{\rm ref}^{(t)}=(1-\alpha)\log\bpi_{\rm ref}+\alpha\log\bpi_t$ Subtracting for coordinate $i+1$ from $i$ gives
\begin{equation}\label{eq:ref_ratio_lower}
\log\frac{\pi_{{\rm ref},i}}{\pi_{{\rm ref},i+1}}
\ =\
\alpha\log\frac{\pi_i}{\pi_{i+1}}
+(1-\alpha)\log\frac{\pi_{{\rm ref},i}}{\pi_{{\rm ref},i+1}}
\ \ge\
\alpha\log\frac{\pi_i}{\pi_{i+1}}+(1-\alpha)\kappa.
\end{equation}
Combining \eqref{eq:ratio_update}, \eqref{eq:ref_ratio_lower}, and \eqref{eq:theta_gap_delta} yields the uniform recursion
\begin{equation}\label{eq:affine_recursion}
\log\frac{\pi^+_i}{\pi^+_{i+1}}
\ \ge\
\alpha\log\frac{\pi_i}{\pi_{i+1}} + (1-\alpha)\kappa + 4\beta\delta,
\qquad \forall i\in[K-1].
\end{equation}
Let the log ratio $r_{t,i}\coloneqq \log(\pi_{t,i}/\pi_{t,i+1})$. Then \eqref{eq:affine_recursion} implies
\[
r_{t+1,i} \ge \alpha r_{t,i} + (1-\alpha)\kappa + 4\beta\delta.
\]
Solving this affine recursion gives
\begin{equation}\label{eq:Lt_lower}
r_{t,i}
\ \ge\
\alpha^{t-1}r_{1,i}
+(1-\alpha^{t-1})\Big(\kappa+\frac{4\beta\delta}{1-\alpha}\Big).
\end{equation}
If $c=\frac12(\kappa+\frac{4\beta\delta}{1-\alpha})>0$, then since $\alpha^{t-1}\to 0$ there exists
a finite $T$ such that for all $t\ge T$ and all $i\in[K-1]$,
\begin{equation}\label{eq:ratio_uniform_c}
\log\frac{\pi_{t,i}}{\pi_{t,i+1}} \ \ge\ c.
\end{equation}
From \eqref{eq:ratio_uniform_c}, we have $\pi_{t,i+1}\le \pi_{t,i}e^{-c}$ and by induction
$\pi_{t,k}\le \pi_{t,1}e^{-(k-1)c}$ for all $k\ge 2$. Hence
\[
1-\pi_{t,1}=\sum_{k=2}^K \pi_{t,k}
\le \pi_{t,1}\sum_{m=1}^{K-1} e^{-mc}
\le \frac{\pi_{t,1}}{e^c-1},
\]
which implies $1\le \pi_{t,1}(1+\frac{1}{e^c-1})=\pi_{t,1}\frac{e^c}{e^c-1}$ and thus
\[
\pi_{t,1}\ \ge\ \frac{e^c-1}{e^c}\ =\ 1-e^{-c}\ \ge\ 1-(e^c-1)^{-1}\ =\ 1-\varepsilon.
\]
This proves \eqref{eq:dpo_eps_collapse}.

In addition, if $\bpi_{\rm ref}$ is aligned with the order $1\succ\cdots\succ K$ (nonincreasing), then $\kappa\ge 0$ and the stated simplification follows.

Finally we show the true collapse when $\alpha=1$ and the induced entropy monotonicity. Since $\alpha=1$, $\bpi_{\rm ref}^{(t)}=\bpi_t$ and \eqref{eq:ratio_update} becomes
\begin{equation}\label{eq:alpha1_ratio_update}
\log\frac{\pi_{t+1,i}}{\pi_{t+1,i+1}}
=
\log\frac{\pi_{t,i}}{\pi_{t,i+1}}+\beta(\theta_i^\star-\theta_{i+1}^\star).
\end{equation}
By \eqref{eq:theta_gap_lower} and SST , we have
$\theta_i^\star-\theta_{i+1}^\star\ge 4\underline\delta$ for all $i$, hence from \eqref{eq:alpha1_ratio_update}
\[
\log\frac{\pi_{t+1,i}}{\pi_{t+1,i+1}} \ \ge\ \log\frac{\pi_{t,i}}{\pi_{t,i+1}} + 4\beta\underline\delta,
\]
so each adjacent log-ratio diverges to $+\infty$ at least linearly in $t$. The same geometric-series argument as above
then yields $\pi_{t,1}\to 1$, i.e.\ $\bpi_t\to \e_1$.

Since $\bpi_t\to \e_1$, we also have $H(\bpi_t)\to 0$.
Finally, $H$ is continuous on $\Delta_K^\circ$ and strictly Schur-concave, and the update with $\alpha=1$ is a strict
reweighting toward action $1$ with uniformly separated weights
($\theta_1^\star>\theta_2^\star>\cdots>\theta_K^\star$ and gaps $\ge 4\underline\delta$). Therefore, there exists a neighborhood
of $\e_1$ in which the MRS--DPO step strictly decreases entropy. Since $\bpi_t\to \e_1$, the trajectory eventually enters this
neighborhood and remains there, which proves the existence of a finite $T$ such that \eqref{eq:dpo_entropy_monotone} holds for all $t\ge T$.
\end{proof}

\section{Experimental Details and Additional Results}
\label{app:experiments}

\subsection{Dataset description}
\label{app:dataset}

All experiments were conducted using the \textsc{NVIDIA HelpSteer} dataset, which contains real human feedback collected for LLM alignment.
The raw dataset consists of $35{,}331$ response-level records, each associated with a textual prompt and a model-generated response.

\paragraph{Available annotations.}
Each response is independently evaluated along multiple qualitative attributes.
Specifically, the available fields include:
\begin{center}
\texttt{\{helpfulness, correctness, coherence, complexity, verbosity\}}.
\end{center}
These scalar scores are used as the basis for constructing pairwise preference matrices.

An example entry from the dataset is shown below:
\begin{quote}
\label{sample}
\small
\texttt{
\{
  ``prompt'': ``Explain why the sky is blue.'',
  ``response'': ``The sky appears blue due to Rayleigh scattering...'',
  ``helpfulness'': 3,
  ``correctness'': 4,
  ``coherence'': 4,
  ``complexity'': 2,
  ``verbosity'': 2
\}
}
\end{quote}

\paragraph{Data filtering and preprocessing.}
Our analysis focuses on preference dynamics over fixed-size sets of competing
responses.
To enable a uniform experimental protocol across prompts with varying numbers
of responses, we restrict attention to prompts with at least $K=4$ associated
responses and construct preference instances by selecting $K=4$ responses per
prompt uniformly at random.
This choice balances expressiveness and data coverage: most prompts in the dataset correspond to at most $4$ responses, and preference sets of size $4$ are sufficiently rich to admit both ST and cyclic structures.  

\paragraph{Resulting structure.}
After preprocessing, each retained prompt induces a local preference learning instance consisting of a fixed-size set of $K=4$ responses, together with their corresponding multi-dimensional evaluation scores.
This collection of $7{,}497$ local instances forms the basis from which all preference matrices used in the experiments are derived.


\subsection{Construction of preference matrices}
\label{app:construction}

For learning instance after the preprocessing, each prompt associates with $K=4$ responses, we construct a local probabilistic preference matrix
\[
P \in [0,1]^{K \times K},
\]
where $P_{ij} \in [0,1]$ represents the probability that response $i$ is preferred over response $j$, and $P_{ij} + P_{ji} = 1$ for any $i,j \in [K]$. 


\paragraph{Randomized attribute selection.}
Let $\mathcal{D}$ denote the set of available evaluation attributes.
For each unordered pair of responses $(i,j)$, we independently sample an attribute
\[
d \sim \mathrm{Unif}(\mathcal{D}),
\]
and denote by $s_i^{(d)}$ and $s_j^{(d)}$ the corresponding scalar scores.

\paragraph{Matrix Construction.}
For each $(i,j)$, given the selected attribute $d$ from the last step, we generate the probability related to $(i,j)$ via sigmoid function:
\[
P_{ij}
= \mathrm{sigmoid}(s^{(d)}_i - s^{(d)}_j)=
\frac{\exp\!\big(s_i^{(d)}\big)}
{\exp\!\big(s_i^{(d)}\big) + \exp\!\big(s_j^{(d)}\big)},
\qquad
P_{ji} = 1 - P_{ij},
\]
with $P_{ii} = \tfrac{1}{2}$ by convention.
By construction, the resulting matrix satisfies
$P_{ij} + P_{ji} = 1$for all $i \neq j$ and thus is a valid preference matrix.

While scores on the same attribute yield a ranking over responses, independent randomization of attributes across response pairs can introduce conflicting pairwise comparisons across different dimensions.
As a result, the aggregated preference matrix need not correspond to any single latent scoring function, nor even be transitive. 
We classify them into two categories: ST matrices defined in \Cref{def:st}, and cyclic matrices that are not transitive.  

\paragraph{Preference Structure Verification.}
To verify ST matrices in the experiments, for each matrix $P$, we first extract a candidate total order
by sorting responses according to their aggregate preference scores $\sum_j P_{ij}$.
We then verify whether $P$ satisfies the strongly transitive condition in
\Cref{def:st} with respect to this order.
Matrices that fail this verification are not classified as strongly transitive.

We test a matrix and verify it is cyclic if its induced dominance graph contains a
directed cycle, i.e., there exist indices $i_1,\dots,i_m$ with $m\ge3$ such that
$P_{i_\ell i_{\ell+1}}>\tfrac12$ for all $\ell$ (indices modulo $m$).

Matrices that are neither ST nor cyclic are excluded from the analysis.

\paragraph{Statistics.}
After filtering, the dataset contains $4,924$ ST preference matrices and $118$ cyclic preference matrices.

\subsection{Experimental setup in \Cref{sec:experiments}}
\label{app:protocol}

All experiments in \Cref{sec:experiments} are conducted using preference
matrices constructed from the dataset, without additional sampling noise.
These matrices represent probabilistic pairwise comparisons aggregated from
human evaluation scores, as described in
\Cref{app:construction}.

Unless otherwise stated, for each preference instance and parameter configuration
$(\alpha,\beta\lambda)$, we fix $\lambda = 0.5$ and set $\beta = v/0.5$ for each value $v$ of $\beta\lambda$. We initialize the dynamics with
$\pi_{\mathrm{ref}} = \pi_0$ equal to the uniform distribution over $K$ responses,
set the initial policy $\pi_1 = \pi_0$, and run the MRS-IPO dynamics for
$T = 3000$ iterations.
This horizon is sufficient to observe convergence, collapse, or persistent
non-convergent behavior across all parameter regimes considered.



\subsubsection{Preferences matrices for illustrative examples in \Cref{sec:exp-illustrative} and \Cref{app:single_param}} 
\label{app:form-illustrative-examples}

We use the same preference matrices for the illustrative examples, both in the noiseless setting in \Cref{sec:exp-illustrative} and the finite sample setting in \Cref{app:protocol-noise}. 

The strongly transitive preference matrix used throughout
is
given by
\[
P_{\mathrm{ST}} =
\begin{pmatrix}
0.5 & 0.731 & 0.269 & 0.5 \\
0.269 & 0.5 & 0.119 & 0.269 \\
0.731 & 0.881 & 0.5 & 0.731 \\
0.5 & 0.731 & 0.269 & 0.5
\end{pmatrix}.
\]
This matrix admits a total order consistent with Definition~\ref{def:st}, and all
pairwise preferences satisfy the strong transitivity inequalities with respect
to that order.

The cyclic preference matrix is given by
\[
P_{\mathrm{cyc}} =
\begin{pmatrix}
0.5 & 0.731 & 0.269 & 0.5 \\
0.269 & 0.5 & 0.731 & 0.881 \\
0.731 & 269 & 0.5 & 0.731 \\
0.5 & 0.119 & 0.269 & 0.5
\end{pmatrix},
\]
with cycle $3 \succ 1\succ 2\succ3$. The induced dominance graph of this matrix contains directed cycles.

\subsubsection{Statistical metrics for aggregated experimental results in \Cref{sec:aggregated-results} and \Cref{app:complementary-aggregated results}} 
\label{aggregate_ipo}

In \Cref{sec:aggregated-results} and \Cref{app:complementary-aggregated results}, we consider aggregated phenomena across all instances constructed from the HelpSteer dataset.

Specifically, for ST preferences, policy collapse is quantified using
the Shannon entropy of the terminal policy:
\[
H(\bpi_t) \;=\; -\sum_{i=1}^K \pi_{t,i}\log \pi_{t,i}.
\]
Low entropy indicates more concentration on a small subset of responses, implying more serious policy collapse.

For cyclic preferences,
we measure the oscillatory behavior of the dynamics using the time-averaged variance:
\[
\mathrm{CS}(\bpi)
\;=\;
\frac{1}{K}
\sum_{i=1}^K
\operatorname{Var}_{t \ge T_0}\left[\pi_{t,i}\right],
\]
computed using policies after a burn-in period $T_0$.
This metric converges to zero if and only if the dynamics converge to a fixed point, and
increases with the amplitude of persistent oscillations.

\subsection{Finite sample preference noise under single parameter variations}
\label{app:protocol-noise}

We consider preference matrices constructed from the dataset as true underlying preference matrices.  
To account for the finite sample variability inherent in real-world preference
data, we introduce controlled stochastic perturbations to these matrices before
running the dynamics and conduct additional experiments examing the behavior of MRS-IPO dynamics under finite sample preference estimation.
The goal of these experiments is not to study noise as a perturbation per se,
but to assess whether the qualitative dynamical behaviors identified in \Cref{sec:self_ref_dynamics} 
persists when preferences are instantiated from finite data.

Again, we consider the ST matrix and cyclic matrix in \Cref{app:form-illustrative-examples}. For each preference matrix
$P \in [0,1]^{K \times K}$, we generate an empirical realization
$\widehat P$ by independently sampling a fixed number $n$ of pairwise comparisons
for each unordered response pair.
For each pair $(i,j)$, the empirical preference value $\widehat P_{ij}$ is obtained
by averaging the outcomes of $n$ independent Bernoulli trials with success
probability $P_{ij}$, yielding
\[
\widehat P_{ij} \in \Big\{0, \tfrac{1}{n}, \tfrac{2}{n}, \dots, 1\Big\},
\qquad
\widehat P_{ji} = 1 - \widehat P_{ij}.
\]

Importantly, this finite sample preference realization $\widehat P$ is sampled
\emph{once per experimental run} and then held fixed throughout the entire
execution of the MRS-IPO dynamics.
That is, preference noise is introduced at the level of the preference instance,
rather than being resampled over time or depending on the evolving policy
$\pi_t$.
This design isolates the effect of static estimation noise in the preference
matrix and avoids confounding preference dynamics with on-policy resampling
effects.

For each parameter of $\alpha,\beta, \lambda$, we change its value with two other parameters fixed to observe how the dynamics change as one parameter varies. 
Unless otherwise stated, all noisy experiments use the same algorithmic
initialization as in the noiseless setting.
Specifically, we set $\pi_{\mathrm{ref}} = \pi_0$ to be the uniform distribution
over the $K$ responses, initialize the policy with $\pi_1 = \pi_0$, and run the
dynamics for $T = 3000$ iterations.
Throughout these experiments, we set the number of finite sample realizations per
preference instance to $50$, which provides a stable estimate of the mean
trajectory and variability without obscuring instance level effects and fix $n = 5$, which introduces moderate
variability without overwhelming the underlying preference signal.

For parameter regimes in which finite sample realizations remain approximately
phase aligned (e.g., convergent dynamics under strongly transitive preferences or
weakly oscillatory regimes under cyclic preferences), we summarize behavior by
reporting the mean trajectory across realizations, with shaded regions indicating
$\pm$ one standard error (Figures~\ref{fig:app-st-alpha}--\ref{fig:app-st-lambda} and Figures~\ref{fig:app-cyc-alpha}--\ref{fig:app-cyc-lambda}).

In contrast, in regimes where the dynamics depend strongly on the realized
samples, such as collapse under strongly transitive preferences or strong
oscillations under cyclic preferences, finite sample realizations can differ
substantially in onset time, phase, or amplitude.
In these cases, averaging trajectories can cancel the phenomena and obscure the
underlying dynamical structure.
Accordingly, we present
representative individual
finite sample realizations to illustrate the diversity of behaviors that arise in
practice (Figure~\ref{fig:collapse},~\ref{fig:cyc-repr}).

Overall, these experiments indicate that the qualitative dynamical regimes
predicted by the theory, convergence, collapse, and persistent cycling, remain
observable even when the idealized structural assumptions (e.g., strict strong
transitivity or perfectly cyclic preferences) are mildly violated due to finite
sampling noise.

\subsubsection{Experimental results}
\label{app:single_param}

\paragraph{Cyclic preferences.}
Figures~\ref{fig:app-cyc-alpha}--\ref{fig:app-cyc-lambda} present the corresponding
experiments for cyclic preference matrices.
In parameter regimes where the dynamics weakly oscillate, the oscillatory phases remain approximately aligned across realizations.

In these cases, averaged trajectories capture the qualitative behavior of the
system and reveal how varying $\alpha$, $\beta$, or $\lambda$ modulates the
magnitude of oscillations.

\begin{figure}[!htbp] 
    \centering 
    \includegraphics[width=\linewidth]{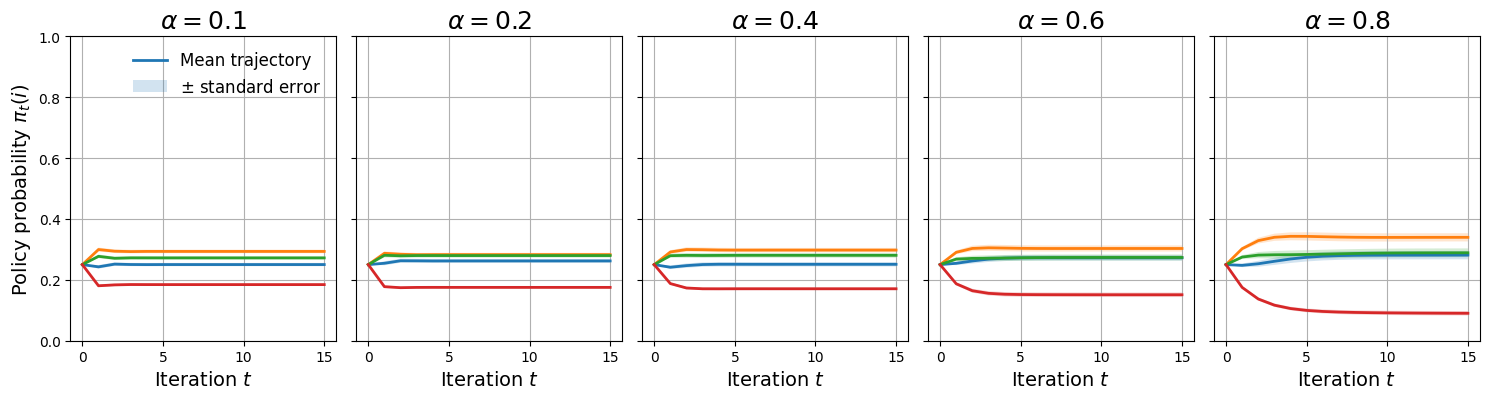}
    \caption{Effect of varying $\alpha$ under cyclic preferences, with $\beta = 2.0$ and $\lambda = 0.8$.} 
    \label{fig:app-cyc-alpha}
\end{figure}

\begin{figure}[!htbp] 
    \centering 
    \includegraphics[width=\linewidth]{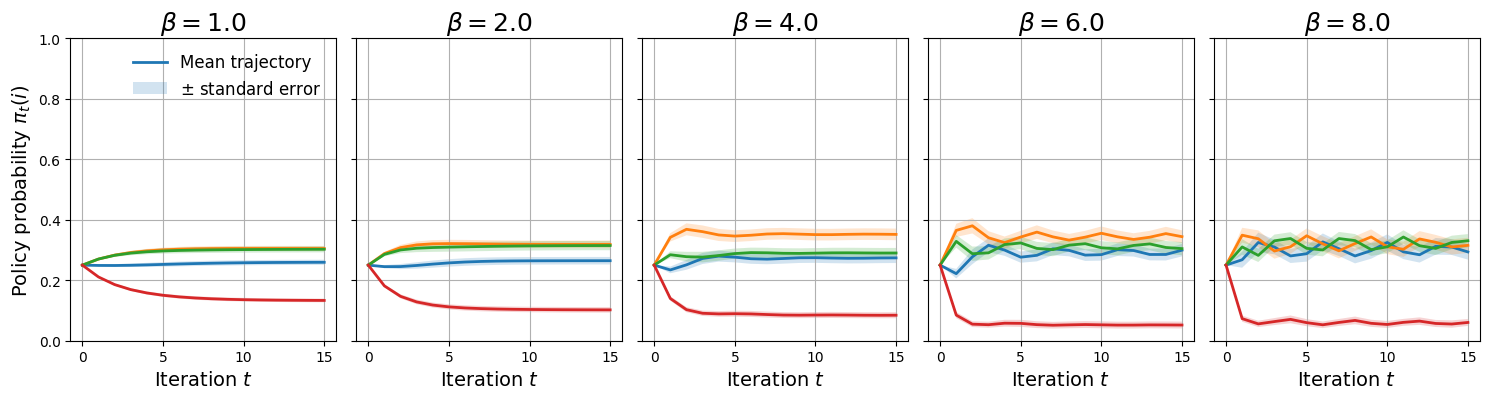}
    \caption{Effect of varying $\beta$ under cyclic preferences, with $\alpha = 0.8$ and $\lambda = 0.8$.} 
    \label{fig:app-cyc-beta}
\end{figure}

\begin{figure}[!htbp] 
    \centering 
    \includegraphics[width=\linewidth]{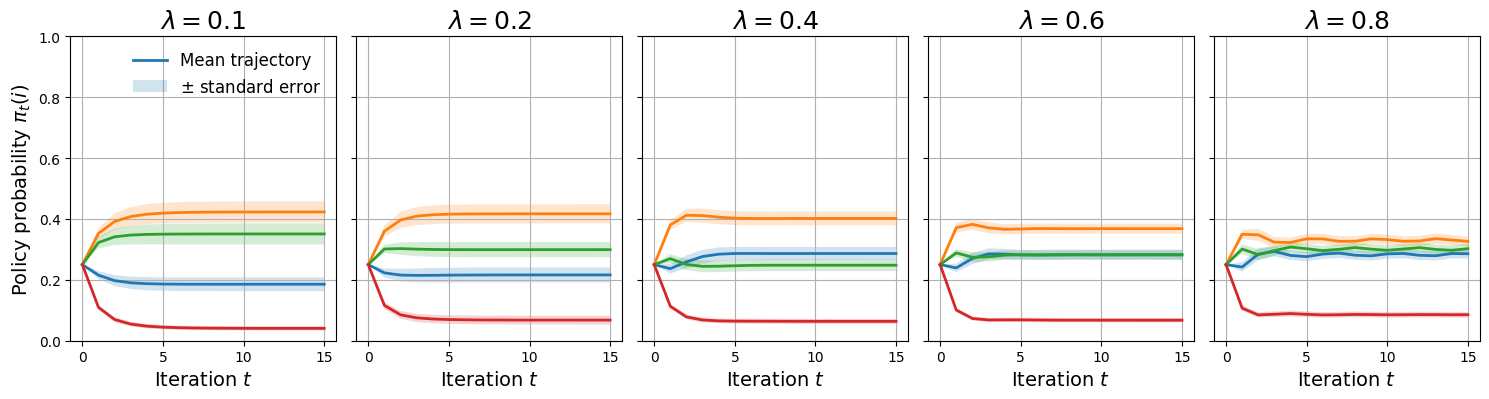}
    \caption{Effect of varying $\lambda$ under cyclic preferences, with $\alpha = 0.6$ and $\beta = 5.0$.} 
    \label{fig:app-cyc-lambda}
\end{figure}

However, in regimes where the dynamics oscillate strongly and persistently, the amplitudes, phases and periods vary, substantially across realizations.
Averaging trajectories in these regimes obscures the underlying oscillatory
structure due to phase cancellation.
Accordingly, we complement the averaged results with representative individual
finite sample realizations.
\Cref{fig:cyc-repr} shows two such realizations generated from the same cyclic preference matrix under identical algorithmic
parameters, illustrating the diversity of cyclic behaviors that can arise from
finite sample variability.

\begin{figure}[!htbp] 
    \centering 
    \includegraphics[width=\linewidth]{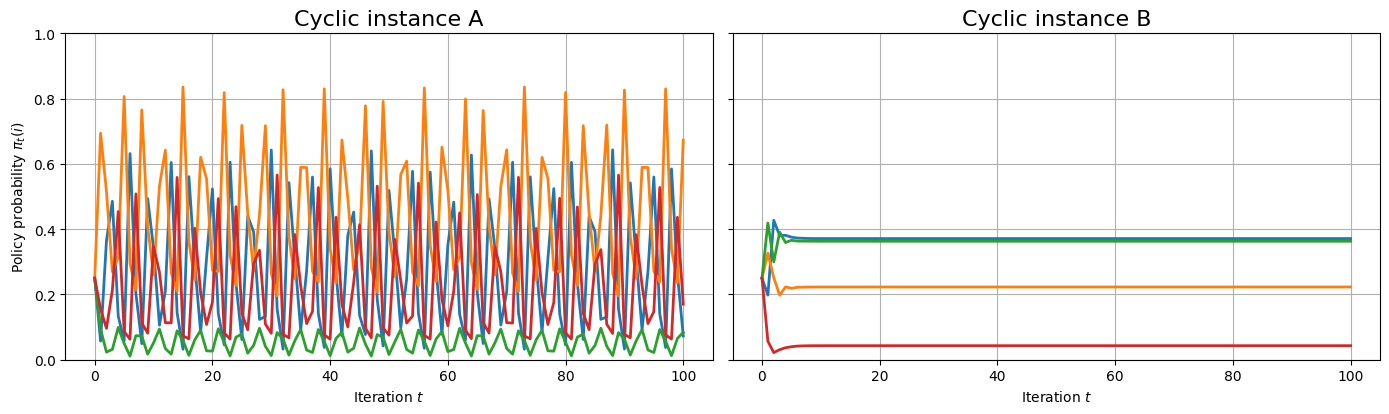}
    \caption{Two representative finite sample realizations generated from the same cyclic preference matrix. $\alpha=0.9,\ \beta=10.0,\ \lambda=0.8$}. 
\label{fig:cyc-repr}
\end{figure}

\paragraph{ST preferences.}
Figures~\ref{fig:app-st-alpha}--\ref{fig:app-st-lambda} report single parameter
variation experiments under ST preferences in parameter regimes
where the dynamics converge to a relatively diverse entropy.
In these regimes, finite-sample realizations exhibit highly aligned trajectories,
and averaging across instances yields a faithful representation of the dynamics.
Varying $\alpha$, $\beta$, or $\lambda$ affects the speed of convergence and the
concentration of the limiting policy in a manner consistent with the
aggregated results analysis.
Finite-sample estimation introduces only mild variability during the transient
phase and does not qualitatively alter the convergent behavior.

\begin{figure}[!htbp] 
    \centering 
    \includegraphics[width=\linewidth]{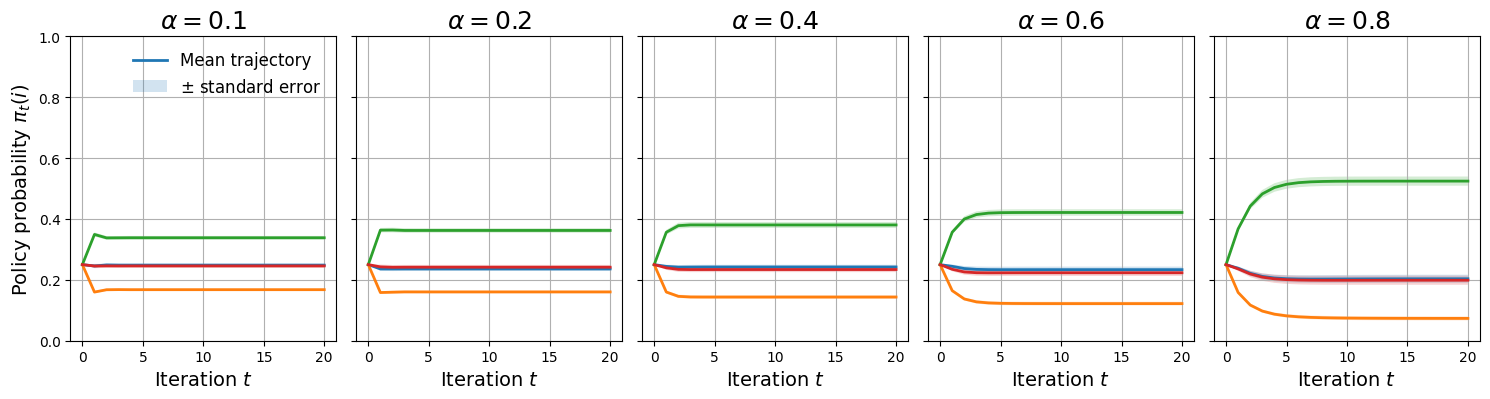}
    \caption{Effect of varying $\alpha$ under ST preferences, with $\beta = 2.0$ and $\lambda = 0.8$.} 
    \label{fig:app-st-alpha}
\end{figure}

\begin{figure}[!htbp] 
    \centering 
    \includegraphics[width=\linewidth]{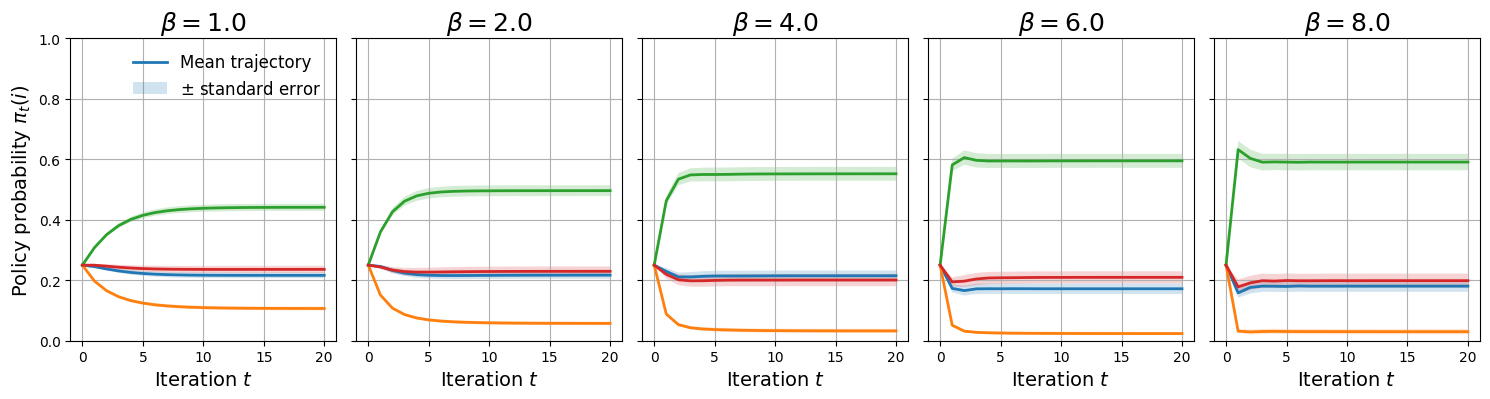}
    \caption{Effect of varying $\beta$ under ST preferences, with $\alpha = 0.8$ and $\lambda = 0.8$.} 
    \label{fig:app-st-beta}
\end{figure}

\begin{figure}[!htbp] 
    \centering 
    \includegraphics[width=\linewidth]{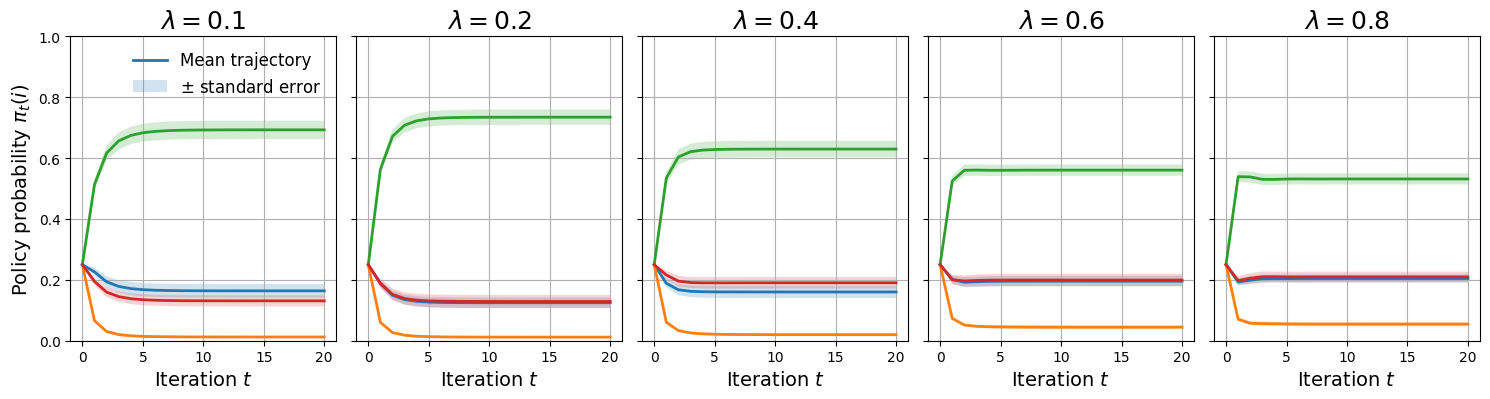}
    \caption{Effect of varying $\lambda$ under ST preferences, with $\alpha = 0.6$ and $\beta = 5.0$.} 
    \label{fig:app-st-lambda}
\end{figure}

In contrast, when parameters keep increasing 
so that the dynamics converge to highly concentrated, low entropy
policies,
the onset times and transient paths vary across realizations.
While all realizations ultimately converge to degenerate policies concentrated
on top-ranked actions, the lack of phase alignment across realizations makes
averaged trajectories difficult to interpret.
Therefore, for these parameter regimes, we illustrate representative individual
finite-sample realizations rather than reporting aggregated averages.
\Cref{fig:collapse} shows two such realizations generated from the same preference matrix, highlighting the diversity of collapse
dynamics induced by finite-sample variability.

\begin{figure}[!htbp] 
    \centering 
    \includegraphics[width=\linewidth]{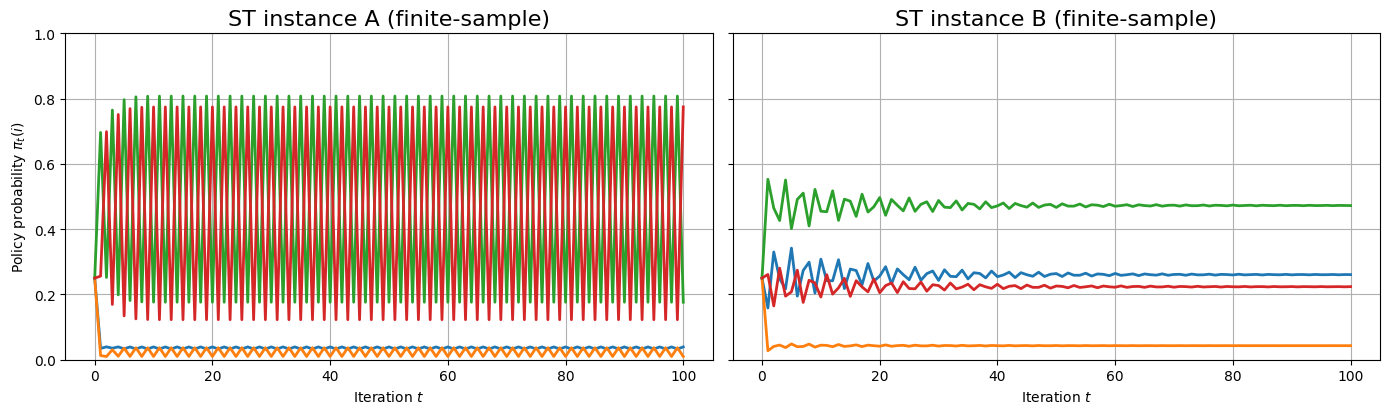}
    \caption{Two representative finite-sample realizations generated from the same
ST population preference matrix. $\alpha=0.5,\ \beta=10.0,\ \lambda=0.8$} . 
    \label{fig:collapse}
\end{figure}

\subsection{Complementary aggregated results}
\label{app:complementary-aggregated results}

This section reports aggregated statistics with $\beta$ and $\lambda$ treated separately and more possible parameter values $(\alpha,\beta,\lambda)$, as a complement to results in \Cref{sec:aggregated-results}. The goal of these experiments is not to introduce new theoretical regimes, but to
verify that the aggregated results predictions derived under the $\beta\lambda$
abstraction remain informative when the two parameters are separated.
This separation allows us to examine how $\beta$ (inverse temperature) and
$\lambda$ (on-policy sampling) influence convergence speed, collapse timing, and
oscillation amplitude in practice, while preserving the qualitative structure
predicted by the theory.

All statistics are computed over large collections of empirically constructed
preference matrices derived from the HelpSteer dataset, consisting of
$4924$ strongly transitive matrices and $118$ cyclic matrices.
For each parameter configuration, the MRS-IPO dynamics are run independently on
all preference matrices in the corresponding class, and we report the mean and
standard deviation of the relevant statistic across matrices.

\paragraph{Convergence/Oscillation in cyclic preferences. }

Figure~\ref{fig:cycle} reports the mean and standard deviation of cycle strength
across all cyclic preference matrices for independently varied
$(\alpha,\beta,\lambda)$.
When $\beta$ and $\lambda$ are small, the cycle strength remains close to zero for
most values of $\alpha$, indicating convergence or only weak oscillations.
As $\beta$ or $\lambda$ increases, persistent oscillatory behavior emerges, and
the cycle strength rises.
Larger values of $\alpha$ further amplify oscillations by reinforcing
self-referential dynamics.

The large standard deviations observed in strong amplification regimes reflect
substantial heterogeneity in oscillation amplitude, phase, and period across
cyclic preference matrices.

\begin{figure}[!htbp]
    \centering
    \includegraphics[width=0.8\linewidth]{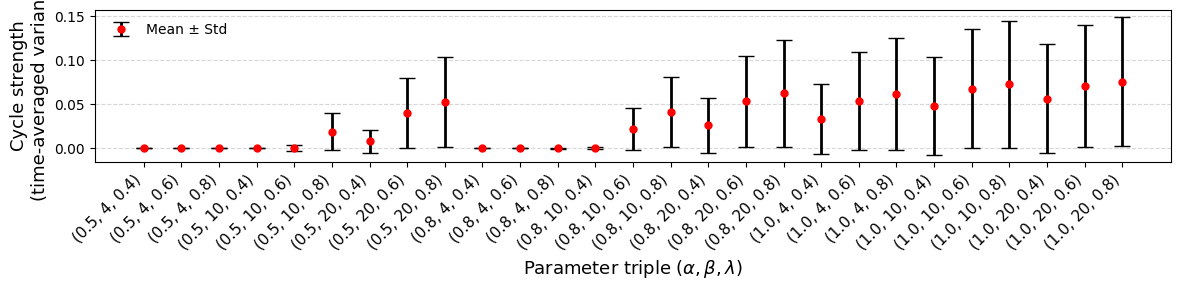}
    \caption{Mean $\pm$ standard deviation of cycle strength (time-averaged
    variance of $\{\pi_t\}$) across all $118$ cyclic preference matrices as
    $(\alpha,\beta,\lambda)$ varies.}
    \label{fig:cycle}
\end{figure}

\paragraph{Model collapse in ST preferences.} For ST preferences, \Cref{fig:entropy} reports the mean and standard deviation of
$H(\pi_T)$ across all strongly transitive preference matrices for a grid of
parameter triples $(\alpha,\beta,\lambda)$.
For small values of $\beta$ and $\lambda$, the final entropy remains high across
all $\alpha$, indicating convergence to non-collapsed policies.
Increasing $\beta$ leads to systematically lower entropy,
reflecting stronger preference amplification and faster concentration of mass on
top-ranked actions. However, increasing $\lambda$ leads to a slightly higher entropy.
For sufficiently large $\beta$ and $\lambda$, the entropy approaches to (near) zero,
corresponding to near deterministic collapse.

The standard deviation of entropy is small in weak amplification regimes,
indicating consistent convergence behavior across preference matrices.
In contrast, near the onset of collapse the variance increases substantially,
indicating that the degree of  model collapse relies crucially on the specific preference matrices.

\begin{figure}[!htbp]
    \centering
    \includegraphics[width=0.8\linewidth]{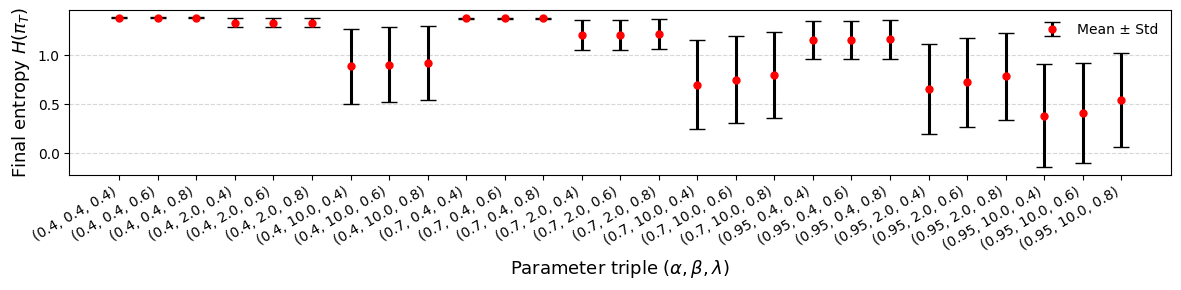}
    \caption{Mean $\pm$ standard deviation of the final policy entropy
    $H(\pi_T)$ across all $4924$ strongly transitive preference matrices as
    $(\alpha,\beta,\lambda)$ varies.}
    \label{fig:entropy}
\end{figure}

\paragraph{Summary.}
Across thousands of empirically constructed preference instances, these
statistics demonstrate that independently varying $\alpha$, $\beta$, and
$\lambda$ produces systematic and interpretable changes in convergence,
collapse, and oscillatory behavior.

\section{Experimental Results for MRS-DPO Dynamics}
\label{app:dpo_experiments}

In this section, we provide additional simulations and evaluations for the Mixed Reference/Sampling DPO (MRS-DPO) dynamics defined in \Cref{def:mixed_dpo_dyn}. 

\paragraph{Experimental setup and protocol.}
We utilize the same dataset generation pipeline and preference matrices construction protocol as in MRS-IPO, described in \Cref{app:dataset,app:construction}. To simulate the MRS-DPO update rule, we compute the implicit optimizer $\btheta^{\star}$ using numerical method (gradient-based optimization) solving the inner concave optimization problem induced by the DPO objective under pairwise logistic preferences.

\paragraph{Overview of experiments.}
To validate the theoretical properties of MRS-DPO, we structure our analysis into two main components:
\begin{itemize}
    \item \textbf{Parameter Sensitivity Analysis (Trajectories):} We first examine how the three hyperparameters---$\alpha$, $\beta$, and $\lambda$---affect the optimization trajectory. For both ST and cyclic preference structures, we conduct two parallel experiments:
    \begin{enumerate}
        \item \textit{Deterministic Dynamics on Specific $P$:} We use the specific illustrative matrices ($P_{ST}$ and $P_{cyc}$) defined in \Cref{app:form-illustrative-examples} to observe asymptotic behavior in a noiseless setting with fixed preference probabilities.
        \item \textit{Robustness to Finite Sample Noise:} We apply the finite sample noise injection method detailed in \Cref{app:protocol-noise} to verify that the observed phenomena are robust to estimation errors.
    \end{enumerate}
    \item \textbf{Aggregate Experimental Results:} Finally, mirroring the analysis in \Cref{aggregate_ipo}, we quantify the global convergence behavior by measuring the \textit{Policy Entropy} (for ST preferences) and \textit{Cycle Strength} (for Cyclic preferences) across a broader range of conditions.
\end{itemize}

\subsection{Simulation of MRS-DPO under single parameter variations}
\label{app:dpo_single_param}

\paragraph{Cyclic preferences.}

\Cref{fig:dpo-cyc-alpha,fig:dpo-cyc-beta,fig:dpo-cyc-lambda} present the experimental results for MRS-DPO dynamics a representative cyclic preference matrix $P_{cyc}$. In this idealized regime, the dynamics exhibit clear behavioral transitions governed by the stability conditions. Specifically, \Cref{fig:dpo-cyc-alpha} shows that as $\alpha$ increases, the system loses stability, shifting from convergence to the uniform distribution to a regime of persistent oscillations.
Similarly, \Cref{fig:dpo-cyc-beta} demonstrates that increasing $\beta$ strengthens the preference update, which destabilizes the uniform fixed point and drives the dynamics to oscillate strongly.
\Cref{fig:dpo-cyc-lambda} further confirms this trend: while low values of $\lambda$ allow sufficient exploration for convergence, increasing $\lambda$ (greater strength of on-policy sampling) promotes instability, resulting in oscillations of increasing magnitude.

\begin{figure}[!htbp] 
    \centering 
    \includegraphics[width=\linewidth]{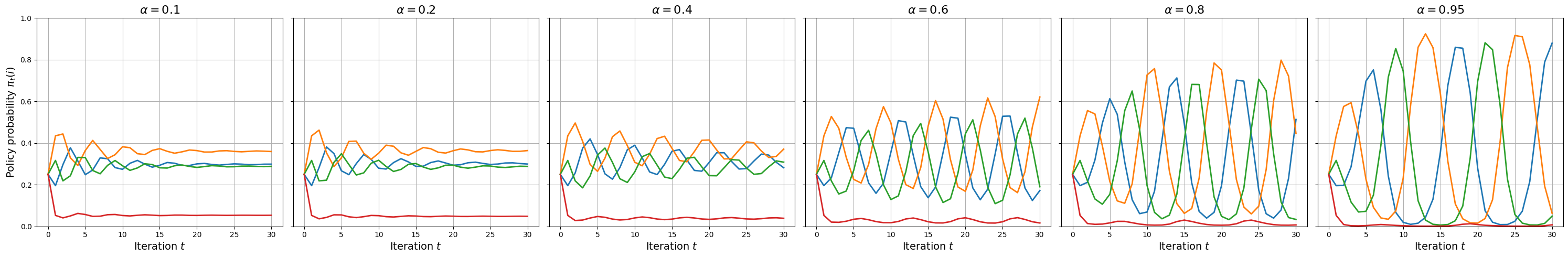}
    \caption{MRS-DPO: Effect of varying $\alpha$ under cyclic preferences ($P_{cyc}$), with $\beta = 2.0$ and $\lambda = 0.8$.} 
    \label{fig:dpo-cyc-alpha}
\end{figure}

\begin{figure}[!htbp] 
    \centering 
    \includegraphics[width=\linewidth]{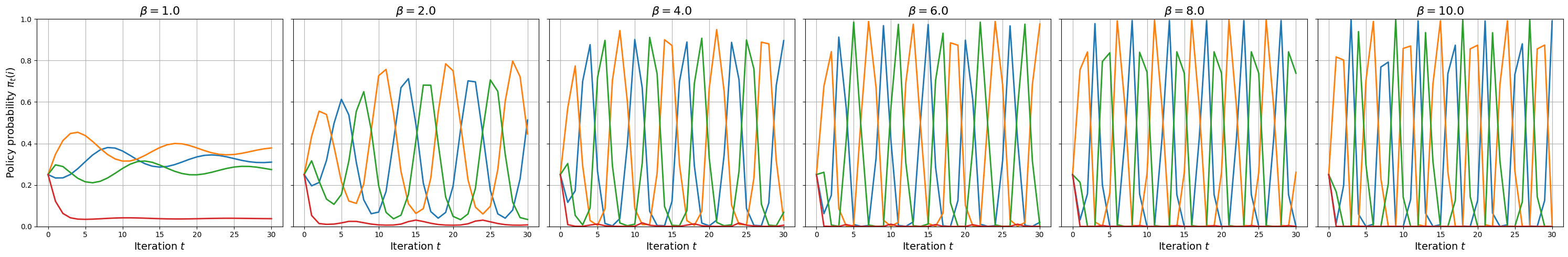}
    \caption{MRS-DPO: Effect of varying $\beta$ under cyclic preferences ($P_{cyc}$), with $\alpha = 0.8$ and $\lambda = 0.8$.} 
    \label{fig:dpo-cyc-beta}
\end{figure}

\begin{figure}[!htbp] 
    \centering 
    \includegraphics[width=\linewidth]{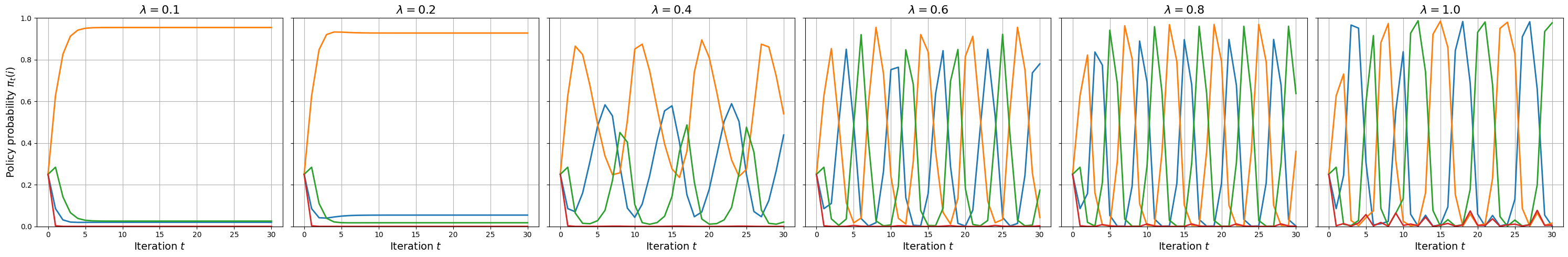}
    \caption{MRS-DPO: Effect of varying $\lambda$ under cyclic preferences ($P_{cyc}$), with $\alpha = 0.6$ and $\beta = 5.0$.} 
    \label{fig:dpo-cyc-lambda}
\end{figure}

Next, \Cref{fig:dpo-cyc-noisy-alpha,fig:dpo-cyc-noisy-beta,fig:dpo-cyc-noisy-lambda} present the corresponding experiments under finite sample noise. Here, the general trend mirrors the noiseless case: increasing $\alpha$, $\beta$, or $\lambda$ exacerbates instability.
However, the estimation noise introduces significant stochasticity, causing distinct realizations to oscillate with random phase shifts.
Consequently, averaging across these misaligned trajectories results in a phase cancellation effect, where the oscillations partially offset each other, artificially dampening the visualized amplitude.
Despite this cancellation artifact, the qualitative trend remains discernible: in low-parameter regimes, the mean trajectories are stable and constrained; conversely, as $\alpha$, $\beta$, or $\lambda$ increase, the aggregated dynamics exhibit non-vanishing variance, confirming that strong self-reinforcement undermines long-term policy stability.

\begin{figure}[!htbp] 
    \centering 
    \includegraphics[width=\linewidth]{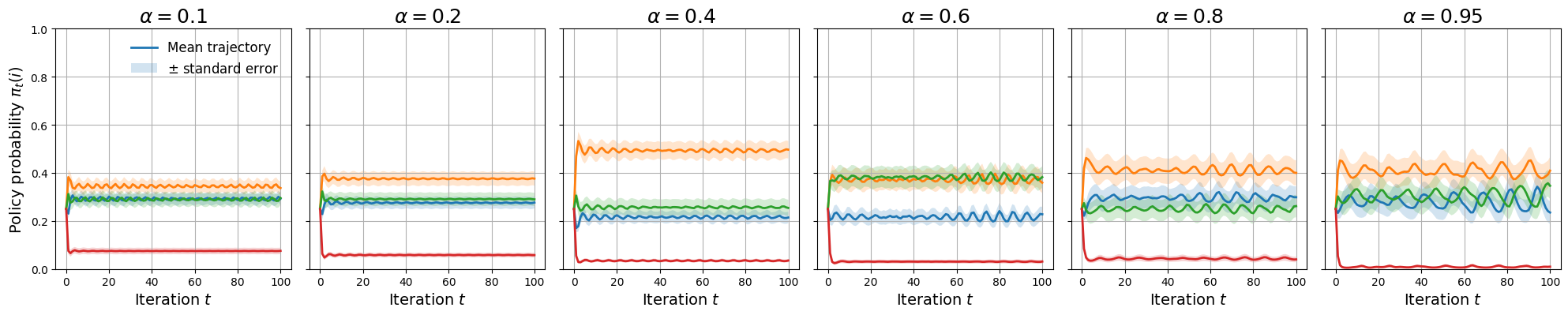}
    \caption{MRS-DPO (Cyclic Preferences with Finite Sample Noise): Effect of varying $\alpha$ with $\beta = 2.0$ and $\lambda = 0.5$.} 
    \label{fig:dpo-cyc-noisy-alpha}
\end{figure}

\begin{figure}[!htbp] 
    \centering 
    \includegraphics[width=\linewidth]{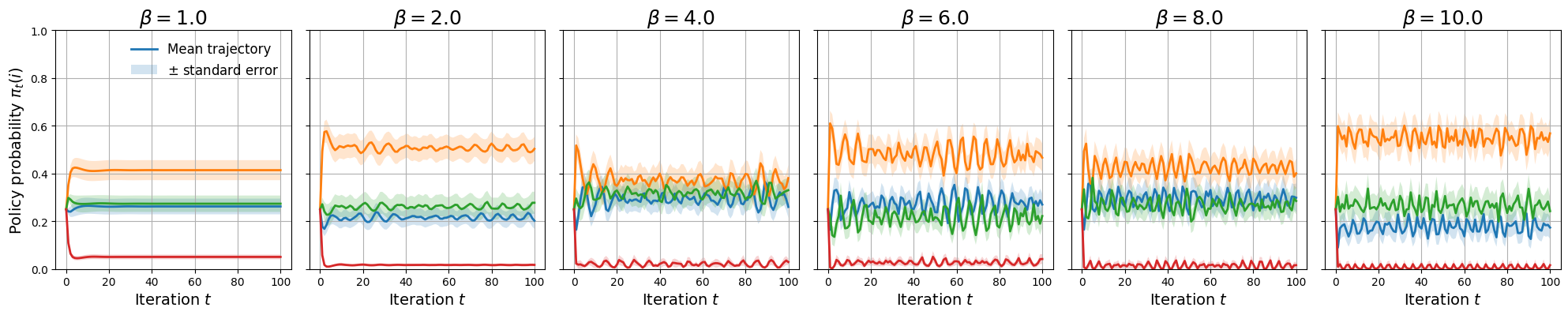}
    \caption{MRS-DPO (Cyclic Preferences with Finite Sample Noise): Effect of varying $\beta$ with $\alpha = 0.8$ and $\lambda = 0.5$.} 
    \label{fig:dpo-cyc-noisy-beta}
\end{figure}

\begin{figure}[!htbp] 
    \centering 
    \includegraphics[width=\linewidth]{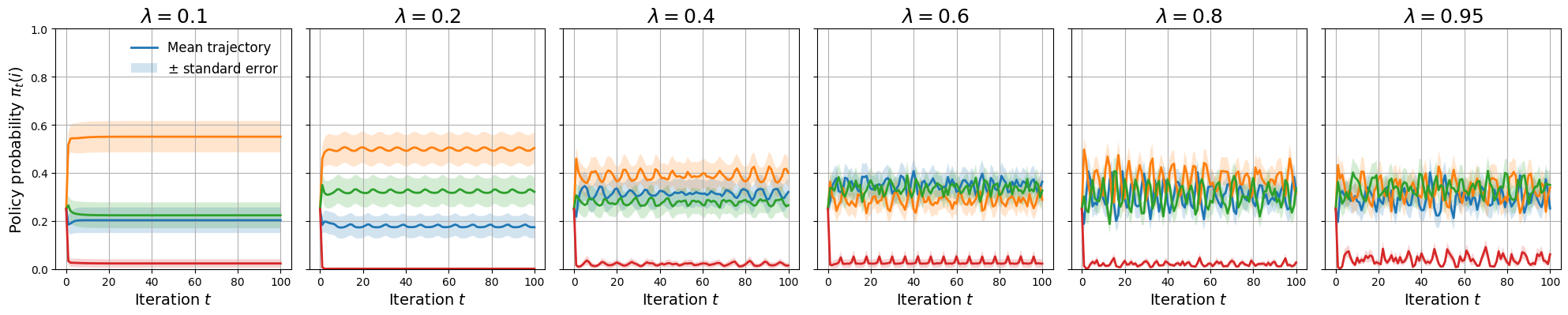}
    \caption{MRS-DPO (Cyclic Preferences with Finite Sample Noise): Effect of varying $\lambda$ with $\alpha = 0.8$ and $\beta = 5.0$.} 
    \label{fig:dpo-cyc-noisy-lambda}
\end{figure}

\paragraph{ST preferences.}
We then examine the dynamics under a representative strongly transitive preference matrix $P_{ST}$. In this idealized regime, we observe clear trends in how the parameters influence the convergence rate.
Specifically, \Cref{fig:dpo-st-noiseless-alpha} shows that increasing $\alpha$ significantly accelerates the convergence toward the deterministic policy.
Similarly, \Cref{fig:dpo-st-noiseless-beta} demonstrates that increasing the inverse temperature $\beta$ leads to a sharper concentration of probability mass on the optimal response.
In contrast, \Cref{fig:dpo-st-noiseless-lambda} indicates that the collapse trend is relatively insensitive to the sampling parameter $\lambda$ in this noiseless setting.

\begin{figure}[!htbp] 
    \centering 
    \includegraphics[width=\linewidth]{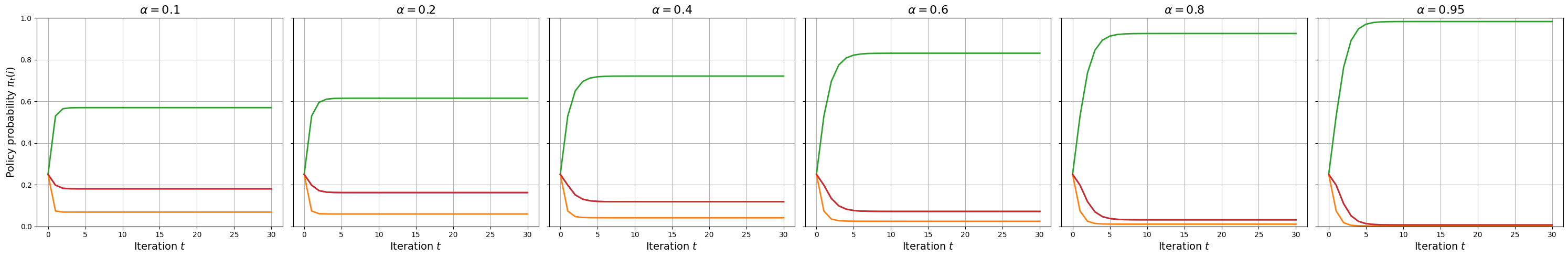}
    \caption{MRS-DPO: Effect of varying $\alpha$ under Strongly Transitive preferences ($P_{ST}$), with $\beta = 1.0$ and $\lambda = 0.6$.} 
    \label{fig:dpo-st-noiseless-alpha}
\end{figure}

\begin{figure}[!htbp] 
    \centering 
    \includegraphics[width=\linewidth]{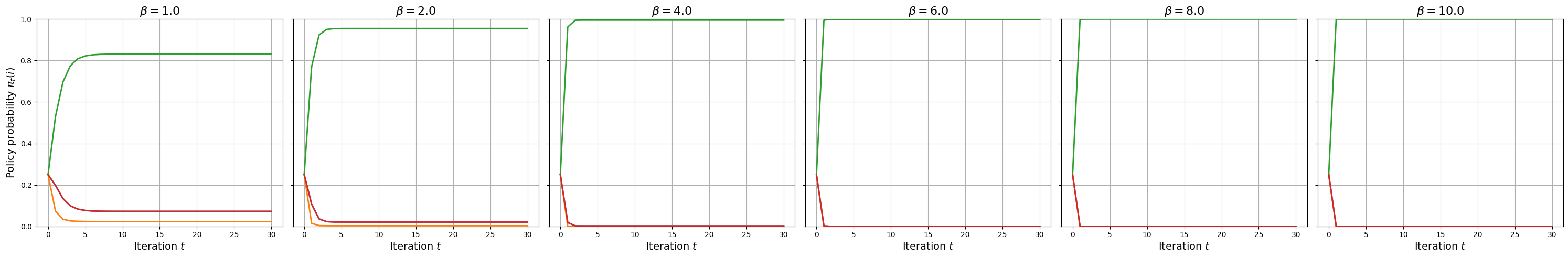}
    \caption{MRS-DPO: Effect of varying $\alpha$ under Strongly Transitive preferences ($P_{ST}$), with $\alpha = 0.6$ and $\lambda = 0.5$.} 
    \label{fig:dpo-st-noiseless-beta}
\end{figure}

\begin{figure}[!htbp] 
    \centering 
    \includegraphics[width=\linewidth]{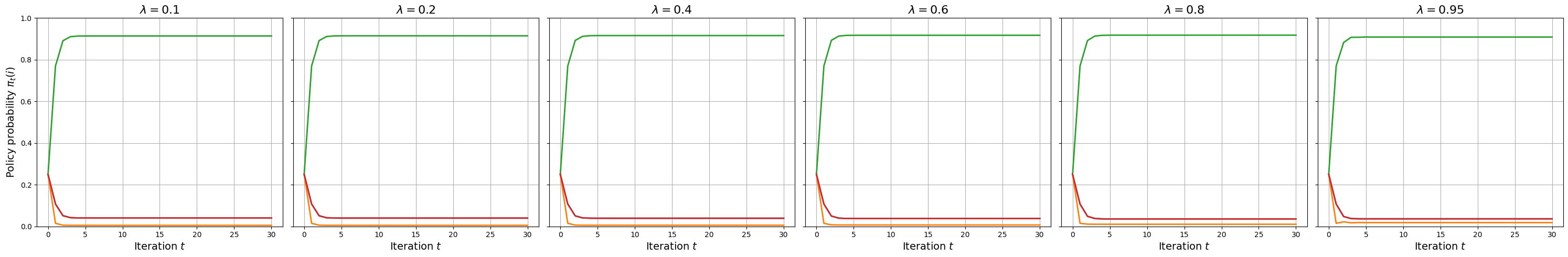}
    \caption{MRS-DPO: Effect of varying $\alpha$ under Strongly Transitive preferences ($P_{ST}$), with $\alpha = 0.6$ and $\beta = 2.0$.} 
    \label{fig:dpo-st-noiseless-lambda}
\end{figure}

Finally, \Cref{fig:dpo-st-noisy-alpha}--\Cref{fig:dpo-st-noisy-lambda} display the results under finite sample noise. In the regime of small parameters, the trends remain stable and consistent with the noiseless case: increasing $\alpha$, $\beta$, or $\lambda$ accelerates the concentration of probability mass, making the policy collapse more pronounced.
However, when these parameters are large, the trajectories exhibit significant variance and transient oscillations.
This instability arises from two distinct factors.
First, different realizations generated from the same preference matrix can exhibit completely different collapse behaviors.
Second, the theoretical stability conditions for DPO are considerably more stringent than for IPO.
As indicated by the analysis in \Cref{thm:mrs_dpo_stability_corrected}, the scaling constant governing the effective update size $\beta\lambda$ is usually large, implying that even moderate values of $\beta$ and $\lambda$ may violate the conditions required for monotonic convergence. Consequently, DPO dynamics are highly sensitive to sampling noise, resulting in the oscillatory or erratic behavior observed in the high-parameter regimes.

\begin{figure}[!htbp] 
    \centering 
    \includegraphics[width=\linewidth]{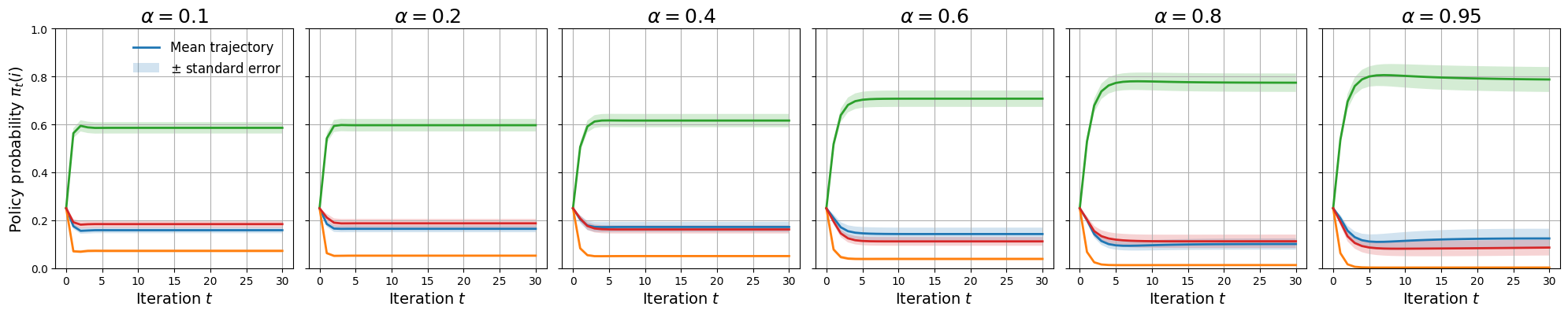}
    \caption{MRS-DPO (ST Preferences with Finite Sample Noise): Effect of varying $\alpha$ with $\beta = 1.0$ and $\lambda = 0.6$} 
    \label{fig:dpo-st-noisy-alpha}
\end{figure}

\begin{figure}[!htbp] 
    \centering 
    \includegraphics[width=\linewidth]{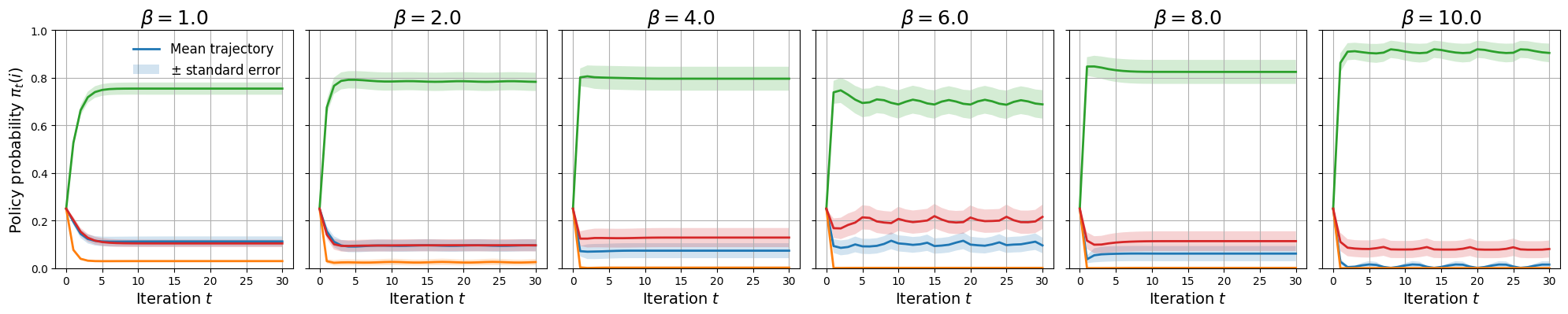}
    \caption{MRS-DPO (ST Preferences with Finite Sample Noise): Effect of varying $\beta$ with $\alpha = 0.6$ and $\lambda = 0.5$} 
    \label{fig:dpo-st-noisy-beta}
\end{figure}

\begin{figure}[!htbp] 
    \centering 
    \includegraphics[width=\linewidth]{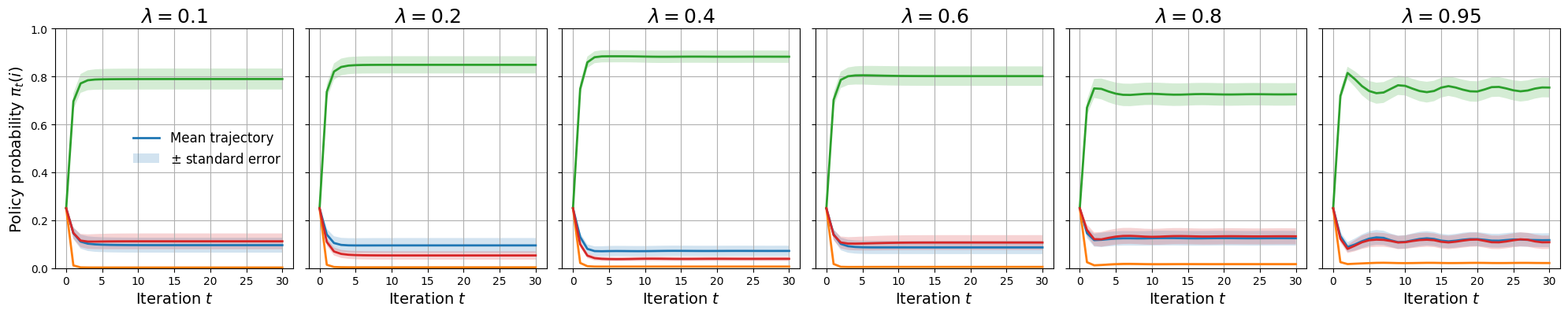}
    \caption{MRS-DPO (ST Preferences with Finite Sample Noise): Effect of varying $\lambda$ with $\alpha = 0.6$ and $\beta = 2.0$.} 
    \label{fig:dpo-st-noisy-lambda}
\end{figure}

\subsection{Complementary aggregated results for MRS-DPO}
\label{app:dpo_complementary_aggregated}

This section reports aggregated statistics for MRS-DPO dynamics with a broader grid of parameter values $(\alpha,\beta,\lambda)$, as a complement to the results in \Cref{app:dpo_experiments}.
The goal of these experiments is to verify that the aggregated predictions derived under the theoretical abstraction remain informative when the parameters are separated in the DPO objective.

All statistics are computed over the same large collections of empirically constructed preference matrices derived from the HelpSteer dataset, consisting of $4924$ strongly transitive matrices and $118$ cyclic matrices.
For each parameter configuration, the MRS-DPO dynamics are run independently on all preference matrices in the corresponding class, and we report the mean and standard deviation of the relevant statistic across matrices.

\paragraph{Convergence/Oscillation in cyclic preferences.}
\Cref{fig:dpo-aggregated-cycle} reports the mean and standard deviation of cycle strength across all cyclic preference matrices for independently varied $(\alpha,\beta,\lambda)$.
We observe a clear trend regarding the update and sampling parameters: increasing $\beta$ or $\lambda$ leads to a systematic increase in cycle strength, confirming that aggressive updates and skewed on-policy sampling are the primary drivers of instability in DPO.
However, in contrast to the sensitivity observed with $\beta$ and $\lambda$, the cycle strength appears relatively insensitive to the reference mixing parameter $\alpha$.
While the oscillation magnitude scales sharply with $\beta$ and $\lambda$, varying $\alpha$ results in only marginal changes to the aggregate cycle strength.
This suggests that for MRS-DPO, the global stability characteristics are dominated by the interaction between the inverse temperature $\beta$ and the on-policy sampling $\lambda$, with the reference anchor playing a less significant role in dampening or amplifying the limit cycles compared to the IPO case.

\begin{figure}[!htbp]
    \centering
    \includegraphics[width=0.8\linewidth]{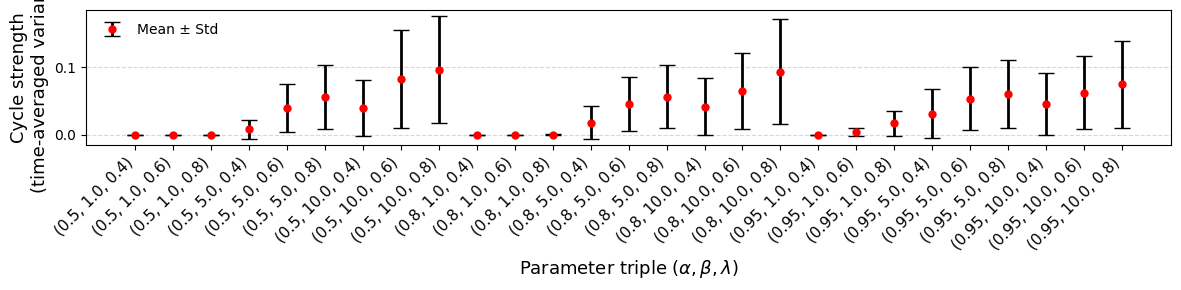}
    \caption{MRS-DPO: Mean $\pm$ standard deviation of cycle strength (time-averaged variance of $\{\bpi_t\}$) across all $118$ cyclic preference matrices as $(\alpha,\beta,\lambda)$ varies.}
    \label{fig:dpo-aggregated-cycle}
\end{figure}

\paragraph{Model collapse in ST preferences.}
For ST preferences, \Cref{fig:dpo-aggregated-entropy} reports the mean and standard deviation of the final policy entropy $H(\bpi_T)$ across all strongly transitive preference matrices.
Regarding parameter sensitivity, for small values of $\beta$ and $\lambda$, the final entropy remains very high across all $\alpha$.
As the inverse temperature $\beta$ increases, the entropy decreases significantly, identifying $\beta$ as the primary driver of probability concentration in DPO.
Increasing the reference weight $\alpha$ also contributes to entropy reduction, though the effect is weaker compared to $\beta$.
However, increasing the on-policy sampling $\lambda$ leads to a slight increase in the final entropy in this aggregate setting.

\begin{figure}[!htbp]
    \centering
    \includegraphics[width=0.8\linewidth]{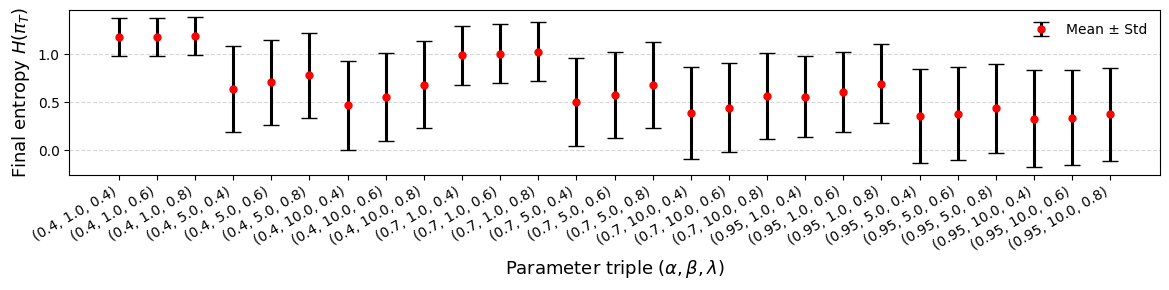}
    \caption{MRS-DPO: Mean $\pm$ standard deviation of the final policy entropy $H(\bpi_T)$ across all $4924$ strongly transitive preference matrices as $(\alpha,\beta,\lambda)$ varies.}
    \label{fig:dpo-aggregated-entropy}
\end{figure}

\newpage
\bibliographystyle{plainnat}
\bibliography{refs}

@article{shi2024crucial,
  title={The crucial role of samplers in online direct preference optimization},
  author={Shi, Ruizhe and Zhou, Runlong and Du, Simon S},
  journal={arXiv preprint arXiv:2409.19605},
  year={2024}
}

@article{xiao2025theoretical,
  title={Theoretical Tensions in {RLHF}: Reconciling Empirical Success with Inconsistencies in Social Choice Theory},
  author={Xiao, Jiancong and Shi, Zhekun and Liu, Kaizhao and Long, Qi and Su, Weijie J},
  journal={arXiv preprint arXiv:2506.12350},
  year={2025}
}

@article{feng2025pilaf,
  title={Pilaf: Optimal human preference sampling for reward modeling},
  author={Feng, Yunzhen and Kwiatkowski, Ariel and Zheng, Kunhao and Kempe, Julia and Duan, Yaqi},
  journal={arXiv preprint arXiv:2502.04270},
  year={2025}
}

@article{rafailov2023direct,
  title={Direct preference optimization: Your language model is secretly a reward model},
  author={Rafailov, Rafael and Sharma, Archit and Mitchell, Eric and Manning, Christopher D and Ermon, Stefano and Finn, Chelsea},
  journal={Advances in Neural Information Processing Systems},
  volume={36},
  pages={53728--53741},
  year={2023}
}

@article{liu2025statistical,
  title={Statistical impossibility and possibility of aligning {LLM}s with human preferences: From {C}ondorcet paradox to {N}ash equilibrium},
  author={Liu, Kaizhao and Long, Qi and Shi, Zhekun and Su, Weijie J and Xiao, Jiancong},
  journal={arXiv preprint arXiv:2503.10990},
  year={2025}
}

@inproceedings{azar2024general,
  title={A general theoretical paradigm to understand learning from human preferences},
  author={Azar, Mohammad Gheshlaghi and Guo, Zhaohan Daniel and Piot, Bilal and Munos, Remi and Rowland, Mark and Valko, Michal and Calandriello, Daniele},
  booktitle={International Conference on Artificial Intelligence and Statistics},
  pages={4447--4455},
  year={2024},
  organization={PMLR}
}

@inproceedings{noothigattu2020axioms,
 author = {Noothigattu, Ritesh and Peters, Dominik and Procaccia, Ariel D},
 booktitle = {Advances in Neural Information Processing Systems},
 editor = {H. Larochelle and M. Ranzato and R. Hadsell and M.F. Balcan and H. Lin},
 pages = {17745--17754},
 publisher = {Curran Associates, Inc.},
 title = {Axioms for Learning from Pairwise Comparisons},
 url = {https://proceedings.neurips.cc/paper_files/paper/2020/file/cdaa9b682e10c291d3bbadca4c96f5de-Paper.pdf},
 volume = {33},
 year = {2020}
}

@article{liu2023statistical,
  title={Statistical rejection sampling improves preference optimization},
  author={Liu, Tianqi and Zhao, Yao and Joshi, Rishabh and Khalman, Misha and Saleh, Mohammad and Liu, Peter J and Liu, Jialu},
  journal={arXiv preprint arXiv:2309.06657},
  year={2023}
}

@article{kim2025understanding,
  title={Understanding the Impact of Sampling Quality in Direct Preference Optimization},
  author={Kim, Kyung Rok and Bai, Yumo and Wang, Chonghuan and Chen, Guanting},
  journal={arXiv preprint arXiv:2506.04272},
  year={2025}
}

@article{ye2024online,
  title={Online iterative reinforcement learning from human feedback with general preference model},
  author={Ye, Chenlu and Xiong, Wei and Zhang, Yuheng and Dong, Hanze and Jiang, Nan and Zhang, Tong},
  journal={Advances in Neural Information Processing Systems},
  volume={37},
  pages={81773--81807},
  year={2024}
}

@article{calandriello2024human,
  title={Human alignment of large language models through online preference optimisation},
  author={Calandriello, Daniele and Guo, Daniel and Munos, Remi and Rowland, Mark and Tang, Yunhao and Pires, Bernardo Avila and Richemond, Pierre Harvey and Lan, Charline Le and Valko, Michal and Liu, Tianqi and others},
  journal={arXiv preprint arXiv:2403.08635},
  year={2024}
}

@article{dong2024rlhf,
  title={{RLHF} workflow: From reward modeling to online {RLHF}},
  author={Dong, Hanze and Xiong, Wei and Pang, Bo and Wang, Haoxiang and Zhao, Han and Zhou, Yingbo and Jiang, Nan and Sahoo, Doyen and Xiong, Caiming and Zhang, Tong},
  journal={arXiv preprint arXiv:2405.07863},
  year={2024}
}

@article{dong2023raft,
  title={Raft: Reward ranked finetuning for generative foundation model alignment},
  author={Dong, Hanze and Xiong, Wei and Goyal, Deepanshu and Zhang, Yihan and Chow, Winnie and Pan, Rui and Diao, Shizhe and Zhang, Jipeng and Shum, Kashun and Zhang, Tong},
  journal={arXiv preprint arXiv:2304.06767},
  year={2023}
}

@article{xiong2023iterative,
  title={Iterative preference learning from human feedback: Bridging theory and practice for {RLHF} under {KL}-constraint},
  author={Xiong, Wei and Dong, Hanze and Ye, Chenlu and Wang, Ziqi and Zhong, Han and Ji, Heng and Jiang, Nan and Zhang, Tong},
  journal={arXiv preprint arXiv:2312.11456},
  year={2023}
}

@article{tajwar2024preference,
  title={Preference fine-tuning of {LLM}s should leverage suboptimal, on-policy data},
  author={Tajwar, Fahim and Singh, Anikait and Sharma, Archit and Rafailov, Rafael and Schneider, Jeff and Xie, Tengyang and Ermon, Stefano and Finn, Chelsea and Kumar, Aviral},
  journal={arXiv preprint arXiv:2404.14367},
  year={2024}
}

@article{guo2024direct,
  title={Direct language model alignment from online {AI} feedback},
  author={Guo, Shangmin and Zhang, Biao and Liu, Tianlin and Liu, Tianqi and Khalman, Misha and Llinares, Felipe and Rame, Alexandre and Mesnard, Thomas and Zhao, Yao and Piot, Bilal and others},
  journal={arXiv preprint arXiv:2402.04792},
  year={2024}
}

@article{gorbatovski2024learn,
  title={Learn your reference model for real good alignment},
  author={Gorbatovski, Alexey and Shaposhnikov, Boris and Malakhov, Alexey and Surnachev, Nikita and Aksenov, Yaroslav and Maksimov, Ian and Balagansky, Nikita and Gavrilov, Daniil},
  journal={arXiv preprint arXiv:2404.09656},
  year={2024}
}

@article{bradley1952rank,
  title={Rank analysis of incomplete block designs: {I}. {T}he method of paired comparisons},
  author={Bradley, Ralph Allan and Terry, Milton E},
  journal={Biometrika},
  volume={39},
  number={3/4},
  pages={324--345},
  year={1952},
  publisher={JSTOR}
}

@misc{trl_dpotrainer_docs,
  author = {{Hugging Face}},
  title = {{TRL} Documentation: {DPO} Trainer},
  year = {2026},
  howpublished = {\url{https://huggingface.co/docs/trl/v0.27.0/dpo_trainer}},
  note = {Version v0.27.0, accessed 2026-01-23}
}

@inproceedings{wang2024helpsteer,
  title={Helpsteer: Multi-attribute helpfulness dataset for {SteerLM}},
  author={Wang, Zhilin and Dong, Yi and Zeng, Jiaqi and Adams, Virginia and Sreedhar, Makesh Narsimhan and Egert, Daniel and Delalleau, Olivier and Scowcroft, Jane and Kant, Neel and Swope, Aidan and others},
  booktitle={Proceedings of the 2024 Conference of the North American Chapter of the Association for Computational Linguistics: Human Language Technologies (Volume 1: Long Papers)},
  pages={3371--3384},
  year={2024}
}

@article{xu2024bpo,
  title={Bpo: Staying close to the behavior {LLM} creates better online {LLM} alignment},
  author={Xu, Wenda and Li, Jiachen and Wang, William Yang and Li, Lei},
  journal={arXiv preprint arXiv:2406.12168},
  year={2024}
}

@inproceedings{ge2024axioms,
 author = {Ge, Luise and Halpern, Daniel and Micha, Evi and Procaccia, Ariel D. and Shapira, Itai and Vorobeychik, Yevgeniy and Wu, Junlin},
 booktitle = {Advances in Neural Information Processing Systems},
 doi = {10.52202/079017-2557},
 editor = {A. Globerson and L. Mackey and D. Belgrave and A. Fan and U. Paquet and J. Tomczak and C. Zhang},
 pages = {80439--80465},
 publisher = {Curran Associates, Inc.},
 title = {Axioms for {AI} Alignment from Human Feedback},
 url = {https://proceedings.neurips.cc/paper_files/paper/2024/file/9328208f88ec69420031647e6ff97727-Paper-Conference.pdf},
 volume = {37},
 year = {2024}
}

@misc{openai2024gpt4ocard,
      title={{GPT-4o} System Card}, 
      author={OpenAI and Aaron Hurst and Adam Lerer and Adam P. Goucher and Adam Perelman and Aditya Ramesh and Aidan Clark and AJ Ostrow and Akila Welihinda and Alan Hayes and Alec Radford and Aleksander Madry and Alex Baker-Whitcomb and Alex Beutel and Alex Borzunov and Alex Carney and Alex Chow and Alex Kirillov and Alex Nichol and Alex Paino and Alex Renzin and Alex Tachard Passos and Alexander Kirillov and Alexi Christakis and Alexis Conneau and Ali Kamali and Allan Jabri and Allison Moyer and Allison Tam and Amadou Crookes and Amin Tootoochian and Amin Tootoonchian and Ananya Kumar and Andrea Vallone and Andrej Karpathy and Andrew Braunstein and Andrew Cann and Andrew Codispoti and Andrew Galu and Andrew Kondrich and Andrew Tulloch and Andrey Mishchenko and Angela Baek and Angela Jiang and Antoine Pelisse and Antonia Woodford and Anuj Gosalia and Arka Dhar and Ashley Pantuliano and Avi Nayak and Avital Oliver and Barret Zoph and Behrooz Ghorbani and Ben Leimberger and Ben Rossen and Ben Sokolowsky and Ben Wang and Benjamin Zweig and Beth Hoover and Blake Samic and Bob McGrew and Bobby Spero and Bogo Giertler and Bowen Cheng and Brad Lightcap and Brandon Walkin and Brendan Quinn and Brian Guarraci and Brian Hsu and Bright Kellogg and Brydon Eastman and Camillo Lugaresi and Carroll Wainwright and Cary Bassin and Cary Hudson and Casey Chu and Chad Nelson and Chak Li and Chan Jun Shern and Channing Conger and Charlotte Barette and Chelsea Voss and Chen Ding and Cheng Lu and Chong Zhang and Chris Beaumont and Chris Hallacy and Chris Koch and Christian Gibson and Christina Kim and Christine Choi and Christine McLeavey and Christopher Hesse and Claudia Fischer and Clemens Winter and Coley Czarnecki and Colin Jarvis and Colin Wei and Constantin Koumouzelis and Dane Sherburn and Daniel Kappler and Daniel Levin and Daniel Levy and David Carr and David Farhi and David Mely and David Robinson and David Sasaki and Denny Jin and Dev Valladares and Dimitris Tsipras and Doug Li and Duc Phong Nguyen and Duncan Findlay and Edede Oiwoh and Edmund Wong and Ehsan Asdar and Elizabeth Proehl and Elizabeth Yang and Eric Antonow and Eric Kramer and Eric Peterson and Eric Sigler and Eric Wallace and Eugene Brevdo and Evan Mays and Farzad Khorasani and Felipe Petroski Such and Filippo Raso and Francis Zhang and Fred von Lohmann and Freddie Sulit and Gabriel Goh and Gene Oden and Geoff Salmon and Giulio Starace and Greg Brockman and Hadi Salman and Haiming Bao and Haitang Hu and Hannah Wong and Haoyu Wang and Heather Schmidt and Heather Whitney and Heewoo Jun and Hendrik Kirchner and Henrique Ponde de Oliveira Pinto and Hongyu Ren and Huiwen Chang and Hyung Won Chung and Ian Kivlichan and Ian O'Connell and Ian O'Connell and Ian Osband and Ian Silber and Ian Sohl and Ibrahim Okuyucu and Ikai Lan and Ilya Kostrikov and Ilya Sutskever and Ingmar Kanitscheider and Ishaan Gulrajani and Jacob Coxon and Jacob Menick and Jakub Pachocki and James Aung and James Betker and James Crooks and James Lennon and Jamie Kiros and Jan Leike and Jane Park and Jason Kwon and Jason Phang and Jason Teplitz and Jason Wei and Jason Wolfe and Jay Chen and Jeff Harris and Jenia Varavva and Jessica Gan Lee and Jessica Shieh and Ji Lin and Jiahui Yu and Jiayi Weng and Jie Tang and Jieqi Yu and Joanne Jang and Joaquin Quinonero Candela and Joe Beutler and Joe Landers and Joel Parish and Johannes Heidecke and John Schulman and Jonathan Lachman and Jonathan McKay and Jonathan Uesato and Jonathan Ward and Jong Wook Kim and Joost Huizinga and Jordan Sitkin and Jos Kraaijeveld and Josh Gross and Josh Kaplan and Josh Snyder and Joshua Achiam and Joy Jiao and Joyce Lee and Juntang Zhuang and Justyn Harriman and Kai Fricke and Kai Hayashi and Karan Singhal and Katy Shi and Kavin Karthik and Kayla Wood and Kendra Rimbach and Kenny Hsu and Kenny Nguyen and Keren Gu-Lemberg and Kevin Button and Kevin Liu and Kiel Howe and Krithika Muthukumar and Kyle Luther and Lama Ahmad and Larry Kai and Lauren Itow and Lauren Workman and Leher Pathak and Leo Chen and Li Jing and Lia Guy and Liam Fedus and Liang Zhou and Lien Mamitsuka and Lilian Weng and Lindsay McCallum and Lindsey Held and Long Ouyang and Louis Feuvrier and Lu Zhang and Lukas Kondraciuk and Lukasz Kaiser and Luke Hewitt and Luke Metz and Lyric Doshi and Mada Aflak and Maddie Simens and Madelaine Boyd and Madeleine Thompson and Marat Dukhan and Mark Chen and Mark Gray and Mark Hudnall and Marvin Zhang and Marwan Aljubeh and Mateusz Litwin and Matthew Zeng and Max Johnson and Maya Shetty and Mayank Gupta and Meghan Shah and Mehmet Yatbaz and Meng Jia Yang and Mengchao Zhong and Mia Glaese and Mianna Chen and Michael Janner and Michael Lampe and Michael Petrov and Michael Wu and Michele Wang and Michelle Fradin and Michelle Pokrass and Miguel Castro and Miguel Oom Temudo de Castro and Mikhail Pavlov and Miles Brundage and Miles Wang and Minal Khan and Mira Murati and Mo Bavarian and Molly Lin and Murat Yesildal and Nacho Soto and Natalia Gimelshein and Natalie Cone and Natalie Staudacher and Natalie Summers and Natan LaFontaine and Neil Chowdhury and Nick Ryder and Nick Stathas and Nick Turley and Nik Tezak and Niko Felix and Nithanth Kudige and Nitish Keskar and Noah Deutsch and Noel Bundick and Nora Puckett and Ofir Nachum and Ola Okelola and Oleg Boiko and Oleg Murk and Oliver Jaffe and Olivia Watkins and Olivier Godement and Owen Campbell-Moore and Patrick Chao and Paul McMillan and Pavel Belov and Peng Su and Peter Bak and Peter Bakkum and Peter Deng and Peter Dolan and Peter Hoeschele and Peter Welinder and Phil Tillet and Philip Pronin and Philippe Tillet and Prafulla Dhariwal and Qiming Yuan and Rachel Dias and Rachel Lim and Rahul Arora and Rajan Troll and Randall Lin and Rapha Gontijo Lopes and Raul Puri and Reah Miyara and Reimar Leike and Renaud Gaubert and Reza Zamani and Ricky Wang and Rob Donnelly and Rob Honsby and Rocky Smith and Rohan Sahai and Rohit Ramchandani and Romain Huet and Rory Carmichael and Rowan Zellers and Roy Chen and Ruby Chen and Ruslan Nigmatullin and Ryan Cheu and Saachi Jain and Sam Altman and Sam Schoenholz and Sam Toizer and Samuel Miserendino and Sandhini Agarwal and Sara Culver and Scott Ethersmith and Scott Gray and Sean Grove and Sean Metzger and Shamez Hermani and Shantanu Jain and Shengjia Zhao and Sherwin Wu and Shino Jomoto and Shirong Wu and Shuaiqi and Xia and Sonia Phene and Spencer Papay and Srinivas Narayanan and Steve Coffey and Steve Lee and Stewart Hall and Suchir Balaji and Tal Broda and Tal Stramer and Tao Xu and Tarun Gogineni and Taya Christianson and Ted Sanders and Tejal Patwardhan and Thomas Cunninghman and Thomas Degry and Thomas Dimson and Thomas Raoux and Thomas Shadwell and Tianhao Zheng and Todd Underwood and Todor Markov and Toki Sherbakov and Tom Rubin and Tom Stasi and Tomer Kaftan and Tristan Heywood and Troy Peterson and Tyce Walters and Tyna Eloundou and Valerie Qi and Veit Moeller and Vinnie Monaco and Vishal Kuo and Vlad Fomenko and Wayne Chang and Weiyi Zheng and Wenda Zhou and Wesam Manassra and Will Sheu and Wojciech Zaremba and Yash Patil and Yilei Qian and Yongjik Kim and Youlong Cheng and Yu Zhang and Yuchen He and Yuchen Zhang and Yujia Jin and Yunxing Dai and Yury Malkov},
      year={2024},
      eprint={2410.21276},
      archivePrefix={arXiv},
      primaryClass={cs.CL},
      url={https://arxiv.org/abs/2410.21276}, 
}

@article{anthropic2024claude,
  title={The {C}laude 3 model family: Opus, {S}onnet, {H}aiku},
  author={Anthropic, {AI}},
  journal={Claude-3 Model Card},
  year={2024}
}

@misc{guan2025deliberative,
      title={Deliberative Alignment: Reasoning Enables Safer Language Models}, 
      author={Melody Y. Guan and Manas Joglekar and Eric Wallace and Saachi Jain and Boaz Barak and Alec Helyar and Rachel Dias and Andrea Vallone and Hongyu Ren and Jason Wei and Hyung Won Chung and Sam Toyer and Johannes Heidecke and Alex Beutel and Amelia Glaese},
      year={2025},
      eprint={2412.16339},
      archivePrefix={arXiv},
      primaryClass={cs.CL},
      url={https://arxiv.org/abs/2412.16339}, 
}

@inproceedings{ouyang2022training,
 author = {Ouyang, Long and Wu, Jeffrey and Jiang, Xu and Almeida, Diogo and Wainwright, Carroll and Mishkin, Pamela and Zhang, Chong and Agarwal, Sandhini and Slama, Katarina and Ray, Alex and Schulman, John and Hilton, Jacob and Kelton, Fraser and Miller, Luke and Simens, Maddie and Askell, Amanda and Welinder, Peter and Christiano, Paul F and Leike, Jan and Lowe, Ryan},
 booktitle = {Advances in Neural Information Processing Systems},
 editor = {S. Koyejo and S. Mohamed and A. Agarwal and D. Belgrave and K. Cho and A. Oh},
 pages = {27730--27744},
 publisher = {Curran Associates, Inc.},
 title = {Training language models to follow instructions with human feedback},
 url = {https://proceedings.neurips.cc/paper_files/paper/2022/file/b1efde53be364a73914f58805a001731-Paper-Conference.pdf},
 volume = {35},
 year = {2022}
}

@incollection{thurstone2017law,
  title={A law of comparative judgment},
  author={Thurstone, Louis L},
  booktitle={Scaling},
  pages={81--92},
  year={2017},
  publisher={Routledge}
}

@article{pan2025matters,
  title={What Matters in Data for {DPO}?},
  author={Pan, Yu and Cai, Zhongze and Chen, Guanting and Zhong, Huaiyang and Wang, Chonghuan},
  journal={arXiv preprint arXiv:2508.18312},
  year={2025}
}


\end{document}